\definecolor{darkpastelblue}{rgb}{0.66, 0.13, 0.24}
\def\eqref#1{equation~\ref{#1}}
\def\1{\bm{1}}
\def\rvu{{\mathbf{i}}}
\def\rvu{{\mathbf{u}}}
\def\rvv{{\mathbf{v}}}
\def\rvw{{\mathbf{w}}}
\def\rvx{{\mathbf{x}}}
\def\rvz{{\mathbf{z}}}
\DeclareMathAlphabet{\mathsfit}{\encodingdefault}{\sfdefault}{m}{sl}
\SetMathAlphabet{\mathsfit}{bold}{\encodingdefault}{\sfdefault}{bx}{n}
\DeclareMathOperator*{\argmin}{arg\,min}
\newtheorem{thm}{Theorem}[section]
\newtheorem{lemma}[thm]{Lemma}
\newtheorem{defn}[thm]{Definition}
\newtheorem{proposition}[thm]{Proposition}
\newtheorem{corollary}[thm]{Corollary}
\newtheorem{ass}[thm]{Assumption}
\newtheorem{exmp}[thm]{Example}
\newtheorem{rmk}[thm]{Remark}
\newtheorem{fact}[thm]{Fact}
\newtheorem*{claim*}{Claim} 
\newtheorem{claim}{Claim} 
\newcommand{\dashint}{\displaystyle\int\!\!\!\!\!\!-}
\newcommand{\ooo}[0]{\mathscr{O}}
\newcommand{\hhh}[0]{\mathcal{H}}
\newcommand{\kkk}[0]{\mathcal{K}}
\newcommand{\aaa}[0]{\mathcal{A}}
\newcommand{\fff}[0]{\mathfrak{F}}
\newcommand{\nnn}[0]{\mathbb{N}}
\newcommand{\eee}[0]{\mathbb{E}}
\newcommand{\rrr}[0]{\mathbb{R}}
\newcommand{\cmark}{\ding{51}}
\newcommand{\xmark}{\ding{55}}
\newcommand{\ovu}[0]{\overline{u}}
\newcommand{\lie}[0]{\mathfrak{g}}
\newcommand{\xxx}[0]{\bm \xi}
\newcommand{\dv}[1]{\partial_{\rvv_{#1}}}
\newcommand{\dz}[1]{\partial_{\rvz_{#1}}}
\title{Coarse Graining with Neural Operators \\for Simulating Chaotic Systems}
\author[1]{Chuwei Wang}
\author[2]{Boris Bonev}
\author[2]{Julius Berner}
\author[1]{Zongyi Li}
\author[3]{Di Zhou}
\author[1]{Jiayun Wang}
\author[2]{Thorsten Kurth}
\author[3]{H.~Jane~Bae}
\author[1]{Anima Anandkumar\thanks{Correspondence to: Anima Anandkumar \texttt{<anima@caltech.edu>}}}
\affil[1]{Department of Computing and Mathematical Sciences, California Institute of Technology, 1200 E California Blvd, Pasadena, CA 91125, United States}
\affil[2]{NVIDIA Research, 2788 San Tomas Express Way, Santa Clara, 95051, CA, United States }
\affil[3]{Graduate Aerospace Laboratories, California Institute of Technology, Pasadena, CA 91125, United States}
\affil[ ]{\texttt{\{chuweiw,zongyili,dizhou,peterw,jbae,anima\}@caltech.edu, \{bbonev,jberner,tkurth\}@nvidia.com}}
\begin{document}

\maketitle

%to approximate the overall information from fine scales, not captured by solvers on the coarse grids

%To achieve data efficiency, we design a multi-fidelity multi-resolution physics-informed training curriculum. The discretization-invariance property of neural operators allows us to train progressively at multiple resolutions

\begin{abstract}
\label{abs}
Predicting long-term behaviors of chaotic systems is central to many applications such as turbulence modeling. Numerical simulations require fine grids to fully resolve the dynamics, which are computationally expensive, and often intractable, for real-world scenarios. A central question is whether the cheaper approach of \textit{coarse graining}, which simulates adjusted dynamics on coarser grids, can match the accuracy of fully-resolved simulations. A dominant coarse-graining approach, known as closure modeling, uses an additional closure model to correct the errors made by a coarse-grid numerical solver. We provide a novel theoretical analysis that proves learning-based closures fare poorly in both data efficiency and training stability. Crucially, the amount of fully-resolved training data needed for closure models exceeds what is needed to estimate the desired long-term statistics of chaotic systems directly from such data; thus defeating the fundamental goal of coarse graining to reduce costs. 

We provide an alternative approach that directly learns the solution operator of short-term evolution using neural operators. It not only circumvents the limitations of closure models, but also achieves statistically optimal estimation of long-term statistics of chaotic systems under standard assumptions.  To achieve data efficiency, we progressively train the neural operator model at multiple resolutions: first with coarse-grid solver data, then fine-tuned with a small amount of fine-grid simulations and physics constraints. Our extensive experiments on representative systems, such as {frontier model training  for probabilistic weather forecasting} and the chaotic dynamics of the forced Navier Stokes equations, demonstrate the superiority of our approach, both in data efficiency and in matching the accuracy of fully-resolved simulations while being orders of magnitude faster.

This paper presents a paradigm shift and goes against the popular narrative of retaining existing numerical simulations and using machine learning only to correct them with closure models. Instead, we advocate fully replacing numerical solvers with machine-learned models, and counterintuitively, it is more data efficient and easier to train. This is because in nonlinear chaotic systems, information is highly entangled across scales, which neural operators can learn seamlessly, while closure models are unable to, due to their rigid scale-separation structure.

\end{abstract}

\section{Introduction}
\label{intro}

Multiscale structures are ubiquitous in nature, where interactions at microscopic scales give rise to emergent macroscopic behavior. Simulations are widely adopted to study these systems, offering an alternative to time consuming or even impractical physical experiments. However, fully-resolved numerical simulations incur a steep computational cost since they need a fine grid to accurately capture all the relevant scales. For instance, a fully-resolved simulation of a small region in the atmosphere takes several months and petabytes of memory \cite{ravikumar2019gpu,schneider2017climate}. \textit{Coarse Graining} (CG) seeks to reduce computational costs while matching the accuracy of fully resolved simulations by computing adjusted dynamics on coarse grids. It has different names in various disciplines including homogenization, renormalization groups and large-eddy simulations \cite{pope2004ten, wilson1971renormalization,charalambakis2010homogenization}.

A popular strategy for coarse graining is to use high-order numerical schemes on a coarse grid to reduce discretization errors and introduce numerical dissipation that implicitly accounts for unresolved dynamics~\cite{grinstein2007implicit}. 
Nevertheless, a more effective and widely adopted approach is to incorporate an explicit \textit{closure model} to correct the errors introduced by coarse-grid numerical solvers \cite{smagorinsky1963general,zhang2024iterated}.
{In this work, following common practice, by closure model we mean the additive version and coarse graining refers to a general scheme on coarse grids.}
Traditional closure models are designed based on physical insight, which requires substantial domain expertise or are derived by mathematical simplification under idealized modeling assumptions, such as scale separation. While these models exploit the presence of an inertial range and capture some degree of scale interactions, the nonlinear coupling across scales in many real-world flows is stronger and more complex than assumed, posing challenges to such models.

Further, many multiscale physical systems are inherently chaotic, meaning they exhibit extreme sensitivity to small perturbations~\cite{ottino1990mixing, sussman1992chaotic, korn2003there}. Examples include climate modeling, aircraft design, and plasma evolution in nuclear fusion~\cite{lima2000plasma,flato2014evaluation,schneider1974climate,slotnick2014cfd,wootton1999fluid}, all of which require studying long-term behavior. The study of chaos has a rich history, dating back to Lorenz’s work on atmospheric turbulence ~\cite{lorenz2017deterministic}, which highlighted the unpredictability of individual trajectories and the distinctive phenomenon of strange attractors—structures that are central to understanding long-term system behavior. Despite significant progress, such as the work winning the 2021 Nobel Prize in Physics, modeling of chaotic dynamics remains an open challenge.

The task of understanding chaotic systems can be framed as estimating statistics of the system in its dynamical equilibrium.
There have been explorations to sample directly from the dynamical equilibrium \cite{valencialearning}, but this direction remains challenging due to the complex structure of such equilibria \cite{lorenz2017deterministic}. The standard approach continues to rely on long-term numerical simulations, which is challenging for chaotic systems, particularly those with coarse grids, since errors accumulate catastrophically over time. Thus, designing coarse-graining schemes becomes even more challenging for chaotic systems. Further, the 
goal shifts from accurately tracking individual long trajectories, an impossible task in chaotic systems, to reliably capturing long-term statistical properties~\cite{moser2021statistical,valencialearning}.

Traditional closure models, which make rigid assumptions such as scale separation, fail to capture complicated fine-scale effects in many chaotic dynamics~\cite{meneveau2000scale,moser2021statistical,zhou2024sensitivity}. To improve the expressivity of closure models and reduce modeling errors, this past decade has witnessed extensive development of machine-learning methods for closure modeling~\cite{duraisamy2019turbulence, kochkov2021machine, duraisamy2021perspectives,sanderse2024scientific,maddu2024learning,shankar2023differentiable,huang2024consistent}.
However, these learning-based closure models \cite{gamahara2017searching, beck2019deep, maulik2019subgrid, guan2022stable} rely on a large amount of high-fidelity training data from expensive fully-resolved simulations, which is expensive or even impossible to generate for many problems of interest. 
Further, rigorous theoretical understanding of closure modeling remains largely undeveloped, except for certain simplified settings, such as ODE dynamics~\cite{levine2022framework} or systems with clear scale separation where analytical approximations are tractable~\cite{eyink1993renormalization,durbin2018some}, thus leaving a critical gap for more general and complex dynamics, and we address it in this paper.

Recently, there has been an alternative approach that uses only learned models without retaining any coarse grid numerical solvers at inference time. This is accomplished by directly learning the solution operator of the physical system using a neural operator framework~\cite{li2020fourier,kovachki2023neural}. Unlike traditional neural networks, which operate on fixed grids, neural operators approximate mappings between function spaces. By treating inputs at different resolutions as discrete representations of the same continuous function, neural operators ensure consistency across coarse and fine grids in the limit of mesh refinement, which is commonly referred to as \textit{resolution invariance}. 
Neural operators have been widely utilized as surrogate models, achieving impressive acceleration for fully-resolved simulations. 

The work closest to ours is Li et al.~\cite{li2022learning}, who studied using neural operators for learning the short-term evolution and then rolling them out autoregressively to predict long-term statistics in dissipative chaotic systems. However, their approach operates within the framework of fully-resolved simulations, where an ample amount of high-fidelity training data is available and the neural operator model simulates with high-resolution grids to make the prediction. In other words, their setting does not impose computational constraints as in coarse graining, and requires expensive training data.

In this work, we address the above core challenges, use neural operator as a coarse-graining framework, while also drastically reducing training data requirements for fully-resolved simulations.   The resolution-invariant property makes neural operators particularly well-suited for coarse graining since they can be run on grid of any size.

% We target the solution operator directly. Intuitively, the neural operator model stores fine-scale information after training with high-resolution inputs. 
% The missing information from fine-scale is then implicitly incorporated into coarse-grid simulations through nonlinear interactions between inputs and weights within the model (\Cref{fig:ant}(A)).

\begin{figure}[htbp]
\centering

% === First row: one figure centered ===
\begin{subfigure}{\textwidth}
  \centering
  \includegraphics[width=0.8\linewidth]{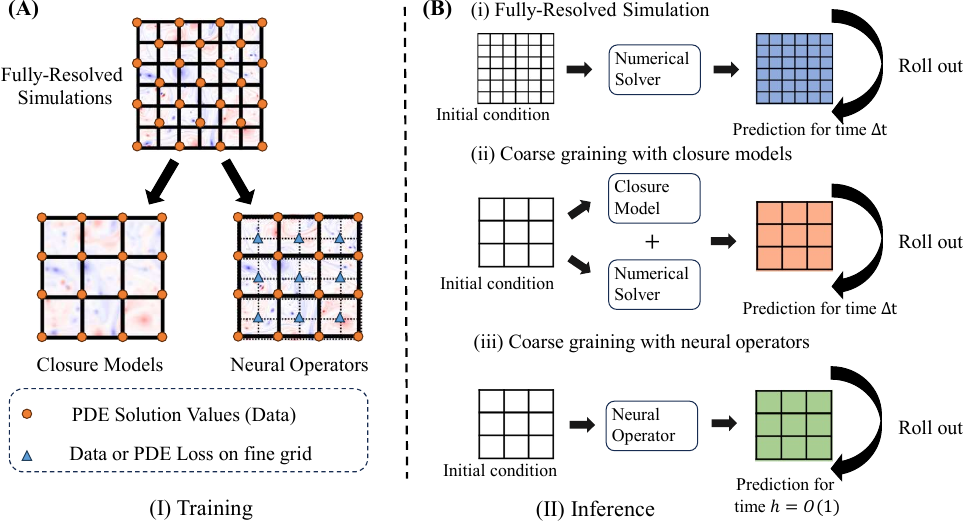} % ← width adjustable
  % \caption{Caption for first row figure}
\end{subfigure}

\vspace{1em} % ← space between rows, adjustable

% === Second row: two side-by-side figures ===
\begin{subfigure}{0.43\textwidth}
  \includegraphics[width=1\linewidth]{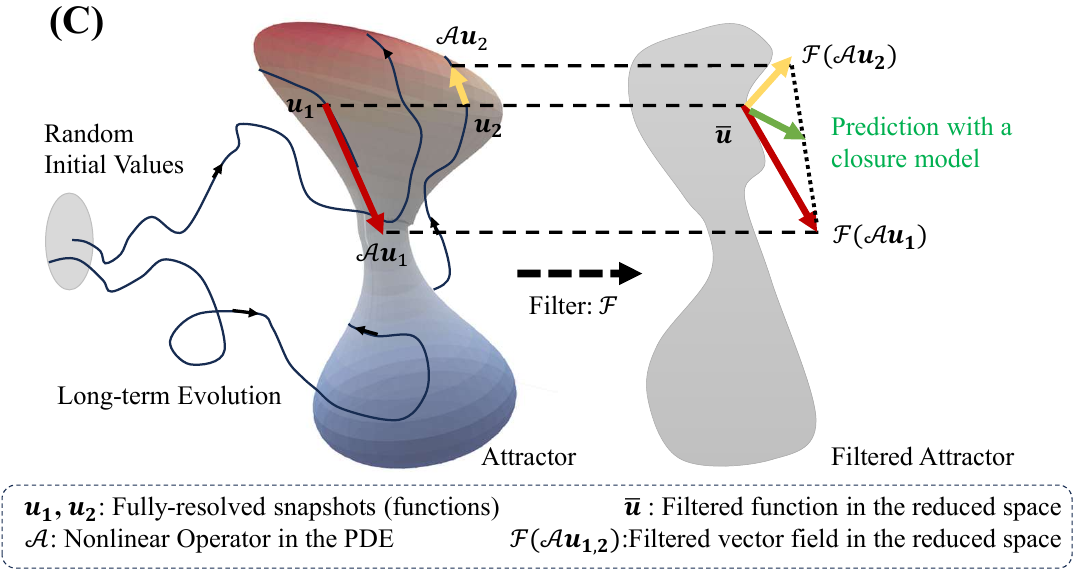} % ← size adjustable
  % \caption{Left subfigure caption}
\end{subfigure}\hfill
\begin{subfigure}{0.57\textwidth}
  \includegraphics[width=1\linewidth]{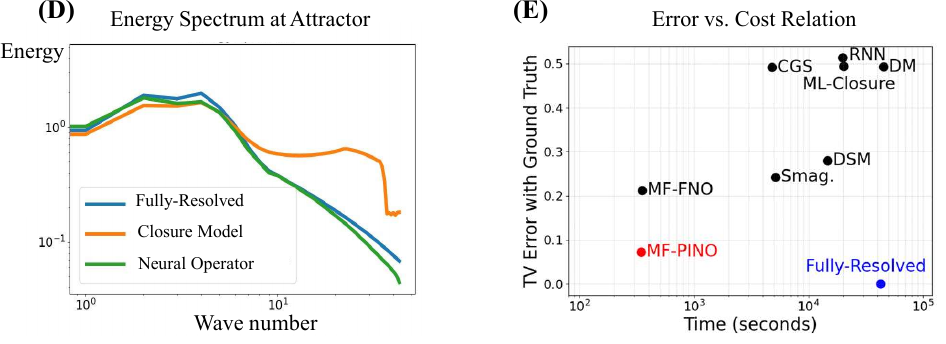} % ← size adjustable
  % \caption{Right subfigure caption}
\end{subfigure}

% \vspace{1em} % optional: space below all figures
\caption{
\textbf{(A-B):} \textbf{Conceptual Illustration of Different Methods (training and inference)}
\textbf{(A) }During training, learning-based closure models can only leverage information captured by the coarse grids, though it is derived from fully-resolved simulations. In contrast, neural operators can directly learn the fine-scale information through either supervised or physics-informed training with fine-grid inputs. 
\textbf{(B) }(i) Fully-resolved simulations are conducted on fine grids. They provide gold-standard estimations of statistics but are extremely costly. (ii) Coarse-graining framework with closure model ($\mathcal{A}\ovu+clos(\ovu;\theta)$), which adopts a closure model in conjunction with a coarse-grid numerical solver.
(iii) Coarse graining with neural operator. After training with fine-grid inputs in (A), neural operators seamlessly support coarse-grid inputs. fine-scale information is implicitly incorporated into coarse-grid simulations through the interaction between inputs and model parameters.
\textbf{(C): Non-Unique Issue of Closure Model.} The arrows represent the evolving direction of trajectories (i.e. the time derivative of the governing dynamics). In chaotic systems, trajectories with slightly different initializations rapidly diverge from one another, but ultimately they all converge to the attractor (if one exists) after long-term evolution.
Coarse graining can be viewed as designing a vector field that drives the dynamics in the reduced space. In closure model framework, the assigned vector field is $\mathcal{A}\ovu+clos(\ovu;\theta)$.
Multiple points (e.g., $u_1$ and $u_2$) of the ground-truth attractor (i.e., equilibrium state) map to the same filtered value $\ovu$, precluding the closure model from identifying the correct dynamics ($\mathcal{F}(\mathcal{A}u_1)$ and $\mathcal{F}(\mathcal{A} u_2)$) in the filtered space $\mathcal{F}(\hhh)$. By minimizing the loss function, the model learns to predict the average of these multiple choices (green arrow), leading the simulation to wrongly diverge from the filtered attractor.
\textbf{(D):}\textbf{ Prediction of energy spectrum at the attractor.} `Closure Model' represents a single-state learning-based closure.
\textbf{(E):}
\textbf{Total variation distance from ground-truth invariant measure versus computation cost. }
% See \Cref{sec: exp} for the abbreviation of method names.
`Fully-Resolved': gold-standard fully-resolved simulations, providing ground-truth statistics. `CGS': coarse-grid simulation without closure model. `Smag.': Smagorinsky closure model.`ML-Closure': learning-based single-state model. `DSM': Dynamical Smagorinsky closure model. `RNN': history-aware closure model with recurrent neural network. `DM': stochastic closure model based on diffusion model.
`MF-FNO': Multi-fidelity FNO.
`MF-PINO': multi-fidelity physics-informed neural operator.
All learning-based methods are trained with same amount of fully-resolved data. Our methods with neural operators (MF-FNO and MF-PINO)
are the fastest. MF-PINO is the closest to ground truth among all methods that take coarse-grid inputs, beating those methods using an explicit closure model.
}
\label{fig:ant}
% \vspace{-2em}
% \vskip -0.7cm
\end{figure}

\textbf{Our Approach:} 
In this paper, we propose a  machine learning (ML) framework for estimating the long-term statistics of any chaotic system. We learn the solution operator for short-term evolution using a neural operator. To compute the long-term statistics, we roll out the trained neural operator autoregressively over many time steps until certain statistical observables converge, indicating that the trajectory has reached dynamical equilibrium.
In contrast to closure modeling, ours is an end-to-end learning approach where no numerical solver is retained in the loop at test time. 
The discretization-invariance property of neural operators enables us to remove the constraint that closure models face, viz.,  the learned closure model is only on a coarse grid. Instead, neural operators employ a discretization-agnostic approach to learning. We can train neural operators on fine-grid simulations, and run them on a coarse grid at inference time to compute long-term statistics, and thus, satisfy the grid constraints of coarse graining.  

%infuse them with fine-scale information by training on fully-resolved simulations.The missing information from fine-scale is then implicitly incorporated into coarse-grid simulations through nonlinear interactions between inputs and weights within the model.

Compared to our neural operator approach, one might intuitively expect closure modeling to be better, since they leverage known coarse-grid numerical solver, and only require learning the unresolved effects in the closure model. However, in nonlinear (and especially chaotic) dynamics, information across scales is often highly entangled. This undermines the implicit assumption that the system can be explicitly divided into a linear combination of a \textit{known} low-frequency component (modeled by the coarse-grid numerical solver) and an \textit{unknown} high-frequency component (learned by the closure model)—a separation that typically relies on idealized assumptions such as scale separation.

In this work, we provide a new theoretical understanding of closure modeling and reveal that they are fundamentally unsuitable for generic chaotic systems.
% We formally prove in \Cref{thm: random} that for generic problems, such learning methods are fundamentally ill-posed since they are constrained to be on the same coarse grid as the solver, not able to accurately approximate the underlying chaotic system. 
Specifically, we  prove in \Cref{thm_clos_all} (i) that the mapping that closure models attempt to approximate in a reduced space (coarse grid) is non-unique, i.e., there are multiple potential outputs for a given input (\Cref{fig:ant} (C)). Hence, the standard approach to learning closure models
% under such non-uniqueness 
results in the average of all possible outputs, and that cannot accurately approximate the long-term statistics, no matter how large or expressive the closure model is.
We also prove that other attempts leveraging posteriori training \cite{guan2022stable,shankar2023differentiable}, history information \cite{ma2018model, wang2020recurrent} and stochasticity such as diffusion models \cite{wu2023learning} into closure modeling cannot overcome this limitation.
In particular, we show that mitigating the non-uniqueness issue of closure models requires previous methods to rely on the closeness between the empirical measure of training data and the limit distribution of the original dynamics, which necessitates extensive fully-resolved simulation data due to the slow convergence of empirical measures in high-dimensional spaces. This contradicts the goal of closure modeling, since such fully-resolved data alone would already suffice for accurate statistical estimation, eliminating the need to train a closure model.

A similar requirement for fully-resolved training data would also hold for neural operators, if done naively, as in prior works~\cite{li2022learning,kovachki2023neural}. We drastically reduce this through a  curriculum approach that builds on multi-resolution training~\cite{lyu2023multi} and physics-informed learning~\cite{goswami2023physics} of neural operators~\cite{li2021physics}.
We train a neural operator model first, on data obtained from coarse-grid numerical solvers. Since such solvers are relatively cheap to run, we can obtain sufficient data to train the neural operator to accurately emulate the coarse-grid solver. We then fine-tune the neural operator using (a small amount of) high-fidelity data and physics-based losses defined
on a fine grid. Since neural operators can operate on any grid, we can employ the same model to train on data from both coarse and fine-grid solvers. The addition of physics-based losses on a fine grid further reduces the requirement for fully-resolved simulations,  and improves generalization, in line with what has been seen in prior works on physics-informed learning ~\cite{li2021physics,wang2021learning}.

{In contrast to standard closure models, the neural operator model stores fine-scale information after training with high-resolution inputs. The missing information from the fine scales is
then implicitly incorporated into coarse-grid simulations through nonlinear interactions between inputs and
weights within the model.
Viewed under this lens, the Neural Operator approach can be seen as a generalization of the closure model,  where coarse graining is possible through non-linear mixing across scales, and learning can happen progressively using multi-resolution multi-fidelity training samples and physics knowledge.
}

We provide theoretical guarantees for the accuracy of neural operator-based coarse graining. 
The class of neural operators has been previously established as universal approximators in function spaces~\cite{kovachki2023neural,lanthaler2023nonlocal}, meaning they can accurately approximate any continuous operator (at any resolution). We further strengthen this result in this paper, and prove in \Cref{thm: pino} that a neural operator that approximates the underlying ground-truth solution operator can also provide sufficiently accurate estimates of the long-term statistics of a chaotic system, and there is no catastrophic build-up of errors over long autoregressive rollouts, under certain conditions.
We derive all the above theoretical results through the lens of measure flow in function spaces, introducing a novel theoretical framework, viz., functional Liouville flow.

We test the performance of our approach in several chaotic dynamical systems, including a forced homogeneous isotropic turbulence with Reynolds number $1.6\times 10^4$.
Forced turbulence is in general more challenging to model compared to decaying homogeneous isotropic turbulence, which many learning-based closure models have previously focused on (e.g. \cite{ guan2022stable, boral2023neural}). This is because forced turbulence requires continuous energy input to maintain a steady state and involves more complex multi-scale interactions. Forced turbulence thus provides richer statistical data and deeper insights into turbulence cascade physics.
% We
% test the performance of our approach in several instances from fluid dynamics, including a challenging chaotic system with Reynolds number $1.6\times 10^4$.
Our multi-fidelity physics-informed neural operator (MF-PINO), trained on multi-fidelity data under a curriculum approach as described above, achieves a 330x speedup over fully-resolved numerical solver, and achieves with a relative error  $\sim 10\%$  on the energy spectrum. In contrast, the learned closure model coupled with a coarse-grid numerical solver is $60$x slower than   PINO   while having a much higher error $\sim186\%$. 

% \brs{Can you help provide a few sentences briefly introducing the climate / weather model task you considered?}

The closure model has a significantly higher error under our setup compared to prior works~\cite{guan2022stable,maulik2019subgrid,ma2018model}. This is because we assume that only a few samples from fully-resolved simulations are available for training, both for PINO and the closure model, viz., just $\sim 10^2$-time steps from a single fully-resolved trajectory. In contrast, prior works on learning closure models assume thousands of time steps over hundreds of fully-resolved trajectories.  More details about the speed-accuracy performance of different approaches are shown in \Cref{fig:ant}(E).

{Finally, we apply our approach to machine-learning weather prediction 
to demonstrate the effectiveness of our approach on large, scientific problems. To this end, we train FourCastNet 3~\cite{bonev2025fourcastnet}, a frontier probabilistic weather prediction model based on geometric neural operators \cite{bonev2023spherical,liu2024neural}, using our multi-fidelity curriculum approach. Compared to the high-fidelity model, the multi-fidelity model achieves a 2.78x reduction in training time while maintaining comparable validation performance, as well as preserving spectral fidelity and high-resolution predictive skill of the high-fidelity ensemble.}

\section{Results}
We will first introduce the background of coarse graining (CG) and long-term statistics of chaotic systems. Then we will introduce neural operators and our method with multi-fidelity physics-informed neural operator (MF-PINO) along with its theoretical advantages. After that we will review the closure modeling approach and present our main theoretical results demonstrating why closure modeling ansatz is unsuitable for data-driven methods. Finally, we present the experiment evaluations that display superior performance of MF-PINO on estimating long-term statistics with coarse-grid simulations. See \Cref{fig:stat_main,fig:stat_main_hiRe} for some of the experimental results.

% both the problem and neural operator, followed by a discussion of the closure modeling approach and our method with physics-informed neural operator (PINO). Then we will present our main theoretical results demonstrating why closure modeling ansatz is unsuitable for data-driven methods, and the accuracy guarantee for PINO. Finally, we present the experiment evaluations that display superior performance of PINO on estimating long-term statistics with coarse-grid simulations. See \Cref{fig:stat_main,fig:stat_main_hiRe} for some of the experiment results.

\subsection{Background}
% \section{Background and Existing Methods}
\label{sec: sec2}
% We formally introduce the problem setting of evaluating long-term statistics and the goal of coarse graining and closure modeling. See \Cref{app:apdxA} for more backgrounds.
% \paragraph{Estimating Long-term Statistics with Coarse-grid Simulation}
Consider an evolution partial differential equation (PDE) that governs a (nonlinear) dynamical system in the function space, 
\begin{equation}
\label{eq:general-pde}
    \begin{cases}
    \partial_t u(x,t)=\mathcal{A}u(x,t)\\
    u(x,0)=u_0(x),\ u_0\in \mathcal{H},
    \end{cases}
\end{equation}
where $\aaa$ is a nonlinear differential operator, $u_0$ is the initial value and $\mathcal{H}$ is a function space containing functions of interests, e.g., fluid field, temperature distribution, etc. 
Boundary and regularity conditions are included in the notion of $\hhh$.
% The operator $\mathcal{A}$ is written as $\mathcal{L}+\mathcal{N}$, which are its linear and nonlinear parts. 
This equation naturally induces a semigroup $\{S(t)\}_{t\geq 0}$ defined as the mapping from the initial state to the state at time $t$, $S(t):u_0\to u(\cdot, t) $. We refer to the set $\{S(t)u_0\}_{t\geq 0}$ as \textit{trajectory from $u_0$}.

\paragraph{Attractor, invariant measure, long-term statistics:}
The (global) \textit{attractor} of the dynamics is defined as the maximal invariant set of $\{S(t)\}_t$ towards which all trajectories converge over time. For many relevant systems, 
the existence of a compact attractor is either rigorously proved~\cite{temam2012infinite,milnor1985concept} or demonstrated by extensive experiments~\cite{kuznetsov2011dynamical,dinicola2011systems,huang2009cancer}. The \textit{invariant measure} is the time average of any trajectory, independent of initial value as long as the system is ergodic,
% $    \mu^*:=\lim_{T\to\infty}\frac 1 T\int_{t=0}^T\delta_{S(t)u}dt,\  u\in\mathcal{H},\ a.e.$
\begin{equation}\label{inv_mes}
    \mu^*:=\lim_{T\to\infty}\frac 1 T\int_{t=0}^T\delta_{S(t)u}dt,\  u\in\mathcal{H},\ a.e.
\end{equation}
where $\delta$ is the Dirac measure. $\mu^*$ is a measure of functions and is supported on the attractor. Intuitively, the invariant measure captures the system's long-term behavior when it reaches a dynamical equilibrium. The \textit{long-term statistics} are expectations of functionals on the invariant measure. 
% They have great significance both in applications[cite] and research[cite], e.g. aircraft design and climatic prediction(?is it?)[cite].
The most straightforward approach to estimate statistics is to run an accurate simulation of trajectory and compute following the definition. In practice, we first fix a sufficiently large $T$ and then choose spatiotemporal grid size accordingly so that the overall error of the simulation within $[0,T]$ remains small. We will refer to this approach as high-fidelity simulations or fully-resolved simulations. 
See \Cref{app:apdxA} for more background and \Cref{app:ass} for assumptions.

Chaotic systems, characterized by positive Lyapunov exponents~\cite{medio2001nonlinear}, are known for their extreme sensitivity to perturbations and catastrophic accumulation of small errors over time.
To account for the unstable nature of chaotic systems, high-fidelity simulations have to be carried out on very dense spatiotemporal grids to make discretization errors small enough so that the overall error along the trajectory does not grow rapidly. This makes the fully-resolved simulations prohibitively expensive.

 \paragraph{Coarse Graining}
Given the computation cost of fully-resolved simulations and the fact that the ultimate goal is to evaluate the statistics instead of any individual trajectory, many works have been exploring ways to\textit{ give good estimations of statistics with simulations only conducted on coarse grids}, known as \textit{coarse graining}. Simulation on coarse grids is equivalent to evolving a filtered function $\ovu$ defined as $\ovu=\mathcal{F}u$, where $\mathcal{F}$ is a linear filtering operator. 
{Here $\mathcal{F}$ is understood in a generalized sense, abstracting from specific forms in various coarse-graining problems. Our results do not rely on its exact representation. 
In cases where only partial observables are available (e.g., coarse-grid measurements or certain physical quantities), $\mathcal{F}$ acts as a quotient map, identifying functions that yield indistinguishable observations.
}
The target of course-graining is to design a modified operator $\hat{\aaa}\ovu$ which governs the evolution of $\ovu$ in the reduced system $\mathcal{F}(\hhh)$ and enables good estimations of long-term statistics. In this work, we consider two approaches to coarse graining: the conventional closure-based method (either hand-crafted or data-driven) and a neural operator approach that directly models the effective dynamics on the coarse grid.

% We will refer to this approach as coarse-grid simulation (CGS). It serves as the core focus of this work.
% Let us formalize it as follows.

% Denote by $D$ the set of grid points used in fully-resolved simulations and $D'$ that in coarse-grid simulations, with $|D'|\ll|D|$.
% Simulating on $D'$ could be viewed as evolving a filtered function $\ovu$ defined as $\ovu=\mathcal{F}u$, where $\mathcal{F}$ is a linear filtering operator. For instance, $\mathcal{F}$ is a spatial convolution for cases like down-sampling in the finite difference method and Fourier-mode truncation in the spectral method.
% In general, directly simulating $\partial_t\ovu=\aaa\ovu$ will suffer from a catastrophic error. The target of course-graining is to design a modified operator $\hat{\aaa}\ovu$ which governs the evolution of $\ovu$ in the reduced system $\mathcal{F}(\hhh)$ and enables good estimations of long-term statistics.
% In this work, we consider two approaches to coarse graining: the conventional closure-based method (either hand-crafted or data-driven) and a neural operator approach that directly models the effective dynamics on the coarse grid.

\subsection{Multi-fidelity Physics-informed Neural Operator for Coarse Graining}

% \paragraph{Neural Operators}
% Neural operators are a class of recently-emerged machine learning models designed for solving PDEs, an area also known as operator learning.
% The goal of operator learning \cite{kovachki2023neural,lu2021learning} is to approximate mappings between infinite-dimensional function spaces rather than vector spaces. 
% One of the representatives is Fourier Neural Operator (FNO)~\cite{li2020fourier}, which serves as the backbone of our model, and its architecture is illustrated in \cref{fig:archi}.
% % whose architecture
% % can be described as: 
% % \begin{equation}
% %     \mathcal{G}_{FNO}:=\mathcal{Q} \circ ({W}_L + \mathcal{K}_L) \circ \cdots \circ \sigma ({W}_1 + \mathcal{K}_1) \circ \mathcal{P},
% % \end{equation}
% % where $\mathcal{P}$ and $\mathcal{Q}$ are pointwise lifting and projection operators. The intermediate layers consist of an activation function $\sigma$, pointwise operators $W_{\ell}$ and integral kernel operators $K_{\ell}:\ u\to \mathscr{F}^{-1}(R_{\ell} \cdot \mathscr{F}(u))$, 
% % where $R_{\ell}$ are weighted matrices and $\mathscr{F}$ denotes Fourier transform.
% Neural operators support input from different resolutions (grid sizes), and inputs are all viewed as discretization of an underlying function. 
Neural operators~\cite{kovachki2023neural,li2020fourier} have been vastly used as surrogate models. In this work, we will provide a new perspective that they naturally serve as a coarse-graining scheme with advantageous properties for long-term statistics.

\paragraph{Neural Operator as Coarse-graining Scheme }
As will be discussed subsequently, restricting the learning task in the filtered space suffers from the non-uniqueness of the associated mapping.
We propose to extend the learning task into function space $\mathcal{H}$ and directly deal with the solution operator $S(t)$ of the governing PDE, which is a well-defined mapping. We approximate it with a neural operator model. We adopt FNO as the backbone of our model. In particular, we employ FNO to learn the mapping $u\to \{S(t)u\}_{t\in[0,h]}$, where $h$ is a model parameter, usually of $O(1)$ scale. We discuss our empirical way to choose $h$ in \Cref{apdx-choose-h}.
Once trained to achieve low prediction error with fine-grid inputs, the model is seamlessly applied to coarse-grid initial conditions and rolled out autoregressively (i.e., using model output as input at the next step) to generate trajectories in the coarse-grid system.
Conceptually, the infinitesimal generator $\tilde{A}_\theta$ of the coarse-grid dynamics generated above plays the role of the modified operator $\hat{\aaa}$ in coarse graining.
$\tilde{A}_\theta$ accounts for both large- and fine-scale information, enabling nonlinear interactions between different scales. 
Intuitively, the neural operator model stores fine-scale information after training with high-resolution inputs. 
During coarse-grid rollouts at inference time, the missing information from fine-scale is implicitly incorporated through nonlinear interactions between inputs and weights within the model.

% we switch from the previous ansatz $\hat{A}=\aaa\ovu+clos(\ovu;\theta)$ to $\hat{\aaa}=\tilde{A}_\theta\ovu$, where the parameterized model 

% This idea is instantiated with neural operators, which 

% Neural operators support inputs from varying resolutions and ensure consistency across them. 

\paragraph{Multi-fidelity Physics-informed Neural Operator}
In practice, fine-grid training data from fully-resolved simulations are very expensive to generate.
Our practical method
is based on physics-informed neural operator (PINO) \cite{li2021physics}, and also drawing inspiration from multi-resolution training \cite{lyu2023multi}.
% The PINO model learns to approximate the semigroup $S(t)$, thus the infinitesimal generator of the learned operator corresponds to  $\tilde{A}_\theta$ in the discussion above.

% The backbone is a Fourier neural operator (FNO) \cite{li2020fourier}. In particular, we employ FNO to learn the mapping $u\to \{S(t)u\}_{t\in[0,h]}$, where $h$ is a model parameter, usually of $O(1)$ scale. 

The neural operator model is first pre-trained with cheap coarse-grid simulation (CGS) data and a small amount of fully-resolved data, and then trained by minimizing the physics-informed loss \cite{raissi2019physics} computed on fine grids. Arbitrary fine-grid input functions can be used to compute physics-informed loss instead of only input-output pairs from expensive simulations.
This approach significantly reduces the requirement on high-fidelity data from fully-resolved simulations.

\subsubsection{Theoretical Result}
\begin{thm}\label{thm: pino}
% \vspace{-1em}
(Approximation Guarantee for Course-Graining with Neural Operator)\\
% For any \( h>0 \) and \( \epsilon>0 \), 
Suppose a neural operator \( \mathcal{G}_\theta \) is trained to approximate the system evolution over time \( h \), and long-term statistics are estimated by iterating \( \mathcal{G}_\theta \) from an arbitrary coarse-grid initial condition. 

If the attractor is a hyperbolic set of the dynamics, then for any $\epsilon$ there exists a threshold \( \delta>0 \) such that if the per-step prediction error of \( \mathcal{G}_\theta \) in the original space $\hhh$ is below \( \delta \), the resulting statistical distribution approximates the filtered invariant measure within \( \epsilon \) accuracy.
\end{thm}

\begin{rmk}
(i) The hyperbolicity assumption is technical but standard in the analysis of chaotic attractors. Many physical systems of interest are believed to have hyperbolic structure~\cite{eckmann1985ergodic,chandler2013invariant,wang2014least}.\\
(ii) The result holds for any $h>0$. In practice, $h\sim O(1)$ and this enables faster convergence to the attractor in coarse-grid simulations.
\end{rmk}

%As suggested by \cite{de2022generic}, for a neural operator that is expressive enough and trained by supervised learning with $N$ high-fidelity data points from fully-resolved simulations, $\delta=O(N^{-\frac 1 2})$. When the optimization is successful, physics-informed training can further reduce the data requirement.

The universal approximation capability of neural operators has been previously proven~\cite{kovachki2023neural,lanthaler2023nonlocal}, and can be applied for approximating the solution operator for short-term evolution. However, when the learned neural operator is autoregressively rolled out to estimate the long-term statistics, one might worry that in chaotic systems, even small prediction errors will quickly accumulate, implying that a neural operator must match the ground truth with near-perfect accuracy, which is unrealistic. However, we have \Cref{thm: pino},
% Intuitively, we show that there exists a true fully-resolved trajectory (from a different initial value) whose filtered function is consistently close to the simulation we get with approximate neural operator and coarse-grid initialization. This is grounded in the resolution-invariance of neural operators. Since the invariant measure is independent of the initial condition, we obtain a good approximation of the (filtered) invariant measure.
% This result show
which shows that even if the trained operator jumps a large step $h$ in time and has errors as is in practice, we can still obtain a good estimation of statistics by rolling it out and computing the time averaged statistics. In practice, we notice a $10\%\sim20\%$ $L^2$- relative error   of single-step prediction suffices. Moreover, our method achieves faster convergence to the attractor because we roll it out using a $O(1)$ time step $h$ instead of fine temporal grids to compute the long-term statistics.
 The formal statement and proof of \Cref{thm: pino} can be found in \Cref{apdx: thm_pino}.

\subsection{Coarse Graining with Data-driven Closure Models}
\subsubsection{Closure Modeling}\label{sec:learn_clos}
Recall that coarse graining of the dynamics \cref{eq:general-pde} corresponds to a evolution of $\ovu$ in the filtered space $\mathcal{F}(\hhh)$.
Theoretically, the evolution of $\ovu$ is governed by $\partial_t\ovu=\mathcal{F}\mathcal{A}u=\mathcal{A}\ovu+(\mathcal{F}\mathcal{A}-\mathcal{A}\mathcal{F})u$, where $\mathcal{F}$ and $\mathcal{A}$ are not commuting due to the nonlinearity of $\mathcal{A}$.
However, the 
% so-called
commutator $(\mathcal{F}\mathcal{A}-\mathcal{A}\mathcal{F})u$ is intractable if restricted to coarse-grid systems (denoted by $D'$) since $u$  contains unresolved components and is unknown. To account for the effect of fine scales not captured by $D'$, in many coarse-grid simulation methods an adjusting term ${clos}(\ovu;\theta)$ ($\theta$ denotes the model parameters), known as \textit{closure model}, is added to the equation as a tractable surrogate model of $(\mathcal{F}\mathcal{A}-\mathcal{A}\mathcal{F})u$. The CGS trajectory is derived by simulating
\begin{equation}
\label{eq:les-pde}
    \begin{cases}
    \partial_t {v}(x,t)=\mathcal{A}{v}(x,t)+{clos}({v};\theta),\ x\in D'\\
    {v}(x,0)=\ovu_0(x),\ \ovu_0\in \mathcal{F}(\mathcal{H}),
    \end{cases}
\end{equation}
and the statistics are estimated as the time average of the corresponding functionals with ${v}(\cdot,t)$ input. We use the notation $v$ instead of $\ovu$ here to underscore the difference between coarse-grid trajectories and filtered fully-resolved trajectories, as they follow different dynamics in general. In the framework of coarse graining (as in former subsection), the modified operator $\hat{\aaa}$ in the reduced system corresponds to $\aaa+clos(\cdot;\theta)$.

% \textbf{Classical Closure Models:}
Closure modeling is a classical and relevant topic in computational methods for science and engineering, with rich literature available \cite{meneveau2000scale, moser2021statistical}. 
Despite their wide application in numerous scenarios, the design of closure models is more of an art than science. Many existing methods are grounded in physical intuition or derived by mathematical simplifications that incorporate strong assumptions, which often do not hold up under general conditions.
Additionally, selecting parameters in these closure models typically requires substantial domain expertise. Nevertheless, for several practical applications, such simple modeling assumptions are not sufficient to capture the complex optimal closure \cite{zhou2024sensitivity}.

% This decade has witnessed how data-driven approaches phenomenally extend traditional elegant 
% handcrafted methods in numerous areas. 
In recent years, there has been a growing interest in leveraging machine learning tools to design closure models (see \cite{sanderse2024scientific} for survey). 
However, in this work we will reveals the potential shortcomings of all mainstream methodologies following previous closure modeling ansatz.

\begin{table}[t]
    \caption{Comparison between different approaches for predicting long-term statistics of Navier-Stokes equations. The Reynolds number $Re$ is large in most applications. The top two are classical approaches, and the rest are machine learning approaches. Training data is counted in the number of snapshots and trajectories. The complexity takes into account both spatial grids and temporal grids. 
    Our approach is even cheaper than coarse-grid simulations because ours can evolve with $O(1)$ time step instead of small time-grids following the CFL condition, as is the case for other methods utilizing a coarse solver. $\delta t$ is the time-grid size for latent SDE in \cite{boral2023neural} or reverse SDE for sampling in the diffusion model as in \cite{dong2024data}.
    }
    \centering
    \resizebox{\textwidth}{!}{\begin{tabular}{l@{\hspace{0.3cm}}c@{\hspace{0.3cm}}c@{\hspace{0.3cm}}c}\toprule
    \multirow{2}*{Method} & Optimal & High-res. training data \@ & \multirow{2}*{Complexity} \\ 
    & statistics &  Snapshots \big| Trajs. & \\ \midrule
    Fully-resolved Simulation, e.g., DNS \cite{reynolds2005potential,choi2012grid} & \cmark & - & $Re^{3.52}$ \\
    Coarse-grid Simulation, e.g., LES \cite{reynolds2005potential, choi2012grid} & \xmark & - & $Re^{2.48}$ \\ \midrule
    Single-state model \cite{guan2022stable} & \xmark & 24000 \big| 8 & $Re^{2.48}$ \\
    History-aware model~\cite{wang2020recurrent} & \xmark & 250000 \big| 50 & $Re^{2.48}$ \\
    Latent Neural Stochastic Differential Equation (SDE)~\cite{boral2023neural} & \xmark & 179200 \big| 28 & $\frac{1}{\delta t}Re^{1.86}$ \\
    Diffusion Model-based stochastic closure model ~\cite{dong2024data} & \xmark & 150000 \big| 10 & $\frac{1}{\delta t}Re^{2.48}$ \\
    Online Learning \cite{sirignano2023dynamic} & \xmark & - & {$Re^{3.52}$} \\
    \textbf{Physics-Informed Operator Learning (Ours)} & \cmark & \phantom{00}384 \big| 1 & $Re^{1.86}$ \\
    \bottomrule
    \end{tabular}}
    \label{tab:compare}
    \end{table}

% \subsection{Non-uniqueness Issue of Closure Modeling Ansatz}\label{sec:learn_clos}
We summarize mainstream learning-based closure modeling methods as follows.
% and analyze their shortcomings as follows. 
% All the theoretical results are listed in \Cref{fig:thm}.

\textbf{Learning single-state closure model:} %\label{sec:learn_clos}
Broadly speaking, neural network-represented closure models ${clos}(v;\theta)$ are proposed with various ansatz and neural network architectures, and they are trained by minimizing an a priori loss function aiming at fitting the commutator~\cite{sanderse2024scientific}, 
% $    J_{ap}(\theta;\mathfrak{D})=\frac {1} {|\mathfrak{D}|}\sum\limits_{i\in\mathfrak{D}}\|clos(\ovu_i;\theta)-(\mathcal{F}\mathcal{A}-\mathcal{A}\mathcal{F})u_i\|^2,$
\begin{equation}\label{mthd1}
    J_{ap}(\theta;\mathfrak{D})=\frac {1} {|\mathfrak{D}|}\sum\limits_{i\in\mathfrak{D}}\|clos(\ovu_i;\theta)-(\mathcal{F}\mathcal{A}-\mathcal{A}\mathcal{F})u_i\|^2,
\end{equation}
where the training data $u_i$ come from snapshots of trajectories from fully-resolved simulations, i.e., $S(t)u_0$ for particular $t$.
There are also extensions of the learning framework above. Some works proposed to add a posterior loss
into the training objective \cite{sirignano2020dpm,list2022learned}, see Method section \ref{mthd_baseline} for more descriptions.

\textbf{Learning history-aware closure model:}
Some works \cite{ma2018model,wang2020recurrent} propose to take account of history information in the reduced space, namely a closure model whose input is $\{\ovu(x_i,t-s)\}_{x_i\in D',\ 0<s\leq t_0}$ at the moment $t$, where $t_0$ is a model parameter. However, we will prove below that this approach does not resolve the non-unique issue.

\textbf{Stochastic formulation of closure model:}
Another direction is to replace the deterministic closure model with a stochastic one, such as diffusion or flow-based models \cite{boral2023neural,lu2017data,dong2024data}. This line of work is inspired by \cite{langford1999optimal}, which shows that the optimal choice of closure model has the form of a conditional expectation.

We want to emphasize that in the context of learning closure models, it is not possible to incorporate multi-resolution pretraining and physics-informed learning as applied in our method with neural operator. The fundamental reason is that the learning procedure of these closure models is restricted to the fixed coarse grids. For coarse-grid functions $\ovu$, the closure term $(\mathcal{F}\aaa-\aaa\mathcal{F}){\ovu}=0$, thus the model cannot acquire any information from this data point. For physics-informed learning, since the PDE in the coarse-grid system contains an unknown closure term, one cannot compute the PDE residual with coarse-grid input functions.

% part on FourCastNet 3 % move to fig2
\begin{figure}[t]
    \centering
    \includegraphics[width=\linewidth]{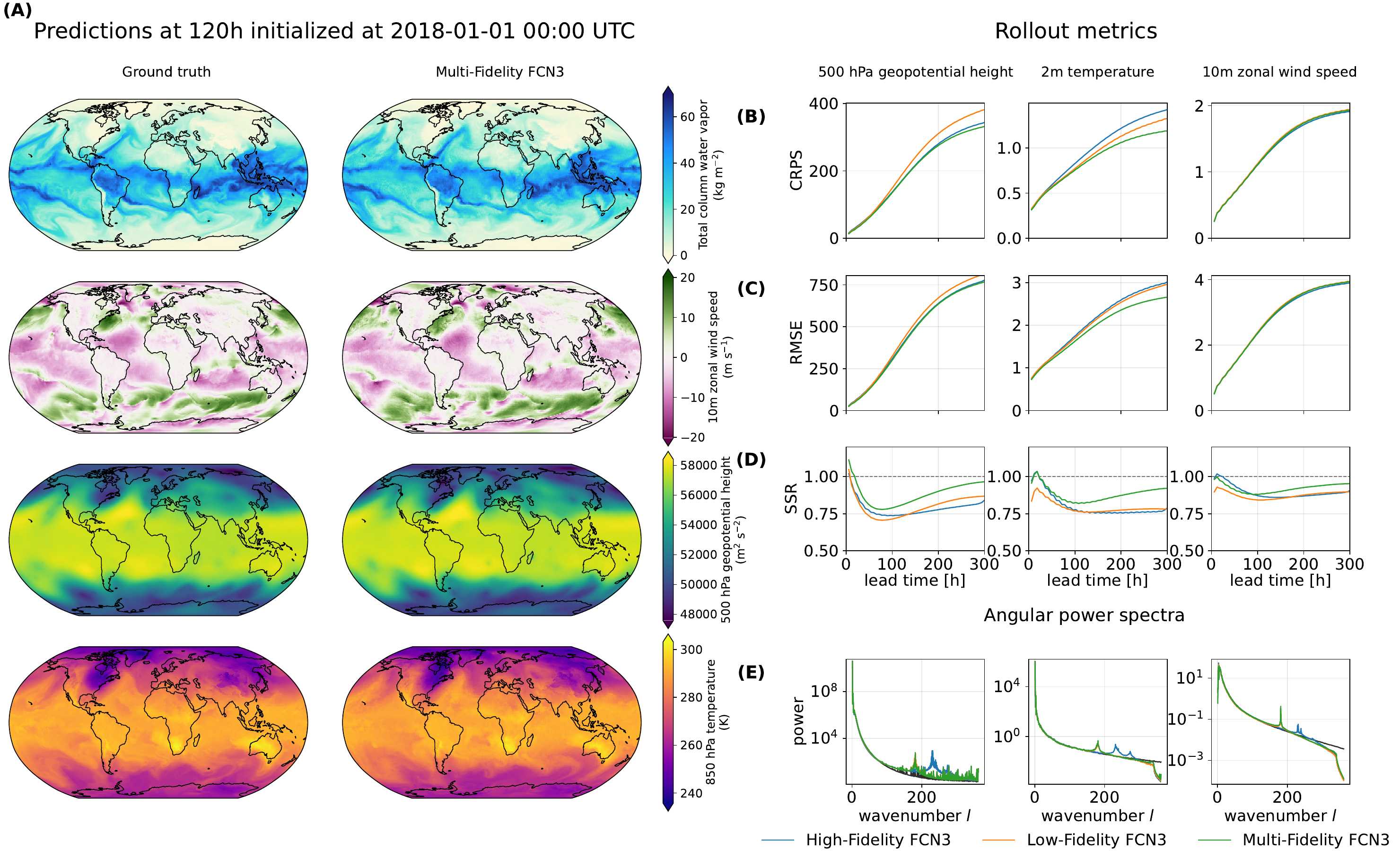}
    \caption{\textbf{(A-E) Experimental Results on Probabilistic Weather Prediction with high-fidelity, low-fidelity and mixed-fidelity FourCastNet 3 variants.} \textbf{A:} Single ensemble member predictions initialized at 2018-01-01 00:00:00 UTC and at a lead time of 120 hours. \textbf{(B-D):} Aggregated CRPS, RMSE and SSR metrics as a function of lead time for geopotential height at 500 hPa, 2m temperature and 10m zonal component of wind. \textbf{(E):} Angular power spectra for the same variables at 300 hours lead time.}
    \label{fig:fcn3_combined_panel}
\end{figure}

%%%%%

\begin{figure}[t]
\centering
\includegraphics[width=0.97\linewidth]{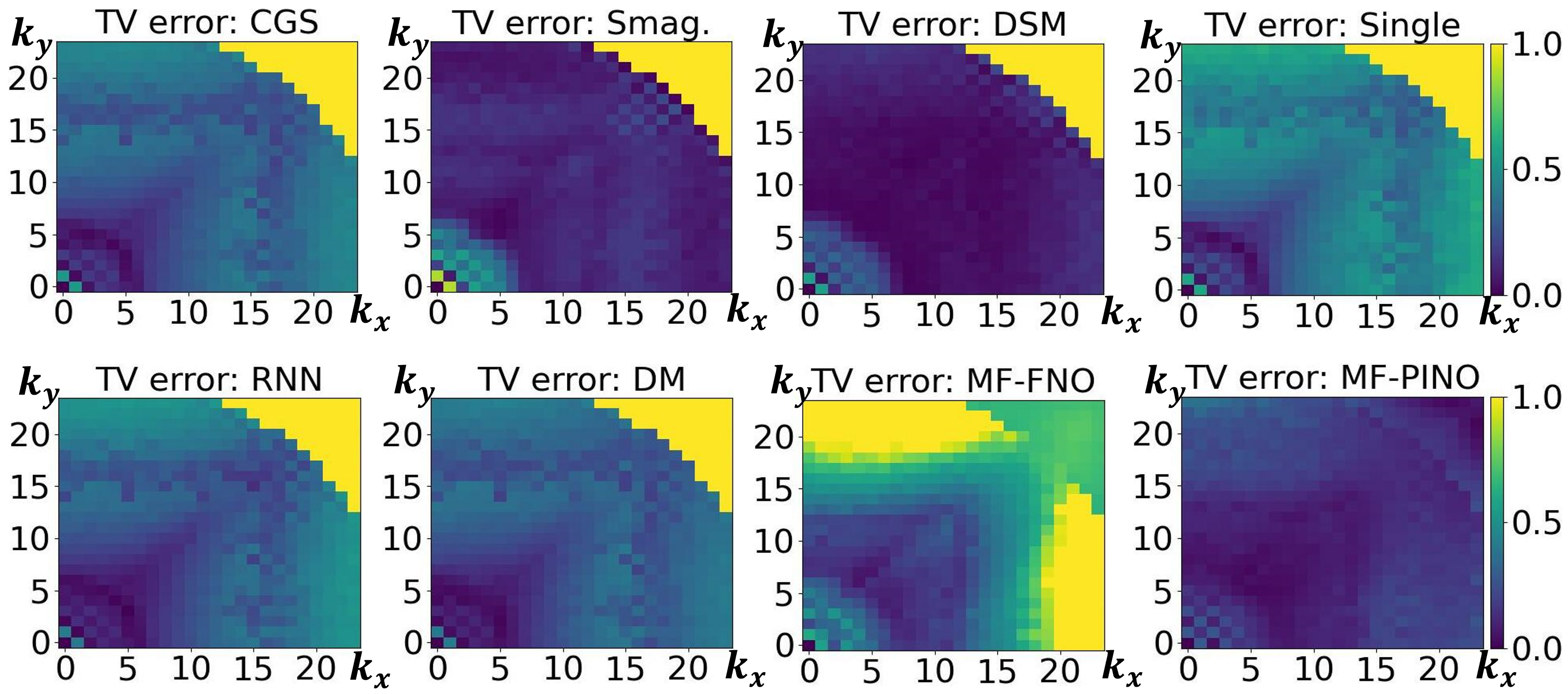}
% \caption{}
\caption{
\textbf{Total Variation (TV) error results for Navier-Stokes with $Re=1.6\times 10^4$.}
The $(k_x,k_y)$-element represents the TV error compared with ground-truth fully-resolved simulations regarding the distribution of the mode length of the component for $(k_x,k_y)$- Fourier basis $e^{i\frac{2\pi}{L}(k_xx+k_yy)}$. Total variation ranges within $[0,1]$. 
The smaller the TV error, the closer the distribution is to the ground truth.
From left to right: coarse-grid simulation without closure model (CGS), Smagorinsky model (Smag.), Dynamic Smagorinsky Model (DSM), single-state learning-based model (Single), history-aware closure model based on recurrent neural network (RNN), stochastic closure model based on diffusion model (DM), Multi-fidelity FNO (MFF), and multi-fidelity physics-informed neural operator (MF-PINO).
Our approach with MF-PINO performs the best among all methods that simulate on coarse grid systems.
}
\vspace{-1.2em}
\label{fig:stat_main}
\end{figure}

%%%%

\begin{figure}[t]
\centering

\begin{subfigure}{0.31\textwidth}
  \includegraphics[width=\linewidth]{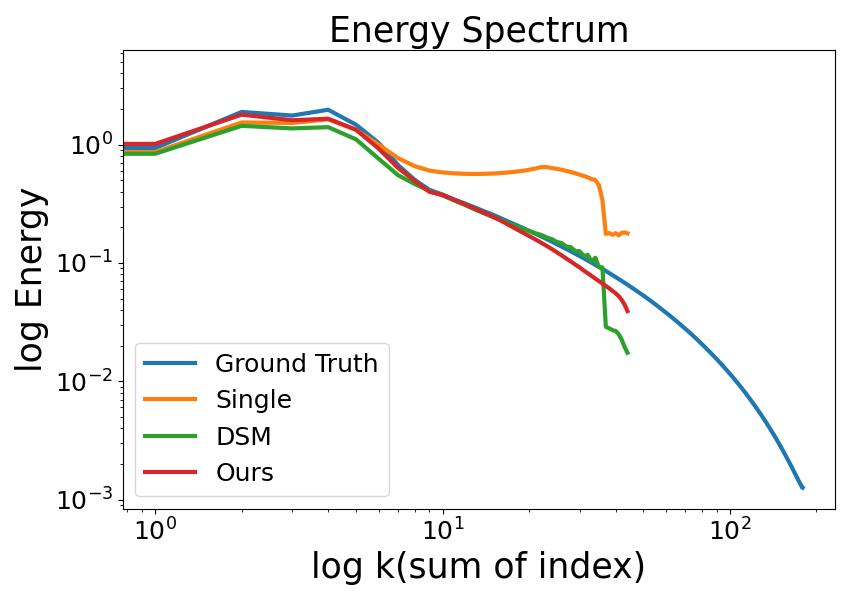}
  \caption{Energy Spectrum}%\label{fig:sub1}
\end{subfigure}\hfil % \hfil for filling the middle
\begin{subfigure}{0.31\textwidth}
  \includegraphics[width=\linewidth]{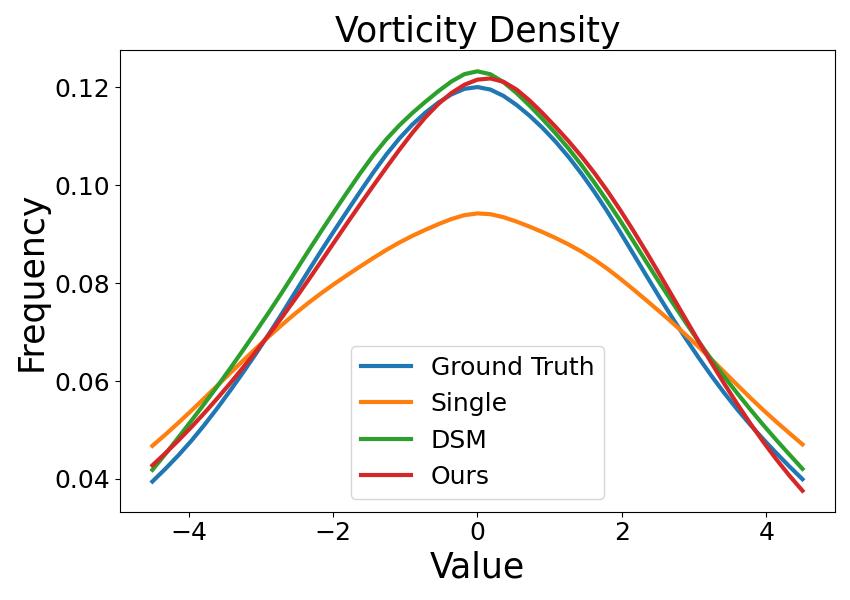}
  \caption{Vorticity Distribution }%\label{fig:sub5}
\end{subfigure}\hfil
\begin{subfigure}{0.31\textwidth}
  \includegraphics[width=\linewidth]{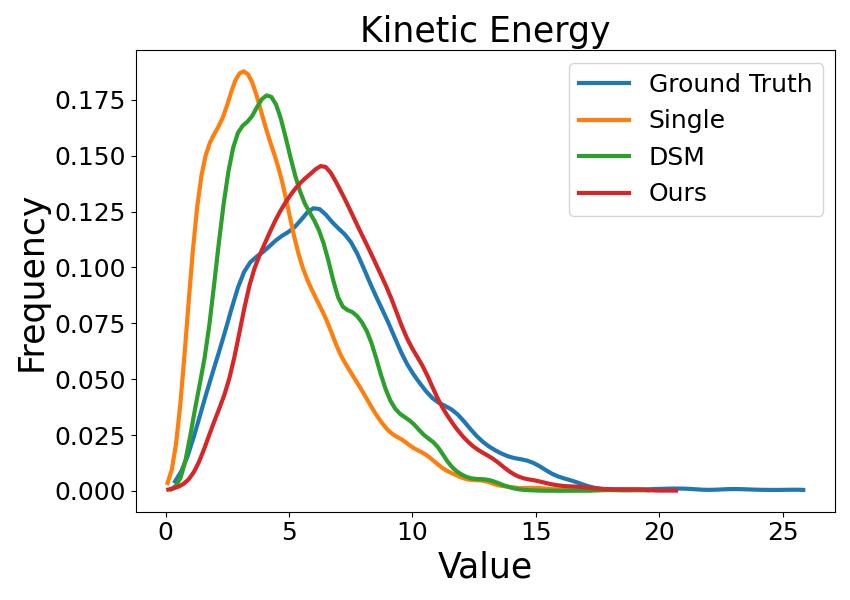}
  \caption{Kinetic Energy Distribution}%\label{fig:sub5}
\end{subfigure}
\caption{\textbf{Experiment Results of Some Statistics for NS Equation with $Re=1.6\times 10^4$.}
`Ground Truth'(blue curve) refers to fully-resolved simulation. `DSM': dynamical Smagorinsky model. `Single': learning-based single-state closure model. More results for other statistics and all methods can be found in the Appendix. Our method (red) is the closest to ground truth among all methods with coarse-grid inputs.
}
\label{fig:stat_main_hiRe}
\end{figure}

\subsubsection{Theoretical Results}
In contrast to the advantageous property of using neural operators for coarse graining, we also analyze the approach with data-driven closure models and reveal its fundamental limitations.
We first informally state our theoretical results. Recall that $\aaa$ is the nonlinear operator describing the evolution, $\mathcal{F}$ is the linear filtering operator, $S(t)$ is the semigroup, $\hhh$ is the (full) function space, and snapshots refer to $S(t)u_0$ for particular $t$ and initialization $u_0\in\hhh$. We assume the existence and uniqueness of a compact attractor and the system being ergodic.

\begin{thm}\label{thm_clos_all} (Non-uniqueness issue and Data-inefficiency of Learning-based Closures)\\
(i) In general, the target mapping of closure models $\ovu\to(\mathcal{F}\mathcal{A}-\mathcal{A}\mathcal{F})u$ is a multi-map. Consequently, the approximation error has a lower bound. The target of history-aware closures remains non-unique with a non-vanishing approximation error. One cannot obtain the best approximation of $\mu^*$ among distributions supported in the reduced space with a stochastic closure model driven by a stochastic differential equation.\\
(ii) Let $clos^*$ be the optimal closure model that provides the best approximation of $\mu^*$ among all closure models. Suppose $clos_{\theta^*}$ is the closure model achieving  the global minima of the training loss \cref{mthd1} with $N$ fully-resolved snapshots drawn i.i.d. from $\mu^*$ as training data, then 
    $\|clos_{\theta^*}-clos^*\|=\Omega(N^{-\frac 1{d_0+2}})$, where $d_0$ is the intrinsic dimension of the attractor.
\end{thm}
\begin{rmk}
    The approximation error in (i) arises from the inherent non-uniqueness of the target mapping and cannot be eliminated by increasing model capacity. In practice, this implies that the training loss admits a non-zero irreducible lower bound, which is unknown a priori and may lead to instability during optimization.  Regarding stochastic closure models, this result also applies to closures based on currently prevalent generative models such as diffusion models (DM) and flow-matching. See \Cref{apdx: instable} for empirical evidence and discussion regarding training instability.
\end{rmk}
\begin{rmk}
    $d_0$ in (ii) is typically large for PDE systems. For instance, in the Navier–Stokes equations, $d_0$ is known to grow polynomially with the Reynolds number, though the precise scaling depends on the flow configuration \cite{temam2012infinite}.  Consequently, the convergence rate of learning for data-driven closure models is inherently poor. 
    
    Notably, directly estimating statistics by averaging over fully-resolved samples achieves a convergence rate of $O(N^{-\frac 1 2})$ due to the Monte Carlo error bound. Thus, if a sufficient amount of fully-resolved data were already available, one could compute the statistics directly, eliminating the need to train a closure model.
    
% In contrast, our neural operator approach leverages limited fully-resolved data to enable efficient coarse-grained simulation. While its theoretical error bound under pure supervised learning framework is also $O(N^{-\frac 1 2})$, it is empirically more data-efficient than direct statistical estimation, and can further reduce data requirements when combined with physics-informed training.
\end{rmk}
\begin{rmk}
As can be seen from the explicit form of $clos^*$, the optimal closure model varies across different equation configurations, e.g. boundary shape and PDE coefficient
 (like Reynolds number in fluid dynamics). Consequently, one should not expect the learning-based closure
 models trained on one system to generalize well to another, unless the corresponding invariant measures $\mu^*$ remain similar.
\end{rmk}

We illustrate the non-uniqueness issue of learning closure models stated in \Cref{thm_clos_all}(i) in Fig.~\ref{fig:ant}(C). Due to dimension reduction from filtering, multiple fine-grid functions correspond to the same coarse-grid representative, but yield different filtered vector fields $\mathcal{F}(\mathcal{A}u)$. Minimizing the loss thus leads the model to predict an average vector field, which is not necessarily physically meaningful. Unlike typical inverse problems or regression tasks where averaging is reasonable, here predictions must respect the manifold structure of physical trajectories, and hence, learning closure models is ill-posed.

The under-determined nature of the closure term remains unsolved in history-aware models or with posteriori training. We remark here that there have been some theoretical results stating that historical information in the reduced space suffices to recover the underlying true trajectory \cite{levine2022framework}, but they are all derived in ODE systems and
highly rely on the finite-dimensionality of ODE systems.

%As an implication of the result for stochastic closures, incorporating randomness in the closure model might be redundant, as the parameters governing stochasticity tend to diminish after training. 

Stochasticity in closure models fails to improve performance because the theoretically optimal closure model is deterministic (\cref{eq:ideal_clos}). The ideal closure term is a conditional expectation given by the average of all possible high-resolution states. While stochasticity might still offer training advantages, such as optimization or regularization during training, it is not essential for expressiveness, as the optimal model itself is not random.
%it is a flawed approach because the optimal model itself is not random. 
% Similarly,  while stochasticity could offer benefits such as improved optimization or regularization during training, it does not fundamentally address the underlying issues in closure modeling. In fact, as shown in \cref{eq:ideal_clos}, the optimal closure model is deterministic.  
The complete theorem and proofs of the first claim of \Cref{thm_clos_all}, as well as more interpretation of the intuitions, can be found in \Cref{apdx:random}.

Now it might seem strange that learning-based closure models still manage to achieve competitive performance, as reported  in previous works~\cite{maulik2019subgrid,guan2022stable,sanderse2024scientific}. To address this, we show that their empirical result heavily relies on the availability of large amounts of fully-resolved training data.  As is the case, the reported usages of fine-grid data in previous works involve tens of thousands of samplings from either one long fully-resolved trajectory or dozens of fully-resolved trajectories, which are prohibitively expensive (see \Cref{tab:compare} for some examples). 

%We comprehensively analyze the role of closure term and present the second claim of \Cref{thm_clos_all}, whose . We propose to characterize the evolution of distribution directly. 

We derive the optimal closure model $clos^*$, which could achieve the best estimation of long-term statistics among all coarse-grid simulation with additive closure models, by generalizing Liouville equation into measures in function spaces.
\begin{equation}\label{eq:ideal_clos}
    clos^*(v)=\mathbb{E}_{u\sim\mu^*}[\mathcal{F}\aaa u\big|\mathcal{F}u=v]-\aaa v,\ \ v\in\mathcal{F}(\hhh),
\end{equation}
where $\mu^*$ is the invariant measure in the original space $\hhh$. This finding extends the seminal work by \cite{langford1999optimal}, which attempted to derive an optimal closure model but whose formulation still relies on the instantaneous distribution of unresolved fine-scale components, which makes their closure model inherently unresolvable in the coarse-grid system. In contrast, \cref{eq:ideal_clos} can be determined solely by information from the filtered space.
We further show that existing data-driven closure modeling methods implicitly aim to approximate $clos^*$. 
This connection, though previously unrecognized, explains their empirical success. However, since $clos^*$ cannot be directly queried, training relies on estimating its output via statistical averaging over states with similar filtered representations $\ovu$—--a process that emerges from the mechanics of these methods rather than being explicitly designed. This leads to both data-inefficiency and training-instability, as discussed in \Cref{apdx: liouville,apdx: thm_liouv,apdx: instable} and demonstrated in our experiments.

\subsection{Experimental Results}\label{sec: exp}

{
\subsubsection{Probing Long-term Behavior of Chaotic Dynamics}
}\label{sec 2.4.1}
We first verify the effectiveness of using neural operator as coarse-graining scheme with two representative chaotic systems, 1D Kuramoto-Sivashinsky (KS), and 2D forced Navier-Stokes (NS, also known as Kolmogorov flow in this setting) which includes two test cases with Reynolds number ($Re$) 100 and $1.6\times10^4$.
KS equation has applications in various problems, including modeling ion-sputtered surfaces
\cite{vitral2018nano}, chemical reactions
\cite{conte2003exact}, and  plasma propagation in flames \cite{sivashinsky1982instabilities}.
NS equation is well-known for its highly chaotic behavior in describing turbulent fluid dynamics \cite{pope2001turbulent}.
% (snapshots visualized in \Cref{fig:main: dataset}).
The code is available at 
$\texttt{https://github.com/neuraloperator/pino-closure-models}$.

We evaluate long-term statistical estimates against the gold-standard ground truth from fully resolved simulations. Coarse-graining methods are grouped into three categories; see \Cref{mthd_baseline} for details. Each method is subsequently referred to by its abbreviation.
\textbf{(I)} Numerical closure-model methods: CGS, Smag., DSM. \textbf{(II)} Learning-based closure models: Single-state, history-aware (RNN), and stochastic-closure model (DM).  \textbf{(III)} Coarse graining with neural operator: multi-fidelity FNO (MF-FNO) and MF-PINO, the only difference between the two is that PINO has an additional PDE loss function while FNO uses only data loss during training.
The simulations of all coarse-graining method start from random initializations on the same coarse grid and run on GPU.

Inspired by \Cref{thm_clos_all} where we analyze through the viewpoint of probability distributions, we propose to compare the predicted invariant measure and ground truth directly using total variation (TV) distance: a metric ranging $[0,1]$. The smaller it is, the closer two distributions are. It serves as a fundamental criterion in that 
a well-estimated distribution guarantees precise estimation of \textit{any} other statistics. In contrast, evaluation solely based on certain specific statistics is incomplete, as some methods may perform well on certain statistics while failing on others \cite{gamahara2017searching}. Moreover, good performance on certain selected statistics could result from error cancellation, despite poor estimation of the underlying distribution, which undermines the method's robustness across diverse tasks. 
In practice, we first fix a basis of functions and then compute the TV distance  between marginal distributions on every basis component.
To give a more convincing comparison, we also check well-known statistics
 like energy spectrum, auto-correlation, variance, velocity and vorticity density, kinetic energy and dissipation rate. 
% See \Cref{apdx: exp_visual} for their definitions.

The error of some statistics (compared with fully-resolved simulations) and the running time of a single trajectory for $t\in[0,100]$ in $Re=100$ test case is in \Cref{tab: kf_main}.
 A cost-error (in terms of average total variation distance from ground-truth invariant measure among all basis components) summary for this experiment is presented in \Cref{fig:ant}(E).
For the $Re=1.6\times 10^4$ case, visualizations of the TV error of marginal distributions for all basis components that are captured by coarse-grid systems, are presented in \Cref{fig:stat_main}. The results for some statistics are visualized in \Cref{fig:stat_main_hiRe}. The comparison on errors and running time are in \Cref{tab: ns1w:main} and \Cref{tab:ns1w_time_main}.
Full experiment results for all three dynamics along with definitions of all statistics we considered are in \Cref{apdx: exp_visual}.

\begin{table*}[t]
\centering
\setlength{\tabcolsep}{1.99pt}
\caption{\textbf{Experiment Results for Navier-Stokes Equation ($Re=100$).} \textbf{Left:} Errors on different statistics, i.e., average total variation (`Avg. TV'), energy spectrum (`Energy'), TV error for vorticity distribution (`Vorticity'), and velocity variance (`Variance'). Percentages refer to average relative errors. Other numbers refer to TV distances (ranging $[0,1]$) between ground truth and prediction. \textbf{Right:} Comparison of the inference time (seconds) of one trajectory for $t\in[0,100]$.
Best results are marked \textbf{bold}. Abbreviation of different methods: `FRS': fully-resolved simulations; `CGS': coarse-grid simulations without closure; `Smagorinsky': classical Smagorinsky closure model; 'Single-state': learning-based singles-state closure model; `DSM': dynamical Smagorinsky closure model; `MF-FNO': multi-fidelity FNO; `MF-PINO': our primary method with multi-fidelity physics-informed neural operator. The table is organized into three sections: numerical closure models (top), learning-based closure models (middle), and neural operator approaches (bottom). Our proposed method, MF-PINO, achieves the best results across all metrics.
}\label{tab: kf_main}
\begin{minipage}{0.68\linewidth}
\centering
\begin{tabular}{lcccc}\toprule
Method & Avg. TV & Energy $(\%)$  & Vorticity & Variance ($\%$) \\\midrule
CGS (No closure) & 0.4914 & 178.4651  & 0.1512 & 253.4234 \\
Smagorinsky %\cite{smagorinsky1963general}
& 0.2423 & 52.9511  & 0.0483 & 20.1740  \\
{DSM}%\cite{smagorinsky1963general}
& 0.2803 & 74.2150  & 0.0821 & 73.6158  \\\midrule
Single-state %\cite{guan2022stable} 
& 0.5137 & 205.3709  & 0.1648 & 298.2027  \\
History (RNN) %\cite{guan2022stable} 
& 0.4938 & 181.3914  & 0.1522 & 256.7264  \\
Stochastic (DM) %\cite{guan2022stable} 
& 0.4930 & 182.4392  & 0.1535 & 263.0987  \\\midrule
{MF-FNO} %\cite{smagorinsky1963general}
& 0.2123 & 20.7055  & 0.0115 & 20.4410  \\
{MF-PINO} & \textbf{0.0726} & \textbf{5.3276 } & \textbf{0.0091} & \textbf{2.8666 } \\\bottomrule

\end{tabular}
\end{minipage}\hfill
\begin{minipage}{0.30\linewidth}
\centering
%\subcaption{Inference Time (seconds)}
\begin{tabular}{lc}\toprule
% Method & Metric1 \\\midrule
%\vskip{1.2em}
\addlinespace[2.1mm]
Fully-Resolved & 39.70 \\
CGS (No closure) & 4.50 \\
Smagorinsky & 4.81 \\
DSM & 13.67 \\\midrule
Single-state & 18.57 \\
History (RNN)  & 18.92 \\
Stochastic (DM) & 42.15 \\\midrule
{MF-FNO} & \textbf{0.32} \\
{MF-PINO} & \textbf{0.32} \\\bottomrule
\end{tabular}
\end{minipage}
\end{table*}

For the higher $Re=1.6\times 10^4$ case, our method achieves an impressive speedup rate of 330x compared to fully-resolved simulation, with the best-performing estimation of most statistics.
For the low Reynolds number ($Re=100$) case, our model also achieves the best total variation error among all coarse-grid methods, and is 124x faster than  fully-resolved simulation. Our model is even faster than coarse-grid numerical solver (CGS) without any closure model because it can perform $O(1)$ time steps instead of fine temporal grids during simulations. Learning-based closure model is 58x slower than ours since it needs to invoke a neural network model for every single time step. 
Furthermore, the results of other practical statistics suggest that TV error is a reasonable and fundamental metric.

\begin{table}[htbp]
\centering
\setlength{\tabcolsep}{1.99pt}
\caption{\textbf{Error on Different Statistics: NS equation, $Re=1.6\times 10^4$.} From left to right: Average relative error on energy spectrum, max relative error on energy spectrum, total variation distance from (ground truth) vorticity distribution, average component-wise TV distance(error), relative error on root mean square of velocity, and relative error on the variance of vorticity. Percentages refer to average relative errors. Other numbers refer to TV distances (ranging [0, 1]) between ground truth and prediction.}
\begin{tabular}{lcccccc}\toprule
Method & Avg. Eng. ($\%$) & Max Eng. ($\%$) & Vorticity & Avg. TV & Mean Vel.($\%$) & Variance ($\%$) \\\midrule
CGS (No closure) & 139.5876  & 332.6311  & 0.0326 & 0.3642 & 15.9057  & 16.1087  \\
Smagorinsky & 32.8391  & 80.4986  & 0.0351 & 0.2051 & 29.8473  & 55.3507  \\
Dynamic Smag. (DSM) & 17.3967  & 71.3727  & 0.0143 & 0.1614 & 12.3541  & 33.5422  
\\\midrule
Single-state & 186.1610  & 422.6007  & 0.0525 & 0.4527 & 19.8919  & {11.7190 } \\
History-aware (RNN) & 72.6070  & 189.4918  & 0.0291 & 0.3611 & 9.2850  & \textbf{5.2505} \\
Stochastic closure (DM) & 91.3567  & 240.6798  & 0.0312 & 0.3425 & 8.7776  & {8.8782} \\\midrule
Multi-Fidelity FNO & 50.2734  & 101.6366  & 0.0237 & 0.5592 & 2.8406  & 27.2564\\  
{MF-PINO} & \textbf{13.9455 } & \textbf{34.6842 } & \textbf{0.0109} & \textbf{0.1401} & \textbf{0.2827 } & 18.7052  \\\bottomrule

\end{tabular}
\label{tab: ns1w:main}
\end{table}

%%%%%

\begin{table}[htbp]
\centering
\caption{\textbf{Inference Time for Navier-Stokes with $Re=1.6\times 10^4$}.
The comparison is based on the time cost (seconds) for the simulation of one trajectory with $t\in[0,100]$. 
 Abbreviation of different methods: `FRS': fully-resolved simulations; `CGS': coarse-grid simulations without closure; `Smag.': classical Smagorinsky closure model; `DSM': dynamical Smagorinsky closure model; `Single-state': learning-based singles-state closure model; `RNN': history-aware closure model with RNN; `DM': stochastic closure model with diffusion model; `MF-FNO': multi-fidelity FNO; `MF-PINO': multi-fidelity physics-informed neural operator.
}
\begin{tabular}{|l|c|c|c|c|c|c|c|c|c|}
\hline
Method & FRS & CGS & Smag. &DSM& Single-state & RNN & DM & {MF-FNO} & {MF-PINO} \\ \hline
 Time [s] & 525.74 & 14.29 & 29.27&71.35 & 96.78 & 54.23 &320.96 & \textbf{1.59} & \textbf{1.59} \\ \hline
\end{tabular}
\label{tab:ns1w_time_main}
\end{table}
%%%%%%%%%%%%%%%%

From the results, we see that even though using a minimal number of fully-resolved training data, neural operator manages to estimate long-term statistics accurately and efficiently, much better than all the baselines.
As for the learning-based methods following the previous closure modeling scheme, it cannot make use of the cheap CGS data. 
As suggested by our theoretical results, it performs poorly when restricted to a realistic usage of fully-resolved data, due to the large gap between the empirical measure of the training data and the desired invariant measure. The results are much worse than that in the original papers, where thousands of data points or hundreds of trajectories are used for training. Notably, aligning with the results in \cite{dong2024data}, diffusion model-based closure model (DM) is even slower than FRS,  contradicting the primary purpose of coarse graining — improving computational efficiency.
Smagorinsky model outperforms DSM in $Re=100$ case since DSM is derived based on scaling relations not well-satisfied for small $Re$, and that we have chose the best-performing parameter for Smagorinsky model. 
MF-FNO is able to leverage the abundant cheap coarse-grid simulation (CGS) data and be fine tuned on limited fully-resolved data, but the lack of PDE loss limits its generalization capabilities, which MF-PINO is able to overcome. %In contrast, MF-PINO leverages physics-informed loss on fine-grid inputs to overcome the shortage of fully-resolved data.
% In addition to all these results, we conduct an \textbf{ablation study} to demonstrate the effect of data-loss pertaining, as is in \Cref{apdx:abl}.

% Nevertheless, with reasonably more data, neural operators trained purely through fitting data remain a promising approach. This is supported by our theoretical results \Cref{thm: pino} and \ref{thm_clos_all}. We further conduct an empirical comparison of three approaches relying on fully-resolved data: neural operator, learning-based closure models, and direct computation of statistics based on the dataset. See \Cref{apdx-ablation}.

\subsubsection{Probabilistic Machine-Learning Weather Prediction}
Having demonstrated the benefits of multi-fidelity learning with neural operator across a diverse set of chaotic dynamics, we further investigate whether these advantages translate to a real-world forecasting problem at operational scale. Weather prediction provides a particularly relevant test case, as modern machine-learning forecasting systems demand substantial computational resources and large quantities of high-resolution training data. This setting therefore offers a direct opportunity to assess whether multi-fidelity training can reduce computational and data requirements while preserving predictive performance.

To this end, we consider FourCastNet3, a frontier machine-learning probabilistic weather prediction model based on the spherical neural operator paradigm~\cite{bonev2023spherical,bonev2025fourcastnet}. Consistent with the multi-fidelity framework developed throughout this work, we train both a high-fidelity model and a multi-fidelity variant of FourCastNet 3 for comparison. The high-fidelity model is trained exclusively on high-resolution weather data, whereas the multi-fidelity model first learns from a lower-resolution representation before being adapted to the target high-resolution regime.

More specifically, the high-fidelity variant is trained as an ensemble of two members on 40 years of hourly ERA5 reanalysis data at a spatial resolution of $0.25^\circ$ ($721 \times 1440$) and a lead time of 6 hours. To obtain a multi-fidelity variant a low-fidelity model is first pretrained on the same 40-year period at half the spatial resolution, $0.5^\circ$ ($361 \times 720$) with the same lead time of 6 hours. The multi-fidelity version is obtained by subsequently finetuning the low-fidelity model using only 5 years of high-resolution data. This setup mirrors the practical scenario in which lower-fidelity data are substantially cheaper to generate and store than their high-fidelity counterparts.

The high-fidelity model required approximately 89 hours of training on 128 NVIDIA H100 GPUs with a model-parallelism degree of four. In contrast, the multi-fidelity approach completed low-resolution pretraining in under 29 hours and high-resolution fine-tuning in under 3 hours using the same hardware configuration and number of optimization steps. Overall, multi-fidelity training reduced total training time by approximately $2.78\times$ relative to the high-fidelity training regime. Additional improvements are likely achievable, as the dataset was down-sampled on the fly and removing this I/O bottleneck could further accelerate training.

Despite this substantial reduction in computational cost and reliance on high-resolution data, the multi-fidelity model achieves forecasting performance comparable to that of the fully high-fidelity model. \Cref{fig:fcn3_combined_panel} depicts a summary of the numerical results. The continuous ranked probability score (CRPS), a standard metric for ensemble forecasts, is reported for 50-member FourCastNet 3 ensemble predictions up to a lead time of 300 hours, aggregated over 12 hourly initial conditions over the whole validation year of 2018. We observe that the multi-fidelity variant, although trained for a fraction of the computational cost and on lower-fidelity data, performs comparably to the high-fidelity model, either matching or out-performing it on most variables. The plots also depict a significant improvement over the low-fidelity base model that is the result of the pretraining for the multi-fidelity model. A similar trend is observed for the root mean square error (RMSE) of the ensemble-mean prediction. Moreover, the ensemble's calibration also measures the spread-skill ratio (SSR), a common metric to quantify if the ensemble is calibrated. A perfectly calibrated forecast is expected to have a SSR of 1, whereas lower or higher values indicate under- or overdispersive forecasts, respectively. Our results suggest that the multi-fidelity training generates the best-calibrated forecasts. This indicates that the multi-resolution training enables the model to better distribute uncertainty across spatial scales, thus leading to a better-calibrated forecast. Together with the results in \Cref{sec 2.4.1}, these findings suggest that multi-fidelity training not only reduces computational cost but also helps mitigate the spectral bias in operator learning, consistent with previous studies \cite{liu2024mitigating}.

Finally, the last panel depicts angular power spectra for both models at a lead time of 300 hours. We observe 
that the predictions accurately capture the spectrum of the ground truth solution. Moreover, for the multi-fidelity solution, we observe better agreement for high wavenumbers over the low-fidelity model. For all models, we observe spikes at the Nyquist frequency of the pre-training resolution. We hypothesize that these arise due to aliasing effects of the gradients in training, as these are present for all three models.
Comprehensive results that depict the model's performance on a more complete set of channels can be found in \Cref{apdx_fcn3}.

The results demonstrate that both methods faithfully capture the energy spectra of the predicted fields, even though the multi-fidelity model was mostly trained on lower-fidelity data at a significantly lower computational cost.

\section{Discussions}
Through the lens of measure flow in function spaces, we propose a theoretical framework to comprehensively study the coarse-graining ansatz. Our results reveal that the prevailing closure modeling approach, i.e., explicitly adding an error-correcting term (closure) to the coarse-grid dynamics, is in general unsuitable for simulating chaotic systems. Through this framework, we provide a unified interpretation for  the practical limitations of existing closure models. 
Specifically, the mapping that closure models try to approximate is in general non-unique. Data-driven closure models will learn to predict the average of all possible outputs that is not physically valid. Consequently, learning-based closure models cannot accurately approximate the long-term statistics, no matter how large or expressive the closure model is.
We also prove that other attempts leveraging history information and stochasticity cannot overcome this limitation.

{
% Notably, we demonstrate that the prerequisites for such empirical successes are, in general, unattainable, highlighting the inherent challenges in achieving a reliable closure model.
% As a remedy, building on our theoretical insight, we advocate an alternative coarse-graining ansatz (learning framework) that incorporates nonlinear interaction between information across different scales. This framework not only provides a theoretically guaranteed optimal estimation of long-term statistics with minimal FRS data requirements but also demonstrates superior performance in experimental evaluations.

% We formally prove in \Cref{thm: random} that for generic problems, 
% the mapping that closure models attempt to approximate in a reduced space (coarse grid) is non-unique, i.e., there are multiple potential outputs for a given input (\Cref{fig:ant}(C)). Hence, the standard approach to learning closure models
% % under such non-uniqueness 
% results in the average of all possible outputs, and that cannot accurately approximate the long-term statistics, no matter how large or expressive the closure model is.
% We also prove that other attempts leveraging history information and randomness cannot overcome this limitation.
}

To be more precise, we derive the  optimal closure model under coarse graining (\cref{eq:ideal_clos})
% (which extends the existing theoretical groundwork~\cite{langford1999optimal}) 
and show that even for sufficiently expressive models, mitigating the non-uniqueness issue of closure models requires prior methods to rely heavily on the closeness between the empirical measure of training data and the limit distribution of the original dynamics, which necessitates extensive fully-resolved simulations for training  due to the slow convergence of empirical measures in high-dimensional spaces.

Closure modeling has been the mainstream framework for coarse graining for decades and remains widely adopted across various fields. However, we provide a deeper understanding of the limitations underlying its previous empirical successes. Classical numerical closure models are often derived under idealized assumptions. For example, the derivation of the dynamical Smagorinsky model~\cite{lilly1992proposed} heavily relies on the scaling relationship between the energy spectrum and the wavenumber, which is theoretically valid only for infinite Reynolds numbers and high-wavenumber regimes. As a result, our experiments reveal that while this closure model performs well in high-Reynolds-number scenarios, it exhibits significant errors in capturing statistics associated with low-wavenumber dynamics and in low-Reynolds-number settings. For learning-based closure models, we show that the amount of high-fidelity data required to train an effective model exceeds the amount needed to directly estimate the target statistics, undermining the very purpose of training such a model.

Beyond the closure modeling framework, our results indicate a more promising coarse-graining direction that incorporates nonlinear interaction between information from different scales using neural operators, supported by both theoretical guarantee and experimental evidence for efficient and accurate estimations of long-term statistics.

\paragraph{Scope and Limitations}
Our study focuses on deterministic dynamical systems that admit a unique compact attractor and satisfy ergodicity. These conditions are essential for defining a unique invariant measure and the corresponding long-term statistics, as not all chaotic systems admit well-defined statistical quantities over long timescales. To ensure consistency with prior studies, we conduct experiments using the Kuramoto–Sivashinsky and Navier–Stokes equations~\cite{li2022learning}. However, for systems lacking these properties, the applicability of our framework remains uncertain.

The theoretical convergence guarantee of neural operators (\Cref{thm: pino}) further relies on the hyperbolicity of the attractor. While this assumption is satisfied in many practical systems, its validity in more general dynamical settings remains to be investigated. {The analysis and results in \Cref{thm_clos_all} are only for closure models and other coarse graining schemes do not fall under this.}

Although we employ a physics-informed training strategy to reduce data requirements, we note that optimization remains a challenge under limited fully-resolved data regime, particularly in complex systems such as Navier–Stokes with $Re=2 \times 10^5$. This is consistent with prior findings~\cite{li2021physics}. In highly turbulent settings, directly minimizing the PDE residual may become intractable due to instability in optimization. Incorporating additional physics-based constraints, e.g., energy dissipation rates~\cite{li2022learning} or Sobolev norms, may help regularize training and serve as intermediate objectives. Designing an effective curriculum for such losses remains an open direction for future research. We also note that MF-PINO is not universally superior. In systems where robust empirical scaling laws or well-characterized structures exist, handcrafted closures based on reasonable approximations may yield sufficiently accurate results and remain a practical choice in applications. However, in settings where such heuristics fail or are unavailable, neural operators provide a principled and data-efficient alternative for coarse graining. {Additionally, while numerical closures can be applied broadly without much modification, machine-learning-based closures, as well as our method, require retraining. In this study, we focus on the conceptual formulation and did not discuss generalization across different systems, which is technically feasible and represents a concrete direction for future work.}

% \paragraph{The Reliance on Data}
% Theoretically, as long as the optimization for physics-informed loss converges, MF-PINO requires no high-fidelity data. In practice, however, optimizing physics-informed loss from scratch is known to be challenging. To mitigate this, we use a combination of coarse-grid and fully-resolved data for pre-training, ensuring a well-initialized optimization process. 

% This naturally introduces a tradeoff between the reliance on high-fidelity data and the stability of training with physics-informed loss, particularly in terms of hyperparameter tuning and optimization robustness. As a general guideline for practitioners, utilize as much high-fidelity data as available, as it typically enhances performance. 

\paragraph{Future Work}
This work takes an initial step toward exploring a new coarse-graining ansatz with neural operators that incorporates nonlinear interactions across different scales. The results presented above highlight the effectiveness of our approach, paving the way for future research to address more challenging problems using advanced operator learning methods and model architectures.
For instance, though we did not conduct experiments on generalization across different dynamics, e.g. varying Reynolds numbers or domain geometries, this can be done by concatenating dynamics coefficients and parameterized geometries with the input functions and applying modifications of FNO \cite{li2024geometry,rahman2024pretraining,li2024scale}. {Moreover, our method is generally applicable in settings where the governing equation is unknown. In this case, the physics-informed step in our method is achieved by training the model to match known observables.}
Additionally, we have not systematically evaluated the amount of fully-resolved data required by our method. Data from fully-resolved simulations are primarily used to alleviate the challenges in optimizing the physics-informed loss, a process that is highly sensitive to the choice of optimizer and model initialization, making it difficult to quantify their impact. This warrants further investigation.

While our method is theoretically guaranteed to yield superior estimations of long-term statistics, this guarantee assumes that the model achieves reasonably low physics-informed (PDE) loss. Optimization remains a key bottleneck for all physics-informed learning methods, and our approach would benefit significantly from advances in optimization techniques for physics-informed losses, such as recent works using advanced optimizers~\cite{wang2025simulating}.

We have not considered stochastic dynamical systems in this work, such as stochastic PDEs. While ergodicity and nontrivial invariant measures are more prevalent in stochastic systems, coarse graining in this setting remains relatively underexplored and represents another avenue for future development.

%%% 0312 try_add
\begin{refcontext}[sorting = none]
\printbibliography
\end{refcontext}
\label{references}

\begin{refsection}

\newpage

\section{Methods}
In this section, we provide the details of our method, theoretical results and experiment settings.

\subsection{Physics-Informed Neural Operator with Multi-fidelity Pre-training}

We first introduce operator learning, followed by methods of physics-informed and multi-resolution multi-fidelity training, and finally provide the theoretical guarantee on estimating statistics of our new method.

\paragraph{Operator Learning}
The goal of operator learning \cite{kovachki2023neural,lu2021learning} is to approximate mappings between function spaces rather than vector spaces. 
One of the representatives is Fourier Neural Operator (FNO)~\cite{li2020fourier}, 
whose architecture
can be described as: 
% which has demonstrated considerable success in addressing PDE-based problems with substantial speed improvements [cite 1, 5, 9, 15, 18]. The structure of a neural operator architecture can be described as follows:
% $    \mathcal{G}_{FNO}:=\mathcal{Q} \circ ({W}_L + \mathcal{K}_L) \circ \cdots \circ \sigma ({W}_1 + \mathcal{K}_1) \circ \mathcal{P},$
\begin{equation}\label{fno_map}
    \mathcal{G}_{FNO}:=\mathcal{Q} \circ ({W}_L + \mathcal{K}_L) \circ \cdots \circ \sigma ({W}_1 + \mathcal{K}_1) \circ \mathcal{P},
\end{equation}
where $\mathcal{P}$ and $\mathcal{Q}$ are pointwise lifting and projection operators. The intermediate layers consist of an activation function $\sigma$, pointwise operators $W_{\ell}$ and integral kernel operators $K_{\ell}:\ u\to \mathscr{F}^{-1}(R_{\ell} \cdot \mathscr{F}(u))$, 
where $R_{\ell}$ are weighted matrices and $\mathscr{F}$ denotes Fourier transform. See \cref{fig:archi} for illustration.

\begin{figure}[t]
\centering
\includegraphics[width=1\linewidth]{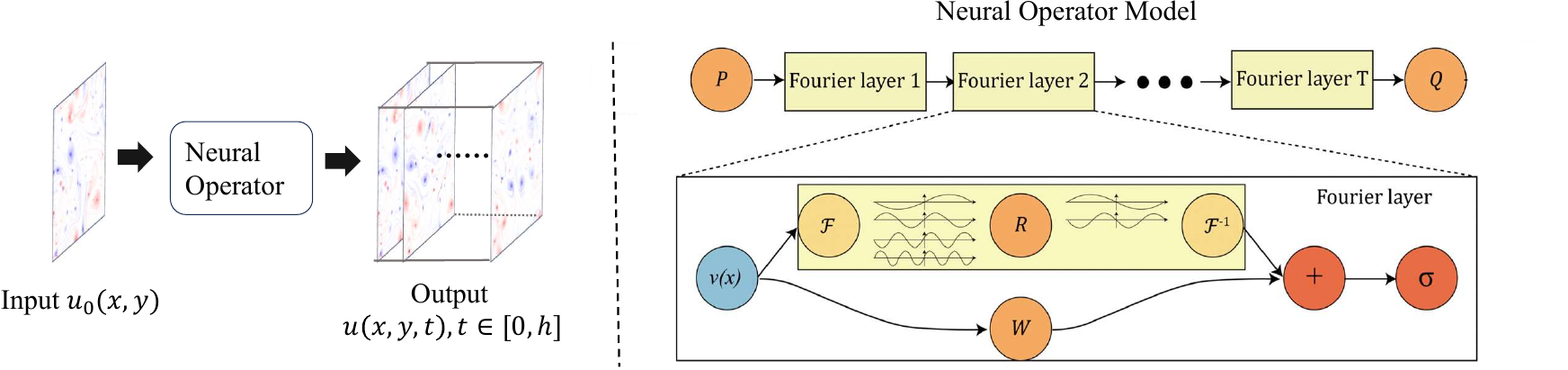}
\caption{
\textbf{Illustration of the model architecture.} 
\textbf{Left:} The neural operator model takes the initial condition (a spatial function) as input, e.g. $u_0(x,y)$ in 2D case. Its output is the trajectory $u(x,y,t)$ in time interval $t\in [0,h]$, where $h$ is a model parameter. \textbf{Right:} Architecture of neural operator. The input spatial function is first lifted into a spatial-temporal function by replicating $u_0$ in the temporal dimension and equipping it with temporal position embedding. Then it is transformed with point-wise operator $P$ which is usually a multi-layer perceptron (MLP) that increases the feature dimension. The Fourier layers consist of spectral convolution implemented based on Fourier transform $\mathcal{F}$, and point-wise operation. In the figure, $W$ denote the weight matrix in MLP and $\sigma$ denotes nonlinear activation functions. After several Fourier layers, the final output is obtained with a point-wise operator $Q$.
% The initial condition $u_0$ is repeated over the temporal dimension and combined with temporal position embeddings to form the input. This input is fed into a neural operator. The architecture of neural operator is introduced in \cref{fno_map}. At each step, the output's final time frame is used as the next initial condition to roll out the simulation recursively.
}
\label{fig:archi}
% \vspace{-2em}
\end{figure}

With FNO, we learn the mapping $u\to \{S(t)u\}_{t\in[0,h]}$, where $h$ is a model parameter. 
We note that most of the existing works apply neural operator to learn the semigroup mapping for a single time frame, e.g. $u\to S(h) u$. Modeling this single-time semigroup requires less parameter and computes than the mapping we target in this paper. The main reason why we try to approximate the mapping over a time interval is that we need to compute time derivative of the output function for physics-informed training, as is shown in \cref{pde-loss}.
Neural operator has two major advantages: (1) Resolution-Invariance: The model supports input from different resolutions (grid sizes), and inputs are all viewed as discretization of an underlying function. Consequently, when we feed a coarse-grid initial state to the well-trained model and roll out to generate a CGS trajectory, there exists an fully-resolved trajectory such that the CGS trajectory we obtain is its filtering. This CGS trajectory matches the optimal coarse-grid dynamics discussed in \Cref{sec: liouville}.
(2) Faster Convergence: The burning time $T_{burn}$ is the moment when a trajectory approaches the attractor close enough. For previous methods, after the learning-based closure models are trained, they are merged into a coarse-grid solver and evolve iteratively with relatively small time steps. In operator learning where $h$ is usually of $O(1)$ magnitude, the simulation arrives at $T_{burn}$ more quickly.

\paragraph{Physics-Informed Neural Operator}
We need to overcome the lack of fully-resolved training data in realistic situations. Note that the PDE(\ref{eq:general-pde}) contains all the information of the dynamical system. We adopt physics-informed methodologies \cite{karniadakis2021physics} to remove the reliance on data. To be specific, the operator model $\mathcal{G}_\theta$ is trained by minimizing the physics-informed loss function:
% $
%     J_{pde}(\theta;\mathfrak{D})=\frac{1}{|\mathfrak{D}|}\sum_{i\in\mathfrak{D}} \|(\partial_t-\mathcal{A})\mathcal{G}_\theta u_{0i}(x)\|_{L^2(\Omega\times[0,h])},$
\begin{equation}\label{pde-loss}
    J_{pde}(\theta;\mathfrak{D})=\frac{1}{|\mathfrak{D}|}\sum_{i\in\mathfrak{D}} \|(\partial_t-\mathcal{A})\mathcal{G}_\theta u_{0i}(x)\|_{L^2(\Omega\times[0,h])},
\end{equation}
where the initial values $u_{0i}$ in the loss function could be any fine-grid functions and do not have to come from fully-resolved simulations. $\Omega$ is the spatial domain of these functions. More background of this method is in \cite{li2021physics}.

We would like to remark here regarding the mapping being learned. Most existing works employ neural operators to approximate the single-step semigroup map $S(h)$, whereas our approach targets the full semigroup over a time interval. As shown in \cite{kovachki2023neural}, both formulations achieve comparable predictive accuracy in practice. Both models also exhibit sufficient expressiveness and training stability under supervised learning.
While the single-step formulation typically involves fewer parameters and offers faster inference, the interval-based model provides outputs at intermediate time steps, enabling the estimation of time derivatives. This capability is crucial for computing physics-informed losses, following the approach introduced in PINO \cite{li2021physics}.

 \paragraph{Practical Algorithm with Multi-Resolution Multi-Fidelity Pre-training} In practice, the optimization of physics-informed loss is hard \cite{rathore2024challenges} and might encounter some abnormal functions with small loss but large errors \cite{wang20222}. To tackle these challenges, we pre-train the model via supervised learning with a data loss function to achieve a good initialization of the model parameters for $J_{pde}$ optimization:
 % $    J_{data}(\theta;\mathfrak{D})=\frac{1}{|\mathfrak{D}|}\sum_{i\in\mathfrak{D}}
 %    \|\mathcal{G}_\theta u_i-S([0,h])u_i\|.$
\begin{equation}
    J_{data}(\theta;\mathfrak{D})=\frac{1}{|\mathfrak{D}|}\sum_{i\in\mathfrak{D}}
    \|\mathcal{G}_\theta u_i-S([0,h])u_i\|,
\end{equation}
where the norm $\|\cdot\|$ can be either $L^2$ norm (corresponding to MSE loss) or other norms, e.g. $H^1$.
To enhance the limited fully-resolved training data available, we pre-train with 
% $J_{data}$ using 
plenty of CGS data first and then add fully-resolved data into the loss function. After that, we gradually decrease the weight of CGS data loss in the loss function since CGS data is potentially incorrect. After warming up with data loss, we further train our model with physics-informed loss. The formalized algorithm and its implementation details can be found in \Cref{alg:mainn}.

For our experimetn results, we refer to neural operator trained with this approach as MF-PINO and the one only with multi-resolution multi-fidelity data pretraining as MF-FNO.

Although our training incorporates multi-fidelity and multi-resolution data, the test-time inference is strictly performed on the same coarse grid as other baseline methods. This ensures a fair and consistent comparison across all approaches.

\begin{algorithm}[t]
    \caption{Multi-Resolution Multi-Fidelity Physics-Informed Operator Learning}
    \label{alg:mainn}
    \hspace*{0.02in} \textbf{Input:} Neural operator $\mathcal{G}_{\theta}$; training data set $\mathfrak{D}_{c}$(snapshots from coarse-grid simulations), $\mathfrak{D}_{f}$(snapshots from fully-resolved simulations),\ $\mathfrak{D}_{p}$(randomly sampled).\\
    % \hspace*{0.02in} \textbf{Output:} Learned operator  $\mathcal{G}_{\theta}$\\
    \hspace*{0.02in} \textbf{Hyper-parameters:}
    Training iterations $N_i(i=1,2,3)$. Weights combining two loss $\lambda_i(t)\\(i=1,2)$, which decay as $t$ increases. Parameters regarding the optimizer.

     \hspace*{0.02in} \textbf{Loss Function:}
     $J_{data}(\theta;\mathfrak{D})=\frac{1}{|\mathfrak{D}|}\sum_{i\in\mathfrak{D}}
    \|\mathcal{G}_\theta u_i-S([0,h])u_i\|,$\\
     \hspace*{1.1in}
    $J_{pde}(\theta;\mathfrak{D})=\frac{1}{|\mathfrak{D}|}\sum_{i\in\mathfrak{D}} \|(\partial_t-\mathcal{A})\mathcal{G}_\theta u_{0i}(x)\|_{L^2(\Omega\times[0,h])}.$
    % Number of total training iterations $M$; number of iterations and step size of inner loop $K,\eta$; weight for combining the two loss term $\lambda$
    \begin{algorithmic}[1]
    \For{$t=1,\cdots, N_1$}
    \State Minimize $J_{data}(\theta;\mathfrak{D}_c)$
    \EndFor
    \For{$t=1,\cdots, N_2$}
    \State Minimize $\lambda_1(t) J_{data}(\theta;\mathfrak{D}_c)+J_{data}(\theta;\mathfrak{D}_f)$
    \EndFor
    \For{$t=1,\cdots, N_3$}
   \State Minimize $\lambda_2(t) J_{data}(\theta;\mathfrak{D}_f)+J_{pde}(\theta;\mathfrak{D}_p)$
   \EndFor
    \State \Return $\mathcal{G}_{\theta}$
    \end{algorithmic}
\end{algorithm}

\subsection{Perspective through Liouville Flow in Function Space}\label{sec: liouville}
% We have demonstrated that existing learning methods target a non-unique mapping, resulting in an average of all possible outputs, which can be undesirable. Despite this, these methods still manage to achieve competitive performance. In this section, we show that their empirical result heavily relies on the availability of a large amount of FRS training data. This dependency is a significant limitation, as FRS data are typically scarce. If a sufficient amount of FRS data were already available for training, we could directly compute the statistics using the data, eliminating the need for training a closure model or running coarse-grid simulations.

Recall that our task is to get good and efficient estimations of statistics related to the invariant measure, instead of tracking any single trajectory, which is impossible in chaotic dynamics given the unavoidable discretization error in coarse-grid simulations.
This motivates us to investigate the evolution of the distribution (or measure) of functions to determine whether it converges to $\mu^*$.

 In finite-dimensional dynamical systems (ODEs), the evolution of distribution is governed by the Liouville equation. This observation inspires us to generalize the Liouville equation into function space and conduct our study therein. Rigorous definitions of related notions and detailed proofs for all claims made in this section can be found in \Cref{apdx: liouville}.

\paragraph{Functional Liouville Flow:} 
If we expand functions onto an orthonormal basis, $u = \sum_i z_i \psi_i$,\label{zi}
a PDE system (of $u$) can be viewed as an infinite-dimensional ODE (of $\rvz$). In this way, we yield the functional version of the Liouville equation describing how the probability density of $u$ evolves. Under this framework, 
we only need to check the stationary Liouville equation to obtain the limit invariant distribution of a dynamical system and compare it with $\mu^*$. 

In the coarse-grid setting, we similarly derive the evolution of the density of $\ovu$ and yield the \textit{optimal dynamics of }$v\in\mathcal{F}(\mathcal{H})$ (different from CGS in \cref{eq:les-pde}), $v$ is exactly the same as $\ovu$ here),
\begin{equation}\label{eq:main_ideal}
    \partial_t v=\mathbb{E}_{u\sim \mu_t}[\mathcal{F}\mathcal{A}u | \mathcal{F}u=v],
    % \vspace{-0.5em}
\end{equation}
where $\mu_t$ is the distribution of $u\in\mathcal{H}$ following the original dynamics at time $t$ and this expectation is conditioned on the samplings of $u$ satisfying $\mathcal{F}u=v$. Here we arrive at the same result in~\cite{langford1999optimal}.
Unfortunately, $\mu_t$ depends on $t$ and the initial distribution of $u\in\mathcal{H}$, which is underdetermined and will suffer from non-unique issues if restricted to a coarse-grid system, similar to what is discussed in the previous section.
In practice, one can only fix one particular $\hat{\mu}$, a distribution in $\mathcal{H}$, and assign the dynamics in reduced space as $\partial_tv=\mathbb{E}_{u\sim\hat{\mu}}[\mathcal{F}\mathcal{A}u|\mathcal{F}u=v]$.
% [f_1(\textbf{v},\textbf{v}^\perp)\big|\textbf{v}]$. 
Checking the resulting Liouville equation,
% and its limit distribution
we show that $\mu^*$ is the correct choice for $\hat{\mu}$ to guarantee convergence towards $P_{\#}\rho^*$, the optimal approximation of $\mu^*$
in $\mathcal{F}(\mathcal{H})$, where $P$ is the orthogonal projection towards $\mathcal{F}({\hhh})$, identical to $\mathcal{F}$ in most cases. Back to the learning methods listed in \Cref{sec:learn_clos}, due to the $L^2$ variational characterization of conditional expectation, 
% in the ideal case where the closure model is expressive enough, 
the underlying choice of $\hat{\mu}$ is the empirical measure $\mu_{data}$ of those training data coming from fully-resolved simulations, ideally $\mu^*$. Consequently, one has to use numerous fully-resolved training data due to the slow convergence of empirical measures 
in high dimensions. In particular, if $\mu_{data}$ is already close to $\mu^*$, $\mathbb{E}_{u\sim\mu_{data}}\mathcal{O}(u)$, the estimation of a statistic characterized by a measurement functional $\mathcal{O}$ based merely on the training data, will naturally provide a good approximation of the ground truth $\mathbb{E}_{u\sim \mu^*}\mathcal{O}(u)$, eliminating further need to train a closure model.
% for high-dimensional distribution.
% $\rho^*$.

% As is the case, these learning methods often rely on a large amount of fine-grid data coming from one long FRS trajectory or multiple FRS trajectories which are expensive.
% % by expensive fine-grid simulations[cite]. 
% Furthermore, most methods still rely on a coarse-grid solver that iteratively evolves with relatively small time steps, and some methods require that the coarse simulation starts from a downsampled version of high-fidelity data close to the attractor. These aspects hinder the further application of these methods.

\subsection{Closure Modeling Methods}\label{mthd_baseline}
In this section, we provide more information on the closure modeling methods, both numerical and learning-based. Implementation details for results shown in the experiment section can be found in \Cref{apdx: implement}.

\subsubsection{Numerical Closure Models}
\paragraph{Vanilla coarse-grid simulation (\textbf{CGS})}

We refer to coarse-grid simulation without any additional closure term as `CGS' in our experiment results. It serves as a blank baseline. 

 We use spectral method for spatial discretization in the coarse-grid simulations. Due to the truncation of high-frequency modes from coarse discretization and the use of dealiasing, numerical dissipation is introduced. Therefore, this setting can also be viewed as an implicit closure method \cite{grinstein2007implicit}.

\paragraph{Eddy-Viscosity Closure Models}
(\textbf{`Smag.'}) Smagorinsky model \cite{smagorinsky1963general} is the most classical and popular closure model applied in computational fluid dynamics. We compare with Smagorinsky model for NS and its counterpart eddy-viscosity model for KS \cite{matharu2020optimal}. We have selected the best-performing parameter in these models. Dynamical Smagorinsky model \textbf{(DSM)} \cite{lilly1992proposed} is a strong variant of Smagorinsky model.

\subsubsection{Learning-based Closure Models}

\paragraph{Single-state closure models}

Besides the single-state closure models leanred by minimizing the apriori loss function as discussed in former sections (\ref{mthd1}), there are also extensions of the learning framework above. Some works proposed to add a posterior loss
% $J_{post}(\theta;\mathfrak{D})=\frac {1} {|\mathfrak{D}|}\sum\limits_{i\in\mathfrak{D}}\|v_i(\cdot,\Delta t;\theta)-\mathcal{F}(S(\Delta t)u_i)\|^2,$
into the training object \cite{sirignano2020dpm,list2022learned},
% $J_{post}(\theta;\mathfrak{D})=J_{ap}(\theta)+\frac {1} {|\mathfrak{D}|}\sum\limits_{i\in\mathfrak{D}}\|v_i(\cdot,\Delta t;\theta)-\mathcal{F}(S(\Delta t)u_i)\|^2,$
\begin{equation}\label{mthd_post}
    J_{post}(\theta;\mathfrak{D})=J_{ap}(\theta)+\frac {1} {|\mathfrak{D}|}\sum\limits_{i\in\mathfrak{D}}\|v_i(\cdot,\Delta t;\theta)-\mathcal{F}(S(\Delta t)u_i)\|^2,
\end{equation}
where $v_i$ comes from evolving (\ref{eq:les-pde}) with $\mathcal{F}u_i$ initialization for a time period $\Delta t$. Clearly, this modification still suffers from the issue resulting from the multimap as illustrated in \Cref{thm_clos_all}.

To leverage the up-to-date machine learning toolkits, we replace the convolution neural network (CNN) models in original papers \cite{guan2022stable} with a transformer-based model. We refer to this approach as `Single' in our empirical evaluations.

\paragraph{History-aware closure models}
We have shown its idea in \Cref{sec:learn_clos} and follows the work by Ma et al.\cite{ma2018model} to apply a recurrent neural network (RNN) as model backbone. We labeled this method as \textbf{RNN-closure} in our experiments.

\paragraph{Stochastic closure models}
One direction is to replace the deterministic closure model with a stochastic one \cite{boral2023neural,lu2017data,dong2024data}. More detailed discussion on their general form and theoretical results can be found in \Cref{apdx:thm:sde}.

In our experiments, we follow the setting of \cite{dong2024data} and apply diffusion models \cite{song2019generative,song2020denoising,ho2020denoising} to sample the closure term.
We label this method as \textbf{DM-closure}.

\paragraph{Other learning-based closure models}

Besides the three mainstream learning-based closure models we loosely classified above, there are other approaches that resort to an interactive use of fine-grid simulators~\cite{sorensen2024non,oommen2023rethinking} and leveraging online learning algorithms~\cite{frezat2023gradient, sirignano2023dynamic, duraisamy2021perspectives, sirignano2020dpm}. However, calling and auto-differentiating along fully-resolved simulations makes the training of these approaches prohibitively expensive.

\subsection{Estimating Long-term Statistics with Coarse-grid Simulations}

\paragraph{Kuramoto–Sivashinsky (KS) Equation}
We consider the one-dimensional KS equation
% for $u(x,t)$,
% \begin{equation}
% \partial_t u+u\partial_x u+\partial_{xx}u +\nu \partial_{xxxx}u=0,\ \ \quad (x,t) \in[0,6\pi]\times\mathbb{R}_{+},    
% \end{equation}
with periodic boundary conditions. The positive viscosity coefficient in the equation reflects the traceability of this equation. The smaller $\nu$ is, the more chaotic the system is. We study the case for $\nu=0.01$.

For our model, we choose $h=0.1$ (the time interval the neural operator model learns to predict). More details on the choice of $h$ can be found in \Cref{apdx-choose-h}. The total amount of fully-resolved training data is 105 snapshots coming from 3 trajectories.
When we complete the training, the $L^2$ relative error of our model on the test set is $12\%$. 
To make a fair comparison, other learning-based methods are restricted to the same amount of training data. This setting will be the same for NS.

More details on coarse-grid and fully-resolved simulations datasets, implementations of our method, and evaluation are presented in \Cref{KS_detail,apdx: implement}.

% The burning time of this system is around $50$.

% and complete the training when $L^2$ relative error on the test set is smaller than 

\paragraph{Navier-Stokes (NS) Equation}
We consider two-dimensional Kolmogorov flow (a form of the Navier-Stokes equations with force) for a viscous incompressible fluid (fluid field) $\rvu(x,y,t)\in\mathbb{R}^2$,
\begin{equation}
    \partial_t \rvu=-(\rvu\cdot\nabla)\rvu-\nabla p+\nu\Delta\rvu+(\sin(4y),0)^T,\quad \nabla\cdot \rvu=0,\quad (x,y,t)\in [0,2\pi]^2\times \mathbb{R}_{+},
\end{equation}
with periodic boundary conditions. 
Reynolds number ($Re$) is an important concept in fluid dynamics. It is defined as $\frac{\overline{u}l}{\nu}$, where $\nu$ is the viscosity appearing in the equation, $\overline{u}$ is the root mean square of velocity scales $|\rvu|$, and $l$ is the length scale of the domain. 
% The function $p$ is a known pressure. The positive coefficient $Re$ is Reynolds number. 
The larger $Re$ is, the more chaotic the system is. We consider a simple case with $Re=100$ and a challenging one with $Re=1.6\times 10^4$. 
% More details on NS equation and visualizations of the high $Re$ fluid field is in \Cref{apdx:NS basic}.
For our model, we choose $h=1$ for the simple case and 0.5 for the hard case. The total amounts of fully-resolved training data are 110 and 384 snapshots coming from 1 trajectory, respectively. When we complete the training, the $L^2$ relative error on the test set is $19\%$ for $Re=100$ case and $14\%$ for $Re=1.6\times 10^4$ experiment.

More details on coarse-grid and fully-resolved simulations datasets, implementations of our method, and evaluation are presented in \Cref{apdx:NS basic,apdx: implement}.

To estimate the long-term statistics for all of the dynamics considered in this work, we draw random coarse-grid initial conditions from a Gaussian random field and simulate multiple trajectories. Then we average over time and these trajectories, following the definition of the invariant measure. For neural operators, since the output of the model corresponds to the function value on a time interval with length $h$, the rollout is performed autoregressively, where at each step, the last temporal slice of the model's output tensor is used as the input for the subsequent forward pass,
\begin{equation}u_0,\ 
    \mathcal{G}_\theta u_0,\ \mathcal{G}_\theta(\mathcal{G}_\theta (u_0)\big|_{t=h}),\ \mathcal{G}_\theta\big(\mathcal{G}_\theta(\mathcal{G}_\theta (u_0)\big|_{t=h})\big)\big|_{t=h},...
\end{equation}
where $h$ is a model parameter related to the mapping the neural operator is approximating (\Cref{thm: pino}).
This aligns with previous works that only learns the short-term evolution and conducts rollouts to study long-term behavior \cite{watt2023ace,watt2025ace2}.
{To determine the length of the rollouts, we track the statistics such as the energy spectrum and terminate the emulation once these statistics converge to a stable pattern. Note that the attractors are dynamical equilibria of the chaotic systems we studied, meaning the functions keep evolving while the statistics converge. In the settings where the characteristic timescale of the PDE is known, we can terminate early rather than simulating to $t\to\infty$, which in practice is done by running simulations up to $T,\ 2T, \ 3T, ...$ for a sufficiently large $T$ until convergence is observed.}
See \Cref{apdx: data} for more details.

We want to clarify here that due to the system reduction nature of coarse-grid simulation, it cannot precisely recover the invariant measure $\mu^*$ even in ideal cases. Instead, as will be discussed in \Cref{sec: liouville}, the best approximation of $\mu^*$ one can obtain with coarse-grid simulations is $P_{\#}\mu^*$ (\Cref{apdx_prop_c5}), where $P$ is the orthogonal projection towards the filtered space $\mathcal{F}(\hhh)$. The gap between $\mu^*$ and $P_{\#}\mu^*$ heavily depends on the reduced space, determined by
the filter $\mathcal{F}$. Consequently, it is possible that estimations of certain statistics based on coarse-grid simulations can never get reasonably close to the ground truth.

% it is hard to quantify the effect of data due to its significant dependence on the optimizer and model initialization, which needs further analysis.

% We propose a new theoretical framework, functional Liouville flow, to analyze this problem. We rigorously demonstrate the inherent shortcomings of existing learning methods. Also inspired by our theoretical result, we leverage physics-informed neural operators to give an efficient and provably accurate estimation of long-term statistics with very limited fine-resolution data usage during training. As evaluated in the experiments, our method has the potential to address the challenging tasks regarding chaotic systems arising in various physical sciences.
% The implication of 
% this 
% work is not restricted to the specific task of estimating long-term statistics.
%  This work exhibits the benefit of going beyond the finite grid system and understanding problems through a function space viewpoint. Functional Liouville flow would be useful in investigating image generation tasks, by viewing images as functions represented on pixels.

\subsection*{Code Availability}

The code is publicly available at 
$\texttt{https://github.com/neuraloperator/pino-closure-models}$.

\subsection*{Data Availability}
We provide the source code for generating training dataset as in\\ 
$\texttt{https://github.com/neuraloperator/pino-closure-models}$.

\subsection*{Acknowledgements}
A. A. is supported in part by Bren endowed chair, ONR (MURI grant N00014-18-12624), and by the AI2050 senior fellow program at Schmidt Sciences. J. B. acknowledges support from the Wally Baer and Jeri Weiss Postdoctoral Fellowship. Z. L. is supported in part by Nvidia Fellowship. C. W. hopes to thank Andrew Stuart, Ricardo Baptista for helpful discussions. The authors also hope to thank Romit Maulik for helpful feedback.

\subsection*{Author Contributions}
C. W. implemented the main codebase and developed the theoretical results. J. B. and Z. L. gave suggestions on the experiments. D. Z. and H. J. B. contributed domain-specific background and context. C. W., Z. L., and J. W. implemented baseline comparisons and ablation studies. J. B. verified the theoretical results. B. B. and T. K. contributed to the weather forecasting aspect. C. W. led the manuscript writing. A. A., J. B., and B. B. contributed to editing, proofreading, and manuscript refinement.

\subsection*{Competing Interests}
The authors declare no competing interests.

% \newpage
% \bibliography{main}
% \bibliographystyle{unsrt}

% 0312 add
\printbibliography[heading=subbibintoc, title={Method References}]
\end{refsection}

% 0312 del
% \begin{refcontext}[sorting = none]
% \printbibliography
% \end{refcontext}
% \label{references}

%%%%%%%%%%%%%%%%%%%%%%%%%%%%%%%%%%%%%%%%%%%%%%%%%%%%%%%%%%%%

 \clearpage

%%%%%%%%%%%%%%%%%%%%%%%%%%%%%%%%%%%%%%%%%%%%%%%%%%%%%%%%%%%%

\appendix

%ncomm cite try
\begin{refsection}
% \newrefcontext{apdx}
\newrefcontext[sorting=none]{apdx}

\section*{Supplementary Information}
In this appendix, we will first
provide detailed proofs of our theoretical results (\ref{app:apdxA}-\ref{apdx: thm_pino}), and then present implementation details and all experiment results (\ref{apdx: data}-\ref{apdx: instable}). The structure of the appendix is as follows.
% In this appendix, we will first
% list related works (\ref{apdx: related work}), then
% provide detailed proofs of our theoretical results (\ref{app:apdxA}-\ref{apdx: thm_pino}), and finally present implementation details and all experiment results (\ref{apdx: data}-\ref{apdx: exp_visual}). The structure of the appendix is as follows.
\begin{itemize}
    % \item \Cref{apdx: related work} includes related works of this paper.
    \item \Cref{app:apdxA} provides a list of notations, along with an introduction of important background conceptions, preliminary results, and basic assumptions in this paper.
    % introduces the notation, auxiliary results, and basic assumptions in the paper.
    \item \Cref{apdx: liouville} first formally introduces functional Liouville flow, and then presents a detailed version of \Cref{sec: liouville}.
    \item \Cref{apdx:random} provides the formal statement and the proof of the first claim in \Cref{thm_clos_all}.
    \item \Cref{apdx: thm_liouv} provides the formal version and the proof of the second claim in \Cref{thm_clos_all}.
    \item \Cref{apdx: thm_pino} provides the proof of \Cref{thm: pino}.
    \item \Cref{apdx: data} contains information about the dataset in the experiments and a visualization of the Navier-Stokes dataset.
    \item \Cref{apdx: implement} provides the implementation details for our method and baseline methods.
    \item \Cref{apdx: exp_visual} first formally introduces the statistics we consider, followed by the full experiment results (table and plots) and ablation studies.
    \item \Cref{apdx: instable} discusses the training-instability of learning-based closure models.
\end{itemize}

% \section{Related Works}
% \label{apdx: related work}

\section{Notations, Auxiliary Results, and Basic Assumptions}
\label{app:apdxA}
In this section, we first summarize the notations in this paper, then 
% list classical textbooks that are good references for understanding our theoretical results, 
review and define some of the important concepts,
and finally state the basic assumptions in this work. We would encourage the readers to always check this section when they have any confusion regarding the proof.

\subsection{Notations}\label{apdxA: notation}

\begin{longtable}{cp{12cm}}
\caption{List of Notations}\\
\textbf{Notation} & \textbf{Description} \\
\endfirsthead
\textbf{Notation} & \textbf{Description} \\
\endhead
% Example entries
\(\mu\)  & Distributions. \\[1em]
\(F_\#\) & Push-forward of a mapping \(F\). If \(y=F(x)\) and the distribution of \(x\) is \(\mu_x\), then the distribution of \(y\) is \(F_{\#}\mu_x\). \\[1em]
\(F^\#\) & Pull-back of a mapping \(F\). If \(y=F(x)\) and the distribution of \(y\) is \(\mu_y\), then the distribution of \(x\) is \(F^{\#}\mu_y\). \\[1em]
\(\mathcal{H}\)  & (Original) Function Space, equipped with norm $\|\cdot\|_\hhh$, see \cref{eq:general-pde}. We will omit the subscript $\hhh$ when it is clear from the context.\\[1em]
\(\mathcal{A}\)  & The operator of the dynamics, see \cref{eq:general-pde}. \\[1em]
\(S(t)\)  & Semigroup induced by \cref{eq:general-pde}. \\[1em]
\(\mu^*,\ \rho^*\)  & \(\mu^*\) is the invariant measure of the dynamic system. \(\rho^*\) is its probability density. Check \Cref{apdx-b-1} to see how the concept of density is established.\\[1em]
\(u\)  & Functions in \(\mathcal{H}\). \\[1em]
\(\mathcal{F}\)  & Filter. \(\mathcal{F}:\mathcal{H}\to\mathcal{H}\). The image space is finite-rank and denoted as \(\mathcal{F}(\mathcal{H})\). \\[1em]
\(D\)  & The set of grids in fully-resolved simulations. The number of grids is \(|D|\). \\[1em]
\(D'\)  & The set of grids in coarse-grid simulations. The number of grids is \(|D'|\). \\[1em]
\(\mathscr{P}(\Omega)\)  & The set of all probability distributions supported on a set \(\Omega\). \\[1em]
\(\rho_1\)  & The marginal distribution on resolved course-grid system of a distribution $\rho$, see \Cref{apdx-b-2}. \\[1em]
\(\mathcal{W}_{\mathcal{H}}\)  & The Wasserstein distance for measures in \(\mathcal{H}\), with \(\|\cdot\|_\mathcal{H}\) being the cost function. \\[1em]
\(\langle\cdot,\cdot\rangle\)  & The pair of linear functionals and elements in a Banach space \(\mathscr{X}\). Specifically, for \(x\in\mathscr{X},\ f\in\mathscr{X}^*\), its dual space, \(\langle f,x\rangle:=f(x)\). Thanks to Riesz Representation Theorem, we will also use this notation for inner products in Hilbert space. \\[1em]
\(\otimes\)  & For a Hilbert space \(\mathcal{H}\), \(u\in\mathcal{H}\), \(v\in\mathcal{H}\), \(u\otimes v\) is defined as the linear operator \(w\to \langle v,w\rangle u,\ w\in\mathcal{H}\). \\[1em]
\(\oplus\)  & $u\oplus v:=(u,v)$. \\[1em]
\(\mathcal{T}\) & The isometric isomorphism between separable Hilbert space $\hhh$ and $\ell^2$, see \Cref{apdx-b-1}. \\[1em]
\(C_0\)  & Continuous function space, equipped with $L^{\infty}$ norm. \\[1em]
\(C_c^\infty\)  & Smooth and compactly-supported functions. \\[1em]
\(d\mathcal{G}(u,v)\)  & The Gateaux derivative of operator $\mathcal{G}$ at $u$ in the direction of $v$.\\[1em]
\(\aleph_0\)  &Aleph-zero, countably infinite. \\[1em]
\([n]\)  &$\{1,2,...n\}$ \\[1em]
\(D_T\)  &The set of time-grids.\\[1em]
\(M_{D_T}(u_0)\)  &The set of functions that are indistinguishable from $u_0$ merely based on values on spatiotemporal grid $D'\times D_T$, see \Cref{ass:unbound}.\\[1em]
\(I_x,\ I\)  & (Spatial) Grid-measurement operators, see \Cref{def Ix}.\\[1em]
\(\fff_{D_T},\ \fff\)  & Spatiotemporal grid-measurement operators, see \Cref{thm:hist:finite}.\\[1em]
\(B_0(r)\)  & The ball centered around origin with radius $r$. \\[1em]
\(spt\)  & Supporting set of a function. \\[1em]
\(w.r.t.\)  & With regard to \\[1em]
\(wlog\)  & Without loss of generality \\[1em]
% Add more symbols as required
\end{longtable}

% \subsection{Classical Materials}
% We would like to refer the readers to these classical textbooks if they have any confusion about the theoretical results in the following sections. 

% \begin{itemize}
%     \item On dynamical systems: \cite{brin2002introduction,wen2016differentiable}.
%     \item On ergodic theorem: \cite{cornfeld2012ergodic}.
%     \item On partial differential equations (PDE): \cite{evans2022partial}. \cite{gilbarg1977elliptic} provides more background on elliptic equations, and \cite{temam2012infinite} is a good reference for evolution equations (infinite dimensional dynamical systems).
%     \item On stochastic differential equations (SDE): \cite{mao2007stochastic}.
%     \item On measure theory (and probability) in function space:\cite{ledoux2013probability,vakhania2012probability}.
%     \item On optimal transport (OT): \cite{villani2009optimal,santambrogio2015optimal}.
%     \item On Riemannian geometry and manifolds: \cite{gallot1990riemannian,lee2012smooth}.
% \end{itemize}

\subsection{Optimal Transport in Function Space}
\label{apdx:wsst}
Given a Banach space $\mathscr{X}$ and two distributions $\mu_1,\mu_2\in\mathscr{P}(\mathscr{X})$, we want to measure the closeness of these two distributions.

Recall that in finite-dimensional $\mathscr{X}$, the (Monge formulation of) $c-Wasserstein$ distance is defined as 
\begin{equation}
\mathcal{W}_c(\mu_1,\mu_2):=\inf_{T} \int_{\mathscr{X}}c(x,Tx)\mu_1(dx),\quad s.t.\  T_{\#}\mu_1=\mu_2,
\end{equation}
where $T$ is a measurable mapping from $\mathscr{X}$ to $\mathscr{X}$, and $c=c(x,y)$ is non-negative bi-variate function known as cost function.

We could naturally generalize this concept into measures in arbitrary Banach space $\mathscr{X}$ and define the Wasserstein distance correspondingly. In particular, we use the metric in $\mathscr{X}$ as cost function and define
\begin{equation}
    \mathcal{W}_{\mathscr{X}}(\mu_1,\mu_2):=\inf_{T}\int_\mathscr{X}\|Tx-x\|\mu_1(dx),\quad s.t.\ T_{\#}\mu_1=\mu_2.
\end{equation}

For more backgrounds and rigorous definitions of concepts appeared above, \cite{villani2009optimal,santambrogio2015optimal} are standard references for optimal transport, and \cite{ledoux2013probability,vakhania2012probability} are good references for measure theory (and probability) in function space.

\subsection{Dynamical Systems}
We sketch some of the highly-relevant concepts and theorems from dynamical systems as follows.
We would like to refer the readers to classical textbooks on dynamical systems \cite{brin2002introduction,wen2016differentiable,temam2012infinite} and ergodic theorems \cite{cornfeld2012ergodic} for detailed proofs of lemmas stated in this subsection.
\subsubsection{Discrete Time System}
Let $X$ be a compact metric space and $f: X\to X$ be a continuous map. The generic form of discrete-time dynamical system is written as $x_{n+1}=f(x_n),\ n\in\mathbb{N}$ or ($n\in\mathbb{Z}$ if $f$ is a homeomorphism).

\begin{defn}
    $\Lambda\subsetneqq X$ is an {invariant set} of $f$ if $f(\Lambda)=\Lambda$.
\end{defn}
In the following discussion, we further assume $X$ to be a $C^{\infty}$ Riemannian manifold without boundary.
\begin{defn}
    An invariant set $\Lambda\subsetneqq X$ of $f$ is {hyperbolic} if for each $x\in\Lambda$,  the tangent space $T_xX$ splits into a direct sum 
    \begin{equation}
        T_xX=E^s(x)\oplus E^u(x),
    \end{equation}
    invariant in the sense that 
    \begin{equation}
        Tf(E^s(x))=E^s(f(x)),\ Tf(E^u(x))=E^u(f(x)),
    \end{equation}
    such that, for some constant $C\geq 1$ and $\lambda\in(0,1)$, the following uniform estimates hold:
    \begin{align}
        |Tf^n(v)|\leq C\lambda^n|v|,\ \forall x\in\Lambda,\ v\in E^s(x),\ n\geq 0,\\
       |Tf^n(v)|\geq \frac 1 C\lambda^{-n}|v|,\ \forall x\in\Lambda,\ v\in E^u(x),\ n\geq 0.
    \end{align}
\end{defn}
\begin{lemma}[Shadowing Lemma] \label{lemma:shadow}
Let $\Lambda\subset X$ be a hyperbolic set of $f$. For any $\epsilon>0$, there is $\eta_0,\eta_1>0$ such that for any $\{x_n\}_{n\in\mathbb{N}}$ satisfying (i) $d(x_n,\Lambda)<\eta_0$; (ii) $|x_{n+1}-f(x_n)|<\eta_1$ for all $n$, there exists $y\in X$ such that $|x_n-f^{(n)}y|<\epsilon,\ \forall n\in\nnn.$
\end{lemma}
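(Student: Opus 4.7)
I would prove the Shadowing Lemma via the standard Anosov-style fixed-point argument: reformulate shadowing as the vanishing of a nonlinear operator on a space of bounded corrections to the pseudo-orbit, linearize, invert the linearization using the uniform hyperbolic splitting, and close up with the contraction mapping theorem.

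\textbf{Setup.} Given the pseudo-orbit $\{x_n\}$, use exponential charts to parametrize nearby points as $\exp_{x_n}(v_n)$ with $v_n\in T_{x_n}X$. The condition that $\{\exp_{x_n}v_n\}$ is a true $f$-orbit becomes
\begin{equation}
v_{n+1} \;=\; a_n + A_n v_n + R_n(v_n),
\end{equation}
where (after identifying tangent spaces by parallel transport along the shortest geodesic) $a_n := \exp_{x_{n+1}}^{-1}(f(x_n))$ satisfies $|a_n|\le \eta_1$, $A_n := T_{x_n}f$, and $R_n$ is smooth with $R_n(0)=0$ and $dR_n(0)=0$, hence Lipschitz with small constant on a small ball.

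\textbf{Step 1 (extend the splitting).} Condition (i) places each $x_n$ within $\eta_0$ of the compact hyperbolic set $\Lambda$. By continuity of $E^s,E^u$ on $\Lambda$ and compactness, for $\eta_0$ small one obtains a continuous splitting $T_{x_n}X = E^s_n\oplus E^u_n$ on which $A_n$ is uniformly hyperbolic with rate arbitrarily close to $\lambda$. \textbf{Step 2 (invert the linearization).} Work on the Banach space $\ell^\infty$ of bounded sequences $\{v_n\}$ with $v_n\in T_{x_n}X$ and norm $\sup_n |v_n|$. Define $L:\ell^\infty\to\ell^\infty$ by $(Lv)_n := v_{n+1}-A_n v_n$. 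Splitting $v_n=v_n^s+v_n^u$, the equation $Lv=w$ decouples into a stable part solved by forward summation and an unstable part solved by backward summation; the geometric decay from hyperbolicity gives $\|L^{-1}\|\le C/(1-\lambda)$ uniformly in the pseudo-orbit. \textbf{Step 3 (contraction).} The pseudo-orbit equation reads $Lv = a + R(v)$, i.e.\ $v = T(v):=L^{-1}(a+R(v))$. On a ball $B_r\subset\ell^\infty$, one has $\|T(0)\|\le \|L^{-1}\|\eta_1$ and $\mathrm{Lip}(T)\le \|L^{-1}\|\cdot 2Cr$. Choose $r<\epsilon$ first so $T$ is contracting on $B_r$, then $\eta_1$ small so $T(B_r)\subset B_r$; the Banach fixed-point theorem yields a unique $v\in B_r$ with $v=T(v)$, and $y:=\exp_{x_0}v_0$ satisfies $d(f^n y,x_n)=|v_n|<\epsilon$ for all $n$.

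\textbf{Main obstacle.} The delicate part is Step 1--2: one must promote the invariant hyperbolic splitting from points of $\Lambda$ to points merely close to $\Lambda$ (which are not $f$-invariant), and show that the cocycle $\{A_n\}$ along the pseudo-orbit still admits a uniformly bounded inverse $L^{-1}$ whose bound depends only on $(\Lambda,f,\lambda,C)$ and not on the particular sequence $\{x_n\}$. Compactness of $\Lambda$ and continuity of the splitting take care of this for $\eta_0$ sufficiently small, after which the quantitative choice of $(\eta_0,\eta_1)$ is dictated by the contraction estimate in Step 3. The case of invertible $f$ (two-sided orbits) is handled identically on $\ell^\infty(\mathbb{Z})$.
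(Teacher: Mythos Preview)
The paper does not actually prove this lemma: it is listed among the classical dynamical-systems results for which the reader is referred to standard textbooks (Brin--Stuck, Wen), so there is no ``paper's own proof'' to compare against. Your outline is precisely the standard Anosov fixed-point proof one finds in those references, and the structure (extend the splitting to a neighborhood by continuity and compactness, invert the difference operator on $\ell^\infty$ via forward/backward geometric sums, close with Banach's fixed-point theorem) is correct.

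One small technical point worth tightening: for one-sided pseudo-orbits indexed by $\mathbb{N}$, as in the lemma's statement, the operator $L$ on $\ell^\infty(\mathbb{N})$ is surjective but not injective --- the stable component $v_0^s$ at time $0$ is unconstrained --- so you should speak of a bounded \emph{right} inverse rather than $L^{-1}$. Concretely, set $v_0^s=0$ and sum the stable part forward, the unstable part backward from infinity; the bound $\|L^{-1}_{\mathrm{right}}\|\le C/(1-\lambda)$ still holds and the contraction argument is unaffected. This yields existence (but not uniqueness) of the shadowing orbit, which is exactly what the lemma asserts.
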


\subsubsection{Continuous Time System}
Recall that the dynamical system we consider is
\begin{equation}
\label{eq:general-pde_apdx}
    \begin{cases}
    \partial_t u(x,t)=\mathcal{A}u(x,t)\\
    u(x,0)=u_0(x),\ u_0\in \mathcal{H},
    \end{cases}
\end{equation}
where $u_0$ is the initial value and $\mathcal{H}$ is a function space containing functions of interests. We will occasionally refer to $\mathcal{A}u$ as \textit{vector field governing the dynamics}. This dynamics induced a semigroup $\{S(t)\}_{t\geq 0}$ defined as the mapping from $u_0$ to $u(\cdot, t)$.

\begin{rmk}
    Boundary conditions and norm of this function space is contained in the notion of $\hhh$. For instance, $\hhh$ might be $H_0^1$ for problems with Dirichlet boundaries, and $L^2_{per}$ for periodic boundaries. Since we are carrying out our study in general dynamics, we will directly adopt $\|\cdot\|_\hhh$ for the norm of $\hhh$.
\end{rmk}

\begin{defn}
    Given a measure $\mu\in\mathscr{P}(\mathcal{H})$, the system is mixing if for any measurable set $A,\ B\subset\mathcal{H}$, $\lim\limits_{t\to\infty}\mu(A\cap S(t)(B))=\mu(A)\mu(B).$
\end{defn}
\begin{defn}
    A measure $\mu\in\mathscr{P}(\hhh)$ is said to be an invariant measure of this system if $S(t)_{\#}\mu=\mu$ for all $t>0.$
\end{defn}

\begin{lemma}\label{mix ergo}
    If a system is mixing, then it is ergodic.
\end{lemma}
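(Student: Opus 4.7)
The plan is to prove the lemma directly from the definitions given just above it in the excerpt, using the standard one-line argument that ergodicity is equivalent to every invariant set having measure $0$ or $1$, and then applying the mixing hypothesis with the invariant set playing the role of both $A$ and $B$.

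First I would recall or (if not yet stated) identify the working definition of ergodicity with respect to the invariant measure $\mu$: a system $\{S(t)\}_{t\geq 0}$ is ergodic if every measurable invariant set $A\subset\hhh$ (meaning $S(t)(A)=A$ for all $t>0$) satisfies $\mu(A)\in\{0,1\}$. Since the excerpt defines mixing but has not spelled out ergodicity, I would open the proof by briefly fixing this definition to keep the argument self-contained.

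Next, given any such invariant $A$, I would apply the mixing hypothesis from the definition just above with $B:=A$. Because $S(t)(A)=A$ for every $t$, we get $A\cap S(t)(A)=A$, so $\mu(A\cap S(t)(A))=\mu(A)$ for all $t$. Taking $t\to\infty$ and using mixing gives the identity $\mu(A)=\mu(A)\mu(A)=\mu(A)^2$, which forces $\mu(A)\in\{0,1\}$. That is exactly ergodicity, completing the proof.

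There is essentially no serious obstacle here; the entire argument is a two-line consequence of the definitions, and the only thing to be careful about is making explicit the notion of ergodicity (since the excerpt introduces mixing and invariant measures but not ergodicity). If one wanted the statement in the more general $L^2$ form — that any $L^2(\mu)$ function invariant under the Koopman semigroup $f\mapsto f\circ S(t)$ is constant $\mu$-a.e. — one would instead apply mixing to $B=\{f>c\}$ for each level $c$ of an invariant $f$ and conclude each such sublevel set has measure $0$ or $1$, but for the statement as written the set-theoretic version above suffices.
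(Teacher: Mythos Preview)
Your argument is correct and is exactly the standard textbook proof. The paper does not actually give its own proof of this lemma; it explicitly defers to classical references on dynamical systems and ergodic theory, so there is nothing to compare against beyond noting that your two-line argument is the one found in those sources.
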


For ergodic systems, there is an invariant measure independent of the initial condition, defined as
\begin{equation}\label{inv_mes_app}
    \mu^*:=\lim_{T\to\infty}\frac 1 T\int_{t=0}^T\delta_{S(t)u}dt,\  u\in\mathcal{H},\ a.e.
\end{equation}
where $\delta$ is the Dirac measure. 

\begin{defn}
    The semigroup is said to be uniformly compact for t large, if for every bounded set $B\subset\mathcal{H}$ there exists $t_0$ which may depend on $B$ such that $\cup_{t\geq t_0}S(t)B$ is relatively compact in $\mathcal{H}$.
\end{defn}

\begin{lemma}\label{lemma: compact}
    If S(t) is uniformly compact, then there is a compact attractor in this system.
\end{lemma}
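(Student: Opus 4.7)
The plan is to follow the classical Temam--Hale construction of a global attractor as the $\omega$-limit set of a bounded (absorbing) set. Let $B \subset \mathcal{H}$ be a bounded set whose orbit we wish to track (in the dissipative PDE setting implicit in the paper, one takes $B$ to be an absorbing set). The candidate attractor is
\begin{equation*}
\mathcal{A} \,:=\, \omega(B) \,=\, \bigcap_{s \geq 0} \overline{\bigcup_{t \geq s} S(t) B}.
\end{equation*}
What must be verified is that $\mathcal{A}$ is non-empty, compact, fully invariant under the semigroup, and attracts $B$. With these four properties established, $\mathcal{A}$ is a compact attractor in the sense used in \Cref{sec: liouville} for the invariant measure $\mu^*$.

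For non-emptiness and compactness, I would use the uniform-compactness hypothesis directly: there exists $t_0 = t_0(B)$ such that $K := \overline{\bigcup_{t \geq t_0} S(t) B}$ is compact in $\mathcal{H}$. For each $s \geq t_0$, the set $K_s := \overline{\bigcup_{t \geq s} S(t) B}$ is a closed subset of $K$, hence compact, and the family $\{K_s\}_{s \geq t_0}$ is nested and decreasing with each $K_s$ non-empty (it contains $S(s)B$). By the finite intersection property for compact sets, $\mathcal{A} = \bigcap_{s \geq t_0} K_s$ is non-empty and compact.

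Next I would establish invariance $S(t)\mathcal{A} = \mathcal{A}$ via the sequential characterization: $u \in \omega(B)$ iff there exist $u_n \in B$ and $t_n \to \infty$ with $S(t_n) u_n \to u$ in $\mathcal{H}$. The inclusion $S(t)\mathcal{A} \subseteq \mathcal{A}$ follows from continuity of $S(t)$ and the semigroup identity $S(t)S(t_n)u_n = S(t + t_n)u_n$. For $\mathcal{A} \subseteq S(t)\mathcal{A}$, given $u = \lim_n S(t_n) u_n \in \omega(B)$, the sequence $\{S(t_n - t) u_n\}$ (defined for large $n$) lies in the compact set $K$, so a subsequence converges to some $w \in \omega(B)$, and continuity of $S(t)$ forces $S(t) w = u$. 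For attraction, I argue by contradiction: if $\operatorname{dist}(S(t_n) x_n, \mathcal{A}) \geq \varepsilon$ for some $t_n \to \infty$ and $x_n \in B$, uniform compactness lets me extract a convergent subsequence whose limit lies in $\omega(B) = \mathcal{A}$, contradicting the distance bound.

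The main obstacle is a subtle hypothesis gap: uniform compactness of $S(t)$ by itself does not guarantee a bounded absorbing set, and without one the $\omega$-limit construction only yields an object that attracts the particular set $B$ rather than a genuine \emph{global} attractor. The cleanest resolution is to read \Cref{lemma: compact} under the standard dissipative-semigroup convention (as in Temam), where the function space $\mathcal{H}$ comes equipped with a bounded absorbing set; then taking $B$ to be that absorbing set upgrades $\omega(B)$ to the maximal compact attractor, which is the object needed for the convergence arguments in \Cref{apdx: thm_pino} and the invariant-measure analysis in \Cref{apdx: liouville}.
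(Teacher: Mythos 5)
Your proof is correct and follows the same route the paper intends: the paper gives no proof of this lemma and instead defers to the classical references (Temam's construction of the global attractor as the $\omega$-limit set of an absorbing set), which is exactly the argument you reproduce, with the non-emptiness/compactness, invariance, and attraction steps all carried out correctly. Your observation of the hypothesis gap is also well taken --- the paper's definition of uniform compactness alone does not furnish a bounded absorbing set, so the lemma as stated implicitly assumes the standard dissipativity convention, and your resolution (taking $B$ to be the absorbing set) is the right one.
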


\subsection{Assumptions}
\label{app:ass}
Without loss of generality, we carry out our discussion in the regime where the function space $\mathcal{H}$ is a separable Hilbert space to make the proof more readable and concise. We also make the following technical assumptions.
\begin{ass}\label{ass:h to c}
    $\mathcal{H}$ can be compactly embedded into $C_0$.
\end{ass}
\begin{ass}\label{ass:mixing and compact}
    The system \cref{eq:general-pde_apdx} is mixing. The semigroup is uniformly compact and Gateaux-differentiable in $\hhh$.
\end{ass}
\begin{ass}
    The attractor and invariant measure are unique.
\end{ass}
\begin{ass}
    The attractor is hyperbolic w.r.t $S(t)$ for any $t>0$.
\end{ass}
We remark that these assumptions are either proved or supported by experimental evidence in many real scenarios~\cite{temam2001navier,wang2014least,kuznetsov2011dynamical,kuznetsov2012hyperbolic}.

For brevity, we will ignore the difference between fully-resolved simulation (FRS) and the exact solution to \cref{eq:general-pde_apdx} in the following discussions.

\section{Formal Introduction of Functional Liouville Flow}
\label{apdx: liouville}
In this section, we first formally introduce functional Liouville flow in \Cref{apdx-b-1}. Based on this theoretical framework, we will reformulate the task of estimating long-term statistics of dynamical systems with coarse-grid simulations in \Cref{apdx-b-2}. Finally, we will provide in \Cref{apdx-b-3} a detailed version of the discussion in \Cref{sec: liouville}. The assumptions and notation conventions are summarized in \Cref{app:apdxA}.

\subsection{Framework of Functional Liouville Flow for Studying Invariant Measure}
\label{apdx-b-1}

Recall that instead of tracking any single trajectory, we are primarily interested in the limit distribution (invariant measure) of functions following the dynamics. As motivated in main text, we hope to leverage a similar idea to Liouville / Fokker-Planck equation in ODE systems to investigate how the distributions evolve along time. We will first build the connection between a general PDE dynamics (\ref{eq:general-pde_apdx}) and a finite-dimensional particle systems (ODE).

\paragraph{Functions as vectors:} As a corollary of Hahn-Banach theorem, we could always construct a set of orthonormal basis $\{\psi_i\}_i$ of $\mathcal{H}$ such that the filtered space $\mathcal{F}(\mathcal{H})=\mathrm{span}\{\psi_1,...\psi_n\}$, where we usually have $n=|D'|$. For any function $u\in\mathcal{H}$, there exists a unique decomposition $u(x)=\sum_{i=1}^{\infty}z_i\psi_i(x)$ with $z_i=\langle u,\psi_i\rangle$. This canonically induces an isometric isomorphism:
\begin{equation}\label{eqn: basis}
    \mathcal{T}:\ \mathcal{H}\to \ell^2,\quad u\mapsto \rvz=(z_1,z_2,...).
\end{equation}
By this means, we can rewrite the original PDE(\ref{eq:general-pde_apdx}) into an ODE in $\ell^2$, denoted by 
\begin{equation}\label{z:ode}
    \frac {d\rvz}{dt}=f(\rvz),\ \mathrm{where}\ f(\rvz)\in\ell^2,\ f(\rvz)_i=\langle\psi_i, \mathcal{A}\circ \mathcal{T}^{-1}\rvz\rangle,\ i\in\nnn,
\end{equation}
where $\langle,\rangle$ is the inner product in $\mathcal{H}$. 

\begin{exmp}
    For Kuramoto–Sivashinsky Equation
    \begin{equation}
\partial_t u+u\partial_x u+\partial_{xx}u +\partial_{xxxx}u=0,\ \ \quad (x,t) \in[0,2\pi]\times\mathbb{R}_{+},    
\end{equation}
if we choose $\{\psi_k\}$ as the Fourier basis $\{e^{ikx}\}_{k\in\mathbb{Z}}$, then $z_k$ is the coefficient of $k$-th Fourier mode and the ODE for $\rvz$ is (component-wise),
\begin{equation}
\frac {dz_k}{dt}=(-k^4+k^2)z_k-\frac{ik}2\sum_{j+l=k}z_jz_l.
\end{equation}

One could further make $z_k$ real numbers by choosing $\sin,\ \cos$ basis.
\end{exmp}

\paragraph{Functional Liouville flow:}
Recall that in ODE system $\frac{dx}{dt}=f(x),\ x\in\mathbb{R}^d$, if the initial state $x_0$ follows the distribution $\mu_0$ whose probability density is $\rho_0(x)$, then the probability density of $x(t)$, denoted by $\rho(x,t)$, satisfies the Liouville equation,
\begin{equation}\label{liou:ode}
    \partial_t\rho=-\nabla\cdot(f\rho).
\end{equation}
Now we want to generalize this result into function space. We need to address the issue that there is in general no probability density function for measures in function space. We will show that it is reasonable to carry on our study by fixing a sufficiently large $N$, investigating the truncated system of the first $N$ basis, and viewing the densities as the (weak-)limit when $N\to\infty$.
\begin{proposition}
    For any $\mu$ supported on a bounded set $B\subset\mathcal{H}$ and any $\epsilon>0$, there exsits $t_0$ and $N$ s.t. for any $u_0\sim\mu$ and any $t>t_0$, if we write $S(t)u_0$ as $\sum_{i=0}^{\infty}z_i\psi_i$, then $\|\sum_{i>N}z_i\psi_i\|<\epsilon$.
\end{proposition}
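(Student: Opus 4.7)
The plan is to reduce the claim to a purely geometric statement about compact subsets of $\mathcal{H}$, then invoke the uniform compactness of the semigroup to produce such a set. Specifically, I would first note the elementary fact that on any compact set $K\subset\mathcal{H}$, the tails of the orthonormal expansion $u=\sum_i z_i(u)\psi_i$ vanish uniformly in $u$: the sequence of continuous functionals $u\mapsto \|P_{>N}u\|$ (where $P_{>N}u:=\sum_{i>N} z_i(u)\psi_i$) is monotone decreasing, and converges pointwise to $0$ by Parseval. Since $K$ is compact, Dini's theorem (alternatively, a direct total-boundedness argument: cover $K$ by a finite $(\epsilon/2)$-net $\{u^{(j)}\}$, pick $N$ large enough that $\|P_{>N}u^{(j)}\|<\epsilon/2$ for every $j$, then bound $\|P_{>N}u\|\le \|u-u^{(j)}\| + \|P_{>N}u^{(j)}\|$) yields an $N=N(K,\epsilon)$ with $\sup_{u\in K}\|P_{>N}u\|<\epsilon$.

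Next I would invoke Assumption~\ref{ass:mixing and compact}: since $B$ is bounded and $\{S(t)\}$ is uniformly compact for $t$ large, there exists $t_0=t_0(B)$ such that $K:=\overline{\bigcup_{t\ge t_0}S(t)B}$ is compact in $\mathcal{H}$. For any $u_0$ in the support of $\mu$ (which is contained in $B$) and any $t>t_0$, we have $S(t)u_0\in K$. Combining this with the previous paragraph, choosing $N$ so that $\sup_{u\in K}\|P_{>N}u\|<\epsilon$ gives exactly the desired bound $\bigl\|\sum_{i>N}z_i\psi_i\bigr\|<\epsilon$ uniformly in $u_0$ and in $t>t_0$.

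The two ingredients are essentially independent, and neither step involves a real obstacle: the uniform-tail-decay on compact sets is standard, and uniform compactness is given by assumption. The only place to be slightly careful is that the choice of $t_0$ must not depend on the particular sample $u_0$; this is exactly what uniform compactness (as stated, with $t_0$ depending on the whole set $B$) guarantees, so the conclusion holds for \emph{every} $u_0$ in the support of $\mu$, not merely almost surely. If one prefers to avoid Dini's theorem, the explicit finite-net argument above works verbatim and only uses total boundedness of $K$.
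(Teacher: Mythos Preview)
Your proposal is correct and matches the paper's proof essentially step for step: the paper also invokes the uniform compactness assumption to obtain $t_0$ with $\cup_{t>t_0}S(t)B$ relatively compact, and then uses precisely the finite $\epsilon$-net argument you sketch (your ``alternative to Dini'') to produce a single $N$ that works for the whole set. The only cosmetic difference is that the paper writes the tail projection as $Q_N=\sum_{i>N}\psi_i\otimes\psi_i$ and picks constants $\epsilon/3$ and $\epsilon/5$ rather than $\epsilon/2$ and $\epsilon/2$.
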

\begin{proof}
    Define $Q_m:=\sum_{i>m}\psi_i\otimes\psi_i$. Then the statement is equivalent to $\|Q_N S(t)u_0\|<\epsilon,\ $for all $u\in B, t>t_0.$

    Due to \Cref{ass:mixing and compact}, there exists $t_0$ such that $\cup_{t>t_0}S(t)B$ is relatively compact. This implies that there exists finite (denoted by $N_1$) points $\{u_i\}$ satisfying that for any $u_0\in B,\ t>t_0$, there exists $i\leq N_1$ s.t. $\|S(t)u_0-u_i\|<\frac \epsilon 3$. We define $M_i:=\min_j\{j| \|Q_ju_i\|<\frac \epsilon 5\}$. We have $M_i<\infty,\ \forall i.$
    Choosing $N$ as $\max\limits_{i\leq N_1} M_i $ completes the proof.
\end{proof}

\begin{rmk}
    We can always restrict our discussion within distributions supported on bounded set whose complement occurs with a probability smaller than machine precision.
\end{rmk}

Back to the dynamics \cref{eq:general-pde_apdx} or the equivalent ODE \cref{z:ode}, we first make a generalization.
Since the invariant measure is independent of initial condition, we know that for any distribution $\mu_0\in\mathscr{P}(\mathcal{H})$(instead of only delta distributions at $u_0$) from which we sample random initial conditions $u_0\sim\mu_0$ and evolve these functions, the long-term average $\lim\limits_{T\to\infty}\frac 1 T\int_{t=0}^T\big(S(t)_{\#}\big)\mu_0dt$ will still converge to $\mu^*$.
We will carry out our discussion in this generalized setting where the initial condition is sampled from a distribution $\mu_0$ in function space. We will denoted $\big(S(t)_{\#}\big)\mu_0$, the distribution at time $t$, as $\mu_t$, and denoted their density functions for corresponding $\rvz$ as $\rho(\cdot, t)$ (i.e., $\mu_t=\mathcal{T}^{\#}\rho(\cdot,t)$).
For brevity, we will view $u\in\mathcal{H}$ and $\rvz\in\ell^2$ as the same and not mention $\mathcal{T}^\#$ or $\mathcal{T}_{\#}$ for $\mu_t$ and $\rho(\cdot,t)$.

If we use the component-form of $f$, $f=(f_1,f_2,....)$, with each $f_i$ a mapping from $\ell^2\to\mathbb{R}$, with exactly the same argument to derive \cref{liou:ode}, we have
\begin{equation}\label{z:liou:pde}
    \partial_t\rho(\rvz,t)=-\sum_i^{\infty}\partial_{z_i}(f_i(\rvz)\rho(\rvz,t)):=-\nabla_{\rvz}\cdot(f\rho),\qquad \rho(\rvz,0)=\rho_0(\rvz).
\end{equation}
We will refer to this as functional Liouville flow, i.e., the Liouville equation in function space, and denote the R.H.S. operator $\rho\to-\nabla_{\rvz}\cdot(f\rho)$ as $\mathscr{L}\rho$.

\paragraph{Reinterpretation of Invariant Measure:}
With functional Liouville flow, we obtain a new characterization of invariant measure $\mu^*$ (whose distribution is denoted as $\rho^*$).
\begin{proposition}\label{prop: station liou}
    $\rho^*$ is the solution to stationary Liouville equation $\mathscr{L}\rho=0$.
\end{proposition}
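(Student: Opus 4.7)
The plan is to exploit the defining invariance $S(t)_{\#}\mu^{*}=\mu^{*}$ for all $t\ge 0$, and match it against the evolution law \cref{z:liou:pde} that governs every push-forward density along the flow. First I would observe that, under the isometry $\mathcal{T}$ of \cref{eqn: basis}, invariance of $\mu^{*}$ translates into $\rho^{*}(\cdot,t)\equiv\rho^{*}$ being time-independent when we run the Liouville flow from initial density $\rho^{*}$. Plugging this stationarity into \cref{z:liou:pde} immediately yields $0=\partial_t\rho^{*}=\mathscr{L}\rho^{*}$, which is the claim. The converse direction (that any solution of $\mathscr{L}\rho=0$ pushes forward to an invariant measure, hence equals $\rho^{*}$ by the uniqueness assumption) follows by integrating the stationary Liouville equation in time and re-applying the uniqueness of invariant measures stated in \Cref{app:ass}.

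The genuinely subtle point is that \cref{z:liou:pde} was derived only formally in function space, since $\mu^{*}$ does not in general admit a density on the full $\ell^{2}$. The plan to handle this is to work with finite-dimensional truncations. For each $N$ let $P_N u:=\sum_{i\le N}z_i\psi_i$ be the projector onto $\mathrm{span}\{\psi_1,\dots,\psi_N\}$ and let $\rho^{*}_N$ denote the density on $\mathbb{R}^{N}$ of $(P_N)_{\#}\mu^{*}$ (which exists under mild smoothness assumptions, else one works weakly). The preceding proposition gives, for any prescribed tolerance $\epsilon$, an $N$ beyond which the residual mass $\|(I-P_N)u\|$ is uniformly small on the support of $\mu^{*}$, so that the dynamics on $\mathbb{R}^{N}$ induced by \cref{z:ode} approximates the full flow to order $\epsilon$. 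In this finite-dimensional setting the classical Liouville identity \cref{liou:ode} applies rigorously and invariance of $(P_N)_{\#}\mu^{*}$ under the projected flow gives $\nabla_{\rvz}\cdot(f^{(N)}\rho^{*}_N)=0$ in the distributional sense.

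The remaining step is the passage to the limit $N\to\infty$. I would test both sides of \cref{z:liou:pde} against a cylinder test functional $\varphi(z_1,\dots,z_K)\in C_c^{\infty}$, for which only finitely many coordinates appear, and then note that $\langle\mathscr{L}\rho^{*},\varphi\rangle$ only involves the first $K$ components of $f$ and the marginal $\rho^{*}_K$. For $N\ge K$ the finite-dimensional identity yields $\langle\mathscr{L}\rho^{*},\varphi\rangle=0$, and since cylinder functionals are dense in the relevant test class (Hahn--Banach plus separability of $\mathcal{H}$ from \Cref{ass:h to c}), the stationary Liouville equation holds in the weak sense on all of $\ell^{2}$. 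The hard part of the argument is precisely this limiting step: one must check that the weak time-derivative of $\rho(\cdot,t)$ at $t=0$ commutes with the cylinder projection, which requires using the uniform compactness of the semigroup from \Cref{ass:mixing and compact} to justify the exchange of limits in $N$ and the vanishing of the tail $\sum_{i>N}\partial_{z_i}(f_i\rho^{*})$ when paired with any fixed cylinder test functional.
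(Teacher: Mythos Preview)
Your core argument is correct but it takes a genuinely different route from the paper. You invoke the invariance $S(t)_{\#}\mu^{*}=\mu^{*}$ directly, so that running the Liouville flow from $\rho^{*}$ gives a constant-in-time solution and hence $0=\partial_t\rho^{*}=\mathscr{L}\rho^{*}$; you then spend most of the effort on a careful weak-limit justification via finite-dimensional projections and cylinder test functions. The paper instead works from the \emph{time-average} definition \cref{inv_mes_app}: starting from an arbitrary $\rho_0$, it introduces the Ces\`aro mean $p(\rvz,t)=\frac{1}{t}\int_0^t\rho(\rvz,s)\,ds$, derives the identity
\[
\partial_t p(\rvz,t)=\mathscr{L}p(\rvz,t)+\tfrac{1}{t}\bigl(\rho(\rvz,0)-p(\rvz,t)\bigr),
\]
and sends $t\to\infty$ so that both the left side and the $1/t$ correction vanish, leaving $\mathscr{L}\rho^{*}=0$. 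Your approach is shorter and conceptually cleaner, but it presupposes that the time-average limit is itself $S(t)$-invariant (a standard ergodic fact, though not stated explicitly in the paper). The paper's Ces\`aro computation avoids that intermediate step and stays closer to how $\mu^{*}$ is actually defined, and the evolution equation for $p$ is reused in spirit later when analyzing the filtered dynamics. Your truncation-and-cylinder-function discussion goes well beyond the level of rigor the paper aims for; the paper treats the entire Liouville calculus formally and does not attempt that limiting justification.
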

\begin{proof}
    Denote $p(\rvz,t):=\frac 1 t \int_{s=0}^t\rho(\rvz,s)ds$, the finite-time average distribution.

Note that for any $\rvz$, 
\begin{align}
\rho(\rvz,t)=&\rho(\rvz,0)+\int_{s=0}^t\partial_t\rho(\rvz,s)ds\\
=& \int_0^t\mathscr{L}\rho(\rvz,s)ds+\rho(\rvz,0)\\
=&\mathscr{L}\int_{s=0}^t\rho(z,s)ds+\rho(\rvz,0)=\mathscr{L}(tp(\rvz,t))+\rho(\rvz,0).
\end{align}
Also, we have  
$\rho(\rvz,t)=\partial_t(\int_{s=0}^t\rho(\rvz,s)ds)=\partial_t(tp(\rvz,t))$, we conclude that 
\begin{equation}
    \partial_t(tp(\rvz,t))=t\mathscr{L}p(\rvz,t)+\rho(\rvz,0).
\end{equation}

From this, we yield
\begin{align}
    \partial_tp(\rvz,t)=\mathscr{L}p(\rvz,t)+\frac 1 t (\rho(\rvz,0)-p(\rvz,t)).
\end{align}
By definition we know $p(\rvz,t)\to\rho^*$ as $t\to\infty$, thus $\partial_tp\to0$. The term $\frac 1 t (\rho(\rvz,0)-p(\rvz,t))$ will also tend to zero as $t\to\infty$ (recall that they are probability density and thus are uniformly bounded in $L^1$). Therefore, the limit density $\rho^*$ satisfies $\mathscr{L}\rho^*=0$.
\end{proof}

\subsection{Reformulation of Estimating Long-term Statistics}
\label{apdx-b-2}
We reformulate the problem of \textit{estimating long-term statistics with coarse-grid simulation} with the help of functional Liouville flow.
Recall that $D'$ is the set of coarse grid points, and coarse-grid simulation (CGS) is equivalent to evolving functions in $\mathcal{F}(\mathcal{H})$, where $\mathcal{F}$ is the filter (see \Cref{sec:learn_clos}).

\subsubsection{Notations}
We start by defining several notations.

Define the orthonormal projection onto $\mathcal{F}(\mathcal{H})$ as ${P}=\sum\limits_{i=1}^n \psi_i\otimes\psi_i$. We remark here that in many situations, we have $P=\mathcal{F}$.

Let us decomposite $\rvz$ and $u$ into the resolved part and unresolved part,
\begin{align}
    \rvz&=\rvv\oplus\rvw,\ \rvv:=(c_1,c_2,...c_n),\ \rvw:=(c_{n+1},c_{n+2},...); \\
    u(x)&=v(x)+w(x),\ v(x):=\mathcal{T}^{-1}\rvv=Pu,\ w(x):=\mathcal{T}^{-1}\rvw=(I-P)u.
\end{align}
In particular, $w\in\mathcal{F}(\mathcal{H})^\perp$ is the unresolved part in coarse-grid simulations.
With this decomposition, we rewrite any density $\rho(\rvz)$ as a joint distribution $\rho(\rvv,\rvw)$ and define marginal distribution for $\rvv$ as $\rho_1(\rvv)$, and the conditional distribution of $\rvw$ given $\rvv$ as $\rho(\rvw|\rvv)$.
With a little abuse of notation, we will occasionally refer to the probability density as its distribution, and vice versa.

We will also divide the vector field $f$ into resolved part $f_r$ and unresolved part $f_u$, which are $(f_1,f_2,...,f_n)$ and $(f_{n+1},f_{n+2},...)$ respectively.

\subsubsection{Reformulation of Coarse-grid Simulation}
We first show that the optimal approximation of $\mu^*$ (or $\rho^*$) in the reduced space is its marginal distribution, if we construct densities with an orthonormal basis, as is in \cref{eqn: basis}.

\begin{proposition}
\label{apdx_prop_c5}
     $\rho_1^*=\argmin\limits_{\mu\in\mathscr{P}(\mathcal{F}(\mathcal{H}))}\mathcal{W}_\mathcal{H}(\mu,\mu^*)$.
\end{proposition}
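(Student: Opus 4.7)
The plan is to establish the universal lower bound
\begin{equation*}
d := \int_{\mathcal{H}} \|(I-P)u\|\, d\mu^*(u) \le \mathcal{W}_{\mathcal{H}}(\mu,\mu^*) \quad \text{for every } \mu\in\mathscr{P}(\mathcal{F}(\mathcal{H})),
\end{equation*}
and then to exhibit a transport realizing this bound at the specific choice $\mu=\rho_1^*=P_{\#}\mu^*$. Since the right-hand side $d$ does not depend on $\mu$, achievability at $\rho_1^*$ immediately identifies it as a minimizer.

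For the lower bound, I would fix an arbitrary admissible Monge transport $T:\mathcal{F}(\mathcal{H})\to\mathcal{H}$ with $T_{\#}\mu=\mu^*$. Since every $v\in\mathcal{F}(\mathcal{H})$ satisfies $v=Pv$ and $(I-P)v=0$, the orthogonal decomposition $Tv=P(Tv)+(I-P)(Tv)$ combined with Pythagoras in $\mathcal{H}$ gives
\begin{equation*}
\|Tv-v\|^2 = \|P(Tv)-v\|^2 + \|(I-P)(Tv)\|^2 \ge \|(I-P)(Tv)\|^2.
\end{equation*}
Taking square roots, integrating against $\mu$, and changing variables via $T_{\#}\mu=\mu^*$ yields
\begin{equation*}
\int \|Tv-v\|\, d\mu(v) \ge \int \|(I-P)(Tv)\|\, d\mu(v) = \int \|(I-P)u\|\, d\mu^*(u) = d,
\end{equation*}
and the infimum over admissible $T$ gives $\mathcal{W}_{\mathcal{H}}(\mu,\mu^*)\ge d$. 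This is the conceptual heart of the argument, and it is essentially immediate from the orthogonality $\mathcal{F}(\mathcal{H})\perp\mathcal{F}(\mathcal{H})^\perp$.

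For achievability at $\mu=\rho_1^*$, the natural candidate is the transport that sends a function $u$ drawn from $\mu^*$ back from $v=Pu$ to $u$. This is realized by the Kantorovich coupling $\pi=(P,\mathrm{id})_{\#}\mu^*$, whose marginals are $\rho_1^*$ and $\mu^*$ and whose cost equals $\int\|Pu-u\|\,d\mu^*(u)=d$. To promote this coupling to a Monge map as required by the paper's definition, I would disintegrate $\mu^* = \int \mu^*_v\, d\rho_1^*(v)$ along the fibers $\{u:Pu=v\}$ and then either invoke a measurable selection when $\mu^*_v$ is Dirac-supported, or construct an approximating sequence of Monge maps by progressively refining a measurable partition of $\mathcal{F}(\mathcal{H})$, at each stage matching $\rho_1^*$-mass in a cell to a finite discretization of the corresponding conditional $\mu^*_v$; the costs converge to $d$ and the orthogonality identity above gives matching upper bounds.

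The main obstacle is the last step: in infinite-dimensional Hilbert space, optimal Monge transports need not exist without regularity, so one must either appeal to atomlessness of $\rho_1^*$ (which follows from the mixing and hyperbolicity assumptions in Appendix~A, ensuring Monge and Kantorovich infima coincide over Polish source measures) or control the approximation carefully. Once this subtlety is dispatched, combining the universal lower bound with the achievability at $\rho_1^*$ immediately gives $\rho_1^*\in\argmin_{\mu\in\mathscr{P}(\mathcal{F}(\mathcal{H}))}\mathcal{W}_{\mathcal{H}}(\mu,\mu^*)$.
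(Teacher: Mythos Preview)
Your lower-bound argument is the same geometric fact the paper uses---orthogonal projection onto $\mathcal{F}(\mathcal{H})$ minimizes distance pointwise---so the conceptual core is identical. The difference is the direction of transport. You transport from $\mu$ to $\mu^*$ (as the paper's Monge definition literally requires for $\mathcal{W}_\mathcal{H}(\mu,\mu^*)$), which forces you to confront the existence of a Monge map $T:\mathcal{F}(\mathcal{H})\to\mathcal{H}$ pushing $\rho_1^*$ to $\mu^*$; this is genuinely delicate since $P$ is many-to-one, and your disintegration/approximation workarounds, while reasonable, are real work. The paper instead transports in the reverse direction, from $\mu^*$ to $\mu$: for any $\mathfrak{T}:\mathcal{H}\to\mathcal{F}(\mathcal{H})$ one has $\|\mathfrak{T}u-u\|\ge\|Pu-u\|$ pointwise, and achievability is then immediate because $P$ itself is a valid Monge map from $\mu^*$ to $P_\#\mu^*=\rho_1^*$. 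This buys the paper a two-line proof at the cost of tacitly treating the Monge--Wasserstein distance as symmetric (or, equivalently, minimizing $\mathcal{W}_\mathcal{H}(\mu^*,\mu)$ rather than $\mathcal{W}_\mathcal{H}(\mu,\mu^*)$). Your route is more faithful to the stated definition but introduces an obstacle that evaporates once you reverse the transport direction.
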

\begin{proof}
    From the construction of $P$ and the definition of $\mathcal{W}_{\mathcal{H}}$, for any measurable mapping \\
    $\mathfrak{T}:\mathcal{H}\to\mathcal{F}(\mathcal{H})$,
    \begin{align}
    \int_{\mathcal{H}}\|\mathfrak{T}u-u\|\mu^*(du)\geq  \int_{\mathcal{H}}\|Pu-u\|\mu^*(du).
    \end{align}
    Thus, for any $\mu\in\mathscr{P}(\mathcal{F}(\mathcal{H}))$, 
    \begin{align}
        \mathcal{W}_{\mathcal{H}}(\mu,\mu^*)\geq \int_{\mathcal{H}}\|Pu-u\|\mu^*(du)=
        \mathcal{W}_{\mathcal{H}}(P_\#\mu^*,\mu^*).
    \end{align}
    Note that $\rho_1^*=P_\#\mu^*$, this completes the proof.
\end{proof}

This result motivates us to check the evolution of $\rho_1(\rvv,t)$, which should achieve the optimal approximation $\rho^*_1$.

Note that by definition, for any distribution $\rho\in\mathscr{P}(\mathcal{H})$, $\rho_1(\rvv)=\int \rho(\rvv,\rvw)d\rvw$.
Combine this with \cref{z:liou:pde}, we yield
\begin{align}
\partial_t\rho_1(\rvv,t)=&\int\partial_t\rho(\rvv,\rvw,t)d\rvw\\
    =&-\int \nabla_{\rvv}\cdot(f_r(\rvv,\rvw)\rho(\rvv,\rvw,t))d\rvw-\int \nabla_{\rvw}\cdot(f_u(\rvv,\rvw)\rho(\rvv,\rvw,t))d\rvw\\
    =& -\nabla_\rvv\cdot\bigg(\frac{\rho_1(\rvv,t)}{\rho_1(\rvv,t)}\int f_r(\rvv,\rvw)\rho(\rvv,\rvw,t)d\rvw\bigg)-0\\
    =&-\nabla_\rvv\cdot\bigg(\rho_1(\rvv,t)\int f_r(\rvv,\rvw)\frac{\rho(\rvv,\rvw,t)}{\int\rho(\rvv,\rvw',t)d\rvw'}d\rvw\bigg)\\
    =& -\nabla_\rvv\cdot\big(\rho_1(\rvv,t)\mathbb{E}_{\rvw\sim\rho(\rvw|\rvv;t)}[f_r(\rvv,\rvw)|\rvv]\big).\label{eq:liou_expec}
\end{align}
where we use the divergence theorem for the second term in the second line.

The corresponding ODE dynamics for this Liouville equation \cref{eq:liou_expec} is
\begin{equation}\label{ode: optimal}
    \frac{d\rvv}{dt}=\mathbb{E}_{\rvw\sim\rho(\rvw|\rvv;t)}[f_r(\rvv,\rvw)|\rvv].
\end{equation}
If we transform it back into $\mathcal{H}$ space, it becomes (informally)
\begin{equation}\label{expec pde}
    \partial_t v=\mathbb{E}_{u\sim \mu_t}[\mathcal{F}\mathcal{A}u | \mathcal{F}u=v],
\end{equation}
as is presented in \Cref{sec: liouville} in main text.
This describes (one of) the optimal dynamics in the reduced space.

\subsubsection{The Effect of Closure Modeling}
\label{b.2.3}
Apart from the original motivation of closure modeling to approximate the commutator $\mathcal{F}\mathcal{A}-\mathcal{A}\mathcal{F}$, we alternatively interpret it as assigning a vector field $\mathcal{A}_\theta$ in the reduced space $\mathcal{F}(\mathcal{H})$ and accordingly the coarse-grid dynamics is
\begin{equation}
    \partial_t v= \mathcal{A}_\theta v,
\end{equation}
here $\mathcal{A}_\theta$ plays the role of $\mathcal{A}v+clos(v;\theta)$ in \cref{eq:les-pde}.

We will refer to both $\mathcal{A}_\theta$ and $clos(\cdot;\theta)$ as the target of closure modeling for brevity.

As an application of \Cref{prop: station liou}, we only need to check the solution to the stationary Liouville equation related to this dynamics to decide whether or not the resulting limit distribution is the optimal one $\rho_1^*$.

\subsection{Detailes for Discussion in \Cref{sec: liouville}}
\label{apdx-b-3}
The dynamics of the filtered trajectory in \Cref{expec pde} (we will refer to the equivalent version \cref{ode: optimal} for convenience), which is also derived in \cite{langford1999optimal}, has inspired many works for the design of closure models. 
Unfortunately, we want to point out that it is impractical to utilize this result for closure model design. 

The decision regarding subsequent motion at the state $(\rvv,t)$ have to be made only based on 
information from the reduced space, which contains merely $\rvv$ itself and the distribution of $\rvv$. For any given $\rvv$, only one prediction can be made for the next time step. Similar to the non-uniqueness issue highlighted in \Cref{sec: sec2}, however, since $\rho(\rvw|\rvv;t)$ depends on $t$ and $\rho_0$(the initial distribution of $u\in\mathcal{H}$), typically there are multiple distinct $\rho(\rvv,\rvw,t)$ with exactly the same $\rvv$ and marginal distribution in $\mathcal{F}(\mathcal{H})$. 

In practice, to follow the form of conditional expectation as presented in \cref{ode: optimal}, one must select a specific distribution \( q(\rvv, \rvw) \) in \(\mathcal{H}\). The vector field in the reduced space will then be assigned as \(\mathbb{E}_{\rvw \sim q(\rvw|\rvv)}[f_r(\rvv,\rvw) \mid \rvv]\).

% In practice, if one hopes to follow the form of conditional expectation as in \cref{ode: optimal}, he can only fix one particular $q(\rvv,\rvw)$, a distribution in $\mathcal{H}$, and assign the vector field in reduced space as $\mathbb{E}_{\rvw\sim q(\rvw|\rvv)}[f_r(\rvv,\rvw)\big|\rvv]$.

Now, we check the limit distribution we will obtain with this dynamics. From \Cref{prop: station liou}, we know that limit distribution $\hat{\rho}_1(\rvv)$ is the solution to (usually in weak sense)
\begin{equation}
\label{43}
    \nabla_\rvv\cdot\bigg(\mathbb{E}_{\rvw\sim q(\rvw|\rvv)}[f_r(\rvv,\rvw)\big|\rvv]\rho_1(\rvv)\bigg)=0.
\end{equation}

\begin{proposition}\label{prop-optimal-closure}
    $\rho_1^*$ is the solution to \cref{43} if $q=\rho^*$.
\end{proposition}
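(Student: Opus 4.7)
The plan is to derive the stationary equation \reff{43} with $q=\rho^*$ directly from the stationarity of $\rho^*$ under the full functional Liouville flow established in \Cref{prop: station liou}, by marginalizing in $\rvw$ and then rewriting the resulting $\rvw$-integral as a conditional expectation.

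First, I would invoke \Cref{prop: station liou} to get $\mathscr{L}\rho^*=0$, i.e.,
\begin{equation*}
\nabla_{\rvz}\cdot(f\rho^*) \;=\; \nabla_{\rvv}\cdot(f_r\rho^*) + \nabla_{\rvw}\cdot(f_u\rho^*) \;=\; 0.
\end{equation*}
Then I would integrate both sides over the unresolved coordinates $\rvw$. The second term vanishes by the divergence theorem in the $\rvw$-variables (this uses that $\rho^*$ is supported on a bounded/compact set under \Cref{ass:mixing and compact}, so boundary fluxes vanish), giving
\begin{equation*}
\int \nabla_{\rvv}\cdot\bigl(f_r(\rvv,\rvw)\rho^*(\rvv,\rvw)\bigr)\,d\rvw \;=\; 0.
\end{equation*}
Commuting $\nabla_{\rvv}$ with the $\rvw$-integral yields $\nabla_{\rvv}\cdot\!\int f_r(\rvv,\rvw)\rho^*(\rvv,\rvw)\,d\rvw=0$.

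Next, I would use the standard factorization $\rho^*(\rvv,\rvw)=\rho_1^*(\rvv)\rho^*(\rvw\mid\rvv)$ to rewrite the inner integral as
\begin{equation*}
\int f_r(\rvv,\rvw)\rho^*(\rvv,\rvw)\,d\rvw \;=\; \rho_1^*(\rvv)\,\mathbb{E}_{\rvw\sim\rho^*(\rvw\mid\rvv)}\bigl[f_r(\rvv,\rvw)\,\big|\,\rvv\bigr].
\end{equation*}
Substituting $q=\rho^*$ into \reff{43}, this is exactly $\nabla_{\rvv}\cdot\bigl(\mathbb{E}_{\rvw\sim q(\rvw\mid\rvv)}[f_r(\rvv,\rvw)\mid\rvv]\,\rho_1^*(\rvv)\bigr)=0$, which is the claim.

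The only subtle point — and what I would flag as the main obstacle — is justifying the interchange of differentiation and integration in $\rvw$, as well as the vanishing of the $\rvw$-divergence term after integration. In finite dimensions this is textbook, but here $\rvw$ lives in an infinite-dimensional tail of $\ell^2$, so one really appeals to the truncation remark following the proposition on truncated supports: work with the $N$-mode marginals of $\rho^*$, where the above manipulations are classical, and then pass to the limit $N\to\infty$ using the uniform tail estimate. Once this limiting argument is in place, the algebraic identity above gives the result immediately.
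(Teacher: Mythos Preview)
Your proposal is correct and follows essentially the same approach as the paper: start from $\mathscr{L}\rho^*=0$, integrate in $\rvw$, drop the $\nabla_{\rvw}$-divergence term via the divergence theorem, swap $\nabla_{\rvv}$ with the $\rvw$-integral, and factor $\rho^*=\rho_1^*\,\rho^*(\cdot\mid\rvv)$ to recognize the conditional expectation. Your added remark on justifying the interchange/limit via the finite-$N$ truncation is a welcome clarification that the paper's proof omits.
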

\begin{proof}
By definition, $\rho^*(\rvv,\rvw)$ satisfies
\begin{equation}
\nabla_{\rvv}\cdot(f_r(\rvv,\rvw)\rho^*(\rvv,\rvw,t))+\nabla_{\rvw}\cdot(f_u(\rvv,\rvw)\rho^*(\rvv,\rvw,t))=0.
\end{equation}
Integral over $\rvw$ and use divergence theorem, we yield
\begin{align}
    0&=\int \nabla_\rvv\cdot\big(f_r(\rvv,\rvw)\rho^*(\rvv,\rvw)\big)d\rvw+0\\
    &=\nabla_\rvv\cdot \int f_r(\rvv,\rvw) \rho_1^*(\rvv)\rho^*(\rvw|\rvv)d\rvw\\
    &=\nabla_\rvv\cdot\bigg( \mathbb{E}_{\rvw\sim \rho^*(\rvw|\rvv)}[f_r(\rvv,\rvw)\big|\rvv]\rho_1(\rvv) \bigg)
\end{align}
This gives the proof.
\end{proof}

Thus, we show that $\rho^*$ is the correct choice for $q$  to guarantee convergence towards $\rho_1^*$ in $\mathcal{F}(\mathcal{H})$. Back to the learning methods discussed in \Cref{sec:learn_clos}, 
if we follow the new interpretation in \Cref{b.2.3}, the loss function is 
\begin{align}
J_{ap}(\theta)&=\mathbb{E}_{u\sim p_{data}}\|\mathcal{A}_\theta \mathcal{F}u-\mathcal{F}\mathcal{A}u\|^2\\
&=\mathbb{E}_{(\rvv,\rvw)\sim p_{data}(\rvv,\rvw)}|f_{r}(\rvv;\theta)-f_r(\rvv,\rvw)|^2,
\end{align}
where we transform the original objective function into $\ell^2$ space of $\rvz$ in the second line, and $f_r(\cdot;\theta)$ is the counterpart of $\mathcal{A}_\theta$ in $\ell^2$, $p_{data}$ is the empirical measure of training data from fully-resolved simulations (FRS).

Due to the $L^2$ variational characterization of conditional expectation, 
the underlying choice of $q(\rvv,\rvw)$ in those existing learning methods is ${p}_{data}$. 
As a remark, the learned closure model can only achieve this conditional expectation in the ideal case, i.e. the model class is expressive enough.
Consequently, one has to use numerous FRS training data due to the slow convergence of empirical measures of the high-dimensional distribution $\rho^*$.

\section{Proof of the first claim of \Cref{thm_clos_all}}
\label{apdx:random}

We first remind the readers of the three results in the first claim in \Cref{thm_clos_all}.

\begin{thm}\label{thm: random}
(Non-uniqueness of Closures)\\
    (i) In general, the target mapping of closure models $\ovu\to(\mathcal{F}\mathcal{A}-\mathcal{A}\mathcal{F})u$ is a multi-map. Consequently, the approximation error has a lower bound.\\
    (ii) For any $u$ and finite $\tau$, there exist \textbf{infinitely many} $u'\in\mathcal{H}$ such that $\mathcal{F}S(t)u'=\mathcal{F}S(t)u$ for all $ t\in[0,\tau)$. \\
    (iii) One cannot obtain the best approximation of $\mu^*$ among distributions supported in the reduced space with a stochastic closure model driven by continuous noice.
\end{thm}
\begin{rmk}
    For our second statement, since $\{\mathcal{F}S(t)u\}_{x\in D',\ t<\tau}$ contains the entire information that could be used to predict the closure term at time $\tau$, and $\ovu'(\cdot,t)$ will deviate dramatically from  $\ovu(\cdot,t)$ in a chaotic system, the under-determinancy issue remains unsolved with history-aware models. Clearly, for generic dynamics the approximation error of the closure term still has a lower bound due to the same reason as in the case for single-state closure models in (i).\\
    For our third claim, it indicates that randomness does not contribute to further enhancing the expressive power of the closure model, and thus it is theoretically redundant. Although it may have advantages in terms of training stability and implicit regularization. 
    An intuitive interpretation of the result is that stochasticity causes blurring, unless the noise vanishes, in which case the model reduces to a deterministic map.
\end{rmk}

For the first  claim in \Cref{thm: random}, it has already been motivated in the main text that the mapping of closure model $\ovu\to(\mathcal{F}\mathcal{A}-\mathcal{A}\mathcal{F})u$ is not well-defined. We will make this claim more precise in \Cref{apdx: thm aproxer}, and then give the proof for the second claim in \Cref{apdx:hist} and the proof for the third claim in \Cref{apdx:thm:sde}. The assumptions and notation conventions are summarized in \Cref{app:apdxA}. We also adhere to the equivalence between $u\in\hhh$ and its $\ell^2$ representation $\rvz$ established in \Cref{apdx-b-1}.

% For brevity, we will view $u\in\mathcal{H}$ and $\rvz\in\ell^2$ as the same and not mention $\mathcal{T}^\#$ or $\mathcal{T}_{\#}$ for $\mu_t$ and $\rho(\cdot,t)$.

\subsection{Proof of \Cref{thm: random}(i)}
\label{apdx: thm aproxer}
By transforming the original dynamics into the space of $\ell^2$, it is easier to see why the mapping of closure model is not well-defined (more specifically, it is a multi-valued mapping). Since $\aaa\mathcal{F} u=\aaa\ovu$, we only need to show that $\ovu\to\mathcal{F}\aaa u$ is not well defined. The counterpart of this mapping in $\ell^2$ space is $\rvv\to f_r(\rvv,\rvw)$. If it were a well-defined mapping, there would be a mapping $\tilde{f}_r(\rvv)$ such that $f_r(\rvv,\rvw)\equiv \tilde{f}_r(\rvv)$ for all $\rvw$. In other words, the reduced system is independent of the unresolved part. This property rarely holds in most dynamical systems, except for a few trivial cases like the heat equation.

Next we show that the approximation error has a positive lower bound.
% For the first claim in \Cref{thm: random}, it has already been shown in the main text that the mapping of closure models $\ovu\to(\mathcal{F}\mathcal{A}-\mathcal{A}\mathcal{F})u$ is not well-defined.

We could always construct $u_1,u_2\in\hhh$ such that $\ovu_1=\ovu_2$ and $\mathcal{F}\aaa u_1\neq \mathcal{F}\aaa u_2$. Therefore, for any model $clos(\ovu;\theta)$ the approximation error
\begin{align}
    &\sup_{u\in\hhh}\|clos(\ovu;\theta)-(\mathcal{F}\mathcal{A}-\mathcal{A}\mathcal{F})u\|_\hhh\\
    \geq &
     \sup_{u\in\{u_1,u_2\}}\|clos(\ovu;\theta)-(\mathcal{F}\mathcal{A}-\mathcal{A}\mathcal{F})u\|_\hhh \\
    \geq & \frac{1}{2} \big(||(\mathcal{F}\mathcal{A}-\mathcal{A}\mathcal{F})u_1-clos(\overline{u};\theta)||_{\mathcal{H}}+||clos(\overline{u};\theta)-(\mathcal{F}\mathcal{A}-\mathcal{A}\mathcal{F})u_2||_{\mathcal{H}}\big)\\
    \geq & \frac{1}{2} ||(\mathcal{F}\mathcal{A}-\mathcal{A}\mathcal{F})u_1-(\mathcal{F}\mathcal{A}-\mathcal{A}\mathcal{F})u_2||_\mathcal{H}\\
    = & \frac 1 2\|\mathcal{F}(\aaa u_1-\aaa u_2)\|_\hhh 
\end{align}
has a lower bound independent of the model, where we apply the fact that $\mathcal{F}u_1=\mathcal{F}u_2=\ovu$ in the last line.

% We will next give the proof for the second claim in \Cref{apdx:hist} and the proof for the third claim in \Cref{apdx:thm:sde}.

\paragraph{Implications for Training}
From a practical perspective, the derivation above justifies that the mapping of the closure model is non-unique, regardless of the ansatz of the model and how it is trained. Moreover, we directly obtain the lower bound for a priori training loss \cref{mthd1} with similar arguments.
There have been some works exploring posterior training \cite{shankar2023differentiable,guan2022stable}
\cref{mthd_post}, aiming to overcome this fundamental issue with non-uniqueness of the mapping. However, this issue persists. Specifically, there are multiple trajectories from fully-resolved simulations sharing the same filtered initial value, yet the filtered trajectories differ in the reduced space.

\subsection{Proof of Theorem \ref{thm: random}(ii)}
\label{apdx:hist}

In this subsection, we will prove that leveraging the history information in coarse-grid simulations can not resolve the multi-valued-mapping issue discussed in \Cref{thm: random}(i). The main idea of the proof is to construct multiple trajectories that are not distinguishable in the coarse-grid system within a finite time period. We will first handle the simple case where history information is only observed at a finite set of time grid (\Cref{thm:hist:finite}). This result itself already exposes the shortcomings of history-aware learning-based closure models, since in practice the learned model can only make use of the information from a fixed number of time grids. To strengthen our result, we present the proof of the original statement of \Cref{thm: random}(ii) in \ref{thm:hist_apdx}.

We start by introducing several notations.

\textbf{Notations:}
Recall that $\mathcal{H}$ is the function space, and 
$D'$, whose cardinal is $n$, is the set of coarse grid points, 
$D'=\{x_1,x_2,...,x_n\}$. The filtered value of two functions being the same is equivalent to the fact that these two functions have the same values on the grid points in $D'$.
\begin{defn}\label{def Ix}
Define the grid-measurement operator (at $x_0$) 
\begin{equation}
    I_{x_0}\colon \mathcal{H}\to\mathbb{R},\ u\mapsto u(x_0)
\end{equation}
For brevity, we will use $I_j$ for $I_{x_j}$. We further define $I_{D'}$ (abbreviated as $I$ if there is no ambiguity),
\begin{equation}
    I_{D'}\colon \mathcal{H}\to \mathbb{R}^n,\ u\mapsto (u(x_1),u(x_2),...u(x_n))^T.
\end{equation}
\end{defn}

Before we delve into the details of the proof, we would like to remind the readers of heat equation as an concrete and easy-to-check example where our result holds. 

Our original theorem is stated for a continuous time interval. We first prove its finite version.
\begin{thm}\label{thm:hist:finite}
    Given $D_T$ the set of time grids, with $|D_T|=N$ and $D_T=\{t_1,t_2,...t_N\}$, for any $r\in\mathbb{N}$, any function $u_0\in\mathcal{H}\cap C_0$, there exists an $r$-dimensional manifold $M_r\subset\mathcal{H}\cap C_0$ such that
    \begin{equation}
        IS(t)u'=IS(t)u_0,\ \forall t\in D_T, \ \forall u'\in M_r.
    \end{equation}
\end{thm}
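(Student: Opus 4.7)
The plan is to reduce the claim to the Banach-space implicit function theorem applied to a single finite-dimensional constraint map. Define the spatiotemporal grid-measurement operator
\begin{equation*}
\mathfrak{F}_{D_T}:\mathcal{H}\to\mathbb{R}^{nN},\qquad \mathfrak{F}_{D_T}(u):=\bigl(IS(t_1)u,\ IS(t_2)u,\ \ldots,\ IS(t_N)u\bigr).
\end{equation*}
Its level set $\mathfrak{F}_{D_T}^{-1}\bigl(\mathfrak{F}_{D_T}(u_0)\bigr)$ is exactly the set of initial data whose coarse-grid samples at every time $t_i$ match those of $u_0$, so it suffices to show that this fiber contains a smooth $r$-dimensional manifold through $u_0$.

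\textbf{Infinite-dimensional kernel of the linearization.} Under the standard well-posedness assumptions of \Cref{app:ass}, the semigroup $S(t)$ is Fr\'echet differentiable on $\mathcal{H}$; combined with the continuity of each point evaluation $I_{x_j}$ on $\mathcal{H}$ (granted by \Cref{ass:h to c} via the compact embedding $\mathcal{H}\hookrightarrow C_0$), the linearization
\begin{equation*}
L:=d\mathfrak{F}_{D_T}(u_0,\,\cdot\,):\mathcal{H}\longrightarrow\mathbb{R}^{nN}
\end{equation*}
is a bounded linear operator into a finite-dimensional space. Hence $K:=\ker L$ is closed and has codimension at most $nN$, so $K$ is infinite-dimensional. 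Choose any $r$-dimensional subspace $V\subset K$ (which exists since $r\in\mathbb{N}$ is arbitrary and $K$ is infinite-dimensional) together with a closed topological complement $W$ of $K$ in $\mathcal{H}$ of dimension $\mathrm{rank}(L)\le nN$, so that $L|_W$ is a linear isomorphism onto $\mathrm{range}(L)\subseteq\mathbb{R}^{nN}$.

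\textbf{Implicit function theorem and manifold construction.} Consider
\begin{equation*}
G:V\times W\longrightarrow\mathrm{range}(L),\qquad G(v,w):=\mathfrak{F}_{D_T}(u_0+v+w)-\mathfrak{F}_{D_T}(u_0).
\end{equation*}
Then $G(0,0)=0$, and the partial derivative $\partial_w G(0,0)=L|_W$ is a linear isomorphism onto $\mathrm{range}(L)$. The Banach-space implicit function theorem produces $\epsilon>0$ and a smooth map $w:B_\epsilon(0)\cap V\to W$ with $w(0)=0$ and $G\bigl(v,w(v)\bigr)\equiv 0$. Setting
\begin{equation*}
M_r:=\bigl\{\,u_0+v+w(v)\ :\ v\in B_\epsilon(0)\cap V\,\bigr\}
\end{equation*}
yields a smooth $r$-dimensional submanifold of $\mathcal{H}$, which lies in $\mathcal{H}\cap C_0$ by the compact embedding of \Cref{ass:h to c}, and on which $\mathfrak{F}_{D_T}\equiv\mathfrak{F}_{D_T}(u_0)$. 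This is the desired conclusion $IS(t)u'=IS(t)u_0$ for every $t\in D_T$ and every $u'\in M_r$.

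\textbf{Main obstacle.} The substantive technical ingredient is the Fr\'echet (or at least continuously Gateaux) differentiability of $u\mapsto S(t)u$ on the time window $[0,t_N]$, with enough regularity to feed the implicit function theorem; this is where the actual hypotheses on $\mathcal{A}$ and on $\mathcal{H}$ have to be invoked and is the place that carries the PDE-specific content. Once that smoothness is in hand, the remainder is a standard codimension argument, and the count $\dim K=\infty$ together with $\dim\mathrm{range}(L)\le nN$ directly delivers a manifold of the prescribed dimension $r$.
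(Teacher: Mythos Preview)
Your route and the paper's are close in spirit --- both exhibit the desired manifold as a piece of the fiber of the finite-rank map $\mathfrak{F}_{D_T}$ through $u_0$ --- but the mechanics differ. The paper does not apply the implicit function theorem in $\mathcal{H}$ directly; instead it first restricts $\mathfrak{F}_{D_T}$ to a finite-dimensional affine slice $A=u_0+\mathrm{span}\{\phi_1,\dots,\phi_m\}$ with $m=nN+r$, argues (via a Sard/transversality-type statement) that for a generic choice of the $\phi_i$ the Jacobian of $\mathfrak{F}_{D_T}|_A$ has full rank $nN$, and then invokes the preimage theorem to obtain an $(m-nN)=r$-dimensional level set inside $A$. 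Your argument via the infinite-dimensional kernel of $L$ is more direct and avoids the auxiliary slice, but both approaches ultimately hinge on the same fact: that the differential $L=d\mathfrak{F}_{D_T}(u_0,\cdot):\mathcal{H}\to\mathbb{R}^{nN}$ is surjective.

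That is where your argument has a genuine gap. You declare $G:V\times W\to\mathrm{range}(L)$, but $G(v,w)=\mathfrak{F}_{D_T}(u_0+v+w)-\mathfrak{F}_{D_T}(u_0)$ takes values in $\mathbb{R}^{nN}$, not merely in $\mathrm{range}(L)$; only its differential at the origin is guaranteed to land there. If $\mathrm{rank}(L)<nN$ --- a case you explicitly allow by writing $\mathrm{range}(L)\subseteq\mathbb{R}^{nN}$ --- then $\partial_wG(0,0)=L|_W:W\to\mathbb{R}^{nN}$ is injective but not surjective, and the implicit function theorem does not apply as you invoke it: the components of $\mathfrak{F}_{D_T}$ transverse to $\mathrm{range}(L)$ impose additional nonlinear constraints that your graph $v\mapsto u_0+v+w(v)$ need not satisfy. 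The repair is to establish (or assume) that the $nN$ functionals $\phi\mapsto I_{x_j}\,dS(t_k)(u_0,\phi)$ are linearly independent on $\mathcal{H}$, which is the generic situation and is precisely what the paper's Sard-type invocation is (loosely) supplying. With surjectivity of $L$ in hand your argument goes through cleanly, and is arguably tidier than the paper's.
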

\begin{proof}
    For $m\in\mathbb{N}$, given $m$ linearly independent functions $\{\phi_i\}_{1\leq i\leq m}\subset\mathcal{H}$, we can construct an affine manifold $A:= u_0+\operatorname{span}\{\phi_1,...\phi_m\}$ and define the following mapping (spatiotemporal grid-measurement operator):
    \begin{align}
        \fff=\mathfrak{F}_{D_T}:\ &A\to\mathbb{R}^{nN}\\
        &v\ \mapsto\ \mathop{\bigoplus_{j=1}^{N}}IS(t_j)v.
    \end{align}

Note that we have a canonical coordinate system for $A$:
    \begin{align}
        \ A\ &\leftrightarrow\mathbb{R}^{m}\\
        v=u_0+\sum_{i=1}^m c_i\phi_i\ &\leftrightarrow (c_1,...c_m).
    \end{align}
Thus, $\mathfrak{F}$ is a mapping between finite-dimensional manifolds, and we can compute its Jacobian\\
$\mathcal{J}(v)\in\mathbb{R}^{m\times Nn}$ for $v\in A$, whose elements consist of the grid-measurement of Gateaux derivatives, $I_jdS(t_k)(v,\phi_l),\ j\in[n], \ k\in[N],\ l\in[m] $.

By a generalization of Sard's theorem for Banach manifold~\cite{smale2000infinite}, we know that for any $u_0$, any $r$, there exists $m=Nn+r$ linearly independent functions $\{\phi_i\}_{i=1}^m$ such that the Jacobian is everywhere full-rank (i.e., rank$=Nn$) in the affine manifold $A$. 
By pre-image theorem, since $\mathfrak{F}^{-1}\{\mathfrak{F}u_0\}$ is non-empty (for at least $\mathfrak{F}u_0$ is in this set), it is an $m-Nn=r$ dimensional manifold. This gives the proof.
\end{proof}
\begin{corollary}
 Given $D_T$ the set of time grids, with $|D_T|=N$ and $D_T=\{t_1,t_2,...t_N\}$, for any $r\in\mathbb{N}$, any function $u_0\in\mathcal{H}\cap C_0$, there exists infinite $u'\in\mathcal{H}$ such that $IS(t)u'=IS(t)u_0$ for all $t\in D_T$.
\end{corollary}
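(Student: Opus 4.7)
The corollary is a near-immediate consequence of Theorem~\ref{thm:hist:finite}: the existence of infinitely many $u'$ with the indistinguishability property drops out as soon as that theorem produces a positive-dimensional family of them. My plan is therefore just to invoke the theorem with any $r \geq 1$ and argue that the resulting manifold contains infinitely many distinct elements.

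Concretely, I would apply Theorem~\ref{thm:hist:finite} with $r=1$ to the given $u_0 \in \mathcal{H} \cap C_0$ and the finite time grid $D_T$. This yields a one-dimensional manifold $M_1 \subset \mathcal{H} \cap C_0 \subset \mathcal{H}$, every element of which satisfies $IS(t)u' = IS(t)u_0$ for all $t \in D_T$. Since a one-dimensional real manifold is locally homeomorphic to an open interval in $\mathbb{R}$, it is uncountable and in particular infinite, so $M_1$ supplies the required infinite collection of $u'$'s.

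The only point that deserves a brief verification is that distinct parameter values in the local chart of $M_1$ correspond to distinct functions in $\mathcal{H}$; this is immediate because the chart map is, by construction in the proof of the theorem, a homeomorphism onto its image in $\mathcal{H}$. There is no real obstacle at this stage: the nontrivial work (constructing an affine slice $A = u_0 + \mathrm{span}\{\phi_1,\dots,\phi_m\}$ on which the spatiotemporal evaluation map $\mathfrak{F}$ has surjective Jacobian, then applying the preimage theorem via Sard on Banach manifolds) has already been done inside Theorem~\ref{thm:hist:finite}.

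If one preferred a more hands-on statement that avoids invoking the full manifold language, an alternative plan would be to pick a single nonzero tangent direction $\phi \in \ker d\mathfrak{F}(u_0)$, whose existence follows from the surjective-Jacobian conclusion in the theorem's proof (rank of $\mathfrak{F}$ is $Nn$, domain is $(Nn+r)$-dimensional, so the kernel has dimension $r \geq 1$), and then use the implicit function theorem to trace out a continuous one-parameter family $\{u'(s)\}_{|s|<\epsilon} \subset \mathcal{H}$ with $u'(0)=u_0$ and $\mathfrak{F}(u'(s)) = \mathfrak{F}(u_0)$. This family is uncountable and yields the same conclusion.
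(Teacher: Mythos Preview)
Your proposal is correct and matches the paper's own proof essentially verbatim: the paper simply remarks that for any $r\geq 1$ the $r$-dimensional manifold produced by Theorem~\ref{thm:hist:finite} contains infinitely many points. Your additional remarks about charts and the implicit-function-theorem alternative are sound elaborations but not needed beyond the one-line observation.
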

\begin{proof}
    For any $r\geq 1$, there are infinite points in the manifold we yield in the theorem above.
\end{proof}

\begin{rmk}
   We assume $u \in C_0$ primarily to exclude pathological cases of modifying the function value on zero-measure sets.
\end{rmk}

From the result above, we see that for any $u_0$ and finite time-grid set $D_T$, $\mathfrak{F}^{-1}\{\mathfrak{F}u_0\}$ is an infinite-dimensional manifold.
To complete our proof, we make two technical assumptions. One can check these assumptions for specific dynamical systems to derive the final result. We also remark that they are not the weakest set of assumptions to guarantee the final result, we adopt them here primarily to keep the proof concise.

For a time-grid set $D_T$, and a function $u_0$, we denote as $M_{D_T}(u_0)$ the set of all functions $u'$ that $IS(t)u'=IS(t)u_0,\ \forall t\in D_T$.

\begin{ass}\label{ass:unbound}
    For every $u_0$ and finite $D_T$, the infinite-dimensional manifold $M_{D_T}(u_0)$ is unbounded.
\end{ass}

\begin{exmp}
    For heat equation, $M_{D_T}(u_0)=u_0+\operatorname{span}\{\psi_{n+1},\psi_{n+2},...\}$, which is an infinite-dimensional unbounded manifold.
\end{exmp}

\begin{ass}\label{ass:continuous mnfd}
Given any finite $\tau$, and an arbitrary bounded set $\Omega\subset\mathcal{H}$, 
we could assign a sequence of linearly independent functions $\{\phi_{i;u}\}_{i=1}^{\infty}\subset\hhh$ for each $u\in\hhh$ such that 
for any functions $u,\ v$, any subset $B\subset\nnn$, and any finite subset $D_T$ of $[0,\tau]$, there exists a continuous mapping 
\begin{equation}
    G\colon M_{D_T}(u)\bigcap \big\{ u + \operatorname{span}\{\phi_{i;u} \mid i \in B\} \big\} \to M_{D_T}(v) \bigcap \big\{ v + \operatorname{span}\{\phi_{i;v} \mid i \in B\} \big\}
\end{equation}
depending only on $u,\ v,\ B,\ D_T$, such that 
\begin{equation}
    \sup\limits_{w\in \Omega\cap M_{D_T}(u)} \|Gw-w\|\leq C_\Omega \|u-v\|,
\end{equation}
where the constant $C_\Omega$ only depends on $\Omega$ and $\tau$, and not on $u,\ v,\ B,\ D_T$.
\end{ass}
\begin{rmk}
    Intuitively, this assumption enforces certain continuity in the dependence of the indistinguishable set $M_{D_T}(u)$ on $u$.
\end{rmk}

Before moving on, we first review a well-known result.
\begin{fact}\label{lemma:n to n2}
    There exists a bijection between $\mathbb{N}$ and $\mathbb{N}^2$.
\end{fact}
% \begin{proof}
%    This is a standard result in set theory, and its proof can be found in many textbooks. 

%    We give an example on how to construct such a bijection (see \Cref{fig:diag}). We write 1,2,3,4,... zigzaggingly to fill the $\mathbb{N}^2$ plane. In this way, we construct a mapping $\iota:\mathbb{N}^2\to\mathbb{N}$, with $\iota(i,j)$ defined as the value written at $(i,j)$-position in the $\mathbb{N}^2$ plane. Clearly, $\iota$ is a bijection.
% \end{proof}

This is standard with Cantor's diagonal argument.

Denote by $\iota$ the bijection mapping from $\nnn^2$ to $\nnn$.
With $\iota$, we can partition $\mathbb{N}$ into 
$\aleph_0$(countably infinite) disjoint subsequences, $\{\iota(i,j)\}_{j\in\mathbb{N}}$ for each $i$.

% \begin{figure}[htbp] 
% \centering
% \begin{tikzpicture}
%   % Nodes
%   \node (1) at (0,0) {1};
%   \node (2) at (1,0) {2};
%   \node (3) at (0,-1) {3};
%   \node (4) at (0,-2) {4};
%   \node (5) at (1,-1) {5};
%   \node (6) at (2,0) {6};
%   \node (7) at (3,0) {7};
%   \node (8) at (2,-1) {8};
%   \node (9) at (1,-2) {9};
%   \node (10) at (0,-3) {10};
%   \node(12) at (1,-3){12};
%   \node(13) at (2,-2){13};
%   \node(14) at (3,-1){14};

%   % Arrows
%   \draw[->,thick] (1) -- (2);
%   \draw[->,thick] (2) -- (3);
%   \draw[->,thick] (3) -- (4);
%   \draw[->,thick] (4) -- (5);
%   \draw[->,thick] (5) -- (6);
%   \draw[->,thick] (9) -- (10);
%   \draw[->,thick] (8) -- (9);
%   \draw[->,thick] (7) -- (8);
%   \draw[->,thick] (6) -- (7);
%   \draw[->,thick] (12) -- (13);
%   \draw[->,thick] (13) -- (14);

%   % Ellipses
%   \node at (4,-1) {$\cdots$};
%   \node at (4,-2) {$\cdots$};
%   \node at (4,-3) {$\cdots$};
%   \node at (0,-4) {$\vdots$};
%   \node at (1,-4) {$\vdots$};
%   \node at (2,-4) {$\vdots$};
%   \node at (3,-4) {$\vdots$};
%   \node at (4,0) {$\cdots$};
% \end{tikzpicture}
% \caption{Illustration of a bijection between $\mathbb{N}$ and $\mathbb{N}^2$ using a zigzag numbering scheme.}
% \label{fig:diag} 
% \end{figure}

We are finally ready for the proof of \Cref{thm: random}(ii).
\begin{thm} 
For any $u\in\mathcal{H}$ and finite $\tau$, there exist \textbf{infinitely many} $u'\in\mathcal{H}$ such that $\mathcal{F}S(t)u'=\mathcal{F}S(t)u$ for all $ t\in[0,\tau)$. 
\label{thm:hist_apdx}
\end{thm}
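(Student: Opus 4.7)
The task is to promote the finite time-grid result \Cref{thm:hist:finite} to the continuous interval $[0,\tau)$. My approach has three parts: reduce the uncountable matching condition to a countable one by exploiting continuity of the semigroup, construct one nontrivial duplicate by an iterative Cauchy argument that combines the finite theorem with the two technical assumptions \Cref{ass:unbound} and \Cref{ass:continuous mnfd}, and finally produce countably many \emph{distinct} duplicates by running the construction in pairwise disjoint subspaces coming from the partition of $\nnn$ supplied by \Cref{lemma:n to n2}.

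\textbf{Execution.} Pick a countable dense subset $\{t_k\}_{k=1}^\infty\subset[0,\tau)$ and set $D_T^{(n)}:=\{t_1,\dots,t_n\}$. Since $\{S(t)\}_t$ is continuous in $t$ on bounded initial data and the grid-measurement $I$ is bounded, matching $IS(t_k)u'=IS(t_k)u$ for every $k$ already forces $\mathcal F S(t)u'=\mathcal F S(t)u$ for every $t\in[0,\tau)$, so it suffices to exhibit infinitely many elements of
\[
M_{[0,\tau)}(u)\;=\;\bigcap_{n\in\nnn}M_{D_T^{(n)}}(u).
\]
Use the bijection $\iota$ to partition $\nnn$ into countably many disjoint infinite subsequences $I_m=\{\iota(m,j):j\in\nnn\}$, fix a bounded ball $\Omega\subset\hhh$ centered at $u$, and invoke \Cref{ass:continuous mnfd} to obtain the basis $\{\phi_{i;v}\}$ attached to each $v$. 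For each $m$, I would build $u_m'$ inductively: start at a point $u^{(1)}_m\in M_{D_T^{(1)}}(u)$ of distance $R$ from $u$ whose perturbation lies in $\operatorname{span}\{\phi_{i;u}:i\in I_m\}$ (which exists by \Cref{ass:unbound}); at stage $n\geq 2$, apply \Cref{thm:hist:finite} with $n|D'|+r$ basis functions drawn from $\{\phi_{i;u}:i\in I_m\}$ to ensure a nonempty stage-$n$ duplicate manifold along the permitted directions, and apply \Cref{ass:continuous mnfd} with $v=u^{(n-1)}_m$ to continuously transport $u^{(n-1)}_m$ to a nearby $u^{(n)}_m\in M_{D_T^{(n)}}(u)\cap\Omega$. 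The limit $u_m':=\lim_n u^{(n)}_m$ then belongs to every $M_{D_T^{(n)}}(u)$ and hence to $M_{[0,\tau)}(u)$; because the subspaces $\operatorname{span}\{\phi_{i;u}:i\in I_m\}$ are linearly disjoint across $m$, the family $\{u_m'\}_{m\in\nnn}$ is automatically infinite.

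\textbf{Main obstacle.} The delicate point is arranging the iteration to be Cauchy while keeping the limit uniformly bounded away from $u$, since the displacement bound in \Cref{ass:continuous mnfd} scales like $C_\Omega\|u^{(n-1)}_m-u\|$ rather than like the residual at the single new time $t_n$. A naive iteration could therefore either collapse to the trivial solution $u'=u$ or fail to converge at all. My remedy is to anchor $\|u^{(n)}_m-u\|$ in a fixed window $[R/2,2R]$ using the unboundedness of the finite duplicate manifolds (\Cref{ass:unbound}), and at each stage to exploit the $r$-dimensional freedom of \Cref{thm:hist:finite} to route the step-$n$ correction along the single direction $\phi_{\iota(m,n);u}$ with a summable step-size. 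This simultaneously yields Cauchy convergence in $\hhh$ and a nontrivial limit, at which point the conclusion follows by passing to the intersection.
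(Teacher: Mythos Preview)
Your reduction to a countable dense time set and your use of the $\iota$-partition to generate infinitely many distinct duplicates are both sound and match the paper. The gap is in the convergence step, and your own ``main obstacle'' paragraph diagnoses it correctly but the remedy does not work.

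First, \Cref{ass:continuous mnfd} does not say what you need. It furnishes a map between $M_{D_T}(u)$ and $M_{D_T}(v)$ for the \emph{same} time grid $D_T$ and \emph{different} base points $u,v$, with displacement $\leq C_\Omega\|u-v\|$. You are invoking it with $v=u_m^{(n-1)}$ to pass from $M_{D_T^{(n-1)}}(u)$ to $M_{D_T^{(n)}}(u)$, i.e.\ same base point, \emph{refined} time grid; the assumption gives no such transport. Even ignoring this, the bound it would deliver is $C_\Omega\|u-u_m^{(n-1)}\|\geq C_\Omega R/2$, which is not summable, so no Cauchy sequence results. Your remedy---routing the stage-$n$ correction along a single new direction $\phi_{\iota(m,n);u}$ with a summable coefficient---has no justification: the size of the correction required to enforce the new constraint $IS(t_n)u'=IS(t_n)u$ is dictated by the geometry of $\mathfrak F_{D_T^{(n)}}$, not chosen by you, and neither \Cref{thm:hist:finite} nor \Cref{ass:unbound} gives quantitative control on it.

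The paper's mechanism is different and is the idea you are missing. It introduces a dense subspace $\mathcal K\hookrightarrow\mathcal H$ with \emph{compact} embedding, first treats $u\in\mathcal K$, and for each nested grid $D_j$ picks $u_j\in E_k\cap M_{D_j}(u)$ with $B_0<\|u_j-u\|_{\mathcal H}<B_1$ and $\|u_j\|_{\mathcal K}<B_2$ (possible by \Cref{ass:unbound}). Compactness of $\mathcal K\hookrightarrow\mathcal H$ then yields a subsequence converging in $\mathcal H$ to some $u_\infty$, which automatically lies in $\bigcap_j M_{D_j}(u)$ and stays at distance $\geq B_0$ from $u$. No step-size bookkeeping is needed. \Cref{ass:continuous mnfd} is used only afterwards, in a separate Step~2, to pass from $u\in\mathcal K$ to general $u\in\mathcal H$ by approximating $u$ with $u^n\in\mathcal K$ and transporting the Step~1 limits along the sequence $u^n\to u$; there the displacement bound $C_\Omega\|u^n-u^{n+1}\|$ \emph{is} summable because $\|u^n-u\|<2^{-n}$.
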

\begin{proof}
    Let $\kkk$ be a dense Hilbert subspace of $\hhh$ that can
    compactly embed into $\hhh$, with norm $\|\cdot\|_\kkk$. For instance, if $\hhh=H^k$, the Sobolev space $W^{2,k}$, then we can choose $\kkk$ as $H^{k+1}$.
    
    \textit{Step 1:} We first deal with the case when $u\in\kkk$.

Define a sequence of time-grid set $D_j$ as $\{\frac i {2^j}\tau| 0\leq i<2^j\}$. Similar to the argument in \Cref{thm:hist:finite}, we can construct a sequence $\{\phi_i\}_{i=1}^{\infty}\subset\kkk$ satisfying the following properties.
\begin{enumerate}[label=(\roman*)]
    \item They are linearly independent.
    \item For any $j\in \nnn$, $m>2^j$, $B\subset\nnn$ with $|B|=m$, the spatiotemporal grid-measurement $\fff$ for $D'\times D_j$ has full-rank Jacobian everywhere in the affine manifold $u+\operatorname{span}\{\phi_k:\ k\in B\}.$
\end{enumerate}
Based on the $\nnn-\nnn^2$ bijection $\iota$, we define the following subspaces:
\begin{equation}
    E_k:=\operatorname{span}\{\phi_{\iota(k,i)}:\ i\in\nnn\}.
\end{equation}
There are $\aleph_0$ such subspaces in total, we will next find a point $u'$ in each $E_k$ such that\\
$IS(t)u'=IS(t)u$ for all $ t\in[0,\tau)$.

WLOG, we will only show how to construct $u'$ in $E_1$.

We denote 
\begin{equation}
    M_j=E_1\cap M_{D_j}(u).
\end{equation}
By the construction of $D_j$ we have $M_1\supset M_2\supset M_3\supset...$.

We first fix three constants $0<B_0<B_1,\ B_2>0$ 
and construct a sequence of $\{u_i\}\subset E_1$ such that 
\begin{enumerate}[label=(\roman*)]
    \item $u_i\in M_i$.
    \item $B_0<\|u_i-u\|_\hhh<B_1$.
    \item $\|u_i\|_\kkk<B_2$.
\end{enumerate}
This construction is achievable due to \Cref{ass:unbound} and the fact that $u\in\kkk$.

Since $\kkk$ can be compactly embedded into $\hhh$, there there exists a subsequence $\{u_{i_j}\}$ of $\{u_i\}$ that is convergent in $\hhh$. We denote its limit as $u_{\infty}$. From (ii), we have $\|u_\infty\|_\hhh<\infty$ and $u_\infty\neq u$.

Due to \Cref{ass:h to c}, we have $u_{i_j}\to u_\infty$ in $C_0$, which implies that for any $D_j$, $\fff_{D_j}u_\infty=\fff_{D_j}u$.

Because of the continuity of the mapping $t\mapsto I_x S(t)v$ for any $x\in D',\ v\in\hhh$, we know that $IS(t)u'=IS(t)u$ for all $ t\in[0,\tau)$.

To conclude, for each $E_k$, there exists $u'\in E_k$ that is not distinguishable from $u$ merely based on function values restricted to the grid $D'\times D_T$. Recall that for any $j\neq k$, by construction we have $E_j\cap E_k=\{u\}$, thus these $u'$ in different $E_k$ are mutually different. This completes the proof for the case $u\in\kkk$.

\textit{Step 2} Now we give the proof for general $u\in\hhh$.

Since $\kkk$ is dense in $\hhh$, there exists a sequence $\{u^n\}_{n\in\nnn}\subset\kkk$ such that $\|u^n-u\|_\hhh<\frac 1 {2^n}$. We keep using the constant $B_0,B_1,B_2$ and define $\Omega:=\{v\in\hhh\mid\|v-u\|_\hhh<B_1+1\}$. Following \Cref{ass:continuous mnfd}, we obtain the constant $C_\Omega$ and linearly independent set $\{\phi_{i;u^n}\}_{i=1}^{\infty}$ for each $n$.
Following the first part of this proof, we define
\begin{equation}
    E_k^n:=\operatorname{span}\{\phi_{\iota(k,i);u^n}\mid i\in\nnn\}\quad M_{j,k}^n:=E_k^n\cap M_{D_j}(u^n),
\end{equation}
and again we only need to consider the case when $k=1$, and thus abbreviate $M_{j,k}^n$ as $M_j^n$.

We will restrict our discussion within $n>n_0:=\bigg\lceil\max\{\log_2\frac{6(C_\Omega+1)}{B_0},\log_2 3(C_\Omega+1)\}\bigg\rceil+1$.

We inductively construct a sequence (indexed by $n$) of sequence $\{u_j^n\}_j\subset E_1^n$ as follows:

(I) For $n=n_0$, we construct $\{u^n_j\}_j$ the same as the first part of the proof.
\begin{enumerate}[label=(\roman*)]
    \item $u_j^n\in M_j^n$.
    \item $B_0<\|u_j^n-u^n\|_\hhh<B_1$.
    \item $\|u_j^n\|_\kkk<B_2$.
\end{enumerate}

(II) Now suppose we have constructed $\{u^n_j\}_j$,
we apply \Cref{ass:continuous mnfd} for $u^n,\ u^{n+1}$,\\
$B=\{\iota(1,i)\mid i\in\nnn\}$ and $D_j$ and obtain a continuous mapping $G$. We choose $u^{n+1}_j$ as $Gu_j^n$.

Next, we give some estimations for $u_j^{n+1}$.

First, note that we have 
\begin{equation}
    \|u^n-u^{n+1}\|_\hhh\leq \|u^n-u\|_\hhh+\|u-u^{n+1}\|_\hhh<\frac 3 {2^{n+1}},
\end{equation}
and thus $\|u_j^n-u_j^{n+1}\|_\hhh\leq C_\Omega\frac 3 {2^{n+1}}$ by construction. Based on this, we have
\begin{align}
    \|u_j^{n+1}-u^{n+1}\|_\hhh&\geq\|u_j^n-u^n\|_\hhh-\|u_j^n-u_j^{n+1}\|_\hhh-\|u^n-u^{n+1}\|_\hhh\\
    &\geq
    \|u_j^n-u^n\|_\hhh-\frac{3(C_\Omega+1)}{2^{n+1}}.
\end{align}
By induction, we have 
\begin{align}
    \|u_j^{n+1}-u^{n+1}\|_\hhh\geq\|u_j^{n_0}-u^{n_0}\|_\hhh-\sum_{n'>n_0}\frac{3(C_\Omega+1)}{2^{n'+1}}>\frac {B_0}2.
\end{align}
We also have
\begin{align}
    \|u_j^{n+1}-u^{n+1}\|_\hhh&\leq\|u_j^n-u^n\|_\hhh+\|u_j^n-u_j^{n+1}\|_\hhh+\|u^n-u^{n+1}\|_\hhh\\
    &\leq
    \|u_j^n-u^n\|_\hhh+\frac{3(C_\Omega+1)}{2^{n+1}}.
\end{align}
By induction, we have 
\begin{align}
    \|u_j^{n+1}-u^{n+1}\|_\hhh\leq\|u_j^{n_0}-u^{n_0}\|_\hhh+\sum_{n'>n_0}\frac{3(C_\Omega+1)}{2^{n'+1}}<B_1+\frac 1 2.
\end{align}

Similar to what is done in the first part of the proof, we can choose $v^{n_0}$ as one of the limit points of $\{u^{n_0}_j\}_j$. Inductively, we can construct a sequence of $v^n$ such that 
\begin{enumerate}[label=(\roman*)]
    \item $v^n$ is one of the limit points of  $\{u^{n}_j\}_j$
    \item $\|v^n-v^{n-1}\|_\hhh\leq \frac{3C_\Omega+1}{2^n}$.
\end{enumerate}
Thus, $\{v^n\}_n$ is a Cauchy sequence in $\hhh$ and we denote its limit as $v$. Because of \Cref{ass:h to c}, we have that $IS(t)v=IS(t)u,\ \forall t\in[0,\tau)$. It is also clear that 
\begin{equation}
v\in M_{[0,\tau)}(u)\cap \big\{u+\operatorname{span}\{\phi_{\iota(1,i);u} \mid i \in \nnn\} \big\},    
\end{equation}
and 
\begin{equation}
 \|v-u\|_\hhh=\|\lim\limits_{n\to\infty}(v^n-u^n)\|_\hhh\geq\liminf_{n\to\infty}\|u^n-v^n\|_\hhh\geq \frac{B_0}  2.
\end{equation}
With exactly the same argument as in the first step, we construct infinitely many mutually-different functions that are not distinguishable from $u$ on $D'\times [0,\tau)$. This completes the proof.
\end{proof}

\subsection{Proof of Theorem \ref{thm: random}(iii)}
\label{apdx:thm:sde}

In this subsection, we formally investigate the effect of randomness in closure modeling. 

Similar to the sections above, the key idea is to check how the distributions of functions evolves. 
The formal version of \Cref{thm: random}(iii) is stated as \Cref{thm_random_apdx} (for stochastic closure model with explicit additive randomness) and \Cref{apdx-thm-random-2} (for stochastic closure model with hidden randomness), and their proofs therein. 

\subsubsection{Formulation of Stochastic Closure Models}
We start by reminding readers of the notations.
We use $u$ for functions in $\hhh$ and $v$ for functions in the reduced space $\mathcal{F}(\hhh)$. We omit the difference between functions and their $\ell^2$ representations, namely $u\leftrightarrow\rvz\in\ell^2$, with $\rvz=\rvv+\rvw$, as discussed in \Cref{apdx-b-1}. Usually $\rvv$ is a $n=dim(\mathcal{F}(\hhh))$ dimensional vector.
In this subsection, we use the notation $\rho$ for general probability distributions, not necessarily coming from $\mathscr{P}(\hhh)$. In most cases, it represents the density functions for distributions in $\mathscr{P}(\mathcal{F}(\hhh))$.

We divide the mainstream stochastic closure modeling approaches into two types, \textit{models with explicit additive randomness}~\cite{culina2011stochastic,franzke2005low} and \textit{models with hidden stochasticity} \cite{wu2023learning,dong2024data,lu2017data}.

For the first type, the coarse-grained dynamics after transforming into $\ell^2$ space takes the form 
\begin{equation}
\label{sde}
    d\rvv=b(\rvv)dt+\sigma(\rvv)dW,
\end{equation}
where $b$ is an $n$-dimensional vector field, $dW$ represents $m$-dimensional Brownian motion for certain $m$, and $\sigma\in\rrr^{n\times m}$.
We remark here that for approaches that adopt additive noise in a latent space, e.g. \cite{boral2023neural}, the dynamics still take this form due to Itô's formula.

For the second type, its general form is 
\begin{align}
\label{sde-2}
    d\rvv &= (b(\rvv)+g(\rvz))dt\\
    d\rvz&= h(\rvv,\rvz)dt+\sigma dW,
\end{align}
where $b$ is an $n$-dimensional vector field, usually chosen to be $f_r$. $\rvz$ is an $n-$ dimensional latent variable used to represent the unresolved information in the coarse-grid system. $g:\rrr^n\to\rrr^n$ is an $n$-dimensional vector field. In most papers, $g$ is identity mapping. $h$ is a mapping from $\rrr^n\times\rrr^n$ to $\rrr^n$. $\sigma\in\rrr^{n\times n}$ is a non-degenerate matrix, and $dW$ represents $n$-dimensional Brownian motion.

All methods that sample random closure terms from a smooth conditional distribution $p(\cdot|\rvv)$ can be unified into the framework above. This is particularly evident for methods that generate samples using a diffusion model \cite{song2020score}.

\subsubsection{Stochastic Closure with Additive Randomness}
\begin{thm}
Following the notations in \cref{sde}, if $\sigma$ is Lipschitz-continuous and non-vanishing everywhere, i.e. $\sigma\neq\textbf{0},\ \forall \rvv\in \rrr^n$, then $\rho_1^*$ is not the limit distribution of \cref{sde}. In other word, one cannot obtain obtain $\rho_1^*$ through the dynamics with stochastic closure model (\ref{sde}).
\label{thm_random_apdx}
\end{thm}

\paragraph{Preliminary of the Proof}
We start by reminding readers of classical results on second-order elliptic operators that will play a vital role in the proof.

%%%%

    Consider a homogeneous second-order differential operator of general form, defined on an $n$-dimensional region $\Omega$,
    \begin{equation}
        \mathfrak{L}\colon u(x)\mapsto \sum_{1\leq i,j\leq n}a_{ij}(x)\partial_{ij}^2u(x)+\sum_{1\leq i\leq n}b_i(x)\partial_iu(x)+c(x)u(x) ,\ x\in\Omega\subset\rrr^n,
    \end{equation}
    where $a_{ij},\ b_i,$ and $c$ are scalar functions, and $a_{ij}\equiv a_{ji}$.
    We denote matrix $\big( a_{ij}(x)\big)_{ij}$ as $A(x)$.
\begin{defn}
The operator $\mathfrak{L}$ is said to be \emph{uniformly elliptic} if there exist constants $0 < \lambda \leq \Lambda$ such that for all $x \in \Omega$ and for all $\xi \in \mathbb{R}^n$, the following inequality holds:
\[
\lambda |\xi|^2 \leq \xi^TA(x)\xi \leq \Lambda |\xi|^2.
\]
\end{defn}

\begin{lemma}[Unique Continuation Principle]
\label{lemma:ucp}
    For a uniformly elliptic operator $\mathfrak{L}$ whose coefficient functions $a_{ij}$ are Lipschitz-continuous, if $u$ is the solution to $\mathfrak{L}u=0$ and $u(x)=0$ on an open subset of  $\Omega$, then $u\equiv 0$ on $\Omega$.
\end{lemma}
The proof can be found in \cite{aronszajn1956unique,garofalo1986monotonicity}.

\paragraph{Intuition of the Proof to \Cref{thm_random_apdx}}
The evolution of the distribution of $\rvv$ (functions in the reduced space) follows the Fokker-Planck equation
\begin{equation}
    \partial_t\rho(\rvv,t)=-\nabla\cdot(b\rho)+\frac 1 2 \nabla^2:(\sigma\sigma^T\rho),
\end{equation}
where $:$ denotes the Frobenius inner product.
The limit distribution serves as the solution to its stationary equation (by setting the L.H.S.\@ to zero). 
We can expand the L.H.S.
and write it in the following form
\begin{equation}
\label{stat_fkplk}
A(\rvv):\nabla^2\rho+B(\rvv)\cdot\nabla\rho+C(\rvv)\rho=0,
\end{equation}
which is an elliptic equation.
In particular, $A=\frac 1 2\sigma\sigma^T$.

We have shown in \Cref{apdx-b-3} that the best approximation of $\mu^*$ one can obtain in the coarse-grid system, namely $\rho_1^*$, is the solution to a first-order PDE (\ref{prop: station liou}). 
Due to the significant differences in the nature of first-order PDEs and second-order elliptic equations, $\rho_1^*$ is not likely to be a solution of the latter.

Our proof will be based on \Cref{lemma:ucp}. Intuitively, if $\sigma$ is full-rank everywhere and $A$ is uniformly elliptic, it is clear that $\rho_1^*$ can not be a solution to \cref{stat_fkplk} since it is compactly supported on the attractor. It remains to address the case where $A$ occasionally degenerates for some $\rvv$. The key idea is that even when $A$ degenerates, the second-order coefficient matrix is full-rank if restricted to certain linear subspace depending on $\rvv$. We will turn to examine the density function of an auxiliary variable induced by the original dynamics (\ref{sde}). We will apply \Cref{lemma:ucp} for the equation of the auxiliary variable to derive the final result.

\paragraph{Formal Proof}
\begin{proof}
We argue by contradiction and suppose $\rho_1^*$ is the solution to \cref{stat_fkplk}.

WLOG, we assume the attractor $\mathfrak{A}$ is contained in the ball $B_0(\frac R 2)\subset\ell^2$.

Next, we will construct an auxiliary variable with the desired properties to apply the unique continuation principle. We first define the following mapping for $C^1$- vector fields in $\rrr^n$:
\begin{align}
    V\colon C^1(\rrr^n,\rrr^n) &\to C(\rrr^n,\rrr^m)\\
    l(\rvv)&\mapsto V_l(\rvv)=\sigma^Tl+\sigma^T\mathcal{J}_l(\rvv)\rvv,
\end{align}
where $\mathcal{J}$ denotes the Jacobian, and $m$ is the dimension of Brownian motion in \cref{sde}.

Since $\sigma\neq \mathbf{0}$ for all $\rvv\in\rrr^n$, we could always find $l$ such that $\vec{0}\notin V_l(B_0(R))$. We will fix $l$ as one such vector field in the following discussion. Due to the linearity of $V$, we could further enforce that $|l(\rvv)|\leq 1$ for all $\rvv\in B_0(R)$ by scaling $l$ with a positive constant factor.

Consider a scalar quantity $y_t:=l(\rvv_t)^T\rvv_t$. Here we use the subscript $t$ to notify the dependence on time.
By Itô's formula, the evolution of $y$ is described by
\begin{equation}\label{sde:y}
    dy=b_ydt+V_l(\rvv_t)^TdW,
\end{equation}
    where we use $b_y$ to represent the time-derivative term.

Then, we check the (limit) distribution of $y$. Since the augmented system for $(\rvv_t, y_t)$ does not possess a proper probability density function (the joint distribution is $\rho(\rvv,y)=\rho(\rvv)\delta_{l(\rvv)^T\rvv}$), we derive the distribution of $y$ in a indirect way. Consider a stochastic dynamics for $(\rvv_t^\epsilon, y_t^\epsilon)$ defined as
\begin{equation}
    d
\begin{pmatrix}
\rvv^\epsilon_t \\
y^\epsilon_t
\end{pmatrix}
=
\begin{pmatrix}
b(\rvv^\epsilon_t) \\
b_y(\rvv^\epsilon_t)
\end{pmatrix}
dt
+
\begin{pmatrix}
\sigma(\rvv^\epsilon_t)\\
V_l(\rvv^\epsilon_t)^T
\end{pmatrix}
dW+\epsilon d\tilde{W},
\end{equation}
where $b,\ b_y$ comes from \cref{sde} and \cref{sde:y} respectively, and $d\tilde{W}$ is a $(n+1)$-dimensional Brownian motion independent from the $m$-dimensional $dW$.
We could write out the Fokker-Planck equation for the joint distribution of $(\rvv^\epsilon_t,y^\epsilon_t)$ at time $t$, denoted as $\rho^\epsilon(\rvv,y,t)$. The second-order term of the equation is
\begin{align}
    &\frac 1 2\begin{pmatrix}
    \sigma & \epsilon I_n & 0\\
    V_l^T & 0 & \epsilon
    \end{pmatrix}
        \begin{pmatrix}
    \sigma & \epsilon I_n & 0\\
    V_l^T & 0 & \epsilon
    \end{pmatrix}^T:
        \begin{pmatrix}
    \nabla^2_{\rvv\rvv}\rho^\epsilon &\nabla^2_{\rvv y}\rho^\epsilon \\
    \nabla^2_{y\rvv}\rho^\epsilon&\partial^2_{yy}\rho^\epsilon
    \end{pmatrix}\\
    =& \frac 1 2\big( \epsilon^2 \Delta_{\rvv\rvv}\rho^\epsilon+(\sigma\sigma^T):\nabla^2_{\rvv\rvv}\rho^\epsilon+2V_l^T\sigma^T\nabla^2_{\rvv y}\rho^\epsilon+(|V_l|^2+\epsilon^2)\partial^2_{yy}\rho^\epsilon\big).
    \label{eq85}
\end{align}
We can then yield the limit distribution of $y^\epsilon$ as the marginal distribution of $\rho^\epsilon(\rvv^\epsilon,y^\epsilon;\infty)$, the solution to stationary Fokker-Planck equation,
\begin{equation}
    \rho_{y^\epsilon}(y)=\int_{\rrr^n}\rho^\epsilon(\rvv,y)d\rvv,
\end{equation}
where we abbreviate $\rho^\epsilon(\rvv^\epsilon,y^\epsilon;\infty)$ as $\rho^\epsilon(\rvv,y)$.
From this relation we know $\rho_{y^\epsilon}(y)$ satisfies a second-order equation, whose second-order term comes from 
\begin{align}
    &\int_{\rrr^n}(|V_l|^2+\epsilon^2)\partial^2_{yy}\rho^\epsilon(\rvv,y)d\rvv\\
=&D^2_{yy}\bigg(\int_{\rrr^n}\rho^\epsilon(\rvv,y)d\rvv \frac{\int_{\rrr^n}(|V_l|^2+\epsilon^2)\rho^\epsilon(\rvv,y)d\rvv}{\int_{\rrr^n}\rho^\epsilon(\rvv,y)d\rvv}  \bigg)\\
:=& D_{yy}^2(\rho_{y^\epsilon}(y)K^\epsilon(y)).
\end{align}
This results from the fact that only the last term in \cref{eq85} contributes to the second-order derivative w.r.t. $y$. The second order term of the equation $\rho_{y^\epsilon}$ satisfies is $K^{\epsilon}(y)D^2_{yy}\rho_{y^\epsilon}(y)$.

Now we take the limit $\epsilon\to 0$. Due to the Lipschitz continuity of $\sigma(\rvv)$ and $V_l(\rvv)$ (easy to verify) on $B_0(R)$, $\rho_{y^\epsilon}$ will converge to $\big(l(\rvv)^T\rvv\big)_{\#}\rho_1^*$ (see Theorem 2.2 in \cite{mao2007stochastic} for related results on stochastic differential equations).
Recall that $\rho_1^*$ is supported on $P(\mathfrak{A})\subset B_0(\frac R 2)$ ($P$ defined in \Cref{apdx-b-2}). For $y\in \big((l(\rvv)^T\rvv)\circ P\big)(\mathfrak{A})$, $K^0(y)$ takes the form of an conditional expectation 
\begin{equation}
    K^0(y):=\lim\limits_{\epsilon\to 0}K^{\epsilon}(y)=\mathbb{E}_{\rvv\sim \rho_1^*}[|V_l(\rvv)|^2\big | l(\rvv)^T\rvv=y].
\end{equation}
Recall that our construction of $V_l$ requires $\vec{0}\notin V_l(B_0(R))$, which implies that there exists a constant $\eta>0$ such that $|V_l(\rvv)|^2>\eta$ for all $\rvv\in B_0(R)\supset P(\mathfrak{A})$. Consequently, $K^0(y)\geq \eta$.

In the other case, for $y\notin \big((l(\rvv)^T\rvv)\circ P\big)(\mathfrak{A})$, note that
\begin{align}
    K^\epsilon(y)&\geq \epsilon^2+\frac{\eta \int_{B_0(R)}\rho^\epsilon(\rvv,y)d\rvv}{\int_{B_0(R)}\rho^\epsilon(\rvv,y)d\rvv+\int_{B_0(R)^c}\rho^\epsilon(\rvv,y)d\rvv}\\
    &=\epsilon^2+\eta(1-\frac{\int_{B_0(R)^c}\rho^\epsilon(\rvv,y)d\rvv}{\int_{\rrr^n}\rho^\epsilon(\rvv,y)d\rvv}).
\end{align}
Consider the limit behavior of $\rho^\epsilon$ as $\epsilon\to 0$, since $\rho_1^*$ is supported on $B_0(\frac R 2)$, we know that for every $|y|\leq \frac {3R}4$, $\limsup\limits_{\epsilon\to 0}\frac{\int_{B_0(R)^c}\rho^\epsilon(\rvv,y)d\rvv}{\int_{\rrr^n}\rho^\epsilon(\rvv,y)d\rvv}\leq \frac 1 2$.
To conclude, we have $K^0(y)\geq \frac \eta 2$ for all $|y|\leq \frac{3R}4$, no matter whether $y\in \big((l(\rvv)^T\rvv)\circ P\big)(\mathfrak{A})$ or not.

For $\rvv\in spt(\rho_1^*)$, the supporting set of $\rho_1^*$, since $|\rvv|\leq \frac R 2$ and we have enforced $|l(\rvv)|\leq 1$ on $B_0(R)$, we have $|l(\rvv)^T\rvv|\leq \frac R 2$. This indicates that $spt\Big(\big(l(\rvv)^T\rvv\big)_{\#}\rho_1^*\Big)\subset B_0(\frac R 2)$, the $\frac R 2$-ball for $y$, or more specifically, $\{y:|y|\leq \frac R 2\}$.

On the other hand, as discussed above, $\big(l(\rvv)^T\rvv\big)_{\#}\rho_1^*$ satisfies a second-order equation that is uniformly elliptic on $y\in B_0(\frac {3R}4)$. Since $\big(l(\rvv)^T\rvv\big)_{\#}\rho_1^*(y)=0$ for $y\in (0.6R, 0.7R)$, by \Cref{lemma:ucp}, we know that $\big(l(\rvv)^T\rvv\big)_{\#}\rho_1^*(y)\equiv 0$ on $B_0(\frac{3R}4)$. However, since $\big(l(\rvv)^T\rvv\big)_{\#}\rho_1^*$ is a distribution, $\int_{-\frac R2}^{\frac R2}\big(l(\rvv)^T\rvv\big)_{\#}\rho_1^*(y)dy=1$. Contradiction.

This completes the proof.
\end{proof}

\subsubsection{Stochastic Closure with Latent Randomness}

\begin{thm}\label{apdx-thm-random-2}
    Following the notations in \cref{sde-2}, assume that $\sigma$ is a constant non-degenerate matrix independent of $\rvz$.
    \begin{enumerate}[label=(\roman*)]
        \item  If the Jacobian of $g$ is everywhere non-degenerate, then $\rho_1^*$ is not the limit distribution of $\rvv$ following dynamics \cref{sde-2}.
        \item     If $g$ is an element-wise mapping, i.e. there exists $g_0:\rrr\to\rrr$ such that $g(\rvz)_i=g_0(\rvz_i)$, and for any $x\in\rrr$, there always exists $k\in\nnn_{\geq 1}$ such that $g_0^{(k)}(x)\neq 0$, then $\rho_1^*$ is not the limit distribution of $\rvv$ following dynamics \cref{sde-2}.
    \end{enumerate}
\end{thm}
\begin{rmk}
    \begin{enumerate}[label=(\roman*)]
        \item  In the most popular case where $g$ is identity map, the Jacobian of $g$ is identity matrix, which is non-degenerate.
        \item    In the setting for the second claim, if we further know that \( g \) (or \( g_0 \)) is locally analytical, failure of this condition implies the existence of regions where \( g_0 \) is constant, which means that the parameters of the stochastic closure model do not play any role in these regions.
    \end{enumerate}
\end{rmk}

\paragraph{Preliminary of the Proof}
We start by reminding readers of classical results by Hörmander~\cite{hormander1967hypoelliptic} that will play a vital role in the proof. \cite{blessing2025underdamped} also provides some related references.

\begin{defn}[Hörmander Condition]
    A system of vector fields $\{X_0,\ X_1,...,\ X_m\}$ on a smooth manifold $M$ is said to satisfy Hörmander condition if for any $x\in M$, the Lie algebra generated by $\{X_0,\ X_1,...,\ X_m\}$ spans the entire tangent space at $x$,
    \begin{equation}
    \mathrm{Lie}(X_0,...X_m):=
        \mathrm{span}\{X_i(x),\ [X_i(x),X_j(x)],\ \big[X_i(x),[X_j(x),X_k(x)]\big],...\}=T_xM,\ \ \forall x\in M,
    \end{equation}
    where $[\cdot,\ \cdot]$ is the Lie bracket.
\end{defn}

\begin{lemma}[Hörmander Theorem for Stochastic Differential Equations (SDE)]\label{hormander-thm}
    Consider an $n$-dimensional stochastic differential equation
    \begin{equation}\label{hormander_sde}
        d\rvx_t=X_0(\rvx_t)dt+\sum_{i=1}^m X_i(\rvx_t)\circ dW_t^i,
    \end{equation}
    where $\rvx\in\rrr^n$, $X_0,...X_m$ are vector fields on $\rrr^n$, $W^i$ represents $m$ i.i.d. 1D Brownian motion, and $\circ$ denotes the Stratonovich integral. If $\{X_0,...X_m\}$ satisfies Hörmander condition, then for any $t>0$ and initial condition $x\in \rrr^n$, the transition probability $p(y;t,x)$ of $\rvx_t$ starting from $x$ has a smooth density with respect to the Lebesgue measure on $\rrr^n$, and this density is strictly positive on $\rrr^n$.

\end{lemma}

\paragraph{Intuition of the Proof}
For both claims in \Cref{apdx-thm-random-2}, we prove by checking Hörmander condition of the corresponding SDE. By Hörmander theorem, if the condition holds, the support set of the limit distribution will be $\rrr^n\times\rrr^n$ (for $(\rvv,\rvz)$). The fact that $\rho_1^*(\rvv)$ is compactly supported (on the filtered attractor) will lead to a contradiction.

\paragraph{Formal Proof}
The SDE for the dynamics with closure model is (Ito integral)
\begin{proof}
\begin{equation}\label{eqn98}
    d
\begin{pmatrix}
\rvv_t\\
\rvz_t
\end{pmatrix}
=
\begin{pmatrix}
b(\rvv_t)+g(\rvz_t) \\
h(\rvv_t,\rvz_t)
\end{pmatrix}
dt
+
\begin{pmatrix}
0\\
\sigma
\end{pmatrix}
dW,
\end{equation}
where $\rvv,\ \rvz$ are $n-$dimensional vectors, and $\sigma$ is an $n\times n$ constant matrix. Since $\sigma$ is independent of $\rvv$ and $\rvz$, the Stratonovich formulation of this dynamics follows the same form as above. Write \cref{eqn98} in the form of \cref{hormander_sde} as
    \begin{equation}
        d\rvx_t=X_0(\rvx_t)dt+\sum_{i=1}^n X_i(\rvx_t)\circ dW_t^i,
    \end{equation}
where $\rvx$ is the concatenated variable of $(\rvv,\rvz)$ and $X_i\ (i=0,1,...n)$ are $2n$-dimensional vector fields with
\begin{align}
    X_0(\rvx)&=\sum_{i=1}^n (b_i(\rvv)+g_i(\rvz))\dv{i}+\sum_{i=1}^n h_i(\rvv,\rvz)\dz{i},\\
    X_k(\rvx)&=\sum_{i=1}^n \sigma_{ik}\dz{i},\ \ k=1,2,...n,
\end{align}
where $\dv{i}, \dz{i}$ are coordinate tangent vectors.

    \textbf{Proof for the first claim:}
We verify that these vector fields satisfy the Hörmander condition as follows.
Denote the Lie algebra generated by these vector fields $\{X_i:\ i=0,1,...n\}$ as $\lie$.

Since $[X_1,...X_n]=[\dz{1},...,\dz{n}]\sigma$ and $\sigma$ is invertible, we have $\dz{i}\in\lie$ for all $i\in[n]$. 
As a consequence, $\lie=\mathrm{Lie}(\tilde{X}_0,X_1,...X_n)$ where $ \tilde{X}_0(\rvx):=\sum_{i=1}^n (b_i(\rvv)+g_i(\rvz))\dv{i}$.
Next, we show that $\dv{i}\in\lie$.

Compute
\begin{align}
    [X_k,\tilde{X}_0]&=\sum_{i=1}^n \big(\sum_{j=1}^n \sigma_{jk}\dz{j}(b_i(\rvv)+g_i(\rvz)) \big) \dv{i}+(\mathrm{\dz{i}\ terms})\\
    &=\sum_{i=1}^n \big(\sum_{j=1}^n \sigma_{jk}\dz{j}g_i(z) \big)\dv{i},\ \ k=1,2,...n.
\end{align}
Thus 
\begin{equation}
    \bigg[ [X_1,\tilde{X}_0],...[X_n,\tilde{X}_0)] \bigg]=\bigg[\dv{1},...\dv{n}\bigg] (\nabla_{\rvv}g) \sigma,
\end{equation}
where $\nabla_{\rvv}g$ is the Jacobian of $g$ at $\rvz$, with $(\nabla_{\rvv}g)_{ij}=\dz{j}g_i(\rvz)$. Here $[\cdot]$ represents the matrix whose columns are the enclosed vectors.

Since both $\sigma$ and the Jacobian are invertible by assumption, we have $\dv{i}\in\lie$ for any $i\in[n]$. By \Cref{hormander-thm}, the transition probability of $(\rvv,\rvz)$ system governed by \cref{eqn98} possesses smooth and strictly positive transition probability density functions $p(\rvx';\rvx, t)$ for any $t>0$ and $\rvx\in\rrr^{2n}$. 

Denote by $\rho_\infty(\rvv,\rvz)$ (or $\rho_\infty(\rvx)$) the limit distribution of \cref{eqn98}, which is the solution of the corresponding stationary Fokker-Planck equation. For any $t>0$, we have
\begin{equation}
\rho_\infty(\rvx)=\int\rho_\infty(\rvx_0)p(\rvx;\rvx_0,t)d\rvx_0,\ \ \forall\rvx\in\rrr^{2n},
\end{equation}
where $p(\rvx;\rvx_0,t)$ is the transition probability from $\rvx_0$ to $\rvx$ in a time period of $t$ following the dynamics in \cref{eqn98}.
This comes from the fact that the limit distribution is invariant under evolution.
This suggests $\rho_\infty(\rvx)>0$ for any $\rvx\in\rrr^{2n}$. As a result, the limit distribution of $\rvv$, $\rho_{\infty,\rvv}(\rvv)$ which is the marginal distribution of $\rho_\infty(\rvx)$ is strictly positive for any $\rvv\in\rrr^n$. Therefore, $\rho_1^*\neq\rho_{\infty,\rvv}$ since $\rho_1^*$ is supported on the filtered attractor, a compact set in $\rrr^n$.

    \textbf{Proof for the second claim:}
Following exactly the same approach as in the proof for the first claim, we only need to verify the set of vector fields $\{{X}_0, X_1,..., X_n\}$ satisfies the Hörmander condition. It suffices to show that $\{\dv{1},...\dv{n}\}\subset \mathrm{Lie}(\tilde{X}_0,X_1,...X_n)$.

Compute
\begin{align}
    [X_k,\tilde{X}_0]&=\sum_{i=1}^n \big(\sum_{j=1}^n \sigma_{jk}\dz{j}(b_i(\rvv)+g_0(\rvz_i)) \big) \dv{i}+(\mathrm{\dz{i}\ terms})\\
    &=\sum_{i=1}^n \big(\sum_{j=1}^n \sigma_{jk}\dz{j}g_0(\rvz_i) \big)\dv{i}\\
    &=\sum_{i=1}^n\sigma_{ik}g_0^{(1)}(\rvz_i)\dv{i}    ,\ \ k=1,2,...n,
\end{align}
where we use $g^{(l)}$ to denote the $l-$th order derivative of $g_0$.

It is easy to further show that for any $k_1,k_2,...k_m\in[n]$,
\begin{equation}\label{hi-order-lie}
    [X_{k_1},[X_{k_2},[..., [X_{k_m}, \tilde{X}_0]] ]]=\sum_{i=1}^n\bigg(g_0^{(m)}(\rvz_i)\prod_{l=1}^{m}\sigma_{ik_l}\bigg)\dv{i}.
\end{equation}

For any $i_0\in[n]$, by the assumption of $g_0$, there exists $m_0\in\nnn_{\geq 1}$ such that $g_0^{(m_0)}(\rvz_{i_0})\neq 0$. Furthermore, since $\sigma$ is non-degenerate, there exists $k_0\in [n]$ such that $\sigma_{i_0k_0}\neq 0$. Choose $m=m_0$, $k_1=k_2=...=k_{m_0-1}=k_0$ in \cref{hi-order-lie} and for $k_{m_0}=k\in [n]$, denote 
    \begin{equation}
    Y_k:=[X_{k_1},[X_{k_2},[..., [X_{k}, \tilde{X}_0]] ]]=\sum_{i=1}^n\bigg(g_0^{(m_0)}(\rvz_i) \sigma_{ik_0}^{m_0-1}\sigma_{ik}\bigg)\dv{i}.
\end{equation}

Then we have
\begin{equation}
    [Y_1,...Y_n]=[\dv{1},...\dv{n}]\mathrm{diag}\big\{g_0^{(m_0)}(\rvz_i)\big\}_i \mathrm{diag}\big\{\sigma_{ik_0}^{m_0-1}\big\}_i\ \sigma.
\end{equation}

Denote $I:=\{i\ |\ g_0^{(m_0)}(\rvz_i)\sigma_{ik_0}\neq 0\}$. Clearly we have $I\neq \emptyset$ because $i_0\in I$. We have
\begin{equation}
    [Y_1,...Y_n]=[\dv{i}]_{i\in I}\mathrm{diag}\big\{g_0^{(m_0)}(\rvz_i)\big\}_{i\in I} \mathrm{diag}\big\{\sigma_{ik_0}^{m_0-1}\big\}_{i\in I}\ \sigma_{I,:},
\end{equation}
where $[\dv{i}]_{i\in I}$ represents the matrix with $\dv{i}$ at each column, and $\sigma_{I,:}$ is a $|I|\times n$ submatrix of $\sigma$.

Since $\sigma$ is invertible, its row vectors are linearly independent. Thus $\mathrm{rank}(\sigma_{I,:})=|I|$. Since\\
$\mathrm{diag}\big\{g_0^{(m_0)}(\rvz_i)\big\}_{i\in I} \mathrm{diag}\big\{\sigma_{ik_0}^{m_0-1}\big\}_{i\in I}$ is invertible by the construction of $I$, $\mathrm{rank}([Y_1,...Y_n])=|I|$, by which we conclude that $\{\dv{i}\ :\ i\in I\}\subset \mathrm{span}(Y_1,...Y_n)\subset\lie$. In particular, $i_0\in\lie$. This completes the proof.

\end{proof}

%%%%%%%%%%%%%%%%%%%%%%%%%%%%%%%

\section{Proof of the second claim of \Cref{thm_clos_all}}
\label{apdx: thm_liouv}
In this section, we prove the second claim of \Cref{thm_clos_all}, demonstrating that the amount of high-fidelity data required to train a closure model far exceeds what is needed to estimate long-term statistics. 

We will carry out our analysis in the equivalent $\ell^2$ space, as elaborated in \Cref{apdx-b-1}. We would like to first remind the readers of the notations and preliminaries we have discussed in previous sections and then provide the formal version along with its proof of our theoretical result.

\subsection{Notations}\label{thm-lower-notation}
Following \Cref{apdx-b-1}, we analyze the original PDE dynamics (\ref{eq:general-pde_apdx}) through its equivalent $\ell^2$ representation (\ref{z:ode}). $\rvv\in\rrr^n$ represents the components in the resolved space (filtered space $\mathcal{F}(\hhh)$), and $\rvw$ represents the unresolved components. More precisely, each function $u(x)\in\hhh$ equals $\sum_{i=1}^n\rvv_i\psi_i(x)+\sum_{i=n+1}^{\infty}\rvw_i \psi_i(x)$. The vector field $f_r=(f_1, ..., f_n)$ is the filtered-space component of the original dynamics and $f_u=(f_{n+1},...)$ represents the unresolved components.

We use $\mu^*$ or $\rho^*$ to denote the invariant measure in the original space $\hhh$. 
We denote as $\Omega$ the attractor, either in function space $\hhh$ or in the representation space $\ell^2$, and $\mathcal{F}(\Omega)$ the filtered attractor.
We have shown in \Cref{apdx_prop_c5} that $\rho_1^*$, the marginal distribution of $\rho^*$ with respect to $\rvv$, is the best approximation of $\mu^*$ one can obtain in the filtered space.

We have shown in \Cref{prop-optimal-closure} that the tractable filtered-space dynamics that can derive the $\rho_1^*$ is
\begin{equation}
    \frac{d\rvv}{dt}=\mathbb{E}_{\rvw\sim\rho^*(\rvw|\rvv)}[f_r(\rvv,\rvw)|\rvv]=f_r(\rvv,0)+\eee_{\rvw\sim \rho^*(\rvw|\rvv)}[f_r(\rvv,\rvw)-f_r(\rvv,0)|\rvv ].
\end{equation}
Therefore, the optimal closure model is a conditional expectation w.r.t. $\rho^*$,
\begin{equation}\label{apdx_optimal_clos}
    clos^*(\rvv)=\eee_{\rvw\sim \rho^*(\rvw|\rvv)}[f_r(\rvv,\rvw)-f_r(\rvv,0)|\rvv ],
\end{equation}
which is a mapping from $\rrr^n$ to $\rrr^n$, or $\mathcal{F}(\hhh)$ to $\mathcal{F}(\hhh)$, equivalently.

We would also like to remind the readers of learning-based closure models. In this section, we analyze the theoretical lower bound on the convergence rate for single-state closure models trained with an a priori loss function,
\begin{equation}
    J_{ap}(\theta;\mathfrak{D})=\frac {1} {|\mathfrak{D}|}\sum\limits_{i\in\mathfrak{D}}\|clos(\ovu_i;\theta)-(\mathcal{F}\mathcal{A}-\mathcal{A}\mathcal{F})u_i\|^2,
\end{equation}
where the training data $u_i$ come from snapshots of trajectories from fully-resolved simulations , i.e., $S(t)u_0$ for particular $t$. The equivalent form in the $\ell^2$ space is,
\begin{equation}\label{mthd1_apdx}
    J_{ap}(\theta;\mathfrak{D})=\frac {1} {|\mathfrak{D}|}\sum\limits_{i\in\mathfrak{D}}\|clos(\rvv^i;\theta)-\big( 
f_r(\rvv^i,\rvw^i)-f_r(\rvv^i,0) \big)\|^2,
\end{equation}

We expect that similar limitations (slow convergence rate for data requirement) apply to other variants of learning-based closure models, such as those trained with an a posteriori loss or incorporating history information. This is because the fundamental cause of the slow convergence—detailed below—lies in the inherent difficulty of approximating a multi-valued map.
A rigorous treatment of these broader cases is left for future work.

\subsection{Formal Statement and Proof}

The formal statement of the second claim of \Cref{thm_clos_all} is as follows.

\begin{thm}\label{apdx-thm-lower-bound}
Let $clos^*$ be the optimal closure model as defined in \cref{apdx_optimal_clos} and assume it to be a Lipschitz mapping in the filtered space. Assume the filtered attractor $\Omega^*=\mathcal{F}(\Omega)$ (in the $\ell^2$ space) is a $ d_0$-dimensional manifold. Suppose that a learning-based closure model is trained to minimize the loss function \cref{mthd1_apdx}, where the dataset $\mathfrak{D}$ consists of $N$ fully-resolved snapshots (functions) drawn i.i.d. from $\mu^*$. Denote the resulting closure model after training as $clos_{\theta^*}$ and assume that it is a mapping with Lipschitz constant $L<\infty$. Assume there exist $\hat{f} \in \underset{f \in \mathrm{Lip}(L)}{\arg\min} {J}(f;\mathfrak{D})$ and $\delta_0$ such that $\|\hat{f}-clos_{\theta^*}\|_{L^\infty(\Omega^*;\rvv)}\leq\delta_0$, then
\begin{equation}\label{lower_bound}
 \eee_{\mathfrak{D}\sim\mu^{* \otimes N}} \|clos_{\theta^*}-clos^*\|_{L^{\infty}(\Omega^*;\rvv)}=\Omega(N^{-\frac 1{d_0+2}})-\delta_0,   
\end{equation}
where the expectation is taken with regard to the randomness of the dataset, and $\Omega$ in the R.H.S. is the notation for asymptotic complexity lower bound.
\end{thm}

\begin{rmk}
 $d_0\approx \min\{|D'|, d^*\}$, where $|D'|$ is the number of spatial-grids in the coarse-grid simulation (alternatively, the degree of freedom of the filtered function space) and $d^*$ is the intrinsic dimension of the original attractor.
\end{rmk}
\begin{rmk}
    In practice, the expressive power of the neural network-parameterized function class is typically sufficient to ensure that $\delta_0 \approx 0$.
    Some existing works adopt highly restricted function classes for closure models, often based on numerical closure ansatzes that perform well on certain benchmark systems.
    In such cases, although the learned closure $clos_{\theta^*}$ may exhibit small approximation error, as $\delta_0$ might not necessarily be negligible, its success primarily reflects the effectiveness of the underlying ansatz rather than the learning capability of the model.
    The role of machine learning in these settings is thus marginal compared to the choice of ansatz itself.
    More importantly, the motivation for using machine learning in coarse graining is precisely to address systems where existing numerical closure methods fall short.
    Hence, ML-based closure models built upon already-successful numerical ansatzes offer limited practical value, as they fail to address the core challenges of coarse graining in complex systems.
\end{rmk}

\begin{rmk}\label{rmk-d-4}
    The bound in \cref{lower_bound} is given in $\ell^2$ representation space of variable $\rvv$. It can be naturally transformed back to a lower bound in the function space of $\ovu(x)$ or $v(x)$.
    \begin{equation}
      \eee_{\mathfrak{D}\sim\mu^{* \otimes N}} \sup_{v\in \mathcal{F}(\Omega)} \|clos_{\theta^*}(v)-clos^*(v)\|_{\kkk}=\Omega(N^{-\frac 1{d_0+2}})-\delta_0,     
    \end{equation}
    for any norm of functions $\|\cdot\|_\kkk$ as long as there exists $i\leq n$ such that $\|\psi_i\|_{\kkk^*}<\infty$, where $\kkk^*$ is the dual space of $\kkk$.

This is obvious from the fact that 
\begin{equation}\label{v117}
  \|clos_{\theta^*}(v)-clos^*(v)\|_{\kkk}\geq \frac 1 {\|\psi_i\|_{\kkk^*}} |\langle \psi_i, clos_{\theta^*}(v)-clos^*(v) \rangle |=\frac 1 {\|\psi_i\|_{\kkk^*}} |clos_{\theta^*}(\rvv)_i-clos^*(\rvv)_i|
\end{equation}
for any function $v$ in the reduced space with $\rvv$ being its $\ell^2$ representative. Check the end of the proof for more details.
\end{rmk}

\begin{proof}
We divide the proof into a few steps. First, we transform the problem to a $d_0$-dimensional local coordinate chart space to facilitate the subsequent analysis. All $C$ and $C_i$ in the proof below denote positive constants.

\textit{Step 1: Reduction to local coordinates.}

Since $\Omega^*$ is a $d_0$-dimensional manifold, for any $\rvv\in\Omega^*$, there exists a neighborhood $U\ni\rvv$ that is diffeomorphic to $[0,1]^{d_0}$ via a diffeomorphism $\phi: U\to [0,1]^{d_0}$. There exists $K<\infty$ such that $\max\{\text{Lip}(\phi),\text{Lip}(\phi^{-1})\}<K$.

Apparently, $\|clos_{\theta^*}-clos^*\|_{L^{\infty}(\Omega^*;\rvv)}\geq \|clos_{\theta^*}-clos^*\|_{L^{\infty}(U;\rvv)}$, thus we will mainly focus on the approximation lower bound for a region $U$. We denote the chart coordinate as $\xxx=(\xi_1,...\xi_{d_0})\in[0,1]^{d_0}$. 
Denote $g(\xxx,\rvw)$ as $f_r(\phi^{-1}(\xxx),\rvw)-f_r(\phi^{-1}(\xxx),0)$.
The closure models are denoted with $h: [0,1]^{d_0}\to \mathcal{F}(\hhh)$. To be specific, $h^*(\xxx):=clos^*(\phi^{-1}(\xxx))$, $h_{\theta^*}(\xxx):=clos_{\theta^*}(\phi^{-1}(\xxx))$, and $\hat h(\xxx):=\hat{f}(\phi^{-1}(\xxx))$, where $\hat{f}$ is the minimizer of the training loss function within the Lipschitz function class, as defined in \Cref{apdx-thm-lower-bound}. 

We further define 
\begin{align}
    q^*(\xxx,\rvw):&=(\phi,I)_{\#}\rho=\rho^*(\phi^{-1}(\xxx),\rvw)|det(\nabla\phi|_{\phi^{-1}(\xxx)})|^{-1}\\
    q_1^*(\xxx):&=\phi_{\#}\rho_1^*=\rho_1^*(\phi^{-1}(\xxx))|det(\nabla\phi|_{\phi^{-1}(\xxx)})|^{-1}
\end{align}
We remark here that $q^*$ and $q_1^*$ are not normalized probability density functions since we only define and evaluate them on a subset of the filtered attractor $\Omega^*$.

With these notations, the optimal closure model is characterized by $h^*(\xxx)=\eee_{\rvw\sim q^*(\rvw|\xxx)}[g(\xxx,\rvw)|\xxx]$ for $\xxx \in [0,1]^{d_0}$.

\textit{Step 2: Formulating the randomness of data samples.}

Recall that the training dataset of a neural-network parameterized closure model $clos_\theta$ is drawn i.i.d. from $\mu^*$. We denote the data set as $\mathfrak{D}=\{(\rvv_i,\rvw_i)\}_{i=1}^N$. Choose a subset $U\subset\Omega$ such that there exists positive constant $C_0$ satisfying
\begin{equation}
    q_1^*(\xxx)\geq C_0,\ \forall\xxx\in\phi(U)=[0,1]^{d_0},
\end{equation}
where $\phi$ is the diffeomorphism described in Step 1. Denote $C_1:=\max\limits_{\xxx\in[0,1]^{d_0}}q_1^*(\xxx)$.
Since $U=\phi^{-1}([0,1]^{d_0})$ is compact, there exist $0<C_2<C_3$ such that
\begin{equation}\label{eqn121}
    \text{Var}_{\rvw\sim q^*(\rvw|\xxx)}[g_i(\xxx,\rvw)]=\int |g_i(\xxx,\rvw)-h^*_i(\xxx)|^2q^*(\rvw|\xxx)d\rvw \in (C_2,C_3),\ \ \forall \xi\in[0,1]^{d_0}\  \text{and} \ i\in[n],
\end{equation}
where $n$ is the dimension of vector $f_r$.

Define event $A:=\{\exists (\rvv_i,\rvw_i)\in\mathfrak{D}\ s.t. \ \rvv_i\in U\}$. Since $U$ contains an open set in the submanifold topology of ${\Omega}^*$, we have $\mathbb{P}(A)>0$.

In the following discussion, we will only analyze the scalar functions $clos_i,\ h_i$ and $g_i$ for one index $i$. The derivation below will hold for all components of the vector fields. With a slightly abuse of notation and for simplicity, we will refer to these scalar functions $clos_i,\ h_i,\ g_i$ as $clos,\ h,\ g$, omitting the $i$-index in the following discussions, unless specifically mentioned.

\textit{Step 3: Partitioning the domain.}

Assume the Lipschitz constant of $h^*$ is $L_0$ and $\hat{h}$ corresponds to the minimizer of training loss among all functions in $\text{Lip}(L_1)$. Define $\tau=(\frac{C_2}{8(L_0+L_1)\sqrt{d_0C_1C_3}})^{\frac{2}{d_0+2}} N^{-\frac 1 {d_0+2}}$.

Partition the chart domain into cubes with length $\tau$, 
\[[0,1]^{d_0} = \biguplus_{i=1}^{\tau^{-d_0}} V_i,\]
where $\biguplus$ denotes disjoint union, and $V_i$ are the $\tau$-cubes. Define the following random variables
\begin{align}
    N_i:&=\#\{(\rvv,\rvw)\in\mathfrak{D}\big| \ \rvv\in U;\  \phi(\rvv)\in V_i\}.\quad i=1,2,...\tau^{-d_0};\\
    Z:&=\min\{N_i\big| N_i>0\}.
\end{align}

Under event $A$, there exists non-zero $N_i$. Therefore, $Z$ is well-defined.

To derive the error estimation of $clos_{\theta^*}$ or $h_{\theta^*}$, we start by analyzing the error of $\hat{h}$.
We have 
\begin{align}\label{eqq124}
    \eee\|\hat{f}-clos^*\|_{L^\infty(U,\rvv)}=&\eee\|\hat{h}-h^*\|_{L^{\infty}([0,1]^{d_0});\xxx)}\\
    \geq& \mathbb{P}(A)\eee\big[\|\hat{h}-h^*\|_{L^{\infty}([0,1]^{d_0});\xxx)}\big|A\big]\\
    =& \mathbb{P}(A)\eee_Z\Big[\eee\big[\|\hat{h}-h^*\|_{L^{\infty}([0,1]^{d_0});\xxx)}\big|Z,A\big]\Big].
\end{align}

To estimate the conditional expectation of error w.r.t $Z$, we start by analyzing the error in each small $\tau$-cube.

\textit{Step 4: Error estimation in each cube $V_i$.}

Without loss of generality, we will only carry out our analysis for the cube $V_1$, and the results naturally hold for all other cubes. We discuss under the condition $N_1\neq 0$. 

Assume the data points with filtered component falling in $\phi^{-1}(V_1)$ are $\{(\phi^{-1}(\xxx_i),\rvw_i)\}_{i=1}^{N_1}$.
Construct a probability distribution $p(\xxx,\rvw)$ with 
\begin{equation}\label{eqn127}
    p(\xxx,\rvw)\propto \bm{1}_{\xxx\in V_1}\cdot q_1^*(\xxx)q^*(\rvw|\xxx).
\end{equation}
Conditioned on $N_1\neq 0$, these $(\xxx_i,\rvw_i)$ can be viewed as drawn i.i.d. from $p(\xxx,\rvw)$. 

Define $M:=\frac 1 {N_1}\sum_{i=1}^{N_1} g(\xxx_i,\rvw_i)$ and $M^*:=\eee_{(\xxx,\rvw)\sim p(\xxx,\rvw)}[g(\xxx,\rvw)]$.

\begin{claim}
    $\|h^*(\xxx)-\hat{h}(\xxx)\|_{L^{\infty}(V_1)}\geq |M^*-M|-\tau\sqrt{d_0}(L_0+L_1).$
\end{claim}
\begin{proof}
We first show that for all $\xxx\in V_1$, $|h^*(\xxx)-M^*|\leq \tau\sqrt{d_0}L_0$. 
Note that \[M^*=\eee_{\xxx\sim p_1}\big[\eee_{\rvw\sim p(\rvw|\xxx)}[g(\xxx,\rvw)|\xxx] \big]=\eee_{\xxx\sim p_1} h^*(\xxx),\] where $p_1$ is the marginal distribution of $p$ w.r.t $\xxx$.
For any $\xxx^{\dagger}\in V_1$,
\begin{align}\label{eq128}
    |h^*(\xxx^{\dagger})-M^*|=|\eee_{\xxx\sim p_1}h^*(\xxx^\dagger)-h^*(\xxx)|
    \leq \eee_{\xxx\sim p_1}|h^*(\xxx^\dagger)-h^*(\xxx)|
    \leq \eee_{\xxx\sim p_1} L_0|\xxx^\dagger-\xxx|\leq L_0(\sqrt{d_0}\tau).
\end{align}

Next we fix an arbitrary $\epsilon>0$ and show that for all $\xxx\in V_1$, $|\hat{h}(\xxx)-M|\leq (1+\epsilon)\tau\sqrt{d_0}L_1$.

Argue by contradiction. Assume there exists a $\xxx^\dagger\in V_1$ such that $|\hat{h}(\xxx^\dagger)-M|> (1+\epsilon)\tau\sqrt{d_0}L_1$. WLOG, we assume $\hat h(\xxx^\dagger)>M$.
We consider a family of perturbed functions $\{clos_t\}_{t\in\rrr}$ and $h_t:=clos_t\big|_U \circ\phi^{-1}$ satisfying
\begin{enumerate}[label=(\roman*)]
\item $\texttt{Lip}(h_t)\leq L_1$
    \item $h_t(\xxx)=\hat h(\xxx)+t$ for $\xxx\in \phi(\mathfrak{D}\bigcap U)\bigcap V_1$;
    \item $clos_t(\rvv)=\hat{f}(\rvv)$ for $\rvv\in \mathfrak{D}\bigcap \phi^{-1}(V_1)^c$.
\end{enumerate}

Then we have
\begin{align}
    J(clos_t;\mathfrak{D})=&J_h(h_t;\mathfrak{D})+C\\
    =&\sum_{i=1}^{N_1} |h_t(\xxx_i)-g(\xxx_i,\rvw_i)|^2+C':=\mathcal{J}(t),
\end{align}
where $J_h$ corresponds to the loss function taking input functions defined on $[0,1]^d$ ; $C$ and $C'$ does not depend on $t$.

Compute
\begin{align}
    \frac{d\mathcal{J}}{dt}=&\sum_{i=1}^{N_1}\frac d {dt} |t+\hat h (\xxx_i)-g(\xxx_i,\rvw_i)|^2=2\sum_{i=1}^{N_1}[t+\hat h (\xxx_i)-g(\xxx_i,\rvw_i)].
\end{align}
\begin{align}
      \frac{d\mathcal{J}}{dt}\Bigg|_{t=0}
      =&
      2\sum_{i=1}^{N_1}[\hat h (\xxx_i)-g(\xxx_i,\rvw_i)]\\
      =& 2\sum_{i=1}^{N_1}[\hat h (\xxx_i)-M]+
      2\sum_{i=1}^{N_1}[M-g(\xxx_i,\rvw_i)]\\
       =& 2\sum_{i=1}^{N_1}[\hat h (\xxx_i)-M]\\
       \geq& 2\sum_{i=1}^{N_1}\big[ \big( \hat{h}(\xxx^\dagger)-L_1\tau\sqrt{d_0}\big) -M \big]>2\epsilon\sum_{i=1}^{N_1}\tau\sqrt{d_0}L_1>0,       
\end{align}
where the second line comes from the fact that $M$ is the mean of these $g(\xxx_i,\rvw_i)$, and the last line leverages the definition of Lipschitz continuity. This is a contradiction to the assumption that $\hat{f}=clos_0$ is the minimizer of the loss function. Therefore, we conclude that 
 for all $\xxx\in V_1$, $|\hat{h}(\xxx)-M|\leq (1+\epsilon)\tau\sqrt{d_0}L_1$. Taking $\epsilon\to 0$ yields $|\hat{h}(\xxx)-M|\leq \tau\sqrt{d_0}L_1$ and thus for any $\xxx\in V_1$,
 \begin{align}
     |h^*(\xxx)-\hat{h}(\xxx)|=&|\big(h^*(\xxx)-M^*\big)+(M-M^*)+\big(M-\hat{h}(\xxx)\big)|\\
     \geq & |M^*-M|-|h^*(\xxx)-M^*|-|M-\hat{h}(\xxx)|\\
     \geq& |M^*-M|-\tau\sqrt{d_0}(L_0+L_1).
 \end{align}

    This completes the proof of the claim.
\end{proof}

Now we come back to \textit{Step 4} in the main proof.

Based on the claim above, we have
\begin{align}
&\eee\big[\|h^*(\xxx)-\hat{h}(\xxx)\|^2_{L^{\infty}(V_1)}\big| N_1,\ \{N_1\neq 0\} \big]\\
\geq& \eee\big[\big(|M^*-M|-\tau\sqrt{d_0}(L_0+L_1)  \big)^2\big|  N_1,\ \{N_1\neq 0\}\big]  \\
\geq& \eee\big[ (M-M^*)^2\big|  N_1,\ \{N_1\neq 0\}\big]-2 \tau\sqrt{d_0}(L_0+L_1)\eee\big[ |M-M^*|\ \ \big|  N_1,\ \{N_1\neq 0\}\big]\\
=&\frac 1 {N_1} \text{Var}_{(\xxx,\rvw)\sim p}[g(\xxx,\rvw)]-2 \tau\sqrt{d_0}(L_0+L_1)\eee\big[ |M-M^*|\ \ \big|  N_1,\ \{N_1\neq 0\}\big].\label{eqn142}
\end{align}

For the first term, 
\begin{equation}\label{eq143}
    \text{Var}[g]=\eee_{\xxx\sim p_1}\big[\text{Var}[g(\xxx,\rvw)|\xxx]\big]+\text{Var}\big[\eee_{p(\rvw|\xxx)}[g|\xxx] \big],
\end{equation}
where $p_1$ is the marginal distribution of $p$ defined in (\ref{eqn127}). By the construction of $U$ (see \cref{eqn121}), $\eee_{\xxx\sim p_1}\big[\text{Var}[g(\xxx,\rvw)|\xxx]\big]\in(C_2,C_3)$. Thus the first term in (\ref{eqn142}) is lower bounded by $C_2$. For the second term, note that $\eee|M-M^*|\leq\sqrt{\eee|M-M^*|^2}=\sqrt{\frac{\text{Var}[g]}{N_1}}$. To give an lower bound of \cref{eqn142}, we only need to give an upper bound of the variance. The first term in \cref{eq143} is upper bounded by $C_3$. The second term is upper bounded by $\tau^2L^2_0d_0$ due to \cref{eq128}. Since $\tau=O(N^{-\frac 1 {d_0+2}})$, $\text{Var}[g]\leq C_3+o(1)$. Without loss of generality, since we are studying the asymptotic convergence rate, we carry out following analysis when $N$ is large enough to have $\text{Var}[g]<2C_3$.

To conclude, we have
\begin{equation}\label{eq144}
    \eee\big[\|h^*(\xxx)-\hat{h}(\xxx)\|^2_{L^{\infty}(V_1)}\big| N_1,\ \{N_1\neq 0\} \big]\geq \frac {C_2}{N_1}-2\tau(L_0+L_1)\sqrt{\frac{2d_0C_3}{N_1}}.
\end{equation}

Similar bound holds for other $V_i$ and $N_i$.

\textit{Step 5: Bounding the conditional expectation w.r.t. $Z$.}

Following our discussion in Step 3, it suffices to bound 
$\eee_Z\Big[\eee\big[\|\hat{h}-h^*\|_{L^{\infty}([0,1]^{d_0});\xxx)}\big|Z\big]\Big]$ under event $A$.

Let $i^*=\text{argmin}\{N_i| N_i\neq0 \}$. Under event $A$, the set is non-empty and we have $N_{i^*}=Z$. Since the estimation in \cref{eq144} holds for arbitrary $\tau$-cube, we have 
\begin{equation}
 \eee\big[\|\hat{h}-h^*\|^2_{L^{\infty}([0,1]^{d_0};\xxx)}\big|Z\big]\geq \max\{\frac {C_2}{Z}-2\tau(L_0+L_1)\sqrt{\frac{2d_0C_3}{Z}},\ 0\}.
\end{equation}

\textit{Step 6: Estimating the distribution of $Z$.}

First note that $N_i$ follows binomial $B(N,\alpha_i)$ with 
\begin{equation}
    \alpha_i=\mathbb{P}(\rvv\in \phi^{-1}(V_i))=\int_{\phi^{-1}(V_i)}\rho_1^*(\rvv)d\rvv=\int_{V_i}q_1^*(\xxx)d\xxx=\tau^{d_0}\dashint q_1^*(\xxx)d\xxx\leq C_1\tau^{d_0}:=\alpha_0.
\end{equation}

Define $\tilde{N}_i$ as i.i.d. samples from $B(N, \alpha_0)$.
% , and $\tilde{Z}:=\min \tilde{N}_i\bm{1}_{\tilde{N}_i>0}$.

Recall that Chernoff's inequality \cite{vershynin2018high} states that $\mathbb{P}(\tilde{N}_1\geq (1+\delta)\mu)\leq e^{-\frac{\delta^2\mu}{2+\delta}}$, where $\mu=\alpha_0N$ is the expectation of $\tilde{N}_1$. Thus, choosing $\delta=1$ yields
\begin{equation}
    \mathbb{P}(N_i\geq 2\alpha_0 N)\leq \mathbb{P}(\tilde{N}_i\geq 2\alpha_0 N)\leq e^{-\frac{\alpha_0 N}{3}}.
\end{equation}

We further have
\begin{equation}
    \mathbb{P}(Z\geq 2\alpha_0N)=\prod_{i=1}^{\tau^{-d_o}}\mathbb{P}(N_i\geq 2\alpha_0N)\leq (e^{-\frac 1 3 N C_1\tau^{d_0}})^{\tau^{-d_0}}=e^{-\frac 1 3 C_1 N}=o(1),
\end{equation}
which means $\mathbb{P}(Z\leq 2\alpha_0 N)=1-o(1)$.

\textit{Step 7: Deriving the bound.}

We are finally ready to derive the convergence lower bound.

Conditioned on event $A$,
\begin{align}
&\eee\|\hat{h}-h^*\|^2_{L^{\infty}([0,1]^{d_0};\xxx)}\\
     =&\eee_Z\eee\big[\|\hat{h}-h^*\|^2_{L^{\infty}([0,1]^{d_0};\xxx)}\big|Z\big]\\
     \geq& \eee_Z \max\{\frac {C_2}{Z}-2\tau(L_0+L_1)\sqrt{\frac{2d_0C_3}{Z}},\ 0\}\\
     \geq & \eee_Z\Big(\max\{\frac {C_2}{Z}-2\tau(L_0+L_1)\sqrt{\frac{2d_0C_3}{Z}},\ 0\}\Big)\cdot \bm1_{Z\leq2\alpha_0N}.\label{eq152}
\end{align}

Define
\begin{equation}
    F(x):= {C_2}x^2-2\tau(L_0+L_1)\sqrt{2d_0C_3}x,\ \ x\in[\frac 1 {\sqrt{2\alpha_o N}} ,\infty)
\end{equation}
then the expression including $Z$ above becomes $F(Z^{-\frac 1 2})$.

Note that 
\begin{align}
    F(\frac 1 {\sqrt{2\alpha_0 N}})
    =& \frac{C_2}{2C_1\tau^{d_0}N}-\frac{2\tau(L_0+L_1)\sqrt{\cancel{2}d_0C_3}}{\sqrt{\cancel{2}C_1\tau^{d_0}N}}   \\
    =& \frac{C_2}{2C_1}N^{-1} (\frac{C_2}{8(L_0+L_1)\sqrt{d_0C_1C_3}})^{-\frac{2d_0}{d_0+2}} N^{\frac {d_0} {d_0+2}}\\
    &- \frac{2(L_0+L_1)\sqrt{d_0C_3}}{\sqrt{C_1}}N^{-\frac 1 2}  \bigg((\frac{C_2}{8(L_0+L_1)\sqrt{d_0C_1C_3}})^{\frac{2}{d_0+2}} N^{-\frac 1 {d_0+2}} \bigg)^{1-\frac {d_0}2}\\
=&  2^{\frac{5d_0-2}{d_0+2}}C_1^{-\frac 2 {d_0+2}} C_2^{-\frac{d_0-2}{d_0+2}}\big((L_0+L_1)\sqrt{d_0C_3}\big)^{\frac{2d_0}{d_0+2}} N^{-\frac 2 {d_0+2}}\\
&-
2^{\frac{4d_0-4}{d_0+2}}C_1^{-\frac 2 {d_0+2}} C_2^{\frac{2-d_0}{d_0+2}}\big((L_0+L_1)\sqrt{d_0C_3}\big)^{\frac{2d_0}{d_0+2}}N^{-\frac 2 {d_0+2}}\\
    =& 2^{4-\frac{12}{d_0+2}}C_1^{-\frac 2 {d_0+2}} C_2^{-\frac{d_0-2}{d_0+2}}\big((L_0+L_1)\sqrt{d_0C_3}\big)^{\frac{2d_0}{d_0+2}} N^{-\frac 2 {d_0+2}}=\Omega(N^{-\frac 2 {d_0+2}}).\label{eqn159}
\end{align}

Since $F(x)$ is monotone increasing in the region $\{x>0; F(x)>0\}$, we have that for all $Z\leq 2\alpha_0 N$, which corresponds to $Z^{-\frac 1 2}\geq \frac 1 {\sqrt{2\alpha_0 N}}$, $F(Z^{-\frac{1}{2}})\geq F(\frac 1 {\sqrt{2\alpha_0 N}})$.
Therefore, we conclude from \cref{eq152} that $\eee\big[\|\hat{h}-h^*\|_{L^{\infty}([0,1]^{d_0};\xxx)}\big| A\big]=\Omega(N^{-\frac 1 {d_0+2}})$. By \cref{eqq124} we further derive $ \eee\|\hat{f}-clos^*\|_{L^\infty(\Omega^*,\rvv)}=\Omega(N^{-\frac 1 {d_0+2}})$. By triangular inequality $\|clos_{\theta^*}-clos^*\|\geq\|\hat{f}-clos^*\|-\|\hat f-clos_{\theta^*}\|$ we yield the bound (\ref{lower_bound}).

This completes the proof.    
\end{proof}

Note that the bound above (\ref{eqn159}) does not depend on the component index of the vector field $clos^*$, $\hat{f}$ and $clos_{\theta^*}$, together with \cref{v117} we yield the proof for \Cref{rmk-d-4}.

\section{Proof of \Cref{thm: pino}}
\label{apdx: thm_pino}

We first state the formal version of \Cref{thm: pino}.

\begin{thm}\label{thm: pino_apdx}
Suppose a neural operator \( \mathcal{G}_\theta \) is trained to approximate the system evolution over time \( h \) for an arbitrary $h>0$, and long-term statistics are estimated by iterating \( \mathcal{G}_\theta \) from coarse-grid initial condition, namely the estimation of invariant measure is $\hat{\mu}_{h,\theta}:=\lim\limits_{N\to\infty}\frac 1 N \sum\limits_{n=1}^N\delta_{\mathcal{G}_\theta^n v_0(x)},$ any $v_0(x)$ with $x\in D'$.
    For any $\epsilon>0$, there exists $\delta>0$ s.t. as long as $\|(\mathcal{G}_\theta u)(\cdot,h)-S(h)u\|_\mathcal{H}<\delta, \forall u\in\mathcal{H}$, we have $\mathcal{W}_\mathcal{H}(\hat{\mu}_{h,\theta},\rho_1^*)<\epsilon$, where $\rho_1^*$ is the best approximation of $\mu^*$ in the filtered space, as discussed in \Cref{apdx_prop_c5}.
\end{thm}

As a preliminary result, we show the following property of the dynamical systems under consideration.
\begin{lemma}
    For any $h>0$, any initial condition $u_0\in\hhh$, $\lim\limits_{N\to\infty}\frac 1 N\sum\limits_{n=1}^N \delta_{S(nh)u_0}=\mu^*.$
\end{lemma}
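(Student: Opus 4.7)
Set $T := S(h)$ so that the empirical average in the lemma becomes $\nu_{N}(u_{0}):=\frac{1}{N}\sum_{n=1}^{N}\delta_{T^{n}u_{0}}$, and convergence is understood in the weak-$*$ sense on $\mathscr{P}(\mathcal{H})$. First I observe that $\mu^{*}$ is automatically $T$-invariant, since it is invariant under $S(t)$ for every $t\geq 0$. The continuous-time mixing property (Assumption~\ref{ass:mixing and compact}) transfers to $T$ directly: for measurable $A,B\subset\mathcal{H}$,
\[
\mu^{*}(A\cap T^{-n}B)=\mu^{*}(A\cap S(nh)^{-1}B)\to\mu^{*}(A)\mu^{*}(B),
\]
so $(T,\mu^{*})$ is mixing, and hence ergodic by Lemma~\ref{mix ergo}.

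Next I would apply Birkhoff's pointwise ergodic theorem to the ergodic discrete system $(T,\mu^{*})$ to obtain, for every $f\in L^{1}(\mu^{*})$, that $\frac{1}{N}\sum_{n=1}^{N}f(T^{n}u_{0})\to\int f\,d\mu^{*}$ for $\mu^{*}$-a.e.\ $u_{0}$. Testing against a countable convergence-determining family in $C_{b}(\mathcal{H})$ (available from the separability inherited via Assumption~\ref{ass:h to c}) and intersecting the corresponding countably many full-measure sets upgrades this to the weak-$*$ convergence $\nu_{N}(u_{0})\rightharpoonup\mu^{*}$ for $\mu^{*}$-almost every $u_{0}\in\mathcal{H}$.

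The \textbf{main obstacle} is upgrading ``$\mu^{*}$-a.e.\ $u_{0}$'' to ``every $u_{0}\in\mathcal{H}$''. The strategy is a Krylov--Bogolyubov plus unique-ergodicity argument on the attractor. By Assumption~\ref{ass:mixing and compact} together with Lemma~\ref{lemma: compact}, there is a compact global attractor $\mathcal{K}\supset\operatorname{supp}(\mu^{*})$ that attracts every orbit, so the tail $\{T^{n}u_{0}\}_{n\geq n_{0}}$ eventually sits in any given neighborhood of $\mathcal{K}$, and the family $\{\nu_{N}(u_{0})\}_{N}$ is tight. The telescoping identity $T_{\#}\nu_{N}-\nu_{N}=\frac{1}{N}(\delta_{T^{N+1}u_{0}}-\delta_{Tu_{0}})$, combined with continuity of $T$, shows that every weak-$*$ subsequential limit of $\nu_{N}(u_{0})$ is a $T$-invariant Borel probability measure supported on $\mathcal{K}$; it thus suffices to prove that $\mu^{*}$ is the only such measure.

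For this unique-ergodicity step, given any $T$-invariant probability $\nu$ on $\mathcal{K}$ I set $\tilde{\nu}:=\frac{1}{h}\int_{0}^{h}S(s)_{\#}\nu\,ds$. A short change-of-variables computation using $S(h)_{\#}\nu=\nu$ yields $S(t)_{\#}\tilde{\nu}=\tilde{\nu}$ for all $t\geq 0$, so the uniqueness-of-invariant-measure assumption forces $\tilde{\nu}=\mu^{*}$. The delicate point is promoting $\tilde{\nu}=\mu^{*}$ to $\nu=\mu^{*}$: I would exploit the hyperbolicity of $\mathcal{K}$ under $T$ (inherited from the hyperbolicity assumption) together with the mixing of $(T,\mu^{*})$, using the Shadowing Lemma~\ref{lemma:shadow} to rule out any additional ergodic component of $\nu$ whose Birkhoff averages would otherwise produce a $T$-invariant measure different from $\mu^{*}$, contradicting mixing on $\mathcal{K}$. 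With unique ergodicity in hand, the Krylov--Bogolyubov step from the previous paragraph forces $\nu_{N}(u_{0})\rightharpoonup\mu^{*}$ for every $u_{0}\in\mathcal{H}$, which gives the lemma.
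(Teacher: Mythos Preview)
Your overall strategy mirrors the paper's proof: restrict the continuous-time mixing assumption to the subsequence $t=nh$ to see that $(T,\mu^{*})$ is mixing, hence ergodic, and then invoke uniqueness of the invariant measure. The paper's version is terser---it simply cites the standing uniqueness assumption without distinguishing continuous-time from discrete-time invariance and stops there. You correctly flag this as the nontrivial point and attempt to bridge it explicitly via the averaging construction $\tilde\nu=\frac{1}{h}\int_{0}^{h}S(s)_{\#}\nu\,ds$, which is a good idea and does yield $\tilde\nu=\mu^{*}$ from the continuous-time uniqueness assumption.

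The weak link is your final promotion step, where you propose to pass from $\tilde\nu=\mu^{*}$ to $\nu=\mu^{*}$ using hyperbolicity and the Shadowing Lemma. As written this is vague, and shadowing is not the natural tool. A cleaner completion: reduce (by ergodic decomposition) to the case where $\nu$ is $T$-ergodic. Then each $S(s)_{\#}\nu$ is $T$-invariant and $T$-ergodic (being conjugate to $(T,\nu)$ via the isomorphism $S(s)$), so the identity $\mu^{*}=\frac{1}{h}\int_{0}^{h}S(s)_{\#}\nu\,ds$ exhibits the $T$-ergodic measure $\mu^{*}$ as an integral of $T$-ergodic measures. Uniqueness of the ergodic decomposition forces $S(s)_{\#}\nu=\mu^{*}$ for Lebesgue-a.e.\ $s\in[0,h]$, and strong continuity of the semigroup then gives $\nu=S(0)_{\#}\nu=\mu^{*}$. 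With this in hand your Krylov--Bogolyubov step closes cleanly and yields the claim for \emph{every} $u_{0}$, which is more than the paper's argument actually justifies.
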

\begin{proof}
    Denote $G:=S(h)$. From \Cref{ass:mixing and compact}, we know that for any measurable set $A,\ B\subset\mathcal{H}$, $\lim\limits_{t\to\infty}\mu(A\cap S(t)(B))=\mu(A)\mu(B).$ In particular $\lim\limits_{n\to\infty}\mu(A\cap S(nh)(B))=\mu(A)\mu(B).$
    This suggests that the system defined by 
    \begin{equation}
        u_{n+1}=Gu_n
    \end{equation}
    is mixing.

    From \Cref{mix ergo}, this system is ergodic, thus having an invariant measure w.r.t $G$. Since $G_{\#}\mu^*=\mu^*$, due to the uniqueness of invariant measure, we derive the proof.
\end{proof}

In the following proof, we will use the notation $G:=S(h)$, which is the learning target (ground-truth operator), and $\hat{G}$ for the approximate operator we obtain after training. By the design of the neural operator, the input of $\hat{G}$ can be vectors in $\rrr^d$ for any dimensionality $d$, serving as various discretizations of a particular function from $\hhh$. Here we violate the concepts a little bit by denoting $\hat{G}$ as a mapping from $\hhh$ to $\hhh$ (approximating $S(h)$) instead of $\hhh$ to $\hhh\times[0,h]$ (approximating $u\to\{S(t)u\}_{t\leq h}$) as is in our algorithm in the main text. In practice, we only use the last element of the output sequence, corresponding to the prediction for $S(h)u$, for estimating statistics. To be specific, to estimate long-term statistics in coarse-grid systems with learned operator $\hat{G}$, we use as input a function in reduced space $v(x)\in\mathcal{F}(\hhh),\ x\in D'$ (equivalent to an $\rrr^{|D'|}$ vector consisting of the function values on the grids), and autoregressively compute $\hat{G}^{(n)}v,\ n\in\nnn$. The invariant measure is estimated by 
\begin{equation}\label{def est inv}
    \hat{\mu}_{D'}:=\lim\limits_{N\to\infty}\frac 1 N \sum_{n=1}^N \delta_{\hat{G}^{(n)}v}.
\end{equation}

We first remind readers of the following fact.
\begin{fact}\label{fact2}
    For any function $u_0\in\hhh$, let $\vec{u}:=(u_0(x_1),...u_0(x_n))^T,\ n=|D'|$, be the discretization of $u_0$ in the coarse-grid system.
    There exists $u\in\hhh$ (possibly different from $u_0$) such that 
    \begin{equation}
        \vec{u}=I_{D'}u',\quad \hat{G}(\vec{u})=I_{D'}\hat{G}u.
    \end{equation}
\end{fact}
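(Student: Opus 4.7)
The content of Fact~\ref{fact2} is that the action of the neural operator $\hat{G}$ on a discrete input vector $\vec{u} \in \mathbb{R}^n$ can be realized as the restriction to $D'$ of the action of the continuous operator $\hat{G}: \hhh \to \hhh$ on some function $u \in \hhh$ whose grid values are $\vec{u}$. This is precisely the discretization-invariance property built into neural operators such as FNO~\cite{li2020fourier,kovachki2023neural}, which are designed so that a single set of parameters defines a mapping that is consistent across discretizations. I would prove the claim by constructing $u$ explicitly as the canonical band-limited interpolant of $\vec{u}$.

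First, I would define $u \in \hhh$ as the unique trigonometric polynomial on the spatial domain, of degree matching the grid resolution $n = |D'|$, satisfying $u(x_i) = \vec{u}_i$ for every $x_i \in D'$; existence and uniqueness come from the DFT being a bijection onto such polynomials, and membership in $\hhh$ is immediate since trigonometric polynomials are smooth. This gives $I_{D'}u = \vec{u}$, establishing the first equality and correcting the obvious typo ($u'$ should read $u$).

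Second, I would verify $\hat{G}(\vec{u}) = I_{D'} \hat{G} u$ layer by layer through the FNO architecture $\mathcal{Q} \circ (W_L + \mathcal{K}_L) \circ \cdots \circ \sigma(W_1 + \mathcal{K}_1) \circ \mathcal{P}$. The lifting $\mathcal{P}$, the pointwise linear maps $W_\ell$, the pointwise nonlinearities $\sigma$, and the projection $\mathcal{Q}$ all commute trivially with the restriction $I_{D'}$ because they act pointwise. The integral kernel operator $\mathcal{K}_\ell = \mathscr{F}^{-1} R_\ell \mathscr{F}$ affects only the finitely many Fourier modes retained at resolution $n$; on inputs band-limited to that resolution its discrete and continuous forms coincide exactly under $I_{D'}$. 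Inducting on the layer index then yields equality of the continuous and discrete evaluations at every grid point throughout the network.

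The main obstacle is that the pointwise activation $\sigma$ carries a band-limited function to a non-band-limited one, so after the first nonlinearity the continuous and discrete intermediate representations genuinely differ off the grid. The standard convention that resolves this is that the continuous FNO is defined to include the same spectral truncation used by the discrete implementation (equivalently, each $R_\ell$ is supported on a finite band, so higher modes generated by $\sigma$ are zeroed out by the next $\mathcal{K}_\ell$); this is the convention under which discretization-invariance holds. Under it, the layer-by-layer equality holds exactly on $D'$, and Fact~\ref{fact2} follows with $u$ the trigonometric interpolant. The conclusion is robust to the specific lift chosen: any $u$ lying in the band-limited subspace at resolution $n$ and interpolating $\vec{u}$ on $D'$ would serve equally well, witnessing the non-uniqueness already stressed in Theorem~\ref{thm: random}.
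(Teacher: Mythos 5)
The paper itself offers no proof of \Cref{fact2}: it is asserted as a bare ``Fact,'' i.e.\ as a design property of the neural operator, so you are attempting an argument the authors do not carry out. Your construction (band-limited interpolation) and layer-by-layer plan are the natural ones, and you correctly isolate the real difficulty: after the first activation the intermediate representation is no longer band-limited. But your proposed resolution does not close that gap. Restricting each $R_\ell$ to a finite band only zeroes the \emph{output} modes above the cutoff; it does nothing about the fact that the discrete implementation feeds the spectral multiplication with the DFT of the grid samples of $\sigma(v)$, whose $k$-th coefficient is the aliased sum $\sum_{m\in\mathbb{Z}}\widehat{\sigma(v)}_{k+mn}$ rather than the true Fourier coefficient $\widehat{\sigma(v)}_{k}$ used by the continuous operator. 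These differ \emph{within} the retained band, so the discrete and continuous versions of $\mathcal{K}_2$ already disagree on $D'$ and your induction breaks at the second Fourier layer. This is precisely the finite-resolution discretization error of FNO; it vanishes only as $n\to\infty$, so exact equality for the trigonometric interpolant is not available at fixed $n$.

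Two repairs are available. First, one can read the fact as true by construction: define $\hat{G}$ as the continuous operator whose Fourier layers are preceded by sampling-and-band-limited-interpolation at resolution $n$; then your layer-by-layer computation goes through verbatim, at the cost of making $\hat{G}$ resolution-dependent. Second, one can keep the idealized $\hat{G}$ and actually use the existential quantifier, which your argument never exploits: the claim asks only for \emph{some} $u$ in the infinite-dimensional fiber $I_{D'}^{-1}(\vec{u})$ satisfying the $n$ scalar equations $I_{D'}\hat{G}u=\hat{G}(\vec{u})$, so a surjectivity/full-rank argument for the map $u\mapsto I_{D'}\hat{G}u$ restricted to that fiber (in the spirit of the Sard-type argument used in the paper's proof of \Cref{thm: random}(ii)) would suffice. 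Note that your closing remark points the wrong way: at resolution $n$ the band-limited interpolant of $\vec{u}$ is unique, and the correct witness $u$ will in general \emph{not} be band-limited.
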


Now, we prove our main result.
\begin{thm}[=\Cref{thm: pino_apdx}]\label{apdxthm: pino}
 Under the assumptions in \Cref{app:ass}, for any $h>0$ and any $\epsilon>0$, there exists $\delta>0$ such that, as long as
$\|\hat{G}u-Gu\|_\hhh<\delta,\ \forall u\in\hhh$, we will have 
$\mathcal{W}_\hhh(\hat{\mu}_{D'},\rho_1^*)<\epsilon$.
\end{thm}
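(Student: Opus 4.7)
The plan is to leverage the hyperbolicity of the attractor together with Fact~\ref{fact2} to turn the coarse-grid rollout of $\hat G$ into a pseudo-orbit of the full-space map $G=S(h)$, then apply the Shadowing Lemma~\ref{lemma:shadow} to replace it with a genuine $G$-orbit, and finally use the lemma preceding the theorem (ergodicity of the discrete system $G$) together with a continuous-mapping argument to pass to the invariant measure. Concretely, fix $\epsilon>0$; we will pick $\eta=\eta(\epsilon)$ so that $\eta$-pseudo-orbits near the attractor are $\epsilon/3$-shadowed, choose the FNO error threshold $\delta\le \eta$, and show the resulting shadow orbit's filtered empirical measure is close to $\rho_1^*=\mathcal{F}_\#\mu^*$ in $\mathcal{W}_\mathcal{H}$.

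The first step is to lift the coarse-grid iteration to $\mathcal{H}$. Starting from any $v_0$ on $D'$, iterating $\hat G$ produces a discrete sequence $\{\vec u_n\}\subset\mathbb{R}^{|D'|}$ with $\vec u_{n+1}=\hat G(\vec u_n)$. Using Fact~\ref{fact2} inductively, one picks $u_n\in\mathcal{H}$ with $\vec u_n=I_{D'}u_n$ and $\vec u_{n+1}=I_{D'}\hat G u_n$. By hypothesis $\|\hat G u_n - G u_n\|_\mathcal{H}<\delta$, so $\{u_n\}$ is a $\delta$-pseudo-orbit for $G$. A preliminary observation (which we state explicitly but whose proof is routine from Assumption~\ref{ass:mixing and compact} and continuity of $G$) is that, after a short burn-in, the orbit sits in a neighborhood of the attractor $\Lambda$ of arbitrarily controllable size, so both hypotheses of the Shadowing Lemma are met for sufficiently small $\delta$. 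Invoking the lemma, there exists $y\in\mathcal{H}$ with $\|u_n-G^n y\|_\mathcal{H}<\epsilon/3$ for all $n$.

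Next I bridge to the Wasserstein distance. The filter $\mathcal{F}$ (equivalently the projection $P$) is $1$-Lipschitz on $\mathcal{H}$, so $\|\mathcal{F}u_n-\mathcal{F}G^n y\|_\mathcal{H}<\epsilon/3$. Since $\mathcal{F}u_n$ is the natural embedding of $\vec u_n=\hat G^{(n)}v_0$ into $\mathcal{F}(\mathcal{H})\subset\mathcal{H}$, the coupling $T(\mathcal{F}u_n)=\mathcal{F}G^n y$ on empirical measures gives
\begin{equation}
\mathcal{W}_\mathcal{H}\!\left(\tfrac1N\sum_{n=1}^N\delta_{\hat G^{(n)}v_0},\ \tfrac1N\sum_{n=1}^N\delta_{\mathcal{F}G^n y}\right)\le \tfrac{\epsilon}{3}
\end{equation}
for every $N$, and hence in the $N\to\infty$ limit. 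By the lemma preceding the theorem, $\frac1N\sum_n\delta_{G^n y}\to\mu^*$ weakly; pushing forward by the continuous map $\mathcal{F}$ gives $\frac1N\sum_n\delta_{\mathcal{F}G^n y}\to\mathcal{F}_\#\mu^*=\rho_1^*$ weakly, and because the attractor is compact (Lemma~\ref{lemma: compact}) this weak convergence upgrades to $\mathcal{W}_\mathcal{H}$-convergence, bringing the second factor within $\epsilon/3$. Combining the two estimates via the triangle inequality yields $\mathcal{W}_\mathcal{H}(\hat\mu_{D'},\rho_1^*)<\epsilon$.

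The main obstacle I expect is the pseudo-orbit condition (i) of the Shadowing Lemma, namely keeping $d(u_n,\Lambda)$ uniformly small: $\hat G$ need not preserve a neighborhood of $\Lambda$, so one must exploit uniform compactness of $\{S(t)\}_{t\ge t_0}$ (Assumption~\ref{ass:mixing and compact}) together with continuity of $G$ to show that once $u_n$ is close to $\Lambda$ it stays $\eta_0$-close, provided $\delta$ is small enough. A secondary subtlety is that the lift in Fact~\ref{fact2} is non-unique, but any choice works because only the $I_{D'}$-values enter the empirical measure $\hat\mu_{D'}$. Finally, upgrading weak convergence of empirical measures on a compact set to $\mathcal{W}_\mathcal{H}$-convergence is standard, but must be checked in the function-space setting using the compact embedding of Assumption~\ref{ass:h to c}.
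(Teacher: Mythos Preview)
Your plan is essentially the paper's proof: shadow the $\hat G$-orbit as a $\delta$-pseudo-orbit of $G=S(h)$ via Lemma~\ref{lemma:shadow}, bound the Wasserstein distance by the explicit transport $\hat u^n\mapsto G^{(n-n_0)}\tilde u$, then project with the $1$-Lipschitz $P$ to pass from $\mu^*$ to $\rho_1^*$. The obstacle you flag is precisely where the paper reaches outside its own toolkit: rather than arguing from uniform compactness, it invokes Theorem~1.2 in Chapter~1 of \cite{temam2001navier} to obtain that $\sup_u\|Gu-\hat Gu\|<\eta_2$ forces $\mathrm{dist}(\Omega,\hat\Omega)<\eta_0/5$; combined with a burn-in index $n_0$ after which the $\hat G$-orbit lies within $\eta_0/5$ of its own attractor $\hat\Omega$, this puts the tail within $3\eta_0/5<\eta_0$ of $\Omega$, so both hypotheses of the Shadowing Lemma are met with $\delta=\min\{\eta_1,\eta_2\}$.

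One ordering difference is worth noting. The paper iterates $\hat G$ directly in $\mathcal H$ (i.e.\ $\hat u^{n+1}=\hat G\hat u^n$ with $\hat u^0\in\mathcal H$), so the pseudo-orbit inequality $\|\hat u^{n+1}-G\hat u^n\|<\delta$ is immediate, and only at the very end invokes Fact~\ref{fact2} together with $\|Pu-Pu'\|\le\|u-u'\|$ to transfer the bound from $(\hat\mu_D,\mu^*)$ to $(\hat\mu_{D'},\rho_1^*)$. Your inductive lift instead asks that the specific $u_n$ furnished by Fact~\ref{fact2} satisfy $u_{n+1}=\hat G u_n$ in $\mathcal H$ (not merely $I_{D'}u_{n+1}=I_{D'}\hat G u_n$), which the Fact does not literally guarantee; the paper's order sidesteps this consistency question.
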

\begin{rmk}  
    This theorem aims to examine whether a neural operator trained only for short-term prediction (and corresponding loss functions) can still yield accurate estimations of long-term statistics. It does not address the approximability of the semigroup itself—specifically, whether the condition $\|\hat{G}u - Gu\|_\hhh < \delta$ holds. Theoretical results and technical assumptions regarding the approximability of mappings in function space by neural operators are already well-established in the literature \cite{kovachki2023neural,lanthaler2023nonlocal}.  
\end{rmk}
\begin{proof}
We will first deal with dynamics in the original space $\hhh$. We have two dynamics, the exact one and the approximate one,
\begin{align}
    u^{n+1}&=Gu^n,\ u^0=u_0\in\hhh;\\
    \hat{u}^{n+1}&=\hat{G}\hat{u}^n,\ \hat{u}^0=u_0\in\hhh.
\end{align}
    Both dynamics will converge to an attractor, $\Omega$ and $\hat{\Omega}$, respectively. 

With \Cref{lemma:shadow}, we know that there exists
$\eta_0,\eta_1>0$ such that for any $\{u_n\}_{n\in\mathbb{N}}\subset\hhh$ satisfying\\
(i) $d(u_n,\Omega)<\eta_0$; (ii) $\|u_{n+1}-G(u_n)\|<\eta_1$ for all $n$, there exists $\tilde{u}\in \hhh$ such that\\
\begin{equation}
\|u_n-G^{(n)}\tilde{u}\|<\epsilon,\ \forall n\in\nnn.    
\end{equation}

From Therorem 1.2 in Chapter 1 of \cite{temam2001navier}, we know that there exists $\eta_2>0$ such that 
\begin{equation}
    \|Gu-\hat{G}u\|<\eta_2,\ \forall u\in\hhh \Rightarrow \ dist(\Omega,\hat{\Omega})<\frac {\eta_0}5.
\end{equation}
Now we choose $\delta$ as $\min\{\eta_1,\eta_2\}$ and define the approximate operator $\hat{G}$ as well as dynamics and attractor accordingly.

    We next choose $n_0\in\nnn$ such that $\sup_{n\geq n_0}dist(\hat{u}^{n},\hat{\Omega})<\frac {\eta_0}5$. This implies that $\sup_{n\geq n_0}dist(\hat{u}^{n},{\Omega})<\frac {3\eta_0}5$.
We apply \Cref{lemma:shadow} to obtain a function $\tilde{u}\in\hhh$ such that 
\begin{equation}
\|\hat{u}^n-G^{(n-n_0)}\tilde{u}\|<\epsilon,\ \forall n\geq n_0.
\end{equation}
    For any $N\in\nnn$, we define 
    \begin{align}
        {\mu}_N:&= \frac 1 N\sum_{n=0}^{N}\delta_{G^{(n)}\tilde{u}}\\
    \hat{\mu}_N:&= \frac 1 N\sum_{n=n_0}^{n_0+N}\delta_{\hat{u}^{n}}.
    \end{align}
By constructing the transport mapping $T: u^n\mapsto G^{(n-n_0)}\tilde{u},\ n_0\leq n\leq n_0+N$, we have that 
\begin{equation}
    \mathcal{W}_\hhh(\hat{\mu}_N,\mu_N)<\epsilon.
\end{equation}
Note that $\hat{\mu}_N\to\hat{\mu}_D$(estimated invariant measure with fine-grid simulations) as $N\to\infty$, we derive
\begin{equation}
    \mathcal{W}_\hhh(\hat{\mu}_D,\mu^*)\leq \epsilon.
\end{equation}
    Recall that $P$ is the orthonormal projection towards $\mathcal{F}(\hhh)$ and that $\|Pu-Pu'\|\leq\|u-u'\|$ for any $u,u'\in\hhh$.
    In light of \Cref{fact2}, we derive   $\mathcal{W}_\hhh(\hat{\mu}_{D'},\rho_1^*)\leq\epsilon$.
\end{proof}

\section{Experiment Setup and Data Generation}
\label{apdx: data}
\subsection{Kuramoto–Sivashinsky Equation}\label{KS_detail}
We consider the following one-dimensional KS equation for $u(x,t)$,
\begin{equation}
\partial_t u+u\partial_x u+\partial_{xx}u +\nu \partial_{xxxx}u=0,\ \ \quad (x,t) \in[0,L]\times\mathbb{R}_{+},    
\end{equation}
with periodic boundary conditions. The positive viscosity coefficient $\nu$ reflects the traceability of this equation. The smaller $\nu$ is, the more chaotic the system is. We study the case for $\nu=0.01,\ L=6\pi$.

FRS is conducted with exponential time difference 4-order Runge-Kutta (ETDRK4)\cite{kassam2005fourth} with 1024 uniform spatial grid and $10^{-4}$ time grid. The CGS is conducted with the same algorithm except with 128 uniform spatial grids and $10^{-3}$ timegrid. We choose $h=0.1$ for our model, i.e., the model learns to predict the dynamics over a time evolution of 0.1 time unit.

\paragraph{Dataset}
The training dataset for the neural operator consists of two parts, the CGS data and FRS data.
The CGS dataset contains 6000 snapshots from 100 CGS trajectories. Snapshots are collected from time $t=20+k$, $k=1,2....60$. The data appears as input-label pairs $(v(\cdot,t),v(\cdot, t+h))$, where $h=0.1$ for KS.
The FRS dataset contains 105 snapshots from 3 FRS trajectories. Snapshots are collected from $t=20+2k$, $k=1,2,...35$. The data appears as input-label pairs\\ $(u(\cdot,t),\{u(\cdot, t+\frac k 4h)| k=1,2,3,4\})$.

As for input functions of PDE loss, they come from adding Gaussian random noise to FRS data.

\paragraph{Estimating Statistics}
For all methods in this experiment, statistics are computed by averaging over $t\in[20,150]$ and 400 trajectories with random initialization.
The experiment results are in \Cref{apdx_visul_ks}.

% For our model, we choose $h=0.1$. The total amount of FRS training data is 105 snapshots coming from 3 trajectories.
% When we complete the training, the $L^2$ relative error of our model on the test set is $12\%$. 
% To make a fair comparison, other learning-based methods are restricted to the same amount of training data. This setting will be the same for NS.

%The burning time of this system is around $50$.

% and complete the training when $L^2$ relative error on the test set is smaller than 

\subsection{Navier-Stokes Equation}
\label{apdx:NS basic}
We consider two-dimensional Kolmogorov flow (a form of the Navier-Stokes equations) for a viscous incompressible fluid (fluid field) $\rvu(x,y,t)\in\mathbb{R}^2$,
\begin{equation}\label{ns-original}
    \partial_t \rvu=-(\rvu\cdot\nabla)\rvu-\nabla p+\nu\Delta\rvu+(\sin(4y),0)^T,\quad \nabla\cdot \rvu=0,\quad (x,y,t)\in [0,L]^2\times \mathbb{R}_{+},
\end{equation}
with periodic boundary conditions. 
In the experiment, we deal with the vorticity form of this equation. 
\begin{equation}\label{ns-vorticity0}
    \partial_t w=-\rvu\cdot\nabla w+\nu\Delta w+\nabla\times(\sin(4y),0)^T,
\end{equation}
where $w=\nabla\times\rvu$.
The Reynolds number ($Re$) is an important factor for Navier-Stokes equations. In general, the larger $Re$ is, the more chaotic the system is. It is defined as $\frac{\overline{u}l}{\nu}$, where $\nu$ is the viscosity appearing in the equation, $\overline{u}$ is the root mean square of velocity scales $|\rvu|$, and $l$ is the length scale of the domain. In the isotropic case we consider here, $l$ is the domain length $L$. We carry out experiments in the case with $L=2\pi$, and different $\nu$. The corresponding Reynolds numbers for our two experiments on Navier-Stokes are 100 and $1.6\times 10^4$.

% The positive coefficient $Re$ is the Reynolds number. The larger $Re$ is, the more chaotic the system is. We consider the case $Re=100,\ L=2\pi$.
For the $Re=100$ case, 
FRS is conducted with pseudo-spectral split-step \cite{chandler2013invariant} with $128*128$ uniform spatial grid and self-adaptive time grid. The CGS is conducted with the same algorithm except with $16*16$ uniform spatial grids. For our model, we choose $h=1$. 

For the $Re=1.6\times 10^4$ case, 
FRS is conducted with the same numerical algorithm with $512*512$ uniform spatial grid and self-adaptive time grid. The CGS is conducted with the same algorithm except with $48*48$ uniform spatial grids. For our model, we choose $h=0.5$. To gain a sense of how the fluid fields behave, we visualize snapshots from one FRS trajectory in \Cref{fig:apdx: dataset}.

\begin{figure}[t]
\centering
\begin{subfigure}{0.27\textwidth}
  \includegraphics[width=\linewidth]{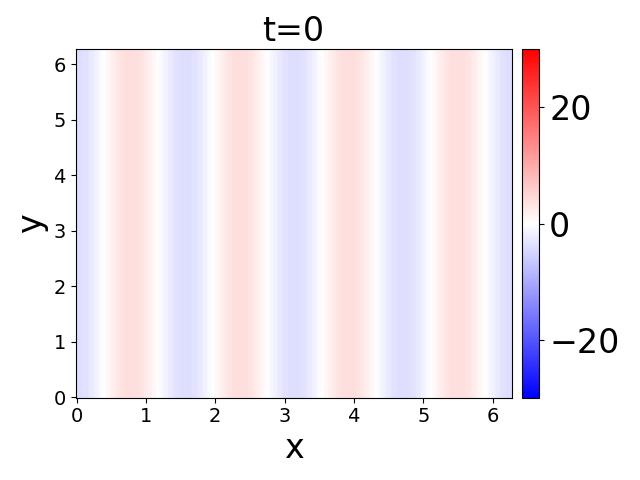}
\end{subfigure} % \hfil for filling the middle
\begin{subfigure}{0.27\textwidth}
  \includegraphics[width=\linewidth]{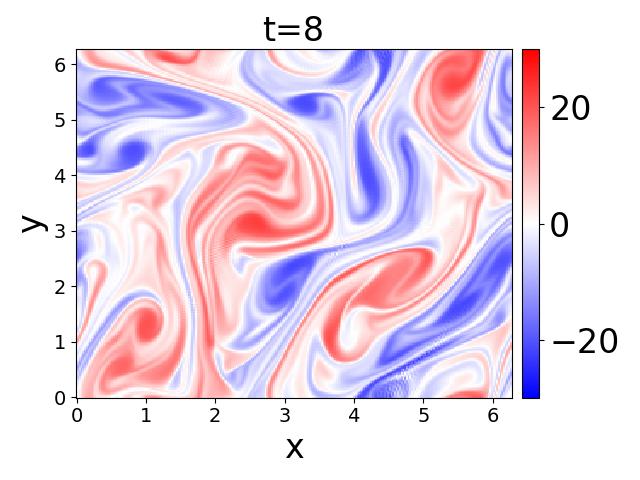}
\end{subfigure}
\begin{subfigure}{0.27\textwidth}
  \includegraphics[width=\linewidth]{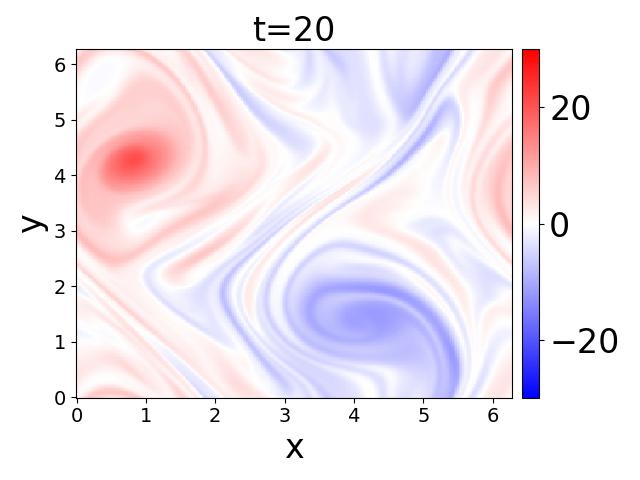}
\end{subfigure}
\begin{subfigure}{0.27\textwidth}
  \includegraphics[width=\linewidth]{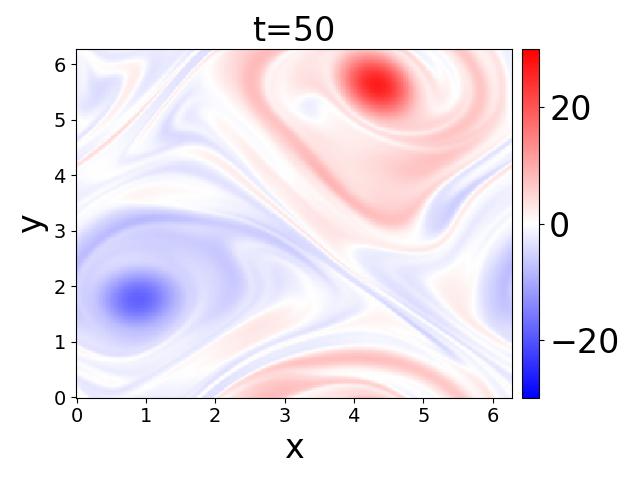}
\end{subfigure}
\begin{subfigure}{0.27\textwidth}
  \includegraphics[width=\linewidth]{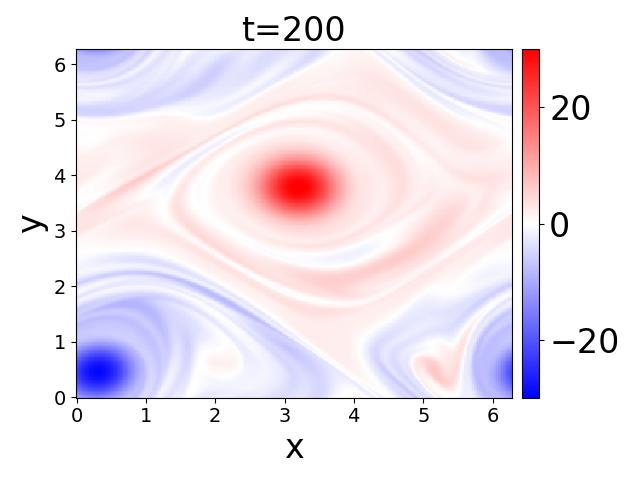}
\end{subfigure}
\begin{subfigure}{0.27\textwidth}
  \includegraphics[width=\linewidth]{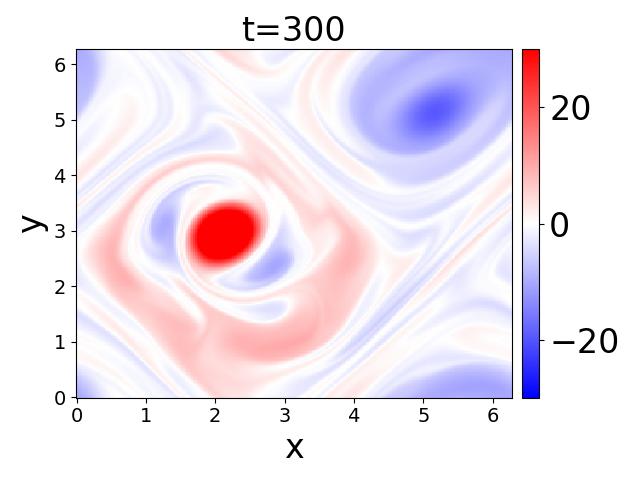}
\end{subfigure}
\caption{\textbf{Visualization of Dataset}
These figures are snapshots of one FRS trajectory for the Navier-Stokes equation with $Re=1.6\times 10^4$. From the plots, the fluid field appears to approach a dynamical equilibrium after $t=50$.
}
\label{fig:apdx: dataset}
\end{figure}

\paragraph{Dataset}
The training dataset for the neural operator consists of CGS data and FRS data.

For the $Re=100$ case, 
the CGS dataset contains 8000 snapshots from 80 CGS trajectories. Snapshots are collected from time $t=80+4k$, $k=1,2....100$. The data appears as input-label pairs $(u(\cdot,t),\{u(\cdot, t+\frac k {16}h)| k\in[16]\})$, where $h=1$ here.
The FRS dataset contains 110 snapshots from 1 FRS trajectories. Snapshots are collected from $t=50+3k$, $k=1,2,...110$. The data appears as input-label pairs $(u(\cdot,t),\{u(\cdot, t+\frac k {16}h)| k\in[16]\})$.

For the $Re=1.6\times 10^4$ case, 
the CGS dataset contains 6000 snapshots from 100 CGS trajectories. Snapshots are collected from time $t=100+3k$, $k=1,2....60$. The data appears as input-label pairs $(u(\cdot,t),\{u(\cdot, t+\frac k {16}h)| k\in[16]\})$, where $h=0.5$ here.
The FRS dataset contains 384 snapshots from 1 FRS trajectories. Snapshots are collected from $t=50+k$, $k=1,2,...384$ and downsampled to $256*256$ grids. The data appears as input-label pairs $(u(\cdot,t),\{u(\cdot, t+\frac k {32}h)| k\in[32]\})$.

As for input functions of PDE loss, they come from adding Gaussian random noise to FRS data.

\paragraph{Estimating Statistics}
For the $Re=100$ case, 
for all methods in this experiment, statistics are computed by averaging over $t\in[1800,3000]$ and 400 trajectories with random initialization.
The experiment results are in \Cref{apdx_visul_kf}.

For the $Re=1.6\times 10^4$ case, 
for all methods in this experiment, statistics are computed by averaging over $t\in[320,500]$ and 100 trajectories with random initialization.
The experiment results are in \Cref{apdx_visul_ns1w}.

\section{Implementation Details}
\label{apdx: implement}

\subsection{Hyperparameters Related to Dynamics}\label{apdx-choose-h}
In this subsection, we will provide details on how to choose parameters related to estimating statistics and the mapping being approximated with neural operators.

Recall that we train a neural operator to predict the evolution of a function within $t\in[0,h]$, where $h$ is a parameter. If $h$ is set to a small value, then the semigroup $S(h)$ is nearly an identity map. Although this is easy to approximate, it reduces the efficiency advantages of neural operator method in comparison to numerical simulation (both fully-resolved and coarse-grained simulation with closure models). In contrast, if $h$ is set too large, the corresponding $S(h)$ will possibly have a large Lipschitz constant in the setting of chaotic dynamics, posing challenges for learning this mapping. Given this tradeoff, in practice, we apply the following empirical manner to choose $h$ that is both easy to tackle and as large as possible. 

We compute the relative error of prediction based on the identity map, which corresponds to 
\begin{equation}
    \frac{\|u_0(x)-u(x,t)\|}{\|u(x,t)\|},
\end{equation}
for a randomly chosen $u_0$. An alternative is to compute the average relative error in a random batch of $u_0$. We then visualize it as a function of $t$. Intuitively, this function starts with 0 at $t=0$ and increases as $t>0$. We choose $h$ to be the moment when this relative error is around $ 80\%-100\%$.

To estimate long-term statistics, an important concept is the burning time $T_{burn}$, which characterizes the moment when a trajectory approaches the attractor close enough. 
We adopt the following empirical procedure to properly decide $T_{burn}$ for a general dynamics with no known results. We continue to run the simulation and monitor the energy spectrum based on the snapshots we have at the moment. Once the spectrum stabilizes and exhibits a persistent pattern, we infer that the trajectory has likely reached the attractor. In practical applications, one can usually estimate $T_{burn}$ based on theoretical scaling results \cite{temam2012infinite}.

\subsection{Physics-Informed Operator Learning}
%  \begin{algorithm}[htbp]
%     \caption{Multi-stage Physics-Informed Operator Learning}
%     \label{alg:main}
%     \hspace*{0.02in} \textbf{Input:} Neural operator $\mathcal{G}_{\theta}$; training data set $\mathfrak{D}_{c}$(CGS), $\mathfrak{D}_{f}$(FRS),\ $\mathfrak{D}_{p}$(randomly sampled).\\
%     % \hspace*{0.02in} \textbf{Output:} Learned operator  $\mathcal{G}_{\theta}$\\
%     \hspace*{0.02in} \textbf{Hyper-parameters:}
%     Training iterations $N_i(i=1,2,3)$. Weights combining two loss $\lambda_i(t)\\(i=1,2)$, which decay as $t$ increases. Parameters regarding optimizer.
%     % Number of total training iterations $M$; number of iterations and step size of inner loop $K,\eta$; weight for combining the two loss term $\lambda$
%     \begin{algorithmic}[1]
%     \For{$t=1,\cdots, N_1$}
%     \State Minimize $J(\theta;\mathfrak{D}_c)$
%     \EndFor
%     \For{$t=1,\cdots, N_2$}
%     \State Minimize $\lambda_1(t) J_{data}(\theta;\mathfrak{D}_c)+J_{data}(\theta;\mathfrak{D}_f)$
%     \EndFor
%     \For{$t=1,\cdots, N_3$}
%    \State Minimize $\lambda_2(t) J_{data}(\theta;\mathfrak{D}_f)+J_{pde}(\theta;\mathfrak{D}_p)$
%    \EndFor
%     \State \Return $\mathcal{G}_{\theta}$
%     \end{algorithmic}\label{alg:apdx}
% \end{algorithm}

Following the notations introduced in the main text, we formally summarize our algorithm as in $\Cref{alg:mainn}$. The loss functions in the algorithm are regression loss $J_{data}$ for fitting data and physics-informed loss $J_{pde}$.
For a training dataset $\mathfrak{D}$, 
\begin{equation}
    J_{data}(\theta;\mathfrak{D})=\frac{1}{|\mathfrak{D}|}\sum_{i\in\mathfrak{D}}
    \|\mathcal{G}_\theta u_i-S([0,h])u_i\|,
\end{equation}
where the norm $\|\cdot\|$ can be either $L^2$ norm (corresponding to MSE loss) or other norms, e.g. $H^1$.

Given a set of input functions, $\mathfrak{D}$ , the physics-informed loss is defined as
\begin{equation}
    J_{pde}(\theta;\mathfrak{D})=\frac{1}{|\mathfrak{D}|}\sum_{i\in\mathfrak{D}} \|(\partial_t-\mathcal{A})\mathcal{G}_\theta u_{0i}(x)\|_{L^2(\Omega\times[0,h])},
\end{equation}
where the initial values $u_{0i}$ in the loss function could be any fine-grid functions and do not have to come from FRS trajectories. $\Omega$ is the spatial domain of these functions. For equations with non-periodic boundaries, there will be an additional term in the loss function to fit boundary conditions, as is the general approach in physics-informed methods \cite{karniadakis2021physics}.

For input initial value $u_0$ (function restricted on the grid, which is a 1D tensor for KS and 2D tensor for NS), we repeat $u_0$ for $T$ times to make it a 2D tensor or 3D tensor $(u_0,u_0,...u_0)$, respectively, where $T$ is a hyperparameter. 
% Then we concatenate the position embedding of spatiotemporal coordinates with this tensor
For the implementations, the neural operator will learn to predict the mapping
\begin{equation}
(u_0,u_0,...u_0)\to \mathop{\bigoplus_{j=1}^{T}} S\left(\frac j T h\right)u_0,
\end{equation}
which is a discretization of $\{S(t)u_0\}_{t\in[0,h]}$. Before being passed to the neural operator model, the input is concatenated with its spatiotemporal position embedding, namely the feature at each grid point $(t,x,y)$ (take spatial 2D function as an example) is four-dimensional $(u(x,y,t),x,y,t)^T$.

\paragraph{KS Equation}
Following the architecture in the original FNO paper~\cite{li2020fourier}, our model is a 4-layer FNO with 32 hidden channels and 64 projection channels. The max modes used in the Fourier layers are 16 for temporal dimension and 64 for spatial dimensions.
We choose $h=0.1$, $T=64$. The data loss will only be computed for the time grid where there is label information.

We first train the model with CGS data, we use ADAM for optimization, with learning rate 5e-2, scheduler gamma 0.7 and scheduler stepsize 100. We train with batchsize 32 for 1000 epochs.

Then we train the model with CGS data and FRS data. $\lambda_1(0)=1$ and halves every $100$ epochs. We train with batchsize 32 for 250 epochs.

Finally, we train the model with PDE loss. We train with batch size 8 for 1487 epochs. Each batch contains 4 functions for computing the data loss and 4 functions for computing the PDE loss. $\lambda_2(t)$ decreases by 1.7 for every 500 epochs.

When we finish training, the $L^2$ relative error on the FRS test set is $\sim 12\%$.

\paragraph{NS Equation}

\textbf{(1) Low Reynolds number case.} 
Our model is a 4-layer FNO, with 32 hidden channels and 64 projection channels. 
The max modes used in the Fourier layers are 8 for temporal dimension and 16 for spatial dimensions.
We choose $h=1$, $T=32$. The data loss will only be computed for time grid where there is label information.

We first train the model with CGS data, we use ADAM for optimization, with learning rate 4e-3, scheduler gamma 0.6 and scheduler stepsize 50. We train with batch size 32 for 60 epochs.

Then we train the model with CGS data and FRS data. $\lambda_1(t)=\textbf{1}_{t\leq 20}$. We train with batch size 8 for 53 epochs.

Finally, we train the model with PDE loss. We train with batch size 16 for 1530 epochs. Each batch contains 8 functions for computing data loss and 8 functions for computing PDE loss. $\lambda_2(t)$ decreases by 1.8 for every 60 epochs.

When we finish training, the $L^2$ relative error on the FRS test set is $\sim 19\%$. The training takes $\sim 40$ minutes to complete.

\textbf{(2) High Reynolds number case.} 
Our model is a 4-layer FNO, with 28 hidden channels and 64 projection channels. 
The max modes used in the Fourier layers are 8 for temporal dimension and 48 for spatial dimensions.
We choose $h=0.5$, $T=32$. The data loss will only be computed for time grid where there is label information.
At the second and third stages of our algorithm, where the model receives high resolution inputs, we applied gradient accumulation and CPU offloading to reduce peak CUDA memory consumption during training.

We first train the model with CGS data, we use ADAM for optimization, with learning rate 4e-3, scheduler gamma 0.6 and scheduler stepsize 20. We train with batch size 32 for 42 epochs.

Then we train the model with CGS data and FRS data. $\lambda_1(t)=1$ for $t\leq 10$ and halves every $10$ epochs. We train with batch size 16 for 60 epochs.

Finally, we train the model with PDE loss. We train with batch size 54 for 1800 epochs. Each batch contains 6 functions for computing data loss and 48 functions for computing PDE loss. $\lambda_2(t)$ is initialized as 2 and decreases by 1.2 for every 60 epochs.

% For the high-Reynolds number case, we adopt gradient accumulation to fit in 

When we finish training, the $L^2$ relative error on the FRS test set is $\sim 14\%$. 
% The training takes $\sim 40$ minutes to complete.

\subsection{Baseline Method: Single State Closure Model}
For KS equation, the closure term is 
\begin{equation}
    \overline{u\partial_x u}-\ovu\partial_x\ovu=\frac 1 2 \partial_x (\overline{u^2}-\ovu^2).
\end{equation}

The closure model ansatz is $clos(\ovu;\theta)=\frac 1 2 \partial_x M_\theta(\ovu)$, where neural networks directly parameterize $M_\theta$ (with input and output both being coarse-grid functions). $M_\theta$ is trained to fit label $\overline{u^2}-\ovu^2$.

For NS equation \ref{ns-original}, the closure term is 
\begin{equation}
    \overline{(\rvu\cdot\nabla)\rvu}-(\overline{\rvu}\cdot\nabla)\overline{\rvu}   =\overline{\nabla\cdot(\rvu\times\rvu)}-\nabla\cdot(\overline{\rvu}\times\overline{\rvu})=\nabla\cdot (\overline{\rvu\times\rvu}-\overline{\rvu}\times\overline{\rvu}),
\end{equation}
where in the first equality we use the incompressible condition $\nabla\cdot \rvu=0$.

The closure model ansatz for \cref{ns-vorticity0} is $clos(\overline{w})=\nabla\times (\nabla\cdot M_\theta(\overline{\rvu}))$, where we use the relation $\rvu=\nabla^\perp(-\Delta)^{-1}w$, and neural networks directly parameterizd $M_\theta$ which is a mapping from $n\times n\times 2$ tensor (the coarse-grid velocity field $\overline{\rvu}$, $n$ being the resolution) to $n\times n\times 2\times 2$ tensor, $\overline{\rvu\times\rvu}-\overline{\rvu}\times\overline{\rvu}$.

%%%%%%%%%%%%%%%%%%%%%%%%

The network follows the Vision Transformer \cite{dosovitskiy2020image} architecture.
For KS equation, the input was partitioned into $1\times 4$ patches, with 2 transformer layers of 6 heads. The hidden dimension is 96 and the MLP dimension is 128. 
For both NS equations with two different Reynolds numbers we considered, the input was partitioned into $4\times 4$ patches, with 2 transformer layers of 6 heads. The hidden dimension is 96 and the MLP dimension is 128. 
For all experiments, we use AdamW optimizer \cite{loshchilov2017decoupled} with learning rate $1e-4$ and weight decay $1e-4$.

% \newpage
\section{More Experiment Results and Visualizations}
\label{apdx: exp_visual}

\subsection{Statistics}
We formally introduce the statistics we consider.

\paragraph{Total Variation for Invariant Measures}
As is mentioned in the main text, we propose to directly compare the estimated invariant measure resulting from the time average of simulations and that of ground truth.

Recall that we have expanded $u\in\hhh$ onto orthonormal basis $u=\sum_{i=1}^{\infty}z_i\psi_i$. In particular, for $v\in\mathcal{F}(\hhh)$, $v\in\operatorname{span}\{\psi_i: i\leq|D'|\}$. We compute the total variation (TV) distance for the (marginal) distribution of each $z_i$, where TV distance of two distributions (probability densities) $\nu,\ \mu$ is defined as
\begin{equation}
    d_{TV}(\mu,\nu)=\frac 1 2\int |\mu(x)-\nu(x)|dx.
\end{equation}
The value of TV distance ranges from 0 to 1, with a smaller value corresponding to more similarity between the two distributions.

% A convenient metric is 
For experiments we consider in this work, a natural choice of $\psi_i$ is the Fourier basis functions,
$\{e^{i\frac{2k\pi}Lx}\}_{k\in\mathbb{Z}}$ for 1D KS and $\{e^{i\frac{2\pi}L(kx+jy)}\}_{k,j\in\mathbb{Z}^2}$ for 2D NS.

With a little abuse of definition, the corresponding $z_i$ are complex numbers.
\cite{eckmann1985ergodic} shows that the limit distribution of $Arg z_i$ is uniform distribution on $[0,2\pi]$. Thus, it suffices to check the distribution of mode length $|z_i|$.

\paragraph{Other Statistics}
In the following, we use $\hat{u}_k$ to denote the $k$-th Fourier mode of $u$. When $u$ is a multi-variate function, $k$ is a tuple.
\begin{itemize}
    \item Energy Spectrum. 
    
    $O_e(u;k)=|\hat{u}_k|^2$ (1D),
$O_e(u;k_0)=\sum_{|k|_1=k_0}|\hat{u}_k|^2$ (general).

The $k_0$-th energy spectrum is $\ooo_e(k):=\mathbb{E}_{u\sim\mu^*}O_e(u;k_0)$.

\item Spatial Correlation. 

$O_s(u;h)=\int u(x)u(x+h)dx$.
The $h$ spatial correlation is $\ooo_s(h):=\mathbb{E}_{u\sim\mu^*}O_s(u;h)$.

\item Auto Correlation Coefficient.

Note that $\ooo_s$ is a function of $h$.

The $k$-th Auto Correlation Coefficient is
$\ooo_a(k):=|(\hat{\ooo_s})_k|^2$.

\item The distribution of Vorticity ($w(x)$ for NS) and Velocity ($u(x)$ for KS).

\item The variance of the function value.

\item The root mean square of the velocity field $\sqrt{\langle|\rvu|^2\rangle}$ (only for NS equation). Here $\langle,\rangle$ stands for the average over all time snapshots from all trajectories in the simulation, and the velocity $\rvu$ is derived from the vorticity $w$ through the relation $\rvu=\nabla^\perp(\Delta^{-1}w)$.

\item Dissipation Rate: $\frac 1 {Re}\dashint u(x)^2dx$, where $\dashint$ refers to averaged integral
\begin{equation}
\dashint_\Omega f(x)dx:=\frac{\int_\Omega f(x)dx}{\int_\Omega dx}.    
\end{equation}

In practice, we usually check the distribution of this quantity.

\item Kinetic Energy: $\dashint (u-\bar{u})^2dx$ where $\bar{u}(x):=\lim\limits_{T\to\infty} \frac 1 T \int_0^T u(x,t)dt$. In practice, we usually check the distribution of this quantity.

\end{itemize}

%%%%%%%%%%%%%%%
\newpage
\subsection{Experiment Results}
\subsubsection{Kuramoto–Sivashinsky Equation}
\label{apdx_visul_ks}
The error of all statistics we considered for KS equation is listed in \Cref{tab:apdx:ks tab} and plotted in \Cref{fig:apdx: ks stat}.

\begin{table}[htbp]
\centering
\setlength{\tabcolsep}{1.99pt}
\caption{\textbf{Error on Different Statistics: KS equation.}
From left to right: Average relative error on energy spectrum, max relative error on energy spectrum, average relative error on auto-correlation coefficient, max error on auto-correlation coefficient, total variation distance from (ground truth) velocity distribution, average component-wise TV distance(error), and max component-wise TV distance(error). The best performances are indicated in \textbf{bold}.}
\begin{tabular}{lccccccc}\toprule
Method & Avg. Eng.($\%$) & Max Eng.($\%$)  & Avg. Cor.($\%$)  & Max Cor.($\%$)  & Velocity & Avg. TV & Max TV \\\midrule
CGS (No closure) & 12.5169  & 77.8223  & 13.1275  & 80.5793  & 0.0282 & 0.0398 & 0.2097 \\
Eddy-Viscosity  & 7.6400  & 48.3684  & 8.7583  & 56.5878  & \textbf{0.0276} & 0.0282 & 0.1462 \\
Single-state  & 12.5323  & 78.6410  & 13.1052  & 81.2461  & 0.0280 & 0.0410 & 0.2111 \\\midrule
\textbf{Our Method} & \textbf{7.4776 } & \textbf{20.4176 } & \textbf{7.8706 } & \textbf{22.7046 } & 0.0284 & \textbf{0.0272} & \textbf{0.0849}
\\\bottomrule
\end{tabular}
\label{tab:apdx:ks tab}
\end{table}
% eddy-vis:\cite{matharu2020optimal}
% single-state:\cite{guan2022stable}

\begin{figure}[htbp]
\centering

\begin{subfigure}{0.366\textwidth}
  \includegraphics[width=\linewidth]{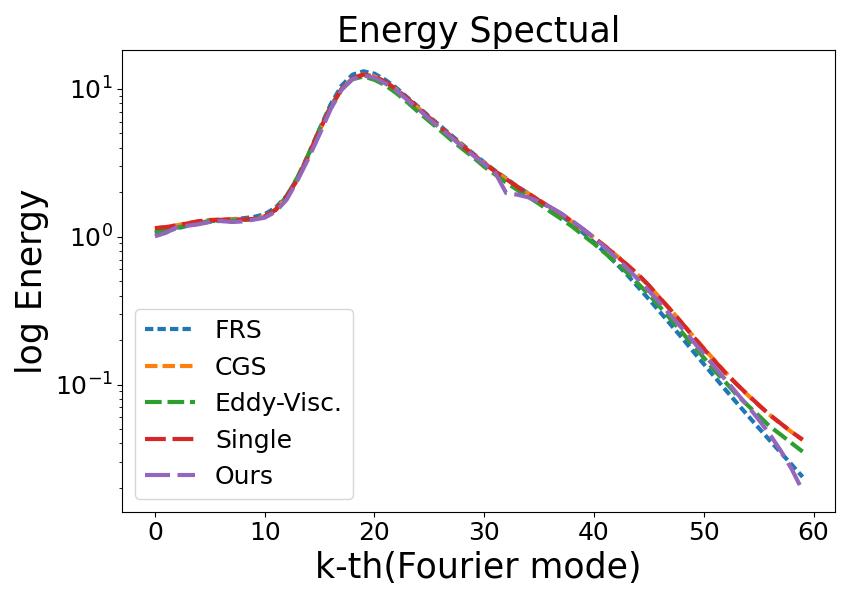}
  \caption{Energy Spectrum}%\label{fig:sub1}
\end{subfigure}\hfil % \hfil for filling the middle
\begin{subfigure}{0.366\textwidth}
  \includegraphics[width=\linewidth]{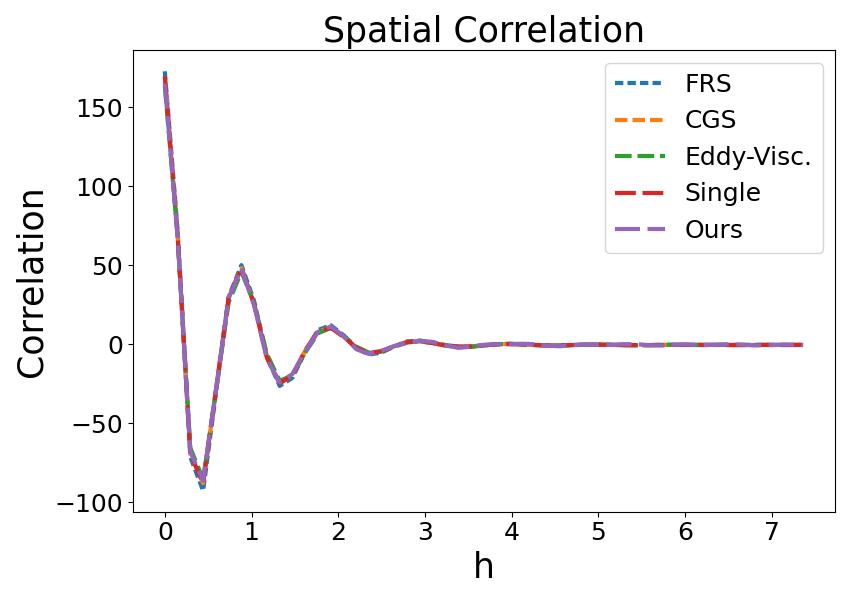}
  \caption{Spatial Correlation}%\label{fig:sub2}
\end{subfigure}\hfil
\begin{subfigure}{0.366\textwidth}
  \includegraphics[width=\linewidth]{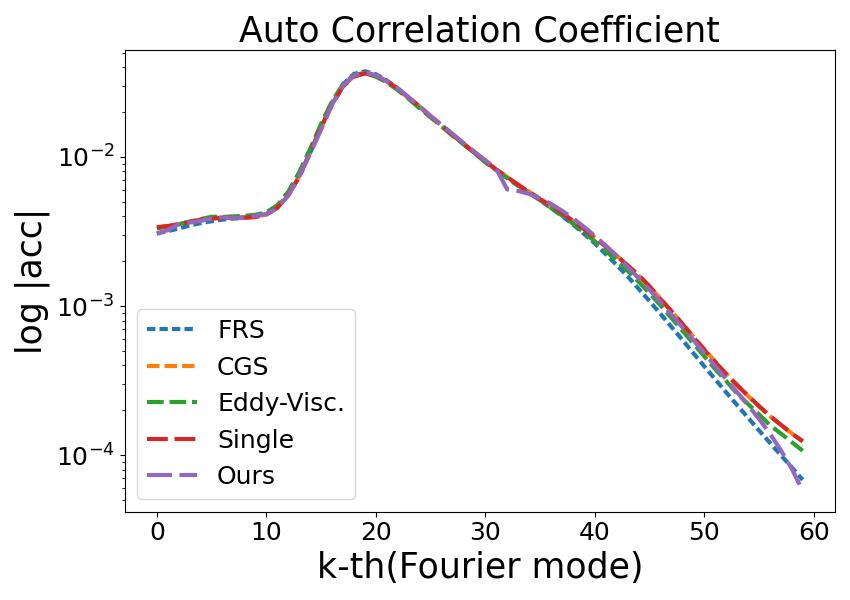}
  \caption{Auto Correlation Coefficient}%\label{fig:sub5}
\end{subfigure}
\caption{\textbf{Experiment Results for KS Equation}
'FRS'(blue curves) refers to fully-resolved simulation, and serves as ground truth. `CGS': coarse-grid simulation(no closure model). `Eddy-Visc.': classical eddy-viscosity model. `Single': learning-based single-state closure model. Our method (purple) is closest to ground truth among all coarse-grid methods.}
\label{fig:apdx: ks stat}
\end{figure}

\newpage
\subsubsection{Navier-Stokes Equation with Reynolds number 100}
\label{apdx_visul_kf}

The error of all statistics we considered for NS equation ($Re=100$) is listed in \Cref{tab: ns:apdx} and plotted in \Cref{fig:apdx: ns stat}.

The visualization of Total variation error for each (marginal) distribution is shown in \Cref{fig:apdx: TV}.
In the visualization, the $(k,j)$-element represents the TV error (compared with the corresponding distributions obtained from FRS) regarding the distribution of the mode length of the component for $(k,j)$ Fourier basis $e^{i\frac{2\pi}{L}(kx+jy)}$.

Our model achieves the best total variation error among all coarse-grid methods, and is much more efficient, namely 124x faster than ground-truth FRS. Our model is even faster than CGS without any closure model because it can perform $O(1)$ time steps instead of tiny time grids during simulations. Learning-based closure model is 58x slower than ours since it needs to invoke a neural network model for every single time step. 
Furthermore, the results of other practical statistics suggest that TV error is a reasonable and fundamental metric. 

\begin{table}[htbp]
\centering
\setlength{\tabcolsep}{1.99pt}
\caption{\textbf{Error on Different Statistics: NS equation, $Re=100$.} From left to right: Average relative error on energy spectrum, max relative error on energy spectrum, total variation distance from (ground truth) vorticity distribution, average component-wise TV distance(error), max component-wise TV distance(error), and relative error on the variance of vorticity.}
\begin{tabular}{lcccccc}\toprule
Method & Avg. Eng.($\%$) & Max Eng.($\%$) & Vorticity & Avg. TV & Max TV & Variance($\%$) \\\midrule
CGS (No closure) & 178.4651  & 404.9923  & 0.1512 & 0.4914 & 0.8367 & 253.4234  \\
Smagorinsky & 52.9511  & 120.0723  & 0.0483 & 0.2423 & 0.9195 & 20.1740  \\
Single-state & 205.3709  & 487.3957  & 0.1648 & 0.5137 & 0.8490 & 298.2027  \\
Dynamic Smag. (DSM) & 74.2150  & 139.5019  & 0.0821 & 0.2803 & 0.8629 & 73.6158  \\
History-aware (RNN) & 181.3914  & 415.1769  & 0.1522 & 0.4938 & 0.8345 & 256.7264  \\
Stochastic (DM) & 182.4392  & 416.8283  & 0.1535 & 0.4930 & 0.8206 & 263.0987  \\
Multi-Fidelity FNO(MFFNO) & 20.7055  & 35.5767  & 0.0115 & 0.2123 & 0.3765 & 20.4410  \\\midrule
\textbf{MF-PINO} & \textbf{5.3276 } & \textbf{8.9188 } & \textbf{0.0091} & \textbf{0.0726} & \textbf{0.2572} & \textbf{2.8666 } \\\bottomrule
\end{tabular}

\label{tab: ns:apdx}
\end{table}
% smag:\cite{smagorinsky1963general} 
% single state:  \cite{guan2022stable} 

\begin{figure}[t]
\centering

\begin{subfigure}{0.249\textwidth}
  \includegraphics[width=\linewidth]{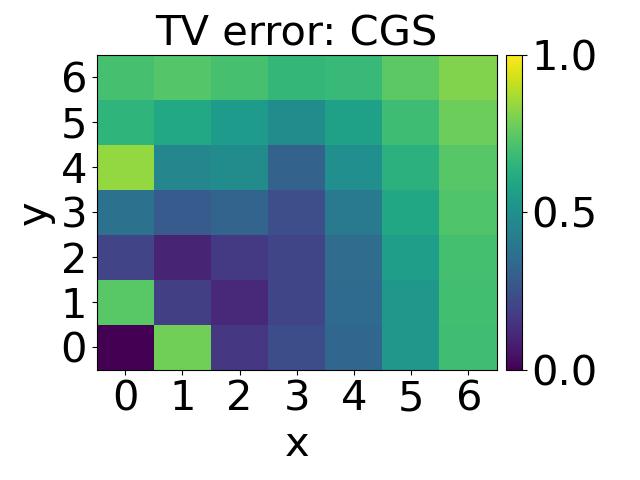}
\end{subfigure}\hfil % \hfil for filling the middle
\begin{subfigure}{0.249\textwidth}
  \includegraphics[width=\linewidth]{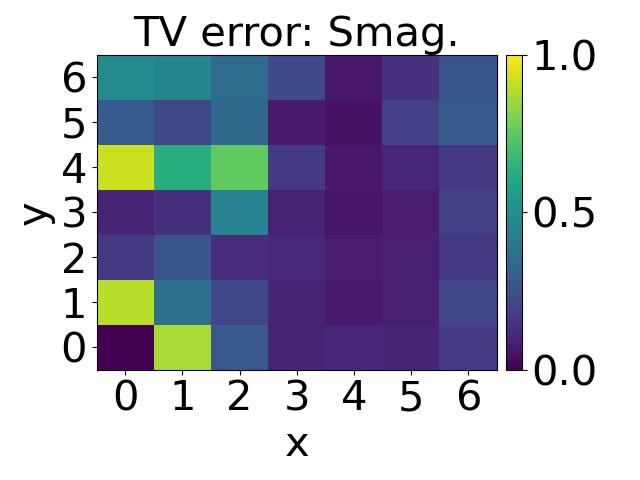}
\end{subfigure}\hfil
\begin{subfigure}{0.249\textwidth}
  \includegraphics[width=\linewidth]{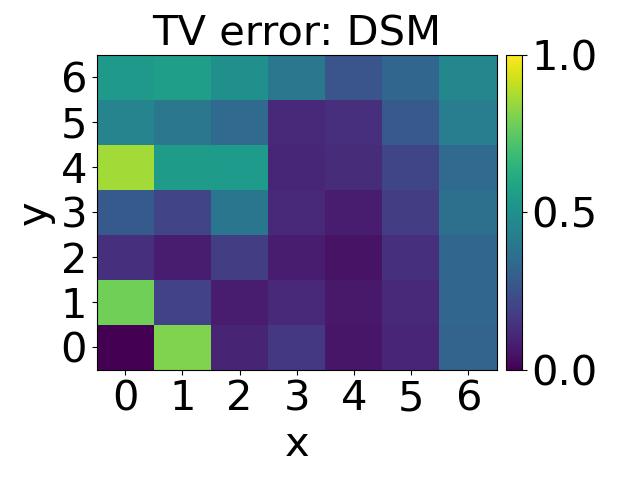}
\end{subfigure}\hfill
\begin{subfigure}{0.249\textwidth}
  \includegraphics[width=\linewidth]{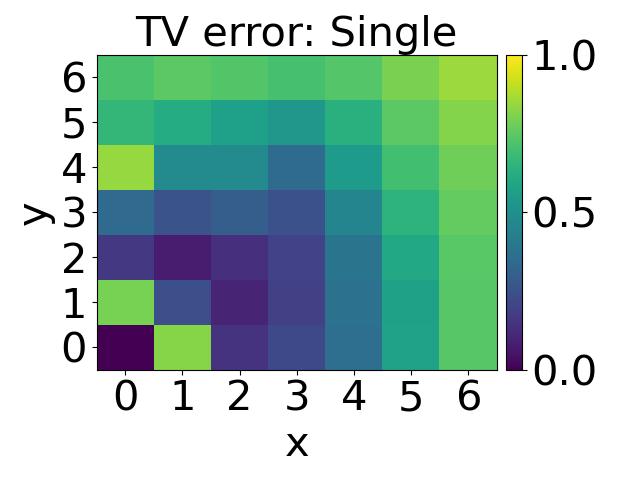}
\end{subfigure}\hfill
\begin{subfigure}{0.249\textwidth}
  \includegraphics[width=\linewidth]{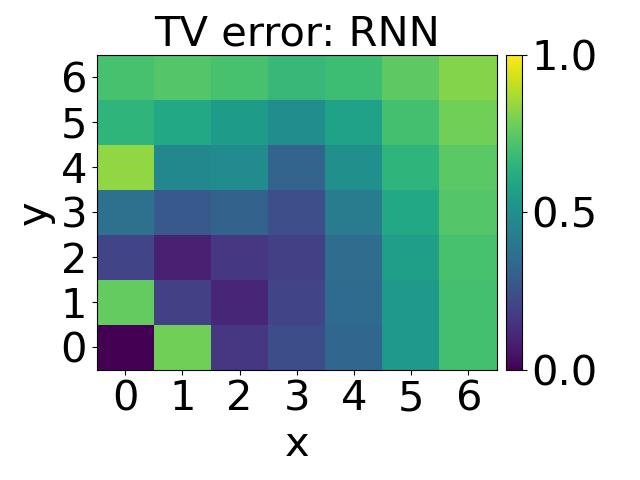}
\end{subfigure}\hfill
\begin{subfigure}{0.249\textwidth}
  \includegraphics[width=\linewidth]{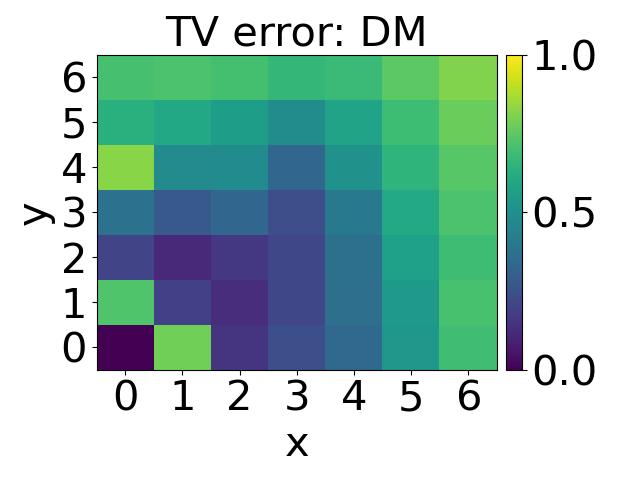}
\end{subfigure}\hfill
\begin{subfigure}{0.249\textwidth}
  \includegraphics[width=\linewidth]{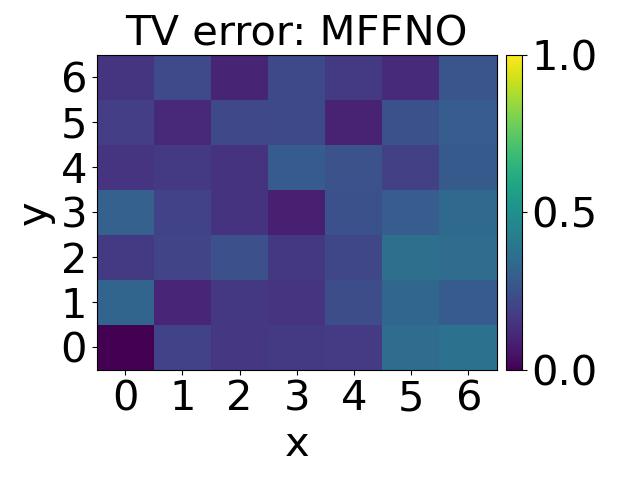}
\end{subfigure}\hfill
\begin{subfigure}{0.249\textwidth}
  \includegraphics[width=\linewidth]{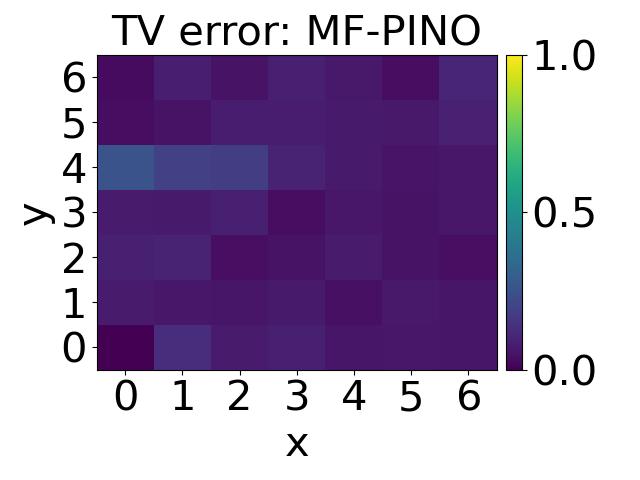}
\end{subfigure}
\caption{\textbf{Total Variation (TV) error for NS Equation with $Re=100$}
The $(k,j)$-element represents the TV error (compared with the results obtained from FRS) regarding the distribution of the mode length of the component for $(k,j)$ Fourier basis $e^{i\frac{2\pi}{L}(kx+jy)}$.
 `CGS': coarse-grid simulation(no closure model). `Smag.': classical Smagorinsky model. `DSM': dynamical Smagorinsky model. `Single': learning-based single-state closure model. `RNN': history-aware closure model with RNN. `DM': stochastic closure with diffusion model.
 `MFFNO': multi-fidelity FNO.
Our primary method with MF-PINO achieves the smallest TV error among all modes, indicating that it obtains the best approximation of the filtered invariant measure among all coarse-grid methods considered.
}
\label{fig:apdx: TV}
\end{figure}

\begin{figure}[htbp]
\centering

\begin{subfigure}{0.499\textwidth}
  \includegraphics[width=\linewidth]{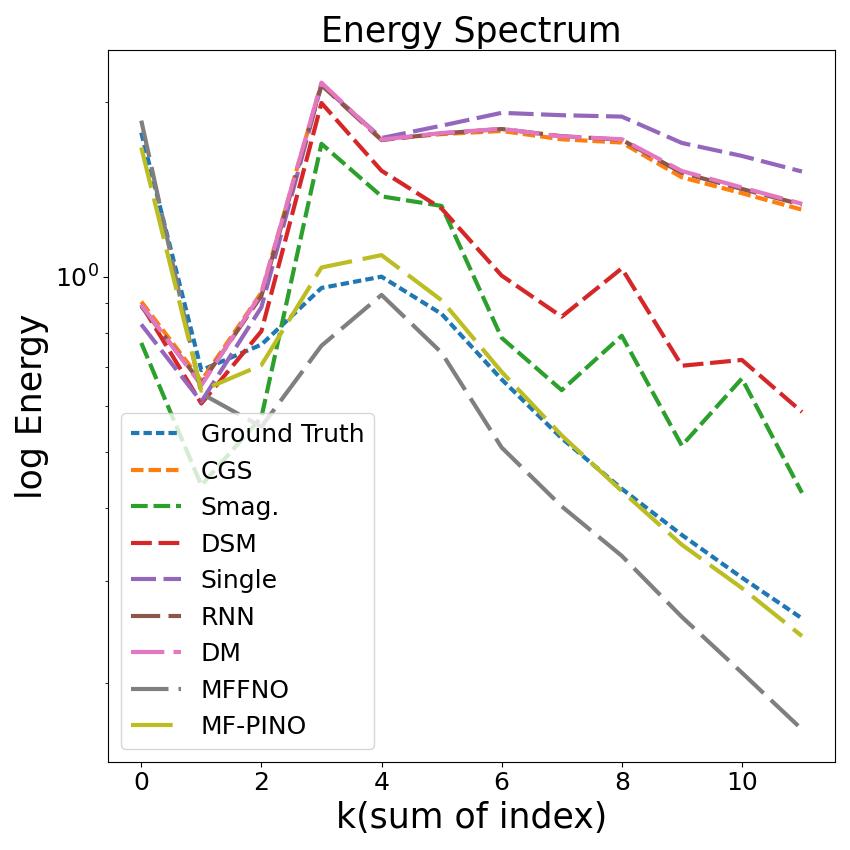}
  \caption{Energy Spectrum}%\label{fig:sub1}
\end{subfigure}\hfil % \hfil for filling the middle
\begin{subfigure}{0.499\textwidth}
  \includegraphics[width=\linewidth]{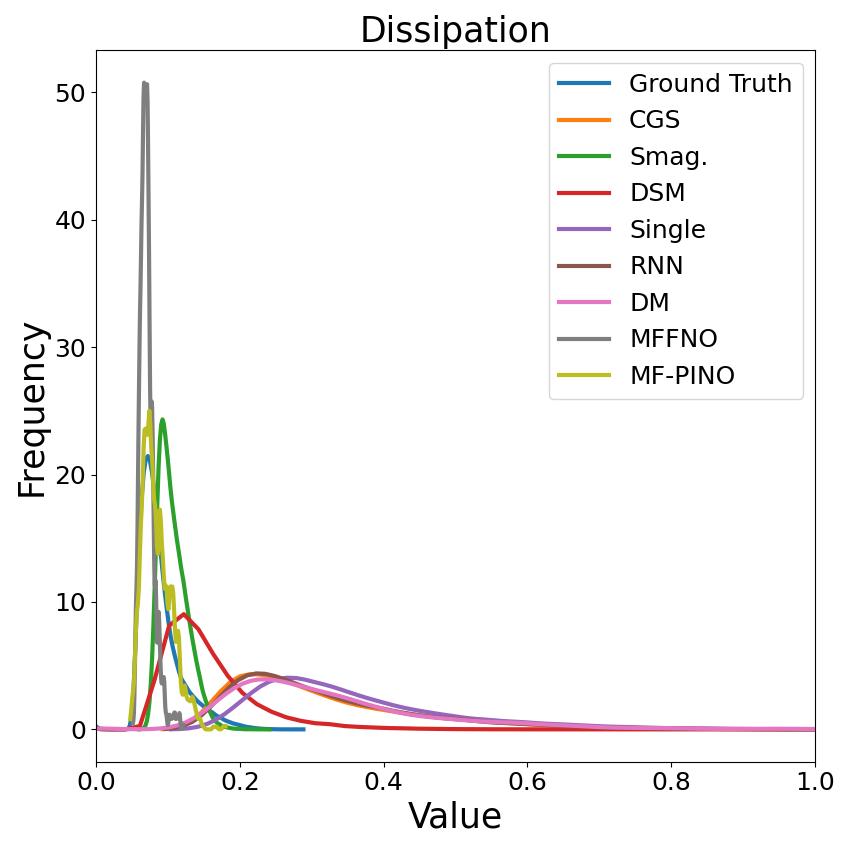}
  \caption{Dissipation Distribution}%\label{fig:sub2}
\end{subfigure}\hfil
\begin{subfigure}{0.499\textwidth}
  \includegraphics[width=\linewidth]{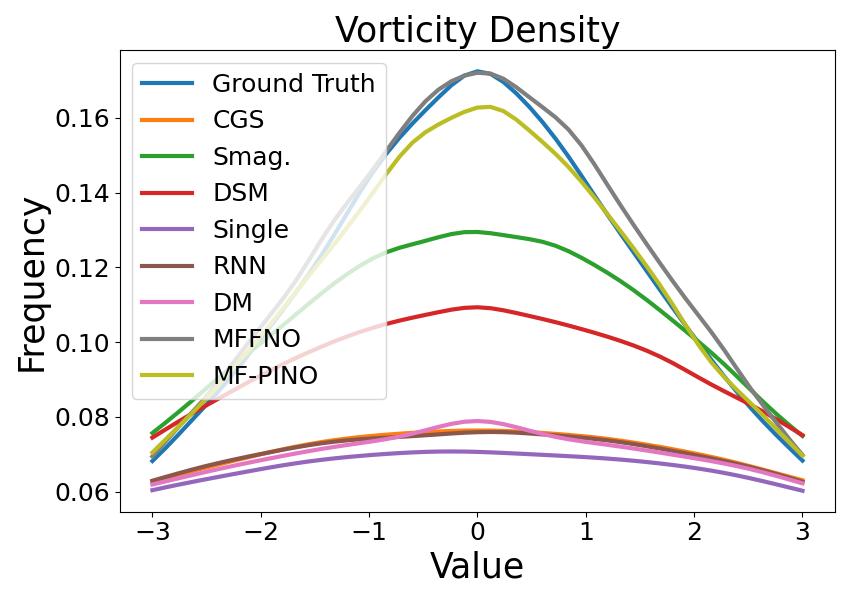}
  \caption{Vorticity Distribution }%\label{fig:sub5}
\end{subfigure}\hfill
\begin{subfigure}{0.499\textwidth}
  \includegraphics[width=\linewidth]{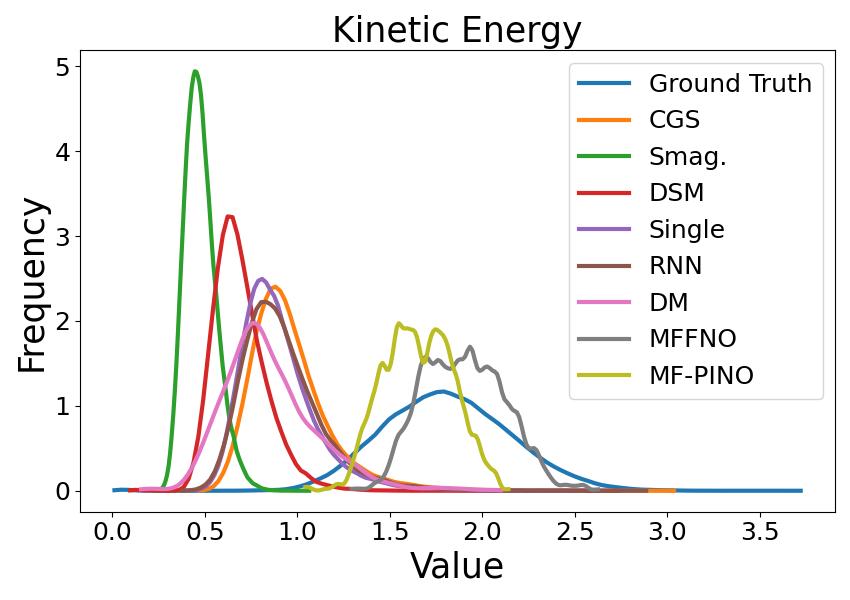}
  \caption{Kinetic Energy Distribution}%\label{fig:sub5}
\end{subfigure}\hfill
\begin{subfigure}{0.499\textwidth}
  \includegraphics[width=\linewidth]{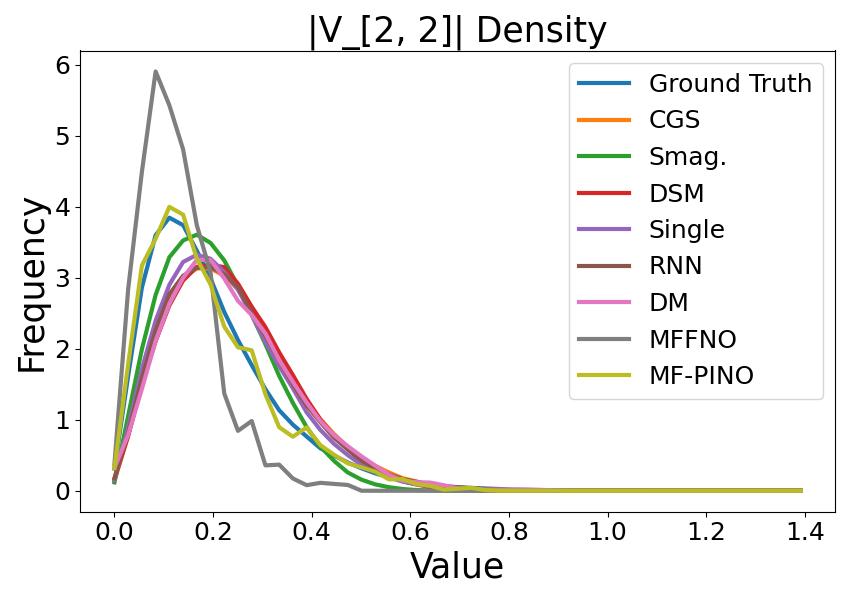}
  \caption{Distribution of component for $(2,2)$ Fourier basis }%\label{fig:sub5}
\end{subfigure}\hfill
\begin{subfigure}{0.499\textwidth}
  \includegraphics[width=\linewidth]{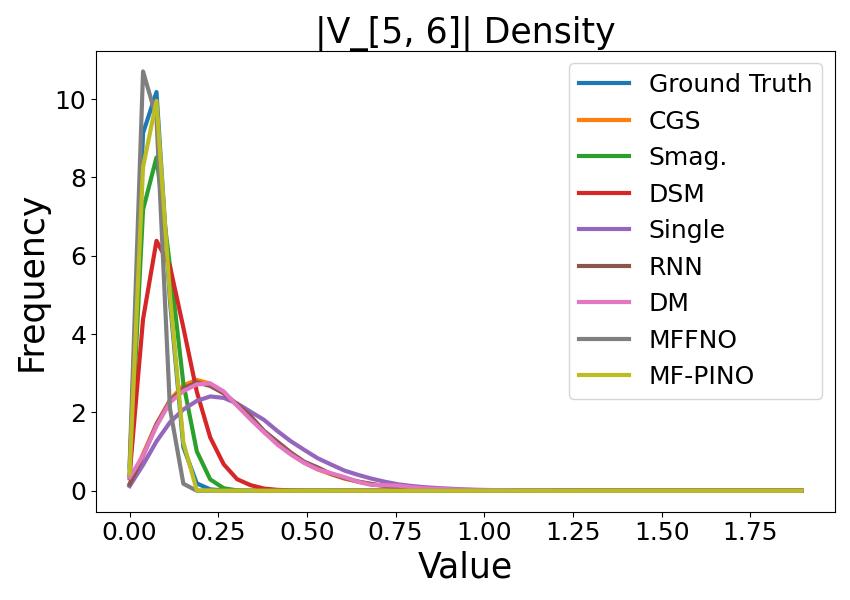}
  \caption{Distribution of component for $(5,6)$ Fourier basis}%\label{fig:sub5}
\end{subfigure}
\caption{\textbf{Experiment Results for NS Equation with $Re=100$.}
The blue curves refer to fully-resolved simulation and serve as ground truth. 
Our primary method with MF-PINO (yellow) is closest to ground truth among all coarse-grid methods.}
\label{fig:apdx: ns stat}
\end{figure}

%%%%%%%%%%%%%%%%%5
\newpage
\subsubsection{Navier-Stokes Equation with Reynolds number $1.6\times 10^4$}
\label{apdx_visul_ns1w}
The error of all statistics we considered for NS equation ($Re=1.6\times 10^4$) is listed in \Cref{tab: ns1w:apdx} and plotted in \Cref{fig:apdx: ns1w stat}. 
% We also compare the inference time of all methods, as is in \Cref{tab:ns1w_time}.

The visualization of TV error for each (marginal) distribution is shown in \Cref{fig:apdx_ns1w: TV}.

We want to clarify here that, due to the system reduction nature of coarse-grid simulation, it cannot precisely recover the invariant measure $\mu^*$ even in ideal cases. Instead, as discussed in previous sections, the best approximation of $\mu^*$ one can obtain with coarse-grid simulations is $P_{\#}\mu^*$ (\Cref{apdx_prop_c5}), where $P$ is the orthogonal projection towards the filtered space $\mathcal{F}(\hhh)$. The gap between $\mu^*$ and $P_{\#}\mu^*$ heavily depends on the reduced space, determined by
the filter $\mathcal{F}$. Consequently, it is possible that estimations of certain statistics based on coarse-grid simulations can never get reasonably close to the ground truth.

%%%%%%%%%%

\begin{table}[htbp]
\centering
\setlength{\tabcolsep}{1.99pt}
\caption{\textbf{Error on Different Statistics: NS equation, $Re=1.6\times 10^4$.} From left to right: Average relative error on energy spectrum, max relative error on energy spectrum, total variation distance from (ground truth) vorticity distribution, average component-wise TV distance(error), relative error on root mean square of velocity, and relative error on the variance of vorticity.}
\begin{tabular}{lcccccc}\toprule
Method & Avg. Eng.($\%$) & Max Eng.($\%$) & Vorticity & Avg. TV & Mean Vel.($\%$) & Variance($\%$) \\\midrule
CGS (No closure) & 139.5876  & 332.6311  & 0.0326 & 0.3642 & 15.9057  & 16.1087  \\
Smagorinsky & 32.8391  & 80.4986  & 0.0351 & 0.2051 & 29.8473  & 55.3507  \\
Single-state & 186.1610  & 422.6007  & 0.0525 & 0.4527 & 19.8919  & \textbf{11.7190 } \\
Dynamic Smag. (DSM) & 17.3967  & 71.3727  & 0.0143 & 0.1614 & 12.3541  & 33.5422  \\
History-aware (RNN) & 72.6070  & 189.4918  & 0.0291 & 0.3611 & 9.2850  & 5.2505  \\
Stochastic (DM) & 91.3567  & 240.6798  & 0.0312 & 0.3425 & 8.7776  & 8.8782  \\
Multi-Fidelity FNO(MFFNO) & 50.2734  & 101.6366  & 0.0237 & 0.5592 & 2.8406  & 27.2564  \\\midrule
\textbf{MF-PINO} & \textbf{13.9455 } & \textbf{34.6842 } & \textbf{0.0109} & \textbf{0.1401} & \textbf{0.2827 } & 18.7052  \\\bottomrule
\end{tabular}

\label{tab: ns1w:apdx}
\end{table}

%%%%%

% \begin{table}[ht]
% \centering
% \caption{\textbf{Inference Time for Navier-Stokes with $Re=1.6\times 10^4$}.
% The comparison is based on the time cost (seconds) for the simulation of one trajectory with $t\in[0,100]$. 
% % 'FRS' refers to fully-resolved simulation, and serves as ground truth for estimating statistics. 'CGS': coarse-grid simulation(no closure model). 'Smag.': classical Smagorinsky model. 'Single': learning-based single-state closure model. 'DSM': dynamical Smagorinsky model. 'MFF': multi-fidelity FNO. Our method is the most efficient among all methods.
% }
% \begin{tabular}{|l|c|c|c|c|c|c|c|c|c|}
% \hline
% Method & FRS & CGS & Smag. &DSM& Single-state & RNN & DM & {MF-FNO} & {MF-PINO} \\ \hline
%  Time [s] & 525.74 & 14.29 & 29.27&71.35 & 96.78 & 54.23 &320.96 & \textbf{1.59} & \textbf{1.59} \\ \hline
% \end{tabular}
% \label{tab:ns1w_time}
% \end{table}
%%%%%%%%%%%%%%%%

%%%%%%%%%%%%%%%%%%%%%%%%%%
\begin{figure}[htbp]
\centering

\begin{subfigure}{0.311\textwidth}
  \includegraphics[width=\linewidth]{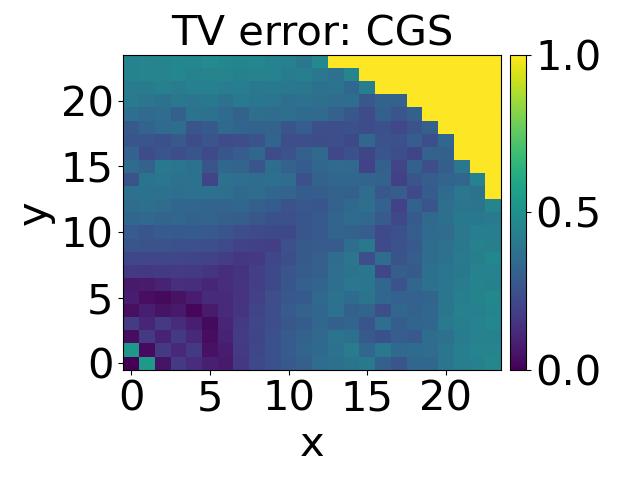}
\end{subfigure}\hfil % \hfil for filling the middle
\begin{subfigure}{0.311\textwidth}
  \includegraphics[width=\linewidth]{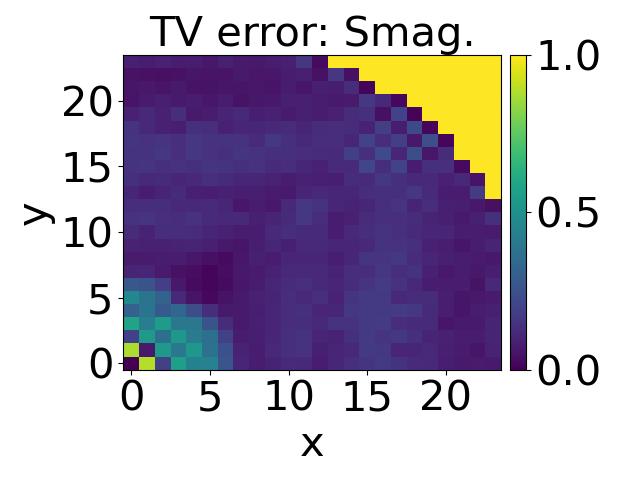}
\end{subfigure}\hfil
\begin{subfigure}{0.311\textwidth}
  \includegraphics[width=\linewidth]{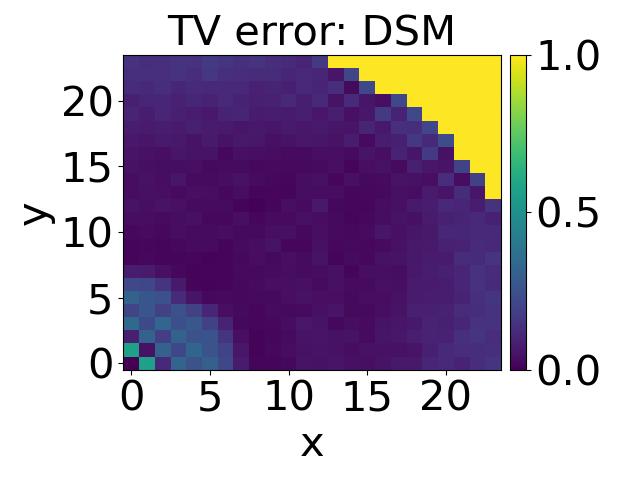}
\end{subfigure}\hfil
\begin{subfigure}{0.311\textwidth}
  \includegraphics[width=\linewidth]{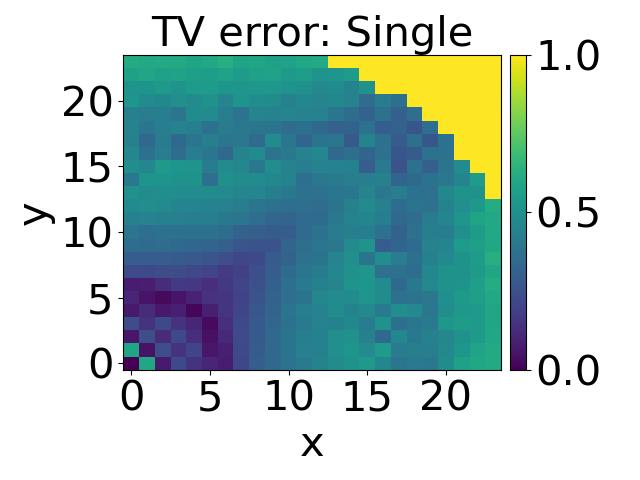}
\end{subfigure}\hfil
\begin{subfigure}{0.311\textwidth}
  \includegraphics[width=\linewidth]{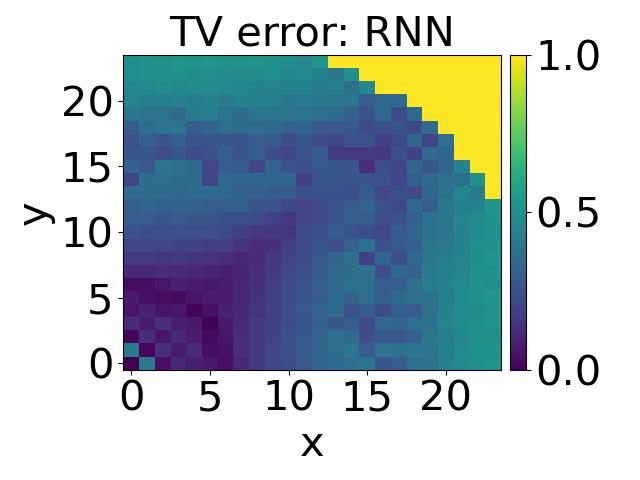}
\end{subfigure}\hfil
\begin{subfigure}{0.311\textwidth}
  \includegraphics[width=\linewidth]{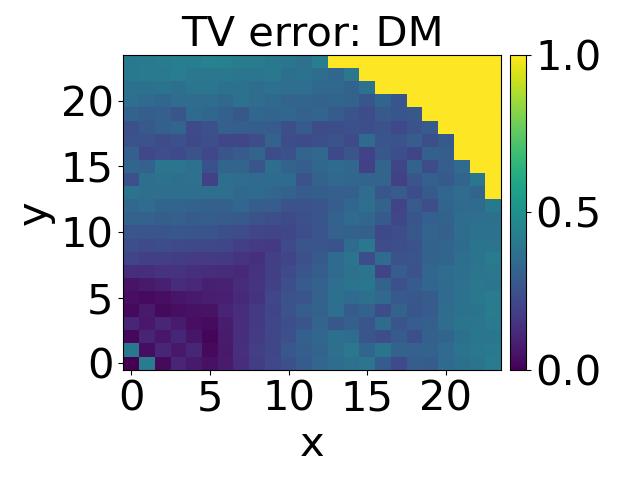}
\end{subfigure}\hfil
\begin{subfigure}{0.311\textwidth}
  \includegraphics[width=\linewidth]{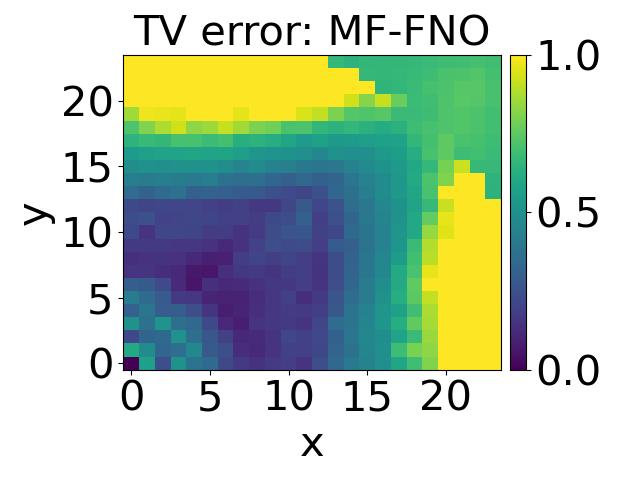}
\end{subfigure}\hfil
\begin{subfigure}{0.311\textwidth}
  \includegraphics[width=\linewidth]{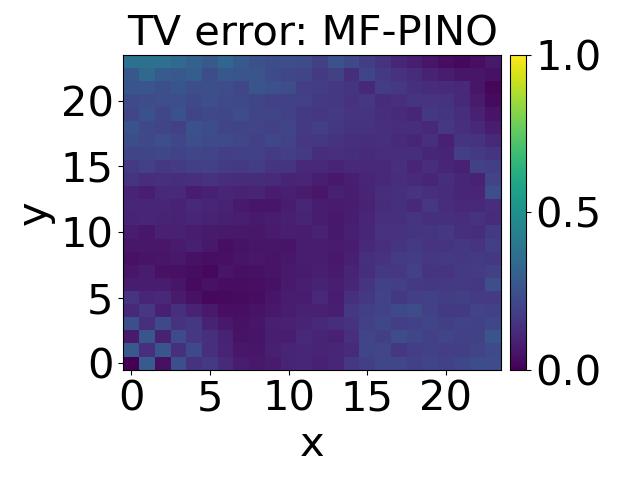}
\end{subfigure}
\caption{\textbf{Total Variation (TV) error for NS Equation with $Re=1.6\times 10^4$}
The $(k,j)$-element represents the TV error (compared with the results obtained from FRS) regarding the distribution of the mode length of the component for $(k,j)$ Fourier basis $e^{i\frac{2\pi}{L}(kx+jy)}$.
 `CGS': coarse-grid simulation(no closure model). `Smag.': classical Smagorinsky model. `DSM': dynamical Smagorinsky model. `Single': learning-based single-state closure model. `RNN': history-aware closure model with RNN. `DM': stochastic closure with diffusion model.
 `MFFNO': multi-fidelity FNO.
Our primary method with MF-PINO achieves the smallest TV error among all modes, indicating that it obtains the best approximation of the filtered invariant measure among all coarse-grid methods considered.
}
\label{fig:apdx_ns1w: TV}
\end{figure}

\begin{figure}[htbp]
\centering

\begin{subfigure}{0.499\textwidth}
  \includegraphics[width=\linewidth]{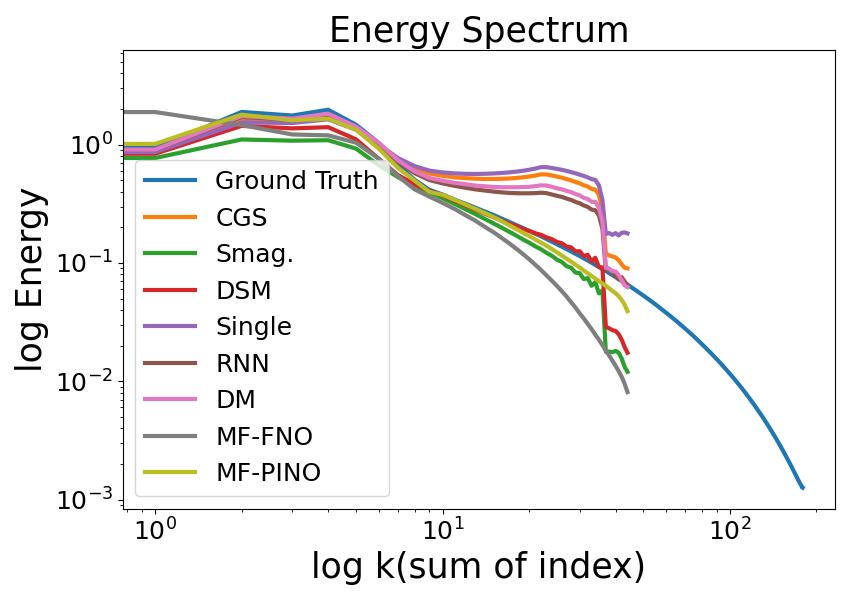}
  \caption{Energy Spectrum}%\label{fig:sub1}
\end{subfigure}\hfil % \hfil for filling the middle
\begin{subfigure}{0.499\textwidth}
  \includegraphics[width=\linewidth]{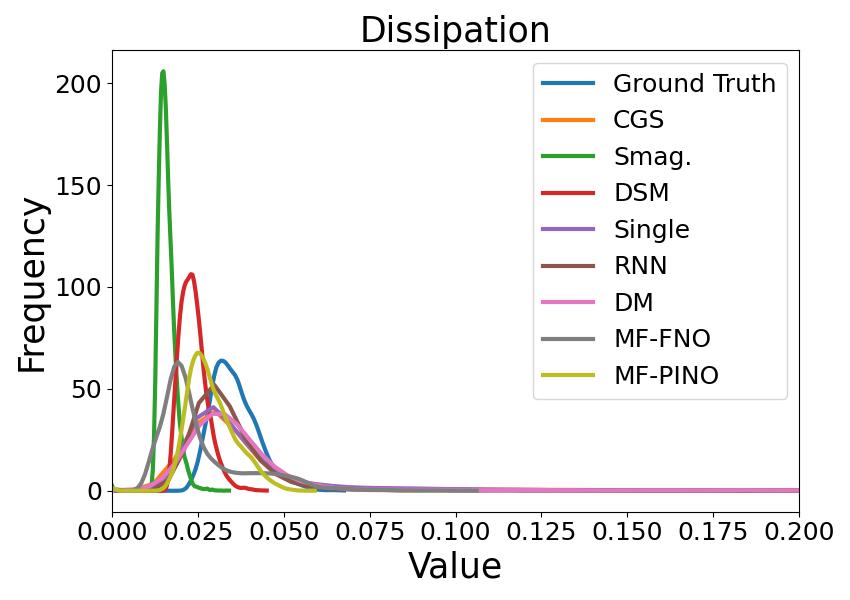}
  \caption{Dissipation Distribution}%\label{fig:sub2}
\end{subfigure}\hfil
\begin{subfigure}{0.499\textwidth}
  \includegraphics[width=\linewidth]{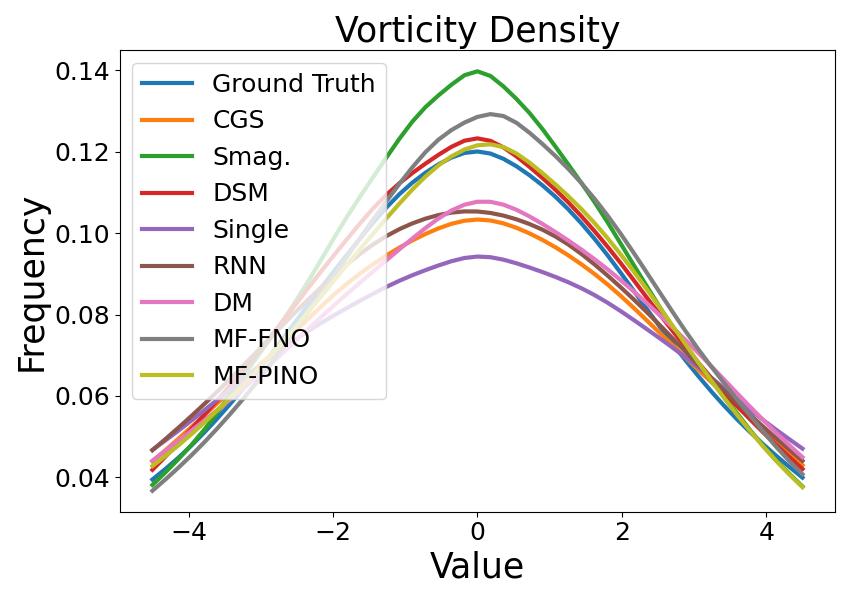}
  \caption{Vorticity Distribution }%\label{fig:sub5}
\end{subfigure}\hfill
\begin{subfigure}{0.499\textwidth}
  \includegraphics[width=\linewidth]{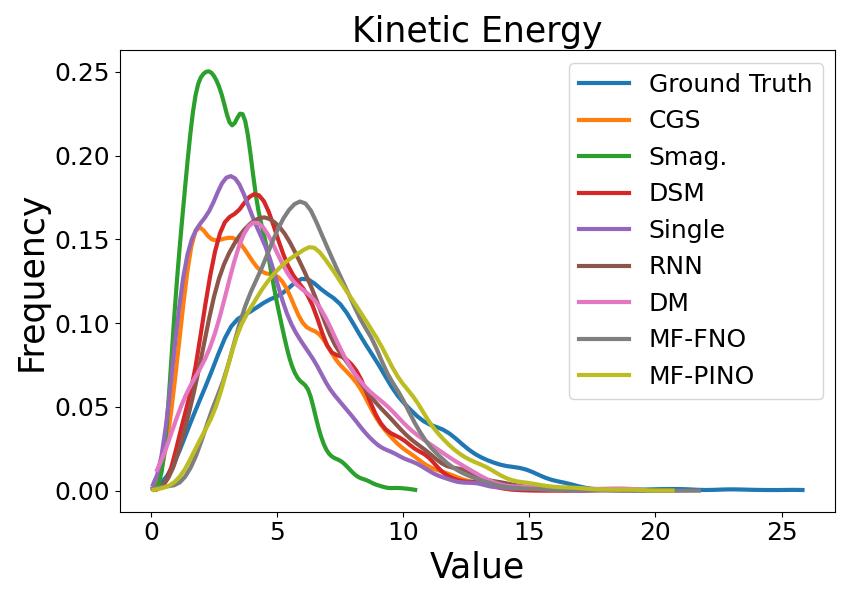}
  \caption{Kinetic Energy Distribution}%\label{fig:sub5}
\end{subfigure}\hfill
\begin{subfigure}{0.499\textwidth}
  \includegraphics[width=\linewidth]{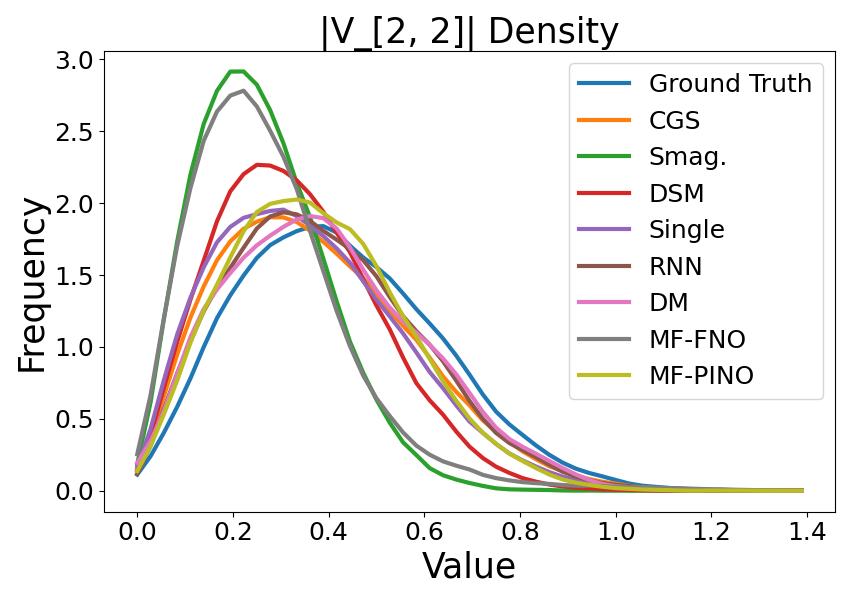}
  \caption{Distribution of component for $(2,2)$ Fourier basis }%\label{fig:sub5}
\end{subfigure}\hfill
\begin{subfigure}{0.499\textwidth}
  \includegraphics[width=\linewidth]{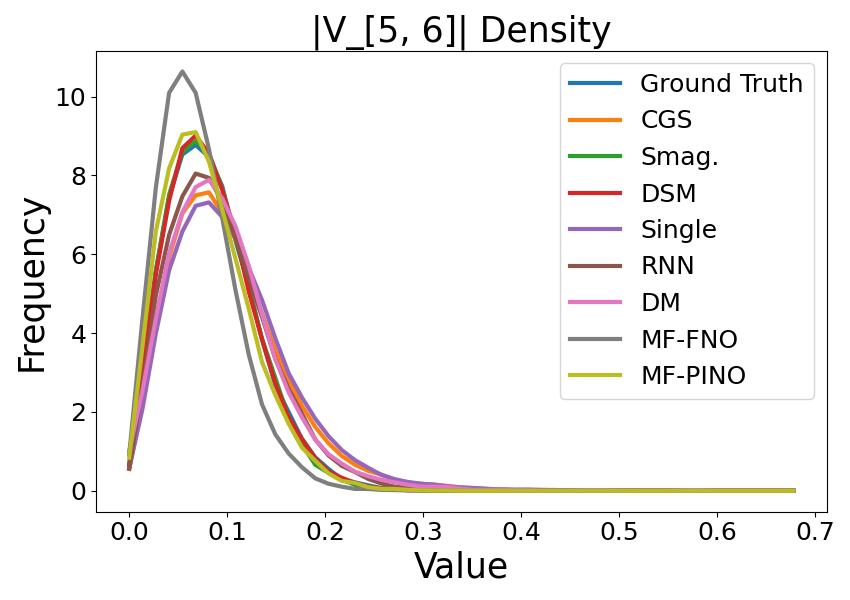}
  \caption{Distribution of component for $(5,6)$ Fourier basis}%\label{fig:sub5}
\end{subfigure}
\caption{\textbf{Experiment Results for NS Equation with $Re=1.6\times 10^4$.}
The blue curves refer to fully-resolved simulation and serve as ground truth. 
Our primary method with MF-PINO (yellow) is closest to ground truth among all coarse-grid methods.
}
\label{fig:apdx: ns1w stat}
\end{figure}

\subsubsection{Additional FourCastNet 3 Results}
\label{apdx_fcn3}
Additional reasults obtained with the multi-fidelity training of FourCastNet 3 are presented. \Cref{fig:fcn3_crps_panel} and \Cref{fig:fcn3_ssr_panel} depict the continuously ranked probability score (CRPS) and spread-skill ratio (SSR) for the low-fidelity, the multi-fidelity and high-fidelity variants of FourCastNet 3. \Cref{fig:fcn3_spectra_panel} depicts the respective angular powerspectra. Finally \Cref{fig:fcn3_fieldmap_panel} and \Cref{fig:fcn3_fieldmap_panel_720h} depict rollouts of an individual ensemble member for the multi-fidelity and high-fidelity variants.

\begin{figure}[htbp]
\centering
\includegraphics[width=1\linewidth]{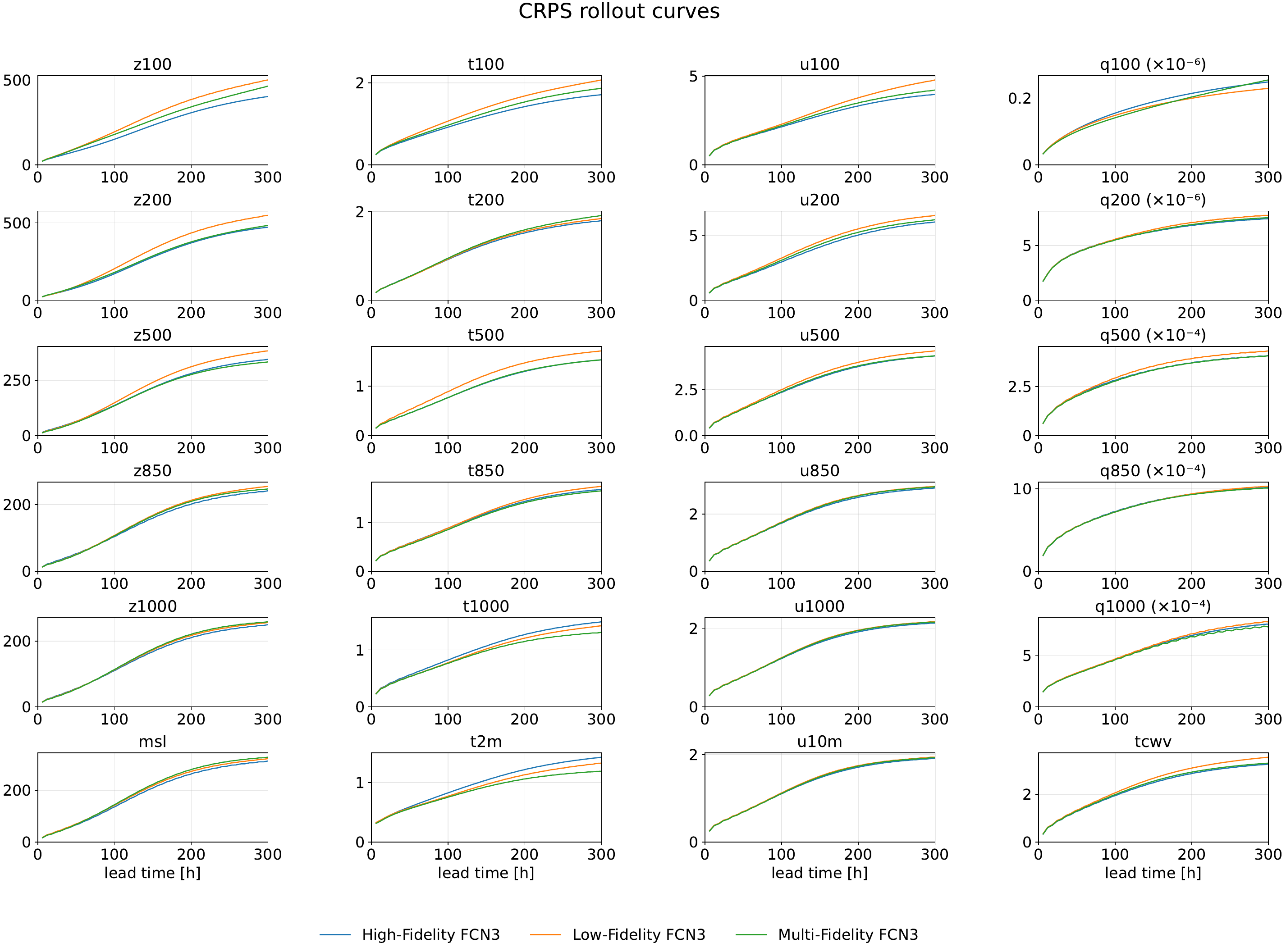}
\caption{
\textbf{Comparison of multi-fidelity and high-fidelity FourCastNet 3 ensemble member predictions.} Both forecasts are initialized at 2018-01-01 00:00:00 UTC and rolled out 20 time
}
\label{fig:fcn3_crps_panel}
\end{figure}

% \begin{figure}[htbp]
% \centering
% \includegraphics[width=1\linewidth]{fig/fcn3/fcn3_rmse_panel.pdf}
% \caption{
% \textbf{Comparison of multi-fidelity and high-fidelity FourCastNet 3 ensemble member predictions.} Both forecasts are initialized at 2018-01-01 00:00:00 UTC and rolled out 20 time
% }
% \label{fig:fcn3_rmse_panel}
% \end{figure}

\begin{figure}[htbp]
\centering
\includegraphics[width=1\linewidth]{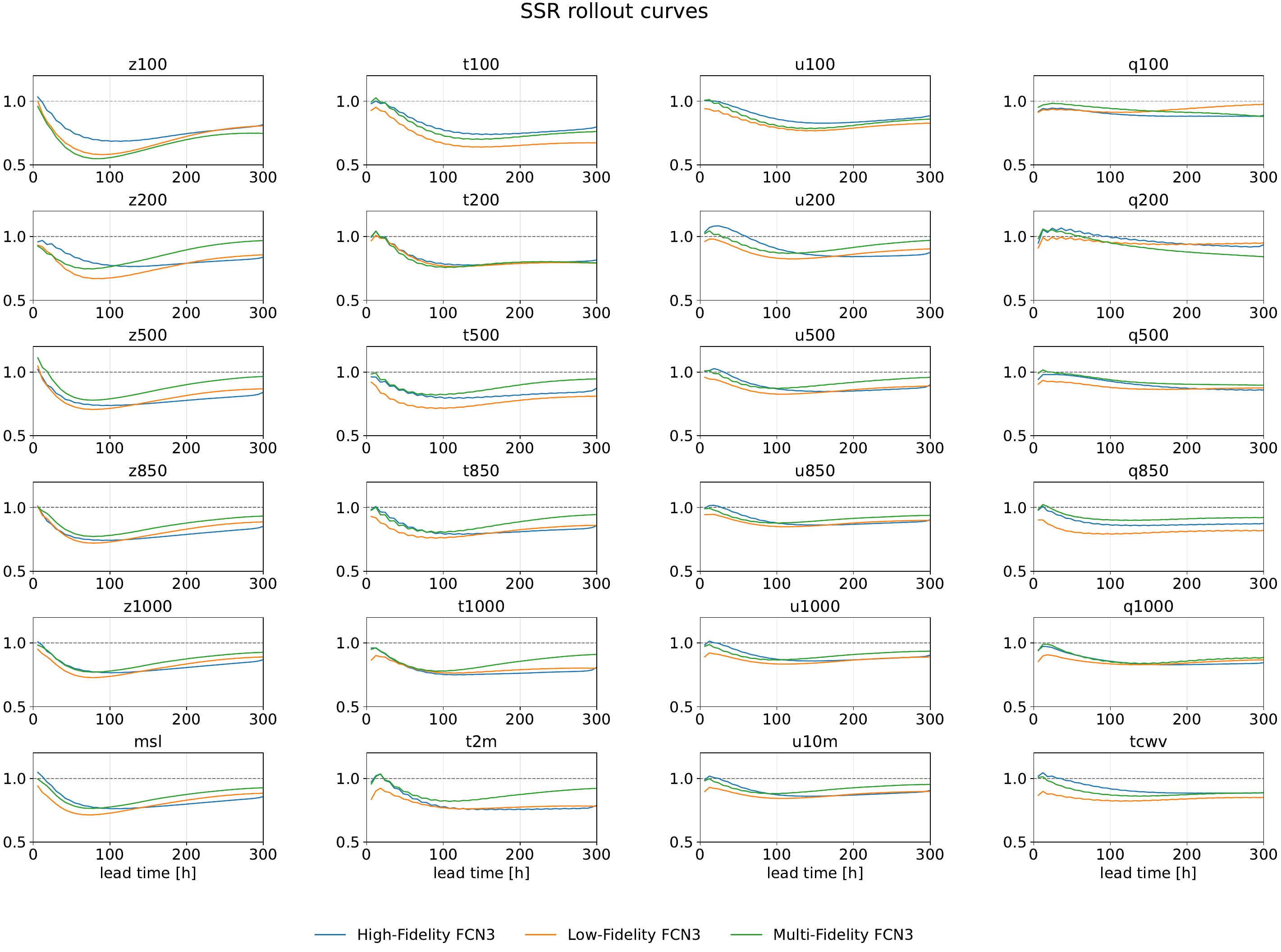}
\caption{
\textbf{Comparison of multi-fidelity and high-fidelity FourCastNet 3 ensemble member predictions.} Both forecasts are initialized at 2018-01-01 00:00:00 UTC and rolled out 20 time
}
\label{fig:fcn3_ssr_panel}
\end{figure}

\begin{figure}[htbp]
\centering
\includegraphics[width=1\linewidth]{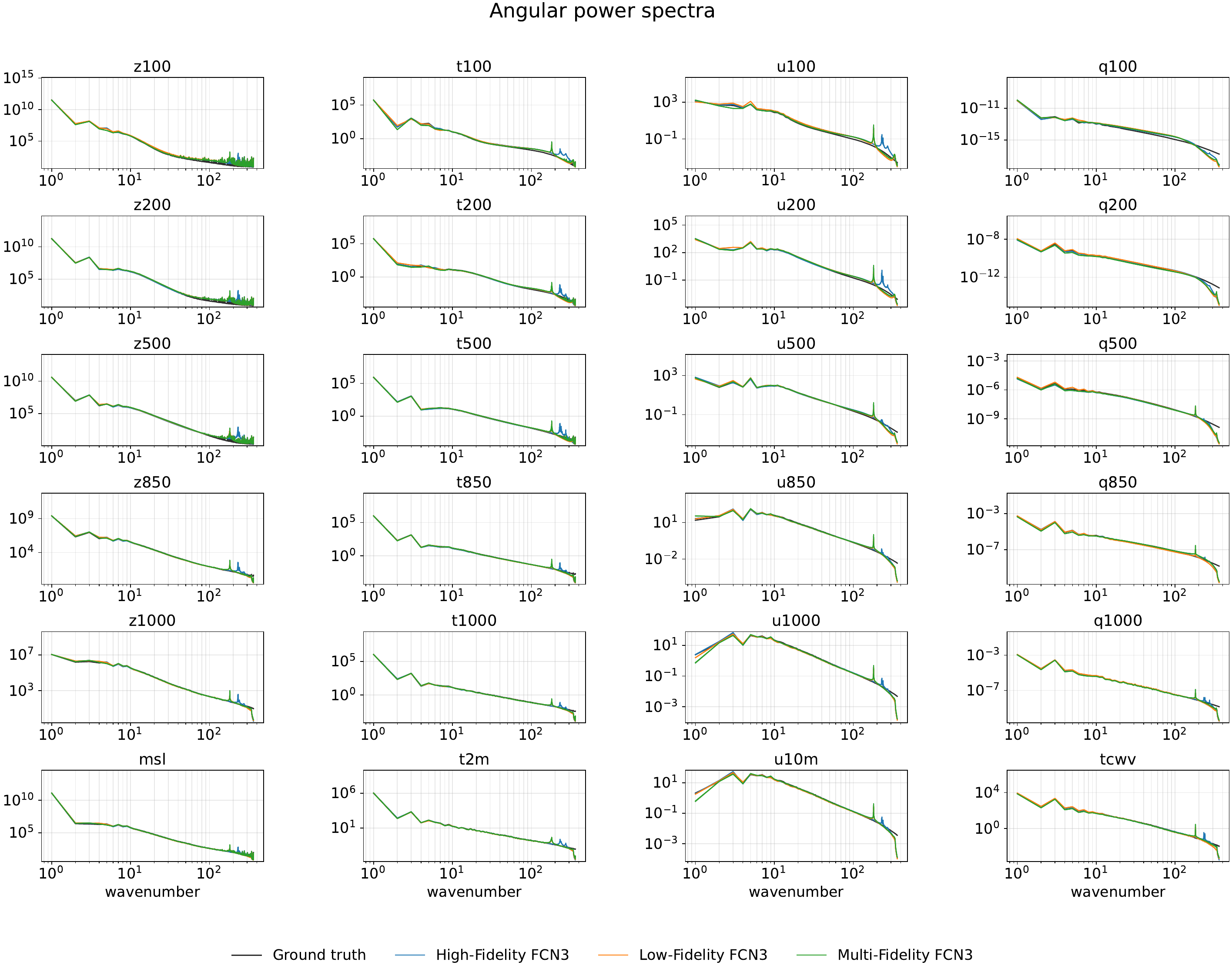}
\caption{
\textbf{Angular power spectra of multi-fidelity and high-fidelity FourCastNet 3 ensemble member predictions at a lead time of 300h.} Power spectra are aggregated over 12-hourly initial conditions in the validaiton year of 2018 and compared against the ERA5 groundtruth.
}
\label{fig:fcn3_spectra_panel}
\end{figure}

\begin{figure}[htbp]
\centering
\includegraphics[width=1\linewidth]{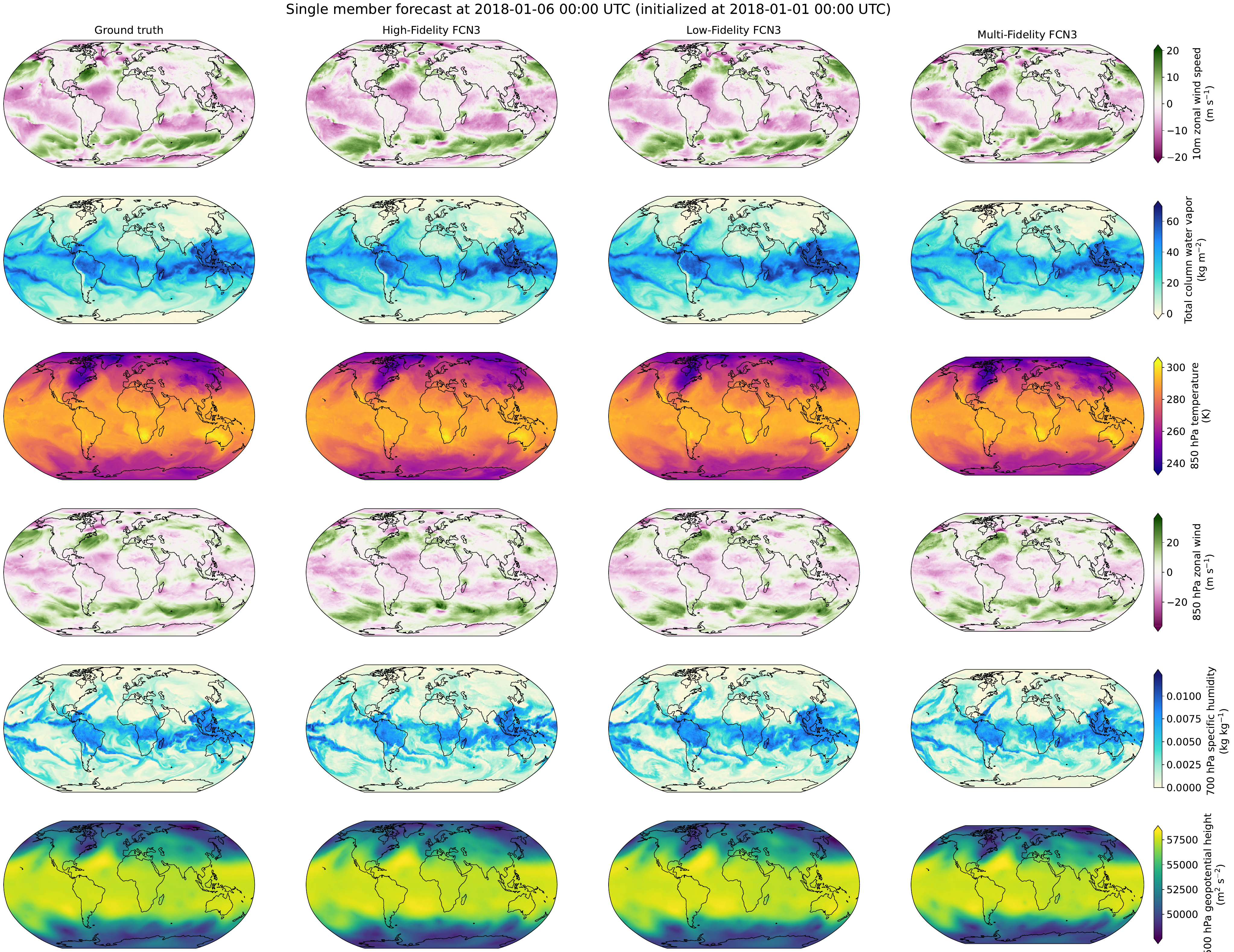}
\caption{
\textbf{High-fidelity, low-fidelity and multi-fidelity FourCastNet 3 ensemble member predictions at 120h lead time.} Both forecasts are initialized at 2018-01-01 00:00:00 UTC and autoregressively rolled out for 20 timesteps.}
\label{fig:fcn3_fieldmap_panel}
\end{figure}

\begin{figure}[htbp]
\centering
\includegraphics[width=1\linewidth]{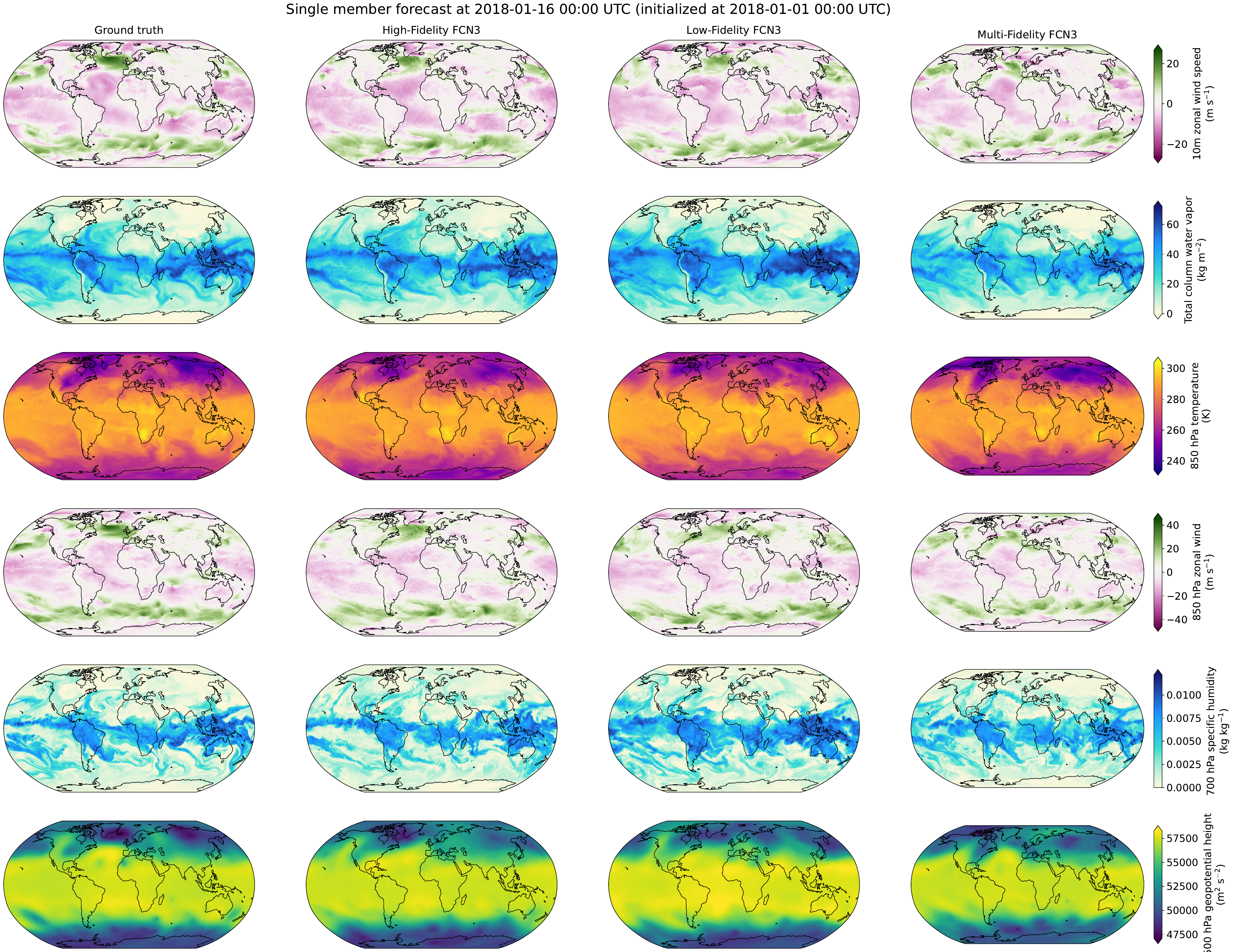}
\caption{
\textbf{High-fidelity, low-fidelity and multi-fidelity FourCastNet 3 ensemble member predictions at 360h lead time.} Both forecasts are initialized at 2018-01-01 00:00:00 UTC and autoregressively rolled out for 60 timesteps.}
\label{fig:fcn3_fieldmap_panel_36`0h}
\end{figure}

\newpage
\section{Training Instability of Learning-based Closure Models}\label{apdx: instable}

\begin{figure}[t]
\centering
\includegraphics[width=1\linewidth]{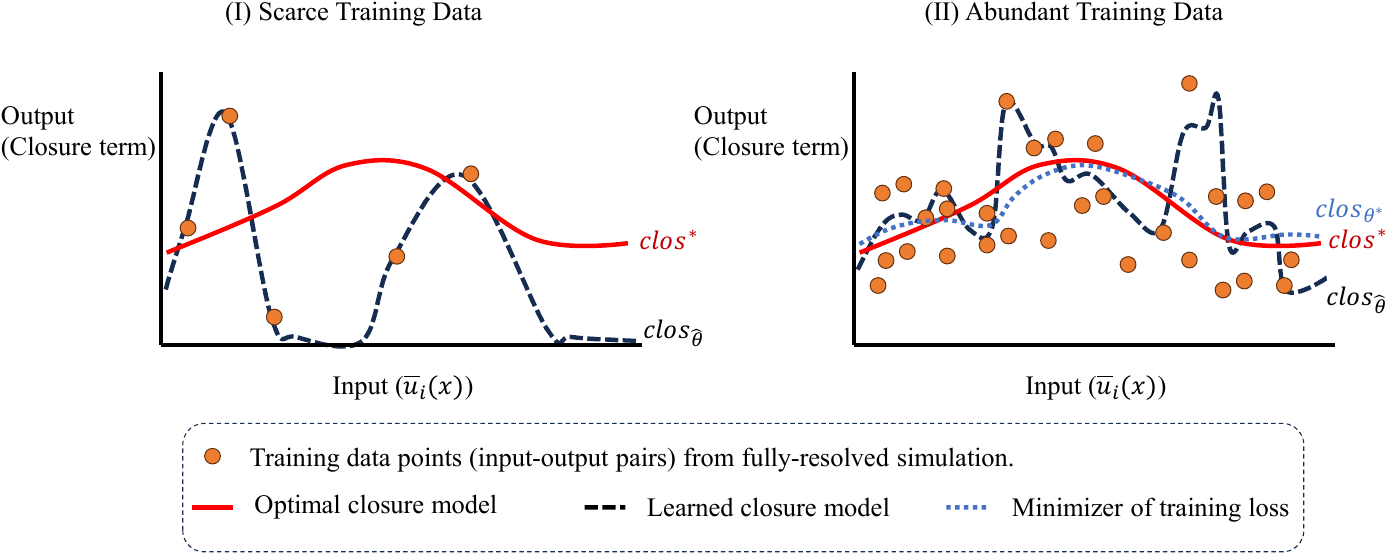}
\caption{
\textbf{Illustration of training results of learning-based closure models} (I) In the limited-data regime, after minimizing the loss function, the model $clos_{\hat{\theta}}$ memorizes the closure terms from training data, and has outputs nearly zero for inputs distant from any training data points. 
(II) In the abundant-data regime, though the minimizer of the training loss ($clos_{\theta^*}$) is a good approximation of the optimal closure model $clos^*$, since the optimization is done with mini-batch, we usually  only get a sub-optimal local minimal of the training loss, corresponding to $clos_{\hat\theta}$.
}
\label{fig:instable}
% \vspace{-2em}
\end{figure}

In this section, we discuss the training instability of learning-based closure models. An illustration of the interpretation is shown in \Cref{fig:instable}.

Please refer to \Cref{thm-lower-notation} for notations used in this section.

For simplicity, we will conduct our discussion in $\ell^2$ representation space of $\hhh$. $\rvv$ corresponds to filtered functions in the reduced space and $\rvw$ corresponds to the unresolved components.

Recall that as discussed in \Cref{apdx-b-3} and \Cref{apdx: thm_liouv} the optimal closure model, which existing learning-based closure modeling approaches are implicitly targeting, is 
\begin{equation}
    clos^*(\rvv)=\eee_{\rho^*(\rvw|\rvv)}\big[g(\rvv,\rvw) \big| \rvv\big],
\end{equation}
where $g(\rvv,\rvw):=f_r(\rvv,\rvw)-f_r(\rvv,0)$.

Assuming the model capacity being sufficient, the minimizer of the training loss
\begin{equation}
    J_{ap}(\theta;\mathfrak{D})=\frac {1} {|\mathfrak{D}|}\sum\limits_{i\in\mathfrak{D}}\|clos(\rvv^i;\theta)-g(\rvv^i,\rvw^i)\|^2,
\end{equation}
is the conditional expectation w.r.t. the empirical measure,
\begin{equation}
    clos_{\theta^*}(\rvv)=\eee_{\hat\rho(\rvw|\rvv)}\big[g(\rvv,\rvw) \big| \rvv\big],
\end{equation}
where $\hat{\rho}$ is the empirical measure of these trainign data points $\mathfrak{D}=\{(\rvv^i,\rvw^i)\}$.

We have analyzed in \Cref{apdx: thm_liouv} that under the limited-data regime, due to the large gap between $\hat{\rho}$ and $\rho^*$, $clos_{\theta^*}$ suffers from a considerable approximation error. In this regime, there might not necessarily be different data points $(\rvv^i,\rvw^i)$ and $(\rvv^j,\rvw^j)$ having a similar input $\rvv^i\approx\rvv^j$ for the closure model, which is identified as the non-uniqueness issue we have shown in \Cref{thm_clos_all}(i). 
However, it is important to emphasize that the desired output corresponding to input $\rvv^i$ is $clos^*(\rvv^i)$, rather than the specific training label $g(\rvv^i,\rvw^i)$. In practice, minimizing the loss leads the model to memorize the observed output $g(\rvv^i,\rvw^i)$ in the training data. For regions of the input space not covered by any training data—an expected situation given the high dimensionality and data scarcity—the model lacks meaningful supervision and tends to revert to its initializations, often producing near-zero or default outputs that have little physical relevance. This is illustrated in the left figure of \cref{fig:instable}

Even under the ideal setting where a sufficient amount of fully-resolved training data is available, i.e. $\hat{\rho}\approx\rho^*$, learning a data-driven closure model suffers from training instability. Though $clos_{\theta^*}$ is a good approximation of $clos^*$ in this setting, we will show that $clos_{\hat\theta}$ the model one gets after training is usually a sub-optimum of the training loss instead of the global minima $clos_{\theta^*}$.

First of all, as identified in \Cref{apdx: thm_liouv}, due to the non-unique issue, the minimal value of the training loss is not zero, but the conditional variance $\texttt{Var}_{\rvw\sim \hat{\rho}(\rvw|\rvv)}[g(\rvv,\rvw)|\rvv]$, which one does not know a priori. Consequently, it becomes difficult to assess whether the training process has converged to the true global minimum or merely settled at a suboptimal local minimum.

\begin{figure}[t]
\centering
\includegraphics[width=0.5\linewidth]{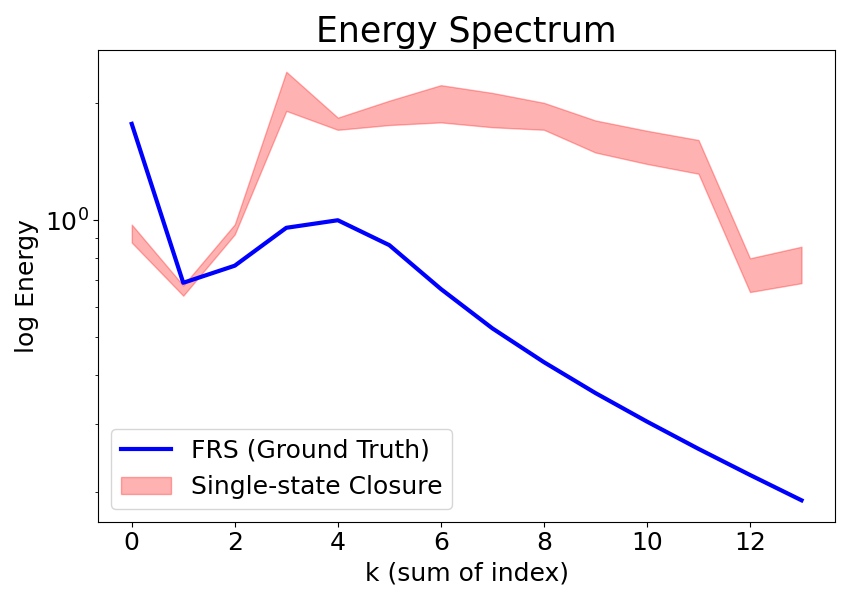}
\caption{
Prediction of energy spectrum with learning-based single-state closure model using different random seed during training.
}
\label{fig:instable2}
% \vspace{-2em}
\end{figure}

Moreover, unlike conventional supervised learning where target labels are fixed, in this setting the effective label $clos^*(\rvv^i)$ is itself an average over multiple realizations, namely the conditional expectation, which is not accessible in practice. Instead, mini-batch optimization algorithms (e.g., SGD, Adam) rely on individual samples $g(\rvv^i,\rvw^i)$, introducing significant variance in the estimation of gradients and, consequently, of the true loss landscape. See \cref{fig:instable} for illustration.

Empirically, we observe that training loss tends to stay still after a number of epochs. However, its converged value is highly sensitive to the random seed for model initialization. Importantly, models trained under different seeds all fail to yield accurate statistical predictions, underscoring the practical instability of this approach. See \cref{fig:instable2} for our experiment results. We train the closure model in $Re=100$ setting with $2.8\times10^5$ data points.

\clearpage
% \begin{refcontext}[sorting = none]
\printbibliography[heading=subbibintoc, title={Appendix References}]

@article{loshchilov2017decoupled,
  title={Decoupled weight decay regularization},
  author={Loshchilov, Ilya and Hutter, Frank},
  journal={arXiv preprint arXiv:1711.05101},
  year={2017}
}

@article{dosovitskiy2020image,
  title={An image is worth 16$\times$16 words: Transformers for image recognition at scale},
  author={Dosovitskiy, Alexey and Beyer, Lucas and Kolesnikov, Alexander and Weissenborn, Dirk and Zhai, Xiaohua and Unterthiner, Thomas and Dehghani, Mostafa and Minderer, Matthias and Heigold, Georg and Gelly, Sylvain and others},
  journal={arXiv preprint arXiv:2010.11929},
  year={2020}
}

@article{wu2023learning,
  title={Learning About Structural Errors in Models of Complex Dynamical Systems},
  author={Wu, Jin-Long and Levine, Matthew E and Schneider, Tapio and Stuart, Andrew},
  journal={arXiv e-prints},
  pages={arXiv--2401},
  year={2023}
}

@article{boral2023neural,
  title={Neural Ideal Large Eddy Simulation: Modeling Turbulence with Neural Stochastic Differential Equations},
  author={Boral, Anudhyan and Wan, Zhong Yi and Zepeda-N{\'u}{\~n}ez, Leonardo and Lottes, James and Wang, Qing and Chen, Yi-fan and Anderson, John Roberts and Sha, Fei},
  journal={arXiv preprint arXiv:2306.01174},
  year={2023}
}

@article{wang2014least,
  title={Least squares shadowing sensitivity analysis of chaotic limit cycle oscillations},
  author={Wang, Qiqi and Hu, Rui and Blonigan, Patrick},
  journal={Journal of Computational Physics},
  volume={267},
  pages={210--224},
  year={2014},
  publisher={Elsevier}
}

@article{langford1999optimal,
  title={Optimal \uppercase{LES} formulations for isotropic turbulence},
  author={Langford, Jacob A and Moser, Robert D},
  journal={Journal of Fluid Mechanics},
  volume={398},
  pages={321--346},
  year={1999},
  publisher={Cambridge University Press}
}

@book{temam2012infinite,
  title={Infinite-dimensional dynamical systems in mechanics and physics},
  author={Temam, Roger},
  volume={68},
  year={2012},
  publisher={Springer Science \& Business Media}
}

@article{eckmann1985ergodic,
  title={Ergodic theory of chaos and strange attractors},
  author={Eckmann, J-P and Ruelle, David},
  journal={Reviews of Modern Physics},
  volume={57},
  number={3},
  pages={617},
  year={1985},
  publisher={APS}
}

@article{ma2018model,
  title={Model reduction with memory and the machine learning of dynamical systems},
  author={Ma, Chao and Wang, Jianchun and others},
  journal={arXiv preprint arXiv:1808.04258},
  year={2018}
}

@article{matharu2020optimal,
  title={Optimal closures in a simple model for turbulent flows},
  author={Matharu, Pritpal and Protas, Bartosz},
  journal={SIAM Journal on Scientific Computing},
  volume={42},
  number={1},
  pages={B250--B272},
  year={2020},
  publisher={SIAM}
}

@article{sanderse2024scientific,
  title={Scientific machine learning for closure models in multiscale problems: a review},
  author={Sanderse, Benjamin and Stinis, Panos and Maulik, Romit and Ahmed, Shady E},
  journal={arXiv preprint arXiv:2403.02913},
  year={2024}
}

@article{beck2019deep,
  title={Deep neural networks for data-driven \uppercase{LES} closure models},
  author={Beck, Andrea and Flad, David and Munz, Claus-Dieter},
  journal={Journal of Computational Physics},
  volume={398},
  pages={108910},
  year={2019},
  publisher={Elsevier}
}

@article{maulik2019subgrid,
  title={Subgrid modelling for two-dimensional turbulence using neural networks},
  author={Maulik, Romit and San, Omer and Rasheed, Adil and Vedula, Prakash},
  journal={Journal of Fluid Mechanics},
  volume={858},
  pages={122--144},
  year={2019},
  publisher={Cambridge University Press}
}

@inproceedings{ravikumar2019gpu,
  title={GPU acceleration of extreme scale pseudo-spectral simulations of turbulence using asynchronism},
  author={Ravikumar, Kiran and Appelhans, David and Yeung, PK},
  booktitle={Proceedings of the International Conference for High Performance Computing, Networking, Storage and Analysis},
  pages={1--22},
  year={2019}
}

@article{wootton1999fluid,
  title={Fluid mechanics of vascular systems, diseases, and thrombosis},
  author={Wootton, David M and Ku, David N},
  journal={Annual Review of Biomedical Engineering},
  volume={1},
  number={1},
  pages={299--329},
  year={1999},
  publisher={Annual Reviews 4139 El Camino Way, PO Box 10139, Palo Alto, CA 94303-0139, USA}
}

@techreport{slotnick2014cfd,
  title={\uppercase{CFD} vision 2030 study: a path to revolutionary computational aerosciences},
  author={Slotnick, Jeffrey P and Khodadoust, Abdollah and Alonso, Juan and Darmofal, David and Gropp, William and Lurie, Elizabeth and Mavriplis, Dimitri J},
  institution = {NASA},
  number      = {CR-2014–21817},
  year={2014}
}

@article{meneveau2000scale,
  title={Scale-invariance and turbulence models for large-eddy simulation},
  author={Meneveau, Charles and Katz, Joseph},
  journal={Annual Review of Fluid Mechanics},
  volume={32},
  number={1},
  pages={1--32},
  year={2000},
  publisher={Annual Reviews 4139 El Camino Way, PO Box 10139, Palo Alto, CA 94303-0139, USA}
}

@article{pope2004ten,
  title={Ten questions concerning the large-eddy simulation of turbulent flows},
  author={Pope, Stephen B},
  journal={New Journal of Physics},
  volume={6},
  number={1},
  pages={35},
  year={2004},
  publisher={IOP Publishing}
}

@article{moser2021statistical,
  title={Statistical properties of subgrid-scale turbulence models},
  author={Moser, Robert D and Haering, Sigfried W and Yalla, Gopal R},
  journal={Annual Review of Fluid Mechanics},
  volume={53},
  pages={255--286},
  year={2021},
  publisher={Annual Reviews}
}

@article{duraisamy2019turbulence,
  title={Turbulence modeling in the age of data},
  author={Duraisamy, Karthik and Iaccarino, Gianluca and Xiao, Heng},
  journal={Annual Review of Fluid Mechanics},
  volume={51},
  pages={357--377},
  year={2019},
  publisher={Annual Reviews}
}

@article{duraisamy2021perspectives,
  title={Perspectives on machine learning-augmented Reynolds-averaged and large eddy simulation models of turbulence},
  author={Duraisamy, Karthik},
  journal={Physical Review Fluids},
  volume={6},
  number={5},
  pages={050504},
  year={2021},
  publisher={APS}
}

@article{sussman1992chaotic,
  title={Chaotic evolution of the solar system},
  author={Sussman, Gerald Jay and Wisdom, Jack},
  journal={Science},
  volume={257},
  number={5066},
  pages={56--62},
  year={1992},
  publisher={American Association for the Advancement of Science}
}

@article{korn2003there,
  title={Is there chaos in the brain? II. Experimental evidence and related models},
  author={Korn, Henri and Faure, Philippe},
  journal={Comptes Rendus Biologies},
  volume={326},
  number={9},
  pages={787--840},
  year={2003},
  publisher={Elsevier}
}

@article{ottino1990mixing,
  title={Mixing, chaotic advection, and turbulence},
  author={Ottino, Julio M},
  journal={Annual Review of Fluid Mechanics},
  volume={22},
  number={1},
  pages={207--254},
  year={1990},
  publisher={Annual Reviews 4139 El Camino Way, PO Box 10139, Palo Alto, CA 94303-0139, USA}
}

@article{gamahara2017searching,
  title={Searching for turbulence models by artificial neural network},
  author={Gamahara, Masataka and Hattori, Yuji},
  journal={Physical Review Fluids},
  volume={2},
  number={5},
  pages={054604},
  year={2017},
  publisher={APS}
}

@article{guan2022stable,
  title={Stable a posteriori \uppercase{LES} of 2\uppercase{D} turbulence using convolutional neural networks: Backscattering analysis and generalization to higher Re via transfer learning},
  author={Guan, Yifei and Chattopadhyay, Ashesh and Subel, Adam and Hassanzadeh, Pedram},
  journal={Journal of Computational Physics},
  volume={458},
  pages={111090},
  year={2022},
  publisher={Elsevier}
}

@book{cornfeld2012ergodic,
  title={Ergodic theory},
  author={Cornfeld, Isaac P and Fomin, Sergej V and Sinai, Yakov Grigorevich},
  volume={245},
  year={2012},
  publisher={Springer Science \& Business Media}
}

@book{brin2002introduction,
  title={Introduction to dynamical systems},
  author={Brin, Michael and Stuck, Garrett},
  year={2002},
  publisher={Cambridge university press}
}

@book{wen2016differentiable,
  title={Differentiable dynamical systems},
  author={Wen, Lan},
  volume={173},
  year={2016},
  publisher={American Mathematical Soc.}
}

@book{mao2007stochastic,
  title={Stochastic differential equations and applications},
  author={Mao, Xuerong},
  year={2007},
  publisher={Elsevier}
}

@book{vakhania2012probability,
  title={Probability distributions on Banach spaces},
  author={Vakhania, Nicholas and Tarieladze, Vazha and Chobanyan, S},
  volume={14},
  year={2012},
  publisher={Springer Science \& Business Media}
}

@book{ledoux2013probability,
  title={Probability in Banach Spaces: isoperimetry and processes},
  author={Ledoux, Michel and Talagrand, Michel},
  year={2013},
  publisher={Springer Science \& Business Media}
}

@book{villani2009optimal,
  title={Optimal transport: old and new},
  author={Villani, C{\'e}dric and others},
  volume={338},
  year={2009},
  publisher={Springer}
}

@article{santambrogio2015optimal,
  title={Optimal transport for applied mathematicians},
  author={Santambrogio, Filippo},
  journal={Birk{\"a}user, NY},
  volume={55},
  number={58-63},
  pages={94},
  year={2015},
  publisher={Springer}
}

@article{li2020fourier,
  title={Fourier neural operator for parametric partial differential equations},
  author={Li, Zongyi and Kovachki, Nikola and Azizzadenesheli, Kamyar and Liu, Burigede and Bhattacharya, Kaushik and Stuart, Andrew and Anandkumar, Anima},
  journal={arXiv preprint arXiv:2010.08895},
  year={2020}
}

@article{chandler2013invariant,
  title={Invariant recurrent solutions embedded in a turbulent two-dimensional Kolmogorov flow},
  author={Chandler, Gary J and Kerswell, Rich R},
  journal={Journal of Fluid Mechanics},
  volume={722},
  pages={554--595},
  year={2013},
  publisher={Cambridge University Press}
}

@article{kassam2005fourth,
  title={Fourth-order time-stepping for stiff \uppercase{PDE}s},
  author={Kassam, Aly-Khan and Trefethen, Lloyd N},
  journal={SIAM Journal on Scientific Computing},
  volume={26},
  number={4},
  pages={1214--1233},
  year={2005},
  publisher={SIAM}
}

@article{zhou2024sensitivity,
  title={Sensitivity analysis of wall-modeled large-eddy simulation for separated turbulent flow},
  author={Zhou, Di and Bae, H Jane},
  journal={Journal of Computational Physics},
  volume={506},
  pages={112948},
  year={2024},
  publisher={Elsevier}
}

@inproceedings{reynolds2005potential,
  title={The potential and limitations of direct and large eddy simulations},
  author={Reynolds, William C},
  booktitle={Whither Turbulence? Turbulence at the Crossroads: Proceedings of a Workshop Held at Cornell University, Ithaca, NY, March 22--24, 1989},
  pages={313--343},
  year={2005},
  organization={Springer}
}

@article{choi2012grid,
  title={Grid-point requirements for large eddy simulation: Chapman’s estimates revisited},
  author={Choi, Haecheon and Moin, Parviz},
  journal={Physics of Fluids},
  volume={24},
  number={1},
  year={2012},
  publisher={AIP Publishing}
}

@article{li2021physics,
  title={Physics-informed neural operator for learning partial differential equations},
  author={Li, Zongyi and Zheng, Hongkai and Kovachki, Nikola and Jin, David and Chen, Haoxuan and Liu, Burigede and Azizzadenesheli, Kamyar and Anandkumar, Anima},
  journal={ACM/JMS Journal of Data Science},
  year={2021},
  publisher={ACM New York, NY}
}

@article{schneider2017climate,
  title={Climate goals and computing the future of clouds},
  author={Schneider, Tapio and Teixeira, Jo{\~a}o and Bretherton, Christopher S and Brient, Florent and Pressel, Kyle G and Sch{\"a}r, Christoph and Siebesma, A Pier},
  journal={Nature Climate Change},
  volume={7},
  number={1},
  pages={3--5},
  year={2017},
  publisher={Nature Publishing Group UK London}
}

@book{medio2001nonlinear,
  title={Nonlinear dynamics: A primer},
  author={Medio, Alfredo and Lines, Marji},
  year={2001},
  publisher={Cambridge University Press}
}

@article{sirignano2023dynamic,
  title={Dynamic Deep Learning \uppercase{LES} Closures: Online Optimization With Embedded \uppercase{DNS}},
  author={Sirignano, Justin and MacArt, Jonathan F},
  journal={arXiv preprint arXiv:2303.02338},
  year={2023}
}

@book{temam2001navier,
  title={Navier-Stokes equations: theory and numerical analysis},
  author={Temam, Roger},
  volume={343},
  year={2001},
  publisher={American Mathematical Soc.}
}

@article{kuznetsov2011dynamical,
  title={Dynamical chaos and uniformly hyperbolic attractors: from mathematics to physics},
  author={Kuznetsov, Sergei P},
  journal={Physics-Uspekhi},
  volume={54},
  number={2},
  pages={119},
  year={2011},
  publisher={IOP Publishing}
}

@book{kuznetsov2012hyperbolic,
  title={Hyperbolic chaos},
  author={Kuznetsov, Sergey P},
  year={2012},
  publisher={Springer}
}

@incollection{smale2000infinite,
  title={An infinite dimensional version of \uppercase{S}ard's theorem},
  author={Smale, Stephen},
  booktitle={The Collected Papers of Stephen Smale: Volume 2},
  pages={529--534},
  year={2000},
  publisher={World Scientific}
}

@article{smagorinsky1963general,
  title={General circulation experiments with the primitive equations: I. The basic experiment},
  author={Smagorinsky, Joseph},
  journal={Monthly Weather Review},
  volume={91},
  number={3},
  pages={99--164},
  year={1963},
  publisher={American Meteorological Society}
}

@article{kovachki2023neural,
  title={Neural operator: Learning maps between function spaces with applications to \uppercase{PDE}s},
  author={Kovachki, Nikola and Li, Zongyi and Liu, Burigede and Azizzadenesheli, Kamyar and Bhattacharya, Kaushik and Stuart, Andrew and Anandkumar, Anima},
  journal={Journal of Machine Learning Research},
  volume={24},
  number={89},
  pages={1--97},
  year={2023}
}

@article{lanthaler2023nonlocal,
  title={The nonlocal neural operator: Universal approximation},
  author={Lanthaler, Samuel and Li, Zongyi and Stuart, Andrew M},
  journal={arXiv preprint arXiv:2304.13221},
  year={2023}
}

@article{dinicola2011systems,
  title={A systems biology approach to cancer: fractals, attractors, and nonlinear dynamics},
  author={Dinicola, Simona and D'Anselmi, Fabrizio and Pasqualato, Alessia and Proietti, Sara and Lisi, Elisabetta and Cucina, Alessandra and Bizzarri, Mariano},
  journal={Omics: a journal of integrative biology},
  volume={15},
  number={3},
  pages={93--104},
  year={2011},
  publisher={Mary Ann Liebert, Inc. 140 Huguenot Street, 3rd Floor New Rochelle, NY 10801 USA}
}

@inproceedings{huang2009cancer,
  title={Cancer attractors: a systems view of tumors from a gene network dynamics and developmental perspective},
  author={Huang, Sui and Ernberg, Ingemar and Kauffman, Stuart},
  booktitle={Seminars in cell \& developmental biology},
  volume={20},
  number={7},
  pages={869--876},
  year={2009},
  organization={Elsevier}
}

@article{milnor1985concept,
  title={On the concept of attractor},
  author={Milnor, John},
  journal={Communications in Mathematical Physics},
  volume={99},
  pages={177--195},
  year={1985},
  publisher={Springer}
}

@article{rathore2024challenges,
  title={Challenges in training PINNs: A loss landscape perspective},
  author={Rathore, Pratik and Lei, Weimu and Frangella, Zachary and Lu, Lu and Udell, Madeleine},
  journal={arXiv preprint arXiv:2402.01868},
  year={2024}
}

@article{wang20222,
  title={Is $L^2$ Physics Informed Loss Always Suitable for Training Physics Informed Neural Network?},
  author={Wang, Chuwei and Li, Shanda and He, Di and Wang, Liwei},
  journal={Advances in Neural Information Processing Systems},
  volume={35},
  pages={8278--8290},
  year={2022}
}

@article{lima2000plasma,
  title={Plasma oscillations and nonextensive statistics},
  author={Lima, JAS and Silva Jr, R and Santos, Janilo},
  journal={Physical Review E},
  volume={61},
  number={3},
  pages={3260},
  year={2000},
  publisher={APS}
}

@incollection{flato2014evaluation,
  title={Evaluation of climate models},
  author={Flato, Gregory and Marotzke, Jochem and Abiodun, Babatunde and Braconnot, Pascale and Chou, Sin Chan and Collins, William and Cox, Peter and Driouech, Fatima and Emori, Seita and Eyring, Veronika and others},
  booktitle={Climate change 2013: the physical science basis. Contribution of Working Group I to the Fifth Assessment Report of the Intergovernmental Panel on Climate Change},
  pages={741--866},
  year={2014},
  publisher={Cambridge University Press}
}

@article{schneider1974climate,
  title={Climate modeling},
  author={Schneider, Stephen H and Dickinson, Robert E},
  journal={Reviews of Geophysics},
  volume={12},
  number={3},
  pages={447--493},
  year={1974},
  publisher={Wiley Online Library}
}

@article{wang2020recurrent,
  title={Recurrent neural network closure of parametric POD-Galerkin reduced-order models based on the Mori-Zwanzig formalism},
  author={Wang, Qian and Ripamonti, Nicol{\`o} and Hesthaven, Jan S},
  journal={Journal of Computational Physics},
  volume={410},
  pages={109402},
  year={2020},
  publisher={Elsevier}
}

@article{sorensen2024non,
  title={A non-intrusive machine learning framework for debiasing long-time coarse resolution climate simulations and quantifying rare events statistics},
  author={Sorensen, Benedikt Barthel and Charalampopoulos, Alexis and Zhang, Shixuan and Harrop, Bryce and Leung, Ruby and Sapsis, Themistoklis},
  journal={arXiv preprint arXiv:2402.18484},
  year={2024}
}

@article{frezat2023gradient,
  title={Gradient-free online learning of subgrid-scale dynamics with neural emulators},
  author={Frezat, Hugo and Balarac, Guillaume and Sommer, Julien Le and Fablet, Ronan},
  journal={arXiv preprint arXiv:2310.19385},
  year={2023}
}

@article{sirignano2020dpm,
  title={DPM: A deep learning PDE augmentation method with application to large-eddy simulation},
  author={Sirignano, Justin and MacArt, Jonathan F and Freund, Jonathan B},
  journal={Journal of Computational Physics},
  volume={423},
  pages={109811},
  year={2020},
  publisher={Elsevier}
}

@article{zhang2024iterated,
  title={Iterated learning and multiscale modeling of history-dependent architectured metamaterials},
  author={Zhang, Yupeng and Bhattacharya, Kaushik},
  journal={arXiv preprint arXiv:2402.12674},
  year={2024}
}

@article{levine2022framework,
  title={A framework for machine learning of model error in dynamical systems},
  author={Levine, Matthew and Stuart, Andrew},
  journal={Communications of the American Mathematical Society},
  volume={2},
  number={07},
  pages={283--344},
  year={2022}
}

@article{wilson1971renormalization,
  title={Renormalization group and strong interactions},
  author={Wilson, Kenneth G},
  journal={Physical Review D},
  volume={3},
  number={8},
  pages={1818},
  year={1971},
  publisher={APS}
}

@article{oommen2023rethinking,
  title={Rethinking materials simulations: Blending direct numerical simulations with neural operators},
  author={Oommen, Vivek and Shukla, Khemraj and Desai, Saaketh and Dingreville, Remi and Karniadakis, George Em},
  journal={arXiv preprint arXiv:2312.05410},
  year={2023}
}

@article{list2022learned,
  title={Learned turbulence modelling with differentiable fluid solvers: physics-based loss functions and optimisation horizons},
  author={List, Bj{\"o}rn and Chen, Li-Wei and Thuerey, Nils},
  journal={Journal of Fluid Mechanics},
  volume={949},
  pages={A25},
  year={2022},
  publisher={Cambridge University Press}
}

@article{lu2017data,
  title={Data-based stochastic model reduction for the Kuramoto--Sivashinsky equation},
  author={Lu, Fei and Lin, Kevin K and Chorin, Alexandre J},
  journal={Physica D: Nonlinear Phenomena},
  volume={340},
  pages={46--57},
  year={2017},
  publisher={Elsevier}
}

@article{lu2021learning,
  title={Learning nonlinear operators via DeepONet based on the universal approximation theorem of operators},
  author={Lu, Lu and Jin, Pengzhan and Pang, Guofei and Zhang, Zhongqiang and Karniadakis, George Em},
  journal={Nature Machine Intelligence},
  volume={3},
  number={3},
  pages={218--229},
  year={2021},
  publisher={Nature Publishing Group UK London}
}

@article{karniadakis2021physics,
  title={Physics-informed machine learning},
  author={Karniadakis, George Em and Kevrekidis, Ioannis G and Lu, Lu and Perdikaris, Paris and Wang, Sifan and Yang, Liu},
  journal={Nature Reviews Physics},
  volume={3},
  number={6},
  pages={422--440},
  year={2021},
  publisher={Nature Publishing Group}
}

@article{raissi2019physics,
  title={Physics-informed neural networks: A deep learning framework for solving forward and inverse problems involving nonlinear partial differential equations},
  author={Raissi, Maziar and Perdikaris, Paris and Karniadakis, George E},
  journal={Journal of Computational physics},
  volume={378},
  pages={686--707},
  year={2019},
  publisher={Elsevier}
}

@article{maddu2024learning,
  title={Learning fast, accurate, and stable closures of a kinetic theory of an active fluid},
  author={Maddu, Suryanarayana and Weady, Scott and Shelley, Michael J},
  journal={Journal of Computational Physics},
  volume={504},
  pages={112869},
  year={2024},
  publisher={Elsevier}
}

@article{shankar2023differentiable,
  title={Differentiable physics-enabled closure modeling for Burgers’ turbulence},
  author={Shankar, Varun and Puri, Vedant and Balakrishnan, Ramesh and Maulik, Romit and Viswanathan, Venkatasubramanian},
  journal={Machine Learning: Science and Technology},
  volume={4},
  number={1},
  pages={015017},
  year={2023},
  publisher={IOP Publishing}
}

@article{lilly1992proposed,
  title={A proposed modification of the germano sugrid-scale closure method},
  author={Lilly, DK},
  journal={Phys Fluids A},
  volume={4},
  pages={633--635},
  year={1992}
}

@article{aronszajn1956unique,
  title={A unique continuation theorem for solutions of elliptic partial differential equations or inequalities of second order},
  author={Aronszajn, Nachman},
  journal={(No Title)},
  year={1956}
}

@article{garofalo1986monotonicity,
  title={Monotonicity properties of variational integrals, A p weights and unique continuation},
  author={Garofalo, Nicola and Lin, Fang-Hua},
  journal={Indiana University Mathematics Journal},
  volume={35},
  number={2},
  pages={245--268},
  year={1986},
  publisher={JSTOR}
}

@article{lyu2023multi,
  title={Multi-fidelity prediction of fluid flow based on transfer learning using Fourier neural operator},
  author={Lyu, Yanfang and Zhao, Xiaoyu and Gong, Zhiqiang and Kang, Xiao and Yao, Wen},
  journal={Physics of Fluids},
  volume={35},
  number={7},
  year={2023},
  publisher={AIP Publishing}
}

@article{dong2024data,
  title={Data-Driven Stochastic Closure Modeling via Conditional Diffusion Model and Neural Operator},
  author={Dong, Xinghao and Chen, Chuanqi and Wu, Jin-Long},
  journal={arXiv preprint arXiv:2408.02965},
  year={2024}
}

@article{li2024geometry,
  title={Geometry-informed neural operator for large-scale 3d pdes},
  author={Li, Zongyi and Kovachki, Nikola and Choy, Chris and Li, Boyi and Kossaifi, Jean and Otta, Shourya and Nabian, Mohammad Amin and Stadler, Maximilian and Hundt, Christian and Azizzadenesheli, Kamyar and others},
  journal={Advances in Neural Information Processing Systems},
  volume={36},
  year={2024}
}

@article{rahman2024pretraining,
  title={Pretraining codomain attention neural operators for solving multiphysics pdes},
  author={Rahman, Md Ashiqur and George, Robert Joseph and Elleithy, Mogab and Leibovici, Daniel and Li, Zongyi and Bonev, Boris and White, Colin and Berner, Julius and Yeh, Raymond A and Kossaifi, Jean and others},
  journal={arXiv preprint arXiv:2403.12553},
  year={2024}
}

@incollection{goswami2023physics,
  title={Physics-informed deep neural operator networks},
  author={Goswami, Somdatta and Bora, Aniruddha and Yu, Yue and Karniadakis, George Em},
  booktitle={Machine Learning in Modeling and Simulation: Methods and Applications},
  pages={219--254},
  year={2023},
  publisher={Springer}
}

@article{vitral2018nano,
  title={Nano-patterning of surfaces by ion sputtering: Numerical study of the anisotropic damped kuramoto-sivashinsky equation},
  author={Vitral, E and Walgraef, D and Pontes, J and Anjos, GR and Mangiavacchi, N},
  journal={Computational Materials Science},
  volume={146},
  pages={193--203},
  year={2018},
  publisher={Elsevier}
}

@incollection{conte2003exact,
  title={Exact solutions of nonlinear partial differential equations by singularity analysis},
  author={Conte, Robert},
  booktitle={Direct and Inverse Methods in Nonlinear Evolution Equations: Lectures Given at the CIME Summer School Held in Cetraro, Italy, September 5-12, 1999},
  pages={1--83},
  year={2003},
  publisher={Springer}
}

@article{sivashinsky1982instabilities,
  title={Instabilities, pattern formation, and turbulence in flames},
  author={Sivashinsky, Gregory I},
  year={1982}
}

@article{pope2001turbulent,
  title={Turbulent flows},
  author={Pope, Stephen B},
  journal={Measurement Science and Technology},
  volume={12},
  number={11},
  pages={2020--2021},
  year={2001}
}

@article{kochkov2021machine,
  title={Machine learning--accelerated computational fluid dynamics},
  author={Kochkov, Dmitrii and Smith, Jamie A and Alieva, Ayya and Wang, Qing and Brenner, Michael P and Hoyer, Stephan},
  journal={Proceedings of the National Academy of Sciences},
  volume={118},
  number={21},
  pages={e2101784118},
  year={2021},
  publisher={National Acad Sciences}
}

@article{charalambakis2010homogenization,
  title={Homogenization techniques and micromechanics. A survey and perspectives},
  author={Charalambakis, Nicolas},
  year={2010}
}

@incollection{lorenz2017deterministic,
  title={Deterministic Nonperiodic Flow 1},
  author={Lorenz, Edward N},
  booktitle={Universality in Chaos, 2nd edition},
  pages={367--378},
  year={2017},
  publisher={Routledge}
}

@inproceedings{valencialearning,
  title={Learning Distributions of Complex Fluid Simulations with Diffusion Graph Networks},
  author={Valencia, Mario Lino and Pfaff, Tobias and Thuerey, Nils},
  booktitle={The Thirteenth International Conference on Learning Representations}
}

@article{li2022learning,
  title={Learning chaotic dynamics in dissipative systems},
  author={Li, Zongyi and Liu-Schiaffini, Miguel and Kovachki, Nikola and Azizzadenesheli, Kamyar and Liu, Burigede and Bhattacharya, Kaushik and Stuart, Andrew and Anandkumar, Anima},
  journal={Advances in Neural Information Processing Systems},
  volume={35},
  pages={16768--16781},
  year={2022}
}

@inproceedings{li2024scale,
  title={Scale-consistent learning with neural operators},
  author={Li, Zongyi and Lanthaler, Samuel and Deng, Catherine and Wang, Yixuan and Azizzadenesheli, Kamyar and Anandkumar, Anima},
  booktitle={Neurips 2024 Workshop Foundation Models for Science: Progress, Opportunities, and Challenges}
}

@article{culina2011stochastic,
  title={Stochastic parameterization schemes for use in realistic climate models},
  author={Culina, Joel and Kravtsov, Sergey and Monahan, Adam H},
  journal={Journal of the Atmospheric Sciences},
  volume={68},
  number={2},
  pages={284--299},
  year={2011}
}

@article{franzke2005low,
  title={Low-order stochastic mode reduction for a realistic barotropic model climate},
  author={Franzke, Christian and Majda, Andrew J and Vanden-Eijnden, Eric},
  journal={Journal of the atmospheric sciences},
  volume={62},
  number={6},
  pages={1722--1745},
  year={2005}
}

@article{hormander1967hypoelliptic,
  title={Hypoelliptic second order differential equations},
  author={H{\"o}rmander, Lars},
  year={1967}
}

@article{song2020score,
  title={Score-based generative modeling through stochastic differential equations},
  author={Song, Yang and Sohl-Dickstein, Jascha and Kingma, Diederik P and Kumar, Abhishek and Ermon, Stefano and Poole, Ben},
  journal={arXiv preprint arXiv:2011.13456},
  year={2020}
}

@book{vershynin2018high,
  title={High-dimensional probability: An introduction with applications in data science},
  author={Vershynin, Roman},
  volume={47},
  year={2018},
  publisher={Cambridge university press}
}

@book{grinstein2007implicit,
  title={Implicit large eddy simulation},
  author={Grinstein, Fernando F and Margolin, Len G and Rider, William J},
  volume={10},
  year={2007},
  publisher={Cambridge university press Cambridge}
}

@article{song2019generative,
  title={Generative modeling by estimating gradients of the data distribution},
  author={Song, Yang and Ermon, Stefano},
  journal={Advances in neural information processing systems},
  volume={32},
  year={2019}
}

@article{song2020denoising,
  title={Denoising diffusion implicit models},
  author={Song, Jiaming and Meng, Chenlin and Ermon, Stefano},
  journal={arXiv preprint arXiv:2010.02502},
  year={2020}
}

@article{ho2020denoising,
  title={Denoising diffusion probabilistic models},
  author={Ho, Jonathan and Jain, Ajay and Abbeel, Pieter},
  journal={Advances in neural information processing systems},
  volume={33},
  pages={6840--6851},
  year={2020}
}

@article{watt2023ace,
  title={ACE: A fast, skillful learned global atmospheric model for climate prediction},
  author={Watt-Meyer, Oliver and Dresdner, Gideon and McGibbon, Jeremy and Clark, Spencer K and Henn, Brian and Duncan, James and Brenowitz, Noah D and Kashinath, Karthik and Pritchard, Michael S and Bonev, Boris and others},
  journal={arXiv preprint arXiv:2310.02074},
  year={2023}
}

@article{watt2025ace2,
  title={Ace2: Accurately learning subseasonal to decadal atmospheric variability and forced responses},
  author={Watt-Meyer, Oliver and Henn, Brian and McGibbon, Jeremy and Clark, Spencer K and Kwa, Anna and Perkins, W Andre and Wu, Elynn and Harris, Lucas and Bretherton, Christopher S},
  journal={npj Climate and Atmospheric Science},
  volume={8},
  number={1},
  pages={1--15},
  year={2025},
  publisher={Nature Publishing Group}
}

@article{wang2021learning,
  title={Learning the solution operator of parametric partial differential equations with physics-informed DeepONets},
  author={Wang, Sifan and Wang, Hanwen and Perdikaris, Paris},
  journal={Science advances},
  volume={7},
  number={40},
  pages={eabi8605},
  year={2021},
  publisher={American Association for the Advancement of Science}
}

@article{eyink1993renormalization,
  title={Renormalization group and operator product expansion in turbulence: Shell models},
  author={Eyink, Gregory L},
  journal={Physical Review E},
  volume={48},
  number={3},
  pages={1823},
  year={1993},
  publisher={APS}
}

@article{durbin2018some,
  title={Some recent developments in turbulence closure modeling},
  author={Durbin, Paul A},
  journal={Annual Review of Fluid Mechanics},
  volume={50},
  number={1},
  pages={77--103},
  year={2018},
  publisher={Annual Reviews}
}

@article{wang2025simulating,
  title={Simulating Three-dimensional Turbulence with Physics-informed Neural Networks},
  author={Wang, Sifan and Sankaran, Shyam and Stinis, Panos and Perdikaris, Paris},
  journal={arXiv preprint arXiv:2507.08972},
  year={2025}
}

@inproceedings{huang2024consistent,
  title={Consistent data-driven subgrid-scale model development for large-eddy simulation},
  author={Huang, X and Leung, SC and Whitmore, MP and Elnahhas, A and Bae, HJ},
  booktitle={Proceedings of the Summer Program},
  pages={1},
  year={2024}
}

@article{blessing2025underdamped,
  title={Underdamped diffusion bridges with applications to sampling},
  author={Blessing, Denis and Berner, Julius and Richter, Lorenz and Neumann, Gerhard},
  journal={arXiv preprint arXiv:2503.01006},
  year={2025}
}

@article{bonev2025fourcastnet,
  title={Fourcastnet 3: A geometric approach to probabilistic machine-learning weather forecasting at scale},
  author={Bonev, Boris and Kurth, Thorsten and Mahesh, Ankur and Bisson, Mauro and Kossaifi, Jean and Kashinath, Karthik and Anandkumar, Anima and Collins, William D and Pritchard, Michael S and Keller, Alexander},
  journal={arXiv preprint arXiv:2507.12144},
  year={2025}
}

@inproceedings{bonev2023spherical,
  title={Spherical fourier neural operators: Learning stable dynamics on the sphere},
  author={Bonev, Boris and Kurth, Thorsten and Hundt, Christian and Pathak, Jaideep and Baust, Maximilian and Kashinath, Karthik and Anandkumar, Anima},
  booktitle={International conference on machine learning},
  pages={2806--2823},
  year={2023},
  organization={PMLR}
}

@article{liu2024neural,
  title={Neural operators with localized integral and differential kernels},
  author={Liu-Schiaffini, Miguel and Berner, Julius and Bonev, Boris and Kurth, Thorsten and Azizzadenesheli, Kamyar and Anandkumar, Anima},
  journal={arXiv preprint arXiv:2402.16845},
  year={2024}
}

@article{liu2024mitigating,
  title={Mitigating spectral bias for the multiscale operator learning},
  author={Liu, Xinliang and Xu, Bo and Cao, Shuhao and Zhang, Lei},
  journal={Journal of Computational Physics},
  volume={506},
  pages={112944},
  year={2024},
  publisher={Elsevier}
}
% \end{refcontext}

%ncomm cite
\end{refsection}
% \clearpage

\end{document}